\lstdefinestyle{mystyle}{
    backgroundcolor=\color{backcolour},   
    commentstyle=\color{codegreen},
    keywordstyle=\color{magenta},
    numberstyle=\tiny\color{codegray},
    stringstyle=\color{codepurple},
    basicstyle=\ttfamily\footnotesize,
    breakatwhitespace=false,         
    breaklines=true,                 
    captionpos=b,                    
    keepspaces=true,                 
    numbers=left,                    
    numbersep=5pt,                  
    showspaces=false,                
    showstringspaces=false,
    showtabs=false,                  
    tabsize=2
}
\pgfplotsset{compat=1.18} 
\definecolor{darkblue}{rgb}{0.0,0.0,0.65}
\definecolor{darkred}{rgb}{0.65,0.0,0.0}
\definecolor{darkgreen}{rgb}{0.0,0.5,0.0}
\definecolor{tab:blue}{RGB}{31,119,180}  
\definecolor{tab:red}{RGB}{214,39,40}  
\definecolor{tab:green}{RGB}{44,160,44}  
\definecolor{tab:orange}{RGB}{255,127,14}  
\definecolor{codegreen}{rgb}{0,0.6,0}
\definecolor{codegray}{rgb}{0.5,0.5,0.5}
\definecolor{codepurple}{rgb}{0.58,0,0.82}
\definecolor{backcolour}{rgb}{0.95,0.95,0.92}
\theoremstyle{plain}
\newtheorem{theorem}{Theorem}[section]
\newtheorem{lemma}[theorem]{Lemma}
\theoremstyle{definition}
\newtheorem{definition}[theorem]{Definition}
\newtheorem{remark}[theorem]{Remark}
\crefname{assumption}{Assumption}{Assumptions}
\definecolor{skyblue}{RGB}{135,206,235}
\definecolor{coral}{HTML}{E57F84}
\definecolor{mistyblue}{HTML}{2F5061}
\definecolor{ivory}{HTML}{F4EAE6}
\definecolor{crimson}{HTML}{B30329}
\definecolor{midnightblue}{HTML}{0B6190}
\definecolor{navyblue}{HTML}{0B5E9B}
\definecolor{tealgreen}{HTML}{3C95A7} 
\definecolor{mutedmustard}{HTML}{C9A86A}
\definecolor{heather}{HTML}{9671CB}
\def\norm#1{\left\lVert#1\right\rVert}
\def\abs#1{\left|#1\right|}
\def\ceil#1{\left\lceil #1 \right\rceil}
\def\floor#1{\left\lfloor #1 \right\rfloor}
\def\bigopen#1{\left(#1\right)}
\def\bigset#1{\left\{#1\right\}}
\def\bigclosed#1{\left[#1\right]}
\newcommand{\bin}{\mathrm{BIN}}
\newcommand{\reals}{\mathbb{R}}
\newcommand{\proj}{{\mathrm{Proj}}}
\newcommand{\nullsp}{{\mathrm{Null}}}
\newcommand{\col}{{\mathrm{Col}}}
\newcommand{\fl}{{\mathrm{Floor}}}
\newcommand{\relu}{{\sigma}}
\newcommand{\dist}{\mathrm{dist}}
\newcommand{\vcdim}{\mathrm{VC\text{-}dim}}
\def\eqref#1{equation~\ref{#1}}
\def\ceil#1{\lceil #1 \rceil}
\def\floor#1{\lfloor #1 \rfloor}
\def\1{\bm{1}}
\def\vzero{{\bm{0}}}
\def\vone{{\bm{1}}}
\def\va{{\bm{a}}}
\def\vb{{\bm{b}}}
\def\vc{{\bm{c}}}
\def\ve{{\bm{e}}}
\def\vu{{\bm{u}}}
\def\vv{{\bm{v}}}
\def\vx{{\bm{x}}}
\def\vy{{\bm{y}}}
\def\vz{{\bm{z}}}
\def\mD{{\bm{D}}}
\def\mI{{\bm{I}}}
\def\mP{{\bm{P}}}
\def\mQ{{\bm{Q}}}
\def\mU{{\bm{U}}}
\def\mV{{\bm{V}}}
\def\mW{{\bm{W}}}
\def\mX{{\bm{X}}}
\DeclareMathAlphabet{\mathsfit}{\encodingdefault}{\sfdefault}{m}{sl}
\SetMathAlphabet{\mathsfit}{bold}{\encodingdefault}{\sfdefault}{bx}{n}
\def\gB{{\mathcal{B}}}
\def\gD{{\mathcal{D}}}
\def\gF{{\mathcal{F}}}
\def\gL{{\mathcal{L}}}
\def\gX{{\mathcal{X}}}
\def\gY{{\mathcal{Y}}}
\def\sN{{\mathbb{N}}}
\def\sP{{\mathbb{P}}}
\def\sR{{\mathbb{R}}}
\def\sZ{{\mathbb{Z}}}
\def\rz{{\textnormal{z}}}
\newcommand{\R}{\mathbb{R}}
\DeclareMathOperator{\sign}{sign}
\title{The Cost of Robustness: Tighter Bounds on Parameter Complexity for Robust Memorization in ReLU Nets}
\author{
    Yujun Kim \thanks{Authors contributed equally to this paper.} \qquad Chaewon Moon \footnotemark[1] \qquad Chulhee Yun \\
    KAIST\\
    \texttt{\{kyujun02, chaewon.moon, chulhee.yun\}@kaist.ac.kr}
}
\begin{document}
\pagenumbering{arabic}
\maketitle

\begin{abstract}
We study the parameter complexity of \emph{robust memorization} for $\mathrm{ReLU}$ networks: the number of parameters required to interpolate any given dataset with $\epsilon$-separation between differently labeled points, while ensuring predictions remain consistent within a $\mu$-ball around each training sample.
We establish upper and lower bounds on the parameter count as a function of the robustness ratio $\rho = \mu / \epsilon$.
Unlike prior work, we provide a fine-grained analysis across the entire range $\rho \in (0,1)$ and obtain tighter upper and lower bounds that improve upon existing results.
Our findings reveal that the parameter complexity of robust memorization \emph{matches} that of \emph{non-robust} memorization when $\rho$ is small, but grows with increasing $\rho$.
\end{abstract}

\section{Introduction}
\label{sec:introduction}
The topic of memorization investigates the expressive power of neural networks required to fit any given dataset exactly. This line of inquiry seeks to determine the minimal network size---measured in the number of parameters, or equivalently, parameter complexity---needed to interpolate any finite collection of $N$ labeled examples. A number of works study both upper and lower bounds on the parameter complexity~\citep{baum1988capabilities, yun2019smallrelunetworkspowerful,bubeck2020networksizeweightssize,park2021provablememorizationdeepneural}. The VC-dimension implies a lower bound of $\Omega(\sqrt{N})$ \citep{vapnik2015uniform,goldberg1995bounding,bartlett2019nearly}, while \citet{vardi2021optimalmemorizationpowerrelu} show that $\tilde{\Theta}(\sqrt{N})$ parameters suffice for $\mathrm{ReLU}$ networks. Together, these results establish that memorizing any $N$ distinct samples with $\mathrm{ReLU}$ networks can be done with $\tilde{\Theta}(\sqrt{N})$ parameters, tight up to logarithmic factors.

We now turn to a more challenging task beyond mere interpolation of data: \textbf{robust memorization}.
We aim to quantify the additional parameter complexity required for a network to remain \emph{robust} against adversarial attacks, going beyond standard non-robust memorization. To address the sensitivity of neural networks to small adversarial perturbations \citep{szegedy2014intriguingpropertiesneuralnetworks,goodfellow2015explainingharnessingadversarialexamples, ding2019sensitivityadversarialrobustnessinput, gowal2021uncoveringlimitsadversarialtraining,Zhang_2021, bastounis2025mathematicsadversarialattacksai}, we consider the setting in which not only the data points but all points within a distance $\mu$---referred to as the \emph{robustness radius}---from each data point must be mapped to the corresponding label. More concretely, for any dataset with $\epsilon$-separation between differently labeled data points, the network must memorize the dataset and the prediction must remain consistent within a $\mu$-ball centered at each training sample. As will be seen shortly, the parameter complexity for robust memorization is governed by the \emph{robustness ratio} $\rho = \mu / \epsilon \in (0,1)$ rather than the individual values of $\mu$ and $\epsilon$. 
However, a precise understanding of how this complexity scales with $\rho$ remains limited.

\subsection{What is Known So Far?}
\label{subsec:whatisknown}
\textbf{Existing Lower Bounds.}~~
Since classical memorization requires $\Omega(\sqrt{N})$ parameters, it follows that robust memorization must also satisfy a lower bound of at least $\Omega(\sqrt{N})$ parameters for any $\rho \in (0,1)$. A lower bound specific to robust memorization is established by the work of \citet{li2022robust}, which shows that for input dimension $d$, $\Omega(\sqrt{Nd})$ parameters are necessary for robust memorization under $\ell_2$-norm for sufficiently large $\rho$. However, the authors do not characterize the range of $\rho$ over which this lower bound remains valid. 
Our \cref{prop:lb_vc_p=2} presented later shows that the $\Omega(\sqrt{Nd})$ lower bound can be extended to the range $\rho \in \left( \sqrt{1 - 1/d}, 1 \right)$.
Combining these observations, we obtain the following unified lower bound: suppose that for any dataset $\gD$ with input dimension $d$ and size $N$, there exists a neural network with at most $P$ parameters that robustly memorizes $\gD$ with robustness ratio $\rho$ under $\ell_2$-norm. Then, the number of parameters $P$ must satisfy
\begin{align}
    P = \Omega\bigopen{\bigopen{1 + \sqrt{d} \cdot \vone_{\rho \geq \sqrt{1 - \frac{1}{d}}}}\sqrt{N} + d},
\vspace{-2pt}
\end{align}
where the $d$ term accounts for the parameters connected to the input neurons. In the setting $d = O(\sqrt{N})$, the lower bounds increase discontinuously from $\sqrt{N}$ to $\sqrt{N d}$. 

While our main analysis focuses on the $\ell_2$-norm, there also exist results under the $\ell_\infty$-norm. In particular, \citet{yu2024optimal} show that under the $\ell_\infty$-norm and certain assumptions, $\rho$-robust memorization requires the first hidden layer to have width at least $d$. 
Our analysis not only strengthens but also generalizes this $\ell_\infty$-norm result by removing the assumption on the dataset—made in prior work—that the number of data points must be greater than $d$.

\textbf{Existing Upper Bounds.}~~
From the work of \citet{yu2024optimal}, it is proven that $O(Nd^2)$ parameters suffice for any $\rho \in (0,1)$. See \cref{app:yu-et-al-construction} for an analysis of the parameter complexity of their construction.
Furthermore, \citet{egosi2025logarithmicwidthsufficesrobust} show that for $\rho \in \left(0, \frac{1}{\sqrt{d}}\right)$, a network of width $\log N$ suffices for $\rho$-robust memorization. 
Although they do not explicitly quantify the total number of parameters, their construction with a width $\log N$ network requires $\tilde{O}(N)$ parameters, as we verify in \cref{app:egosi}.
Additionally, we state that their construction implicitly yields a smooth interpolation between $\tilde O(N)$ and $\tilde O(Nd^2)$ as $\rho$ varies within the intermediate range $(1/{\sqrt{d}}, 1/{\sqrt[6]{d}})$.

To sum up, the existing upper bound states that for any dataset $\gD$ with input dimension $d$ and size $N$, there exists a neural network that achieves robust memorization on $\gD$ with the robustness ratio $\rho$ under $\ell_2$-norm, with the number of parameters $P$ bounded as follows:
\vspace{-2pt}
\begin{align}
\label{eq:existing-ub}
P=\begin{cases}
    \tilde{O}(N+d) 
    &\text{if } \rho \in (0, 1/{\sqrt{d} }]. \\
     \tilde{O}(N d^{3} \rho^{6}+d) 
     & \text{if } \rho \in (1/{\sqrt{d} },1/{ \sqrt[6]{d} }]. \\
     \tilde{O}(N d^2) 
     & \text{if } \rho \in (1/{ \sqrt[6]{d}},1). \\
\end{cases}
\vspace{-2pt}
\end{align}
When $d = O(N)$, the upper bound transitions continuously from $\tilde{O}(N)$ to $\tilde{O}(Nd^2)$.

\begin{figure}[t]
    \centering
    \begin{subfigure}[t]{0.63\textwidth}
        \centering
        \begin{tikzpicture}[yscale=0.95]
\begin{axis}[
    xlabel={Robustness ratio $\rho$},
    ylabel={Parameters},
    xmin=-0.1, xmax=1.03,
    ymin=0.8, ymax=7.2,
    xtick={0.1, 0.7, 0.87, 0.9, 1},
    xticklabels={$ \frac{1}{N\sqrt{d}} $, $\frac{1}{\sqrt{d}}$, \hspace{-10pt} $\frac{1}{\sqrt[6]{d}}$,\hspace{6pt} , \hspace{6pt}$1$},
    xticklabel style={font=\footnotesize},
    ytick={ 1.7, 3.5, 7.1},
    yticklabels={ $\sqrt{N}$, $N$, $Nd^2$},
    axis lines=left,
    width=\linewidth,
    height=6cm,
    no markers,
    domain=0:1,
    samples=100,
    clip=false
]

\addplot[dotted, gray] coordinates {(-0.1,7.1) (1,7.1)};
\addplot[dotted, gray] coordinates {(-0.1,3.5) (1,3.5)};

\addplot[dotted, gray] coordinates {(0.1,0.9) (0.1,1.7)};
\addplot[dotted, gray] coordinates {(0.7,0.9) (0.7,3.5)};
\addplot[dotted, gray] coordinates {(0.87,0.9) (0.87,7.1)};
\addplot[dotted, gray] coordinates {(0.9,0.9) (0.9,3)};
\addplot[dotted, gray] coordinates {(1,0.9) (1,7.1)};

\addplot[name path=blackthick, thick, black, domain=-0.1:0.7] {3.5};
\addplot[name path=blacktoNd, thick, black, domain=0.7:0.87] {3.5 + (7.1 - 3.5)/(0.87 - 0.7)*(x - 0.7)};
\addplot[thick, black, domain=0.87:1] {7.1};

\addplot[name path = graythick, thick, black, domain=-0.1:0.9] {1.64};
\addplot[thick, black] coordinates {(0.9,1.7) (0.9,2.6)};
\addplot[name path = graythick2, thick, black, domain=0.9:1] {2.6};

\addplot[dashed, thick, green!65!black, domain=0.1:0.895] {0.11 / (1 - x) + 1.57};
\addplot[dashed, thick, green!65!black, domain=0.9:1] {2.65};

\addplot[dashed, thick, yellow!80!black, domain=0.55:1] {4.8*(x-0.9)+2.58};

\addplot[name path=bluethick, ultra thick, navyblue, domain=-0.1:0.1] {1.77};
\addplot[name path=bluetoN, ultra thick, navyblue, domain=0.098:0.702] {1.77 + (3.5 - 1.77)/(0.7 - 0.1)*(x - 0.1)};
\addplot[name path=bluetoNd, ultra thick, navyblue, domain=0.7:1] {3.5 + (7.1 - 3.5)/(1 - 0.7)*(x - 0.7)};

\addplot[tealgreen!30, opacity=0.5] fill between[
    of=blackthick and bluethick,
    soft clip={domain=-0.1:1}
];
\addplot[tealgreen!30, opacity=0.5] fill between[
    of=blacktoNd and bluetoNd,
    soft clip={domain=-0.1:1}
];

\addplot[name path = redcurve, ultra thick, crimson, domain=-0.1:0.1] {1.68};
\addplot[name path = redcurve, ultra thick, crimson, domain=0.1:0.902] {0.105 / (1 - x) + 1.56+ 0.5*x^2};
\addplot[name path = redcurvetoN, ultra thick, crimson] coordinates {(0.9,3.02) (1,3.5)};

\addplot[coral!30, opacity=0.5] fill between[
    of=redcurve and graythick,
    soft clip={domain=0:1}
];
\addplot[coral!30, opacity=0.5] fill between[
    of=redcurvetoN and graythick2,
    soft clip={domain=0:1}
];

\addplot[
  draw=none,
  pattern=crosshatch,      
  pattern color=gray,
  fill opacity=0.2       
] fill between[
  of=redcurve and bluetoN,
  soft clip={domain=0:1}
];

\addplot[
  draw=none,
  pattern=crosshatch,         
  pattern color=gray,
  fill opacity=0.2  
] fill between[
  of=redcurvetoN and bluetoNd,
  soft clip={domain=0:1}
];

\node at (axis cs:0.2, 3.9) {\small Existing Upper Bound};
\node at (axis cs:0.2, 1.28) {\small Existing Lower Bound};
\node[navyblue, rotate=18, anchor=east] at (axis cs:0.45, 3.18) {\small Our Upper Bound};
\node[crimson, rotate=11, anchor=east] at (axis cs:0.85, 2.9) {\small Our Lower Bound};

\node at (axis cs:0.94, 0.1) {\scriptsize $\sqrt{1\!\!-\!\!\frac{1}{{d}}}$};

\end{axis}

\end{tikzpicture}
    \end{subfigure}
    
    \caption{Summary of parameter bounds on a log-log scale when $d=\Theta(\sqrt N$). We omit constant factors in both axes. 
    Solid blue and red curves show the sufficient (\cref{thm:ub}) and necessary (\cref{thm:lb}) numbers of parameters, respectively; the solid black curves are the best prior bounds.  
    Light‐blue shading highlights our improvement in the upper bound, and light‐red shading highlights our improvement in the lower bound.  
    The cross‐hatched area marks the remaining gap. Notably, this gap disappears in the smallest $\rho$ regime.
    The yellow and green dashed line denotes the first term (\cref{prop:lb_width_p=2}) and the second term (\cref{prop:lb_vc_p=2}) in \cref{thm:lb}, respectively.  
    }
    \label{fig:alpha-comparison}
\end{figure}
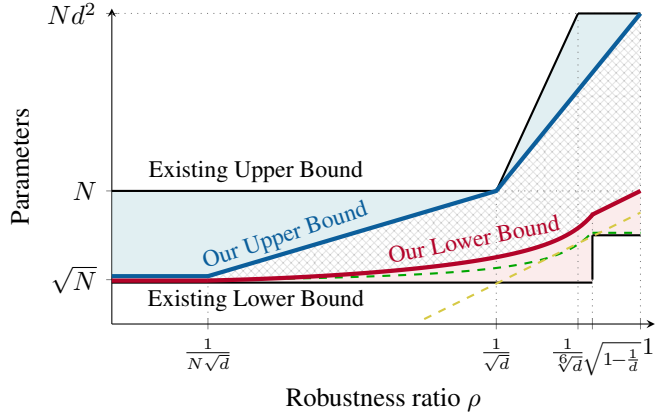

\subsection{Summary of Contribution}
We investigate how the number of parameters required for robust memorization in $\mathrm{ReLU}$ networks varies with the robustness ratio $\rho$. We improve both upper and lower bounds on the minimal number of parameters over all possible $\rho \in (0, 1)$, which are tight in some regimes and substantially reduce the existing gap elsewhere. The improvement across different regimes of $\rho$ is visualized in \cref{fig:alpha-comparison}.
\begin{itemize}[leftmargin=*]
    \item \textbf{Necessary Conditions for Robust Memorization.}~ We show that the first hidden layer must have a width of at least $\rho^2 \min\{N, d\}$, by constructing a dataset that cannot be robustly memorized using a smaller width. Consequently, the network must have at least $\Omega(\rho^2 \min\{N, d\}d)$ parameters. 
    Moreover, we prove that at least $\Omega(\sqrt{N/(1 - \rho^2)})$ parameters are necessary for $\rho \leq \sqrt{1-1/d}$ by analyzing the VC-dimension. 
    Combining these two results, we obtain a tighter lower bound on the parameter complexity of robust memorization of the form
    $$P = \Omega \left((\rho^2 \min\{N,d\} + 1) d + \min\left\{\frac{1}{\sqrt{1 - \rho^2}}, \sqrt{d}\right\}\sqrt{N} \right).$$
    \item \textbf{Sufficient Conditions for Robust Memorization.}~ We establish improved upper bounds on the parameter count by analyzing three distinct regimes of $\rho$, tightening the bound in each case.
    For $\rho \in \left(0,\ \frac{1}{5N\sqrt{d}}\right]$, we achieve robust memorization using $\tilde{O}(\sqrt{N})$ parameters, matching the existing lower bound.
    For $\rho \in \left( \frac{1}{5N\sqrt{d}}, \frac{1}{5\sqrt{d}} \right]$, we obtain robust memorization with $\tilde{O}(Nd^{1/4}\rho^{1/2})$ parameters up to an arbitrarily small error, which interpolates between the existing lower bound $\Omega(\sqrt{N})$ and the existing upper bound $\tilde{O}(N)$.
    Finally, for larger values of $\rho$, where $\rho \in \left(\frac{1}{5\sqrt{d}},1 \right)$, robust memorization is achieved with $\tilde{O}(Nd^2\rho^4)$ parameters, which interpolates between the existing upper bound $\tilde{O}(N)$ and $\tilde{O}(Nd^2)$.
\end{itemize}

All together, we provide, to the best of our knowledge, the first theoretical analysis of parameter complexity for robust memorization that characterizes its dependence on the robustness ratio $\rho$ over the entire range $\rho \in (0,1)$. Notably, when $\rho < \frac{1}{5N\sqrt{d}}$, the same number of parameters as in classical (non-robust) memorization suffices for robust memorization. These results suggest that, in terms of parameter count, achieving robustness against adversarial attacks is relatively inexpensive when the robustness radius is small. As the radius grows, however, the number of required parameters increases, reflecting the rising cost of achieving stronger robustness.

\section{Preliminaries}
\label{sec:preliminaries}
\subsection{Notation}
Throughout the paper, we use $d$ to denote the input dimension of the data, $N$ to denote the number of data points in a dataset, and $C$ to denote the number of classes for a classification task. 
For a natural number $n \in \sN$, $[n]$ denotes the set $\{1, 2, \dots, n\}$.

For two sets $A, B \subseteq \sR^d$, we denote the $\ell_2$-norm distance between $A$ and $B$ as $\dist_2(A, B):=\inf \{\norm{\va - \vb}_2 \, \mid \, \va \in A, \vb \in B\}$, where $\norm{\cdot}_2$  denotes the Euclidean norm. 
When either $A$ or $B$ is a singleton set, such as $\{\va\}$ or $\{\vb\}$, we identify the set with the element and write $\va$ or $\vb$ in place of $A$ or $B$, respectively; for example, $\dist_2(\va, B)$. 
In the case $d = 1$, we omit the subscript 2 and write $\dist(\cdot, \cdot)$ to denote the standard absolute distance on $\mathbb{R}$.
We use $\gB_2(\vx,\mu)=\left\{\vx'\mid \|\vx'-\vx\|_2 < \mu \right\}$ to denote an open Euclidean ball centered at $\vx$ with a radius $\mu$.

We use $\tilde{O}(\cdot)$ to hide the poly-logarithmic dependencies in problem parameters such as $N$, $d$, and $\rho$.

\subsection{Dataset and Robust Memorization}
For $d \geq 1$ and $N\ge C \geq 2$, let $\mD_{d, N, C}$ be the collection of all datasets of the form $\gD = \{(\vx_i, y_i)\}_{i=1}^N \subset \sR^d \times [C]$, such that $\vx_i \neq \vx_j$ for all $i \neq j$ and has at least one data point per each class label.
Hence, any $\gD \in \mD_{d, N, C}$ is a pairwise distinct $d$-dimensional dataset of size $N$ with labels in $[C]$.
\begin{definition}
\label{def:separation} 
For $\gD \in \mD_{d, N, C}$, the separation constant $\epsilon_\gD$ is defined as
\begin{align*}
    \epsilon_{\gD} &:= \frac{1}{2} \min \left\{ \|\vx_i - \vx_j\|_2 \mid (\vx_i, y_i), (\vx_j, y_j) \in \gD,\ y_i \neq y_j \right\}.
\end{align*}
\end{definition}

Since the datasets we consider have at least one data point for each class label, the set we minimize over is nonempty. Moreover, since we consider $\gD$ with $\vx_i \neq \vx_j$ for all $i \neq j$, we have $\epsilon_\gD>0$. Next, we define robust memorization of the given dataset. 
\begin{definition}
\label{def:robust memorization}
For $\gD \in \mD_{d, N, C}$ and a given \textit{robustness ratio} $\rho \in (0, 1)$, define the \textit{robustness radius} as $\mu := \rho \epsilon_{\gD}$. We say that a function $f:\sR^d \rightarrow \sR$ $\rho$-\textit{robustly memorizes} $\gD$ if 
\begin{align*}
    f(\vx') = y_i, \quad \textrm{for all } (\vx_i,y_i) \in \gD \textrm{ and } \vx' \in \gB _2 (\vx_i, \mu),
\end{align*}
and $\mathcal{B}_2(\vx_i, \mu)$ is referred to as the \textit{robustness ball} of $\vx_i$.
\end{definition}

When $\rho=0$, robust memorization reduces to classical memorization, which requires $f(\vx_i)=y_i$ for all $(\vx_i, y_i) \in \gD$.
We emphasize that the range $\rho \in (0, 1)$ covers the entire regime in which robust memorization is possible. 
Specifically, for $\rho > 1$, requiring memorization of $\rho\epsilon_\gD$-radius neighbor of each data point leads to a contradiction as $\gB_2(\vx_i, \rho \epsilon_\gD) \cap \gB_2(\vx_j, \rho \epsilon_\gD) \neq \emptyset$ for some $y_i \neq y_j$. 
Moreover, if $\rho = 1$, any continuous function $f$ cannot $\rho$-robustly memorize $\gD$.
If $f$ is continuous and $1$-robustly memorizes $\gD$, we have $f(\overline{\gB_2(\vx_i, \epsilon_\gD)}) = \{y_i\}$ for all $i \in [N]$, where $\overline{\gB_2(\vx_i, \epsilon_\gD)}$ is the closed ball with center $\vx_i$ and radius $\epsilon_\gD$. 
Since $\overline{\gB_2(\vx_i, \epsilon_\gD)} \cap \overline{\gB_2(\vx_j, \epsilon_\gD)} \neq \emptyset$ for some $y_i \neq y_j$, this leads to a contradiction.

\subsection{ReLU Neural Network}
We define the neural network $f$ recursively over $L$ layers:
\begin{align*}
    \va_0(\vx) & = \vx, \\
    \va_\ell(\vx) & = \sigma(\mW_\ell \va_{\ell-1}(\vx) + \vb_\ell) \; \textrm{ for }\ell=1, 2, \dots, L-1,\\
    f(\vx) & = \mW_L \va_{L-1}(\vx) + \vb_L,
\end{align*}
where the activation $\sigma(\vu): = \max\{\vzero,\vu\}$ is the element-wise $\mathrm{ReLU}$. 
We use $d_1, \dots, d_{L-1}$ to denote the widths of the $L-1$ hidden layers.
We define the width of the network to be the maximum hidden layer width, $\max_{\ell \in [L-1]} d_\ell$.
For $\ell \in [L]$, the symbols $\mW_\ell \in \sR^{d_\ell \times d_{\ell-1}}$ and $\vb_\ell \in \sR^{d_\ell}$ denote the weight matrix and the bias vector for the $\ell$-th layer, respectively; here, we use the convention $d_0 = d$ and $d_{L} = 1$.

We count the number of parameters $P$ of $f$ as the count of all entries in the weight matrices and biases $\{\mW_\ell, \vb_\ell \}_{\ell=1}^L$ (including entries set to zero), as
\begin{align}
    P = \sum_{\ell=1}^{L} (d_{\ell-1} + 1) \cdot d_\ell.
    \label{def:params}
\end{align}
This reflects the common convention of parameter counting in practice. 
The set of neural networks with input dimension $d$ and at most $P$ parameters is denoted as
\begin{align}
\label{def:function-class}
    \gF_{d, P} = \bigset{f:\sR^d \rightarrow \sR \mid f \textrm{ is a neural network with at most } P \textrm{ parameters}}.
\end{align}

Although less relevant in practice, some prior work counts only nonzero entries when reporting the number of parameters.
\cref{app:nonzero} adopts this alternative counting scheme and explains how our results translate under it, enabling comparisons with prior studies from a different perspective.
Even then, the key findings of this paper remain true: for small $\rho$, robustness incurs no additional parameter cost, whereas as $\rho$ grows, the number of required parameters increases.

\subsection{Why Only \texorpdfstring{$\rho = \mu/\epsilon_\gD$}{robustness ratio} Matters}
\label{subsec:only_rho_matters}

We describe both necessary and sufficient conditions for robust memorization in terms of the ratio $\rho = \mu / \epsilon_\gD$, rather than describing it in terms of individual values $\mu$ and $\epsilon_{\mathcal{D}}$.
This is because the results remain invariant under scaling of the dataset. 

Specifically regarding the sufficient condition, suppose $f$ $\rho$-robustly memorizes $\gD$ with robustness radius $\mu = \rho \epsilon_\gD$.
Then for any $c > 0$, the scaled dataset $c\gD := \{(c\vx_i, y_i)\}_{i=1}^N$, whose separation $\epsilon_{c\gD} = c\epsilon_\gD$, can be $\rho$-robustly memorized with robustness radius $c \mu$ by the scaled function $\vx \mapsto~f(\frac{1}{c}\vx)$.
Moreover, the scaled function can be implemented through a network with the same number of parameters as the neural network $f$ via scaling the first hidden layer weight matrix by $1/c$.

On the other hand, this implies that the necessary condition can also be characterized in terms of $\rho$. 
Suppose we have a dataset $\gD$ with a fixed $\epsilon_\gD$ for which $\rho$-robustly memorizing it requires a certain number of parameters $P$. Then, the scaled dataset $c\gD$ with a separation $\epsilon_{c\gD} = c\epsilon_\gD$ also requires the same number of parameters for $\rho$-robust memorization. If $c\gD$ can be $\rho$-robustly memorized with less than $P$ parameters, then by parameter rescaling from the previous paragraph, $\gD$ can also be $\rho$-robustly memorized with less than $P$ parameters, leading to a contradiction. 

Hence, the robustness ratio $\rho = \mu / \epsilon_\gD$ captures the essential difficulty of robust memorization, independent of scaling. We henceforth state our upper and lower bounds in terms of $\rho$.

\section{Necessary Number of Parameters for Robust Memorization}
\label{sec:necessity}
In this section, we establish necessity conditions on the number of parameters and the width of neural networks for robust memorization, expressed in terms of the robustness ratio $\rho \in (0,1)$. The following theorem presents our main lower bound result on the parameter complexity of robust memorization.

\begin{restatable}{theorem}{lbthm}
\label{thm:lb}
Let $\rho \in (0,1)$. Suppose for any $\gD \in \mD_{d, N, 2}$, there exists a neural network $f \in \gF_{d, P}$ that can $\rho$-robustly memorize $\gD$.
Then, the number of parameters $P$ must satisfy
$$P = \Omega \left((\rho^2 \min\{N,d\} + 1) d + \min\left\{\frac{1}{\sqrt{1 - \rho^2}}, \sqrt{d}\right\}\sqrt{N} \right).$$
\end{restatable}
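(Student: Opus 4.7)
I would derive \cref{thm:lb} by establishing three independent lower bounds and combining them via $\max(a,b,c) \ge (a+b+c)/3$ so that the maximum becomes a valid additive $\Omega(\cdot)$ bound on $P$. The three bounds are (i) a first-layer \emph{width} bound $d_1 = \Omega(\rho^2 \min\{N,d\})$, which contributes $(d+1) d_1 = \Omega(\rho^2 \min\{N,d\} \cdot d)$ first-layer parameters and corresponds to \cref{prop:lb_width_p=2}; (ii) the trivial observation that any robust-memorizing network must have $d_1 \ge 1$ (it cannot be constant on a two-class dataset), giving $P \ge d+1 = \Omega(d)$ and accounting for the additive $d$ term; and (iii) a VC-dimension-style bound $P = \Omega(\min\{1/\sqrt{1-\rho^2},\sqrt{d}\}\sqrt{N})$ corresponding to \cref{prop:lb_vc_p=2}. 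Their sum is exactly $(\rho^2 \min\{N,d\}+1)d + \min\{1/\sqrt{1-\rho^2},\sqrt{d}\}\sqrt{N}$.

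\textbf{Sketches of the nontrivial pieces.} For the width bound, I would take an adversarial $\gD \in \mD_{d,N,2}$ built from $m := \min\{N,d\}$ near-orthogonal unit vectors (padded by duplicates to reach $N$ samples) with worst-case $\pm 1$ labels. Because every point in a robustness ball must produce the same final output, the first-layer activation pattern must also be constant on each ball; a pigeonhole on the $2^{d_1}$ patterns gives an initial bound on $d_1$. The $\rho^2$ scaling should emerge from a sharper geometric argument: a single ReLU hyperplane separates two balls only when its margin exceeds $\rho\epsilon_\gD$, and an $\ell_2$-packing bound shows that at most $\Theta(1/\rho^2)$ of the $m$ near-orthogonal balls can lie on one consistent side of any half-space, forcing $d_1 = \Omega(\rho^2 m)$. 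For the VC term, I would construct a dataset whose $\rho$-robust memorization under all $\pm 1$ labelings is equivalent to shattering $N' = \min\{N/(1-\rho^2),\, Nd\}$ effective points, then invoke the standard ReLU VC-dim upper bound $V = O(P^2 L \log P)$ to conclude $P = \Omega(\sqrt{N'})$. The $1/(1-\rho^2)$ amplification is most easily motivated by a 1D gadget: adjacent robustness balls are separated only by a gap of length $(1-\rho)\epsilon_\gD$, so a label-flipping ReLU must oscillate at scale $(1-\rho)\epsilon_\gD$, contributing---via the VC-dim square-root---the $1/\sqrt{1-\rho^2}$ factor; the $\sqrt{d}$ cap comes from re-using an extension of the \citet{li2022robust} construction, which saturates when $\rho > \sqrt{1-1/d}$.

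\textbf{Main obstacle.} I expect the width bound to be the hardest step: I must rule out \emph{every} arrangement of $d_1 < \rho^2 m$ first-layer hyperplanes---not just axis-aligned or symmetric ones---for an adversarially chosen labeling. The $\rho^2$ (rather than $\rho$) scaling in particular is delicate, as it suggests the argument must exploit the near-orthogonality of the data through a sharp $\ell_2$-packing bound rather than a purely topological pigeonhole, and must hold uniformly over all weight configurations. A secondary obstacle is designing the VC-dim construction so that the separation $\epsilon_\gD$ is preserved across all labelings used in the shattering, so that the $N'$-fold amplification is realized simultaneously in both the $1/(1-\rho^2)$ regime (small $d$) and the $d$ regime (large $\rho$).
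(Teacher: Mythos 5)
Your top-level combination is exactly the paper's: \cref{thm:lb} is obtained by taking the maximum of the width-induced bound $(d+1)\cdot\max\{\rho^2\min\{N-1,d\},1\}$ and the VC-induced bound, and the paper likewise handles $\rho>\sqrt{1-1/d}$ by reusing the endpoint value $\Omega(\sqrt{Nd})$. The gaps are in your sketches of the two propositions, which is where all the content lives.

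For the width bound, your first step rests on a false claim: the final output being constant on a robustness ball does \emph{not} force the first-layer activation pattern to be constant on that ball (e.g.\ $x\mapsto\sigma(x)-\sigma(x)$ is constant while the pattern flips), so the pigeonhole over $2^{d_1}$ patterns does not get off the ground. The paper's argument uses only the much weaker, correct constraint that the first-layer \emph{linear} map $\mW$ must send differently-labeled balls to disjoint images, i.e.\ $\gB_2(\ve_j,2\mu)\cap\nullsp(\mW)=\emptyset$ for the dataset $\{(\ve_j,2)\}_{j\in[N-1]}\cup\{(\vzero,1)\}$. The $\rho^2$ then comes not from a half-space packing bound but from a linear-algebra lemma (\cref{lem:max-min-dist-gen}): for any subspace $Z$ of dimension $d-m$, averaging $\norm{\proj_{Z^\perp}(\ve_j)}_2^2$ over $j$ shows $\min_j\dist_2(\ve_j,Z)\le\sqrt{m/(N-1)}$, whence $\rho=2\mu\le\sqrt{m/(N-1)}$ and $m\ge\rho^2(N-1)$. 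Your per-hyperplane packing heuristic says nothing about the rank or null space of the full first-layer map, which is the quantity that actually matters, so I do not see how it would close.

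For the VC term, your $1/(1-\rho^2)$ mechanism (a 1D oscillation gadget) is not the paper's and I don't believe it works: without a depth bound, oscillation counts do not translate into parameter lower bounds, and a one-dimensional construction cannot shatter more than $O(N)$ points from $N$ robustness balls. The paper's amplification is genuinely $d$-dimensional: set $k=\lfloor 1/(1-\rho^2)\rfloor$ and form groups of $k$ standard basis vectors; for any $\pm1$ labeling, the two data points $\vc_l\pm(\sum_{j\in J_l^+}\ve_j-\sum_{j\in J_l^-}\ve_j)$ sit at distance exactly $\sqrt{k-1}$ from every basis point in the group, while $\mu=\rho\epsilon_\gD\ge\sqrt{k-1}$ because $\rho\ge\sqrt{(k-1)/k}$. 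Robustly memorizing these two points therefore pins down (by continuity, taking limits along the closure of the ball) the network's value at all $k$ shattered points simultaneously, yielding $\vcdim=\Omega(N/(1-\rho^2))$ and hence $P=\Omega(\sqrt{N/(1-\rho^2)})$ via $\vcdim=O(P^2)$. You would need to replace both of your sketched mechanisms with arguments of this kind before the proof goes through.
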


The proof of \cref{thm:lb} is provided in \cref{app:lbthm-proof}. 
The theorem states a necessary condition on the number of parameters for binary classification ($C = 2$). 
The same bound applies to $C>2$: 
any classifier that robustly memorizes a multiclass dataset can be converted into a one-vs-rest binary classifier by appending a final two-parameter layer (one weight and one bias) that separates a designated label from the others. Therefore, a multiclass task requires at least the parameter scale needed for the binary case.
Hence, \cref{thm:lb} extends to $C>2$.
Moreover, while \cref{thm:lb} focuses on $\ell_2$-norm, we extend the necessity results to general $\ell_p$-norm in \cref{thm:lb-p}. 
The lower bound on the number of parameters consists of two parts: one derived from the requirement on the first hidden layer width and the other from the VC-dimension.

\textbf{First Term: Necessary Condition by the First Hidden Layer Width.}~~
The first term $\Omega((\rho^2 \min\{N, d\} + 1)d)$ comes from the following proposition on the first hidden layer width. 

\begin{restatable}{proposition}{lbwidth}
    \label{prop:lb_width_p=2}
    There exists $\gD \in \mD_{d, N, 2}$ such that, for any $\rho \in (0,1)$, any neural network $f:\R^d \to\R$ that $\rho$-robustly memorizes $\gD$ must have the first hidden layer width at least $\rho^2 \min\{N - 1,d\}$.
\end{restatable}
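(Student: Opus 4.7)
The plan is to exhibit a single worst‐case dataset for which, once the first hidden layer is too narrow, the $\ker \mW_1$-invariance of any ReLU network extends the constant value required on one robustness ball into a ``tube'' that then collides with another differently-labeled robustness ball. The geometric obstruction obtained from avoiding this collision is then converted into a width lower bound via a short trace inequality. Concretely, I would set $K := \min\{N - 1, d\}$ and take $\vp_0 = \vzero$ with label $1$ together with $\vp_i = 2 \ve_i$ with label $2$ for $i \in [K]$, where $\ve_i$ denotes the $i$-th standard basis vector of $\R^d$. If $N > K + 1$, I would pad the dataset with $N - K - 1$ additional label-$1$ points placed at distance much larger than $2$ from every label-$2$ point (for instance, along a far-away ray), so that the minimum distance between differently-labeled pairs remains exactly $2$. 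The resulting dataset lies in $\mD_{d, N, 2}$ with $\epsilon_\gD = 1$ and robustness radius $\mu = \rho$.

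The key structural observation is that, for any $\vu \in \ker \mW_1$, the preactivation satisfies $\mW_1(\vx + \vu) + \vb_1 = \mW_1 \vx + \vb_1$, and hence $f(\vx + \vu) = f(\vx)$ for every $\vx \in \R^d$. Setting $V := (\ker \mW_1)^\perp$, we have $\dim V = \mathrm{rank}(\mW_1) \le d_1$. Combining this invariance with the fact that $f \equiv y_i$ on $\gB_2(\vp_i, \mu)$, we conclude that $f \equiv y_i$ on the entire tube $T_i := \gB_2(\vp_i, \mu) + \ker \mW_1$. For any differently-labeled pair $(i, j)$ we must therefore have $T_i \cap \gB_2(\vp_j, \mu) = \emptyset$; a short triangle inequality applied to $\dist(\vp_j, \vp_i + \ker \mW_1) = \|\proj_V(\vp_j - \vp_i)\|_2$ and the two open $\mu$-balls shows that this disjointness is equivalent to $\|\proj_V(\vp_i - \vp_j)\|_2 \ge 2 \mu$.

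Specialising this constraint to the $K$ differently-labeled pairs $(\vp_0, \vp_i)$ gives $\|\proj_V \ve_i\|_2 \ge \rho$ for every $i \in [K]$. Summing and using a standard trace identity,
\begin{align*}
K \rho^2 \;\le\; \sum_{i=1}^{K} \|\proj_V \ve_i\|_2^2 \;=\; \Tr\!\left( \proj_V \sum_{i=1}^{K} \ve_i \ve_i^{\top} \right) \;\le\; \dim V,
\end{align*}
where the last inequality uses that $\sum_{i=1}^{K} \ve_i \ve_i^{\top}$ is the orthogonal projection onto a $K$-dimensional coordinate subspace (so in particular $K \le d$) and the trace of a product of two orthogonal projections is bounded by the minimum of their ranks. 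Rearranging, $d_1 \ge \dim V \ge \rho^2 K = \rho^2 \min\{N - 1, d\}$.

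The step that will require the most care, and the principal subtlety of the argument, is obtaining the sharp constant $\rho^2$ rather than $\rho^2 / 4$. This relies on thickening the \emph{entire} open ball $\gB_2(\vp_i, \mu)$ by $\ker \mW_1$ before checking disjointness from $\gB_2(\vp_j, \mu)$; applying the invariance only at the single center $\vp_i$ would yield the weaker bound $\|\proj_V(\vp_i - \vp_j)\|_2 \ge \mu$ instead of $\ge 2\mu$, and would lose a factor of $4$ in the final width lower bound.
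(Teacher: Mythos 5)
Your proposal is correct and follows essentially the same route as the paper's proof: a dataset of (scaled) standard basis points versus the origin, the observation that robust memorization forces each basis direction to lie at distance at least $2\mu$ from $\ker \mW_1$ (equivalently, its projection onto $(\ker\mW_1)^\perp$ has norm at least $2\mu$), and an averaging argument over the basis to turn this into a width lower bound. The only differences are cosmetic: you sum the squared projections and apply a trace inequality directly, whereas the paper packages the same averaging computation into a separate min-distance lemma applied to a subspace of the null space, and you pad with far-away label-1 points while the paper pads with label-2 points.
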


For any fixed $N, d$, we can choose a single dataset $\gD$ that enforces the bound simultaneously for all $\rho \in (0,1)$: every $\rho$-robust memorizer of $\gD$ must have the first hidden layer width at least $\rho^2 \min \{ N-1, d\}$.
\Cref{subsec:pfsketchlb} treats the simple case $N-1=d$ to illustrate the construction and provide a sketch of proof, while \cref{app:lb1} provides the full proof for the general case.

\cref{prop:lb_width_p=2} for the $\ell_2$-norm extends to the general $\ell_p$-norm in \cref{prop:lb_width_p}. For every $p \ge 2$, the same lower bound on the first hidden layer width, $\rho^{2}\min\{N-1,d\}$, holds. For $1 \le p < 2$, a nontrivial lower bound still holds. Furthermore, for the $\ell_\infty$-norm, we strengthen the result of \cite{yu2024optimal}---while they show that width at least $d$ is necessary when $N>d$ and $\rho \ge 0.8$, we obtain the stronger width requirement $\min\{N-1,d\}$ for any $\rho \in (1/2,1)$, \textit{without} the assumption $N>d$, as formalized in \cref{thm:lb-infty}.

We now discuss the implications of \cref{prop:lb_width_p=2} on the parameter complexity in \cref{thm:lb}.
Since the input dimension is $d$, any neural network $f: \R^d \to \R$ with the first hidden layer width $m$ must have at least $md$ parameters. 
Moreover, we have a trivial lower bound $m \geq 1$.
Hence, the lower bound of width $m$  becomes $\max\{\rho^2\min\{N - 1, d\}, 1\} \geq \frac{1}{2}(\rho^2\min\{N - 1, d\} + 1)$, yielding a necessity of $\Omega((\rho^2\min\{N, d\} + 1)d)$ parameters in \cref{thm:lb}. 
The width from \cref{prop:lb_width_p=2} dominates over the trivial lower bound of 1 whenever $\rho \geq 1/\sqrt{\min\{N - 1, d\}}$.

Let us compare the result with \citet{egosi2025logarithmicwidthsufficesrobust}, where they show logarithmic width in $N$ is sufficient under the restricted condition of $\rho \leq 1/\sqrt{d}$ for robust memorization. 
Our necessary condition on width does not conflict with their logarithmic sufficiency, as their sufficiency holds only under $\rho \leq 1/\sqrt{d}$, in which our lower bound becomes trivial.

On the other hand, the necessary condition on width by \citet{egosi2025logarithmicwidthsufficesrobust} given as $2\log N/\log(4832\rho^{-1})$ exceeds the trivial lower bound 1 only when $\rho \geq 4832/N$. Even in the case where their lower bound becomes nontrivial, their bound is still at the $\Tilde{O}(1)$ scale, so that our lower bound either becomes tighter or matches their bound up to a polylogarithmic factor over all $\rho \in (0, 1)$.
As a side note, although we generally ignore polylogarithmic factors, we may also consider logarithmic terms for completeness.
Under this consideration, the lower bound of \citet{egosi2025logarithmicwidthsufficesrobust} remains logarithmically nontrivial while ours remains trivial for $4832/N < \rho < 1/\sqrt{\min\{N-1, d\}}$, provided that such $\rho$ exists. 

\textbf{Second Term: Necessary Condition by the VC-Dimension.}~~
Now, let us look at the necessary number of parameters given by the VC-dimension of the function class. 
\begin{restatable}{proposition}{lbvc}
    \label{prop:lb_vc_p=2}
    Let $\rho \in \left(0,\sqrt{1- \frac{1}{d}}\right]$.
    Suppose for any $\gD \in \mD_{d, N, 2}$, there exists $f \in \gF_{d, P}$ that $\rho$-robustly memorizes $\gD$. Then, the number of parameters $P$ must satisfy 
    $$P = \Omega\left(\sqrt{\frac{N}{1-\rho^2}}\right).$$
\end{restatable}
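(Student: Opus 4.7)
The plan is a VC-dimension argument: I combine the classical bound $\text{VC-dim}(\gF_{d, P}) = \tilde O(P^2)$ of \citet{bartlett2019nearly} with a shattering construction of size $M = \Theta(N/(1 - \rho^2))$ to conclude $P = \tilde\Omega(\sqrt{N/(1-\rho^2)})$. By the scale invariance established in \cref{subsec:only_rho_matters}, I would fix $\epsilon = 1$ throughout, so the robustness radius is $\rho$ and differently labeled anchors must be at mutual distance at least $2$.

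For the construction, I would place $\lfloor N/2 \rfloor$ mutually well-separated ``anchor slot'' regions in $\R^d$, each large enough to host a $+1$-anchor and a $-1$-anchor at distance $\ge 2$. Within each slot I would lay down $k = \Theta(1/(1-\rho^2))$ shadow points in carefully chosen positions, exploiting the fact that the hypothesis $\rho \le \sqrt{1 - 1/d}$ forces $k \le d$ so that the $k$ shadows can be placed inside $\R^d$ to be shattered by radius-$\rho$ balls (i.e., every subset of the $k$ shadows is realizable as the shadows lying inside some radius-$\rho$ ball centered within the slot). Then for any target labeling $\vy \in \{0,1\}^{M}$ with $M = \lfloor N/2\rfloor \cdot k = \Theta(N/(1-\rho^2))$, I would build $\gD_\vy \in \mD_{d,N,2}$ by placing, in each slot, a $+1$-anchor whose radius-$\rho$ ball covers exactly the $+1$-labeled shadows of that slot and a $-1$-anchor whose ball covers exactly the $-1$-labeled shadows. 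By construction every shadow lies in some anchor's robustness ball with the correct label, so the robust memorizer $f_\vy \in \gF_{d, P}$ guaranteed by hypothesis satisfies $f_\vy(\vz) = y_{\vz}$ on all $M$ shadows. Thus $\gF_{d, P}$ shatters $M$ points, and the VC-dimension upper bound gives $M \le \tilde O(P^2)$, yielding the claimed $P = \tilde\Omega(\sqrt{N/(1-\rho^2)})$.

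The main obstacle is the joint geometric realization: within each slot, the $k$ shadows must be shatterable by radius-$\rho$ balls whose centers --- together with the paired anchor of the opposite label --- maintain mutual distance at least $2$ (both within the slot and relative to anchors in other slots). This forces the $k$ shadows of a slot to lie inside a sub-region of diameter near $2\rho$, so that a single radius-$\rho$ ball can reach any desired subset of them, while still leaving slack of order $2(1-\rho)$ for the complementary-label anchor to sit $\ge 2$ away. Verifying that $k = \Theta(1/(1-\rho^2))$ is simultaneously the right order for the ball-shattering capacity (a high-dimensional fixed-radius VC calculation, feasible precisely when $k \le d$) and for the separation budget between opposite-label anchors is the technical heart of the proof, and is exactly the regime cut out by $\rho \le \sqrt{1-1/d}$.
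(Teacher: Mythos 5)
Your overall route is the same as the paper's: fix a set of $\Theta(N/(1-\rho^2))$ ``shadow'' points, and for each labeling build a dataset of $N$ anchors (two per slot) whose robustness balls force any $\rho$-robust memorizer to output the desired labels on the shadows, then combine the resulting shattering with $\vcdim(\gF_{d,P})=O(P^2)$. However, as written the proposal has a genuine gap: you never exhibit the slot construction, you only postulate its properties, and you yourself defer the joint geometric realization (``every subset of the $k$ shadows is the trace of some radius-$\rho$ ball whose center stays $2$-separated from the complementary center'') as ``the technical heart.'' That realization is essentially the whole content of the proposition; without it the argument is a plan, not a proof. Moreover, the justification you gesture at — a ``high-dimensional fixed-radius VC calculation'' for balls — is not what determines the scale: balls in $\R^d$ shatter up to $d+1$ points at any radius, so nothing about fixed-radius ball capacity alone yields $k=\Theta(1/(1-\rho^2))$. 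The binding constraint is the quantitative interplay between the radius $\rho\epsilon_\gD$ and the requirement that the two complementary anchors of a slot be $2\epsilon_\gD$ apart, and your heuristics (``diameter near $2\rho$'', ``slack of order $2(1-\rho)$'') do not pin this down.

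For comparison, the paper's realization makes all distances exact: in slot $l$ the shadows are the translated basis vectors $\vc_l+\ve_1,\dots,\vc_l+\ve_k$ with $k=\floor{1/(1-\rho^2)}$ (here $k\le d$ is exactly where $\rho\le\sqrt{1-1/d}$ enters), and for a labeling with positive/negative index sets $J^+,J^-$ the anchors are $\vc_l\pm\bigl(\sum_{j\in J^+}\ve_j-\sum_{j\in J^-}\ve_j\bigr)$. Then the cross-label separation is $2\sqrt{k}$ (so $\epsilon_\gD=\sqrt{k}$), each shadow is at distance exactly $\sqrt{k-1}$ from its matching anchor and $\sqrt{k+3}$ from the opposite one, and $\rho\ge\sqrt{(k-1)/k}$ gives $\mu=\rho\sqrt{k}\ge\sqrt{k-1}$. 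Two further points your sketch would need to handle: (i) the shadows can sit exactly on the boundary of the \emph{open} robustness ball (e.g.\ when $1/(1-\rho^2)$ is an integer), so ``the ball covers exactly the $+1$-labeled shadows'' is not quite attainable and one argues via a sequence inside the ball plus continuity of the ReLU network; (ii) to get the clean $\Omega(\sqrt{N/(1-\rho^2)})$ stated in the proposition you should use $\vcdim(\gF_{d,P})=O(P^2)$ rather than $\tilde O(P^2)$, and convert the $\{1,2\}$-valued memorizer to a $\{\pm1\}$-classifier by an affine shift of the output layer (which stays in $\gF_{d,P}$).
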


The detailed proof of \cref{prop:lb_vc_p=2} is in \cref{app:lb2} and its extension to the $\ell_p$-norm appears in \cref{thm:lbvcp}.
Before presenting our approach, we briefly review how the existing bound is obtained using VC-dimension arguments. 
\cite{gao2019convergence,li2022robust} prove that for sufficiently large $\rho$, whenever $\gF_{d, P}$ contains $\rho$-robust memorizer of any $\gD \in \mD_{d, N, 2}$, then $\vcdim(\gF_{d, P}) = \Omega(Nd)$. Combining this with a known upper bound $\vcdim(\gF_{d, P}) = O(P^2)$~\citep{goldberg1995bounding}, they obtain $P = \Omega(\sqrt{Nd})$.

However, the prior lower bound $\Omega(\sqrt{Nd})$ is only known to apply for sufficiently large $\rho$, without specifying the precise range.
Before our result, the only lower bound applicable to all $\rho$---including small $\rho$ regime---was the one that trivially comes from non-robust memorization: $\Omega(\sqrt{N})$.
A wide range of $\rho$ lacks a VC-dimension-based lower bound tailored to robust memorization.

In \cref{prop:lb_vc_p=2}, we carefully characterize how the VC-dimension scales over the range $\rho \in~(0, \sqrt{1 - \nicefrac{1}{d}}]$. 
In this range of $\rho$, we show whenever $\gF_{d, P}$ contains $\rho$-robust memorizer of any $\gD \in \mD_{d, N, 2}$, then $\vcdim(\gF_{d, P}) = \Omega(\nicefrac{N}{1 - \rho^2})$; this thus gives the tighter bound $P = \Omega (\sqrt{\nicefrac{N}{1 - \rho^2}})$.
At the endpoint $\rho = \sqrt{1 - \nicefrac{1}{d}}$, \cref{prop:lb_vc_p=2} implies that $\Omega(\sqrt{Nd})$ parameters are required. Therefore, the same lower bound applies for all $\rho \geq \sqrt{1 - \nicefrac{1}{d}}$, characterizing the regime in which the existing bound of $\sqrt{Nd}$ holds. 
By combining \cref{prop:lb_vc_p=2} over $\rho \in (0, \sqrt{1 - \nicefrac{1}{d}}]$ and the $\Omega(\sqrt{Nd})$ bound over $\rho \in~(\sqrt{1 - \nicefrac{1}{d}}, 1)$, we obtain the second term $\Omega(\min \{\nicefrac{1}{\sqrt{1 - \rho^2}}, \sqrt{d}\}\sqrt{N})$ in \cref{thm:lb}.

Finally, we clarify why \cref{prop:lb_vc_p=2} is stated for $\rho \leq \sqrt{1-\nicefrac{1}{d}}$ and why, for $\rho>\sqrt{1-\nicefrac{1}{d}}$, this approach cannot improve upon the $\sqrt{Nd}$ scale.
Any such improvement via VC-dimension would require showing that $\vcdim(\gF_{d, P})$ strictly exceeds $Nd$, i.e., that a $\rho$-robust memorizer in $\R^d$ shatters more than $Nd$ points.
Our shattering argument shows that robustly memorizing two arbitrary points forces shattering of (a subset of) the standard basis directions in $\R^d$; iterating over $\nicefrac{N}{2}$ disjoint pairs can yield $\nicefrac{Nd}{2}$ shattered points.
Consequently, our current construction neither establishes that a robust memorizer of $N$ points can shatter beyond the $Nd$ scale, nor that a robust memorizer of two points can shatter beyond the $d$ scale. 
Thus, within this framework, the VC-dimension cannot be pushed beyond $Nd$ scale, and the induced parameter lower bound does not improve beyond the $\sqrt{Nd}$ scale for $\rho > \sqrt{1-\nicefrac{1}{d}}$.

\section{Sufficient Number of Parameters for Robust Memorization}
\label{sec:sufficiency}
In this section, we establish sufficient conditions on the number of parameters for robust memorization, thereby complementing the lower bounds presented in the previous section. In fact, one of our upper bound results is derived under a relaxed definition of robust memorization. 
For this, we define $\rho$-robust memorization error of a neural network.

\begin{definition} 
\label{def:near-perfect robust memorization_p=2}
For any $\gD \in \mD_{d, N, C}$, we define the $\rho$-robust memorization error of a network $f: \mathbb{R}^d \to \mathbb{R}$ on $\gD$ as
\begin{align*}
    \mathcal{L}_{\rho} (f,\gD) := \max _{(\vx_i,y_i) \in \gD} \mathbb{P}_{\vx' \sim \text{Unif}(\gB  (\vx_i, \mu))} [f(\vx') \neq y_i],
\end{align*}
where $\mu = \rho \epsilon_{\gD}$.
When $\gL_\rho(f, \gD) < \eta$, we say $f$ can $\rho$-robustly memorize $\gD$ with error at most $\eta$. 
\end{definition}

Note that if a network $f$ $\rho$-robustly memorizes $\gD$ (as in \cref{def:robust memorization}), then the error is zero; that is, by definition $\mathcal{L}_{\rho}(f, \gD) = 0$.

We now state our main upper bounds, showing that any given dataset in $\mD_{d, N, C}$ can be $\rho$-robustly memorized by a network with $\rho$-dependent number of parameters.

\begin{restatable}{theorem}{ubthm}
    \label{thm:ub}
    For any dataset $\gD \in \mD_{d, N, C}$ and $\eta \in (0, 1)$, the following statements hold:
    \begin{enumerate}[label=\upshape(\roman*),ref=\thetheorem (\roman*), leftmargin=30pt, itemsep=-3pt]
        \item \label{thm:ub-3} If $\rho \in \Big(0, \frac{1}{{5} N \sqrt{d} }\Big]$, there exists $f \in \mathcal{F}_{d,P}$ with $P=\tilde{O}(\sqrt{N})$ that $\rho$-robustly memorizes $\gD$.
        \item \label{thm:ub-4} If $\rho \in \Big(\frac{1}{{5}N \sqrt{d}},\frac{1}{ 5\sqrt{d} }\Big]$, there exists $f \in \mathcal{F}_{d,P}$ with $P=\tilde{O}(N d^{\frac{1}{4}} \rho^{\frac{1}{2}})$ that $\rho$-robustly memorizes $\gD$ with error at most $\eta$.
        \item \label{thm:ub-2} If $\rho \in \Big(\frac{1}{5\sqrt{d}},1\Big)$, there exists $f \in \mathcal{F}_{d,P}$ with $P = \tilde{O}(N d^{{2}} \rho^{{4}})$ that $\rho$-robustly memorizes $\gD$.
    \end{enumerate}
\end{restatable}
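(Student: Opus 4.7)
The plan is to treat the three regimes of $\rho$ separately, each with its own construction tailored to the scale of the robustness ball $\mu = \rho \epsilon_\gD$, and then check that the parameter counts match at the regime boundaries. Intuitively, as $\rho$ grows the balls $\gB_2(\vx_i,\mu)$ force the network to preserve more of the dataset's geometric structure, so the parameter count must scale accordingly.

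For case (i), where $\mu \le \epsilon_\gD / (5N\sqrt d)$, the plan is to reduce robust memorization to one-dimensional non-robust memorization via a linear projection. First I would pick a direction $\vu \in \R^d$ such that the projected values $\vu^\top \vx_i$ are pairwise separated by at least $c\,\epsilon_\gD/(N\sqrt d)$ for a universal constant $c>0$; such a $\vu$ can be obtained by a probabilistic argument on $\vu$ uniform on $S^{d-1}$ combined with a careful counting of the pairs whose projection could be small. The assumption on $\rho$ then guarantees that the projected robustness balls, which are intervals of half-width $\mu$, are pairwise disjoint. Composing the width-one first layer realizing $\vu^\top \vx$ with the $\tilde O(\sqrt N)$-parameter one-dimensional memorizer of \citet{vardi2021optimalmemorizationpowerrelu} applied to the projected values yields the desired network.

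For case (iii), where $\rho > 1/(5\sqrt d)$, the plan is a partition-of-space construction tuned to $\mu$. I would tile a bounded region containing the dataset by axis-aligned boxes whose side length is comparable to $\mu$, then build a shallow $\mathrm{ReLU}$ sub-circuit that maps each input to the label of its enclosing box. The improvement over the prior $\tilde O(Nd^2)$ construction of \citet{yu2024optimal} comes from the fact that a larger $\mu$ admits a coarser tiling, so the index-computation sub-circuit associated with each data point can be made smaller by a factor of $\rho^4$; pooling $N$ such sub-circuits with a shallow aggregation stage gives total $\tilde O(Nd^2\rho^4)$ parameters.

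Case (ii) is the main technical obstacle, and is where the relaxed error tolerance in \cref{def:near-perfect robust memorization_p=2} enters. The natural strategy is a hybrid of the previous two: project onto an intermediate-dimensional subspace of dimension $d'(\rho) \ll d$, then apply a partition-style classifier inside the projected space. Because such a projection cannot preserve every robustness ball exactly, one uses a Johnson--Lindenstrauss-type concentration argument to show that with probability at least $1-\eta$ over $\vx' \sim \mathrm{Unif}(\gB_2(\vx_i,\mu))$, the projected point lies in a correctly-labelled tile. The main difficulty is the balancing act: $d'$, the tile side length, and the error budget $\eta$ must be co-tuned so that the projection distortion stays below the tile spacing with probability $1-\eta$, while keeping the overall parameter count at $\tilde O(Nd^{1/4}\rho^{1/2})$. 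Getting the exponents $d^{1/4}\rho^{1/2}$ to come out exactly right will require an explicit choice of $d'$ and tile size, and verifying that this choice is simultaneously parameter-optimal for both the projection cost and the tile-classifier cost is the technical crux.
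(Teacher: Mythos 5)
Your proposal captures the correct high-level intuition (project, then memorize one-dimensionally; reduce dimension to exploit disjointness of robustness balls) but all three cases have concrete gaps that would prevent the proof from going through, and the actual mechanisms differ from the paper's in important ways.

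For case~(i), a direct random projection to $\R^1$ does \emph{not} give the separation you claim. For $\vu$ uniform on $S^{d-1}$ and a fixed pair with $\|\vx_i-\vx_j\|_2 \ge 2\epsilon_\gD$, the density of $\vu^\top(\vx_i-\vx_j)$ near zero is $\Theta(\sqrt d/\epsilon_\gD)$; a union bound over up to $\Theta(N^2)$ pairs forces the separation down to $\Theta(\epsilon_\gD/(N^2\sqrt d))$, which would cover only the regime $\rho\lesssim 1/(N^2\sqrt d)$, not the stated $\rho\le 1/(5N\sqrt d)$. Moreover, $\epsilon_\gD$-separation only holds between \emph{differently-labelled} pairs, so same-label points can project to arbitrarily close values, and the one-dimensional memorizer of \citet{vardi2021optimalmemorizationpowerrelu} (Theorem~3.1) requires pairwise separation of all its inputs. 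The paper avoids both problems by first projecting to $m=\tilde O(\log N)$ dimensions (where JL preserves all pairwise ratios), then translating so each ball sits $\Theta(1/N)$ away from the integer lattice (a pigeonhole/gap argument in one coordinate at a time), then flattening the grid cell index to a single integer; same-label collisions are handled naturally because distinct points sharing a cell receive the same integer and the same label. The $1/N$ lattice-margin factor is exactly what produces the $1/(N\sqrt d)$ threshold, and it has no analogue in a raw one-dimensional projection.

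For case~(iii), the scaling direction is backwards. In the baseline construction of \citet{yu2024optimal}, the relevant granularity is the margin $\gamma=(1-\rho)\epsilon_\gD$ between the robustness ball and the separation boundary, which \emph{shrinks} as $\rho\to 1$; a larger $\mu$ therefore requires a \emph{finer}, not coarser, tiling. The actual $\rho^4$ improvement in the paper comes from an upfront Johnson--Lindenstrauss projection to dimension $m=\Theta(\rho^2 d)$ (the smallest dimension for which the projected robustness balls stay disjoint), after which Yu et al.'s $\tilde O(Nm^2)$ construction gives $\tilde O(N\rho^4 d^2)$ parameters. Without that dimension reduction, the asserted $\rho^4$ factor has no mechanism.

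For case~(ii), your description is too vague to assess, but the parameter you propose tuning (the intermediate projection dimension $d'$) is not what the paper varies. The paper keeps the projected dimension at $\tilde O(\log N)$ throughout and instead varies the \emph{group size}: it partitions the $N$ points into $\approx N^{1-\alpha}$ groups of $\approx N^\alpha$ points, memorizes one group exactly per stage with a translated lattice tuned to that group, and bounds the probability that a point from an inactive group falls into an error-tolerant strip by the width of the strip (via the volume-fraction lemma). Choosing $N^\alpha\approx 1/(\rho\sqrt d)$ gives $N^{1-\alpha}$ stages each costing $\tilde O(N^{\alpha/2})$ parameters, whence $\tilde O(N^{1-\alpha/2})=\tilde O(Nd^{1/4}\rho^{1/2})$. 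The $d^{1/4}\rho^{1/2}$ exponent arises from this group-size trade-off, not from co-tuning a projection dimension against a tile size; you would need to identify this sequential-memorization-with-error-control structure to make the exponents come out.
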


We note that we omitted the trivial additive factor $d$ that accounts for parameters connected to input neurons.
The three regimes in \cref{thm:ub}---each referred to as small, moderate, and large $\rho$ regime respectively---collectively cover all values of $\rho \in (0, 1)$ and provide explicit upper bound complexity for robust memorization. 
Moreover, the constructions behind \cref{thm:ub} use a single network architecture that depends only on the problem parameters $N, d, C, \rho$ and not on the dataset: for every $\gD \in \mD_{d, N, C}$ and given $\rho$, choosing appropriate weights and biases on this same architecture achieves the stated guarantee.

We present a proof sketch in \cref{subsec:pfsketchub} and the detailed proof in \cref{appendix:sufficiency}. 
The extended version of \cref{thm:ub}, which additionally states the explicit bounds on depth, width, and bit complexity is presented as \cref{thm:ub-detailed}.
Importantly, the upper bound on the number of parameters in \cref{thm:ub} does not come at the cost of implausible bit complexity. In fact, \cref{remark:bit-complexity} shows that the constructions in Theorem~\ref{thm:ub-3} and \ref{thm:ub-4} can be implemented with bit complexities that match the necessary bit complexity required for networks with the stated parameter counts.
The extension of \cref{thm:ub} to the $\ell_p$-norm setting is given in \cref{thm:ub_pnorm}. 

In contrast to prior results, Theorems~\ref{thm:ub-3} and \ref{thm:ub-4} provide the first upper bounds for robust memorization that are sublinear in $N$. Notably, our construction reveals a continuous interpolation---driven by the robustness ratio~$\rho$---from the classical memorization complexity of $\Theta(\sqrt{N})$ to the existing upper bound of $\tilde{O}(N)$ in Theorem~\ref{thm:ub-4}, and further from $\tilde{O}(N)$ to $\tilde{O}(Nd^2)$ as shown in Theorem~\ref{thm:ub-2}. This demonstrates how the sufficient parameter complexity increases gradually with~$\rho$, capturing the full spectrum of the robustness ratio.

\textbf{Tight Bounds for Robust Memorization with Small $\rho$.}~~
Theorem~\ref{thm:ub-3} establishes a tight upper bound $\tilde{O}(\sqrt{N})$ on the number of parameters required for robust memorization when the robustness ratio satisfies $\rho < \frac{1}{5N\sqrt{d}}$. 
Since VC-dimension theory~\citep{goldberg1995bounding} implies that any network exactly memorizing given $N$ arbitrary samples must use at least $\Omega(\sqrt{N})$ parameters, our construction is optimal up to logarithmic factors. 
This shows that, for sufficiently small $\rho$, robust memorization requires the same parameter complexity $\tilde{\Theta}(\sqrt{N})$ as classical (non-robust) memorization.

\textbf{Perfect Robust Memorization with Threshold Activation Function.}~~
Theorem~\ref{thm:ub-4} builds upon the techniques in Theorem~\ref{thm:ub-3}, extending the applicability from small values of $\rho$ to moderate ones.
However, the extension requires the allowance of an arbitrarily small robust memorization error. 
As discussed in \cref{subsec:pfsketchub} and shown \cref{fig:lattice}, the error arises because $\mathrm{ReLU}$-only networks can represent only continuous functions. 
Near discontinuous transition regions, they incur small errors---though these can be made arbitrarily small. 
In contrast, if we are allowed to use discontinuous threshold activation in combination with $\mathrm{ReLU}$ network, we can achieve $\rho$-robust memorization---and therefore zero robust memorization error---even in the moderate regime using $\tilde{O}(Nd^{1/4}\rho^{1/2})$ parameters, the same rate as Theorem~\ref{thm:ub-4}.

\textbf{Tight Bounds of Width.}~~
For small and moderate $\rho$, our construction shows width $\tilde{O}(1)$ is sufficient, recovering the logarithmic width sufficiency of \citet{egosi2025logarithmicwidthsufficesrobust}.
For large $\rho$, our construction shows width of $\tilde{O}(\rho^2 d)$ is sufficient for $\rho$-robust memorization. 
A complementary lower bound (\cref{prop:lb_width_p=2}) requires width at least $\rho^2 \min\{N-1,d\}$ is also necessary, which matches with our upper bound when $N>d$.
As a result, when the number of data points exceeds the data dimension, our results tightly characterize the required width up to polylogarithmic factors across the entire range $\rho \in (0, 1)$.

\section{Key Proof Ideas}
\label{sec:proof ideas}
In this section, we outline the sketch of proof for some of the results from \cref{sec:necessity,sec:sufficiency}.

\subsection{Proof Sketch for \texorpdfstring{\cref{prop:lb_width_p=2}}{Proposition 3.2}}
\label{subsec:pfsketchlb} 
We briefly overview the sketch of the proof for \cref{prop:lb_width_p=2}.
For simplicity, we sketch the case $N = d + 1$, where \cref{prop:lb_width_p=2} reduces to showing that the first hidden layer must have width at least $\rho^2 d$.
To this end, we construct the dataset $\gD = \{(\ve_j, 1)\}_{j \in [d]} \cup \{(\vzero, 2)\}$, assigning label 1 to the standard basis points and label 2 to the origin, as shown in \cref{fig:lb_width_data}. 

Let $f$ be an $\rho$-robust memorizer of $\gD$ with the first hidden layer width $m$, and let $\mW \in \sR^{m \times d}$ denote the weight matrix of the first hidden layer. 
Since $\epsilon_\gD = 1/2$, the robustness radius is $\mu = \rho \epsilon_{\gD} = \rho/2$.
For any $j \in [d]$, take any $\vx \in \gB_2(\ve_j, \mu)$ and $\vx' \in \gB_2(\vzero, \mu)$. 
Then, $f(\vx) = 1$ and $f(\vx') = 2$ must hold, implying $\mW \vx \neq \mW \vx'$.
Therefore, $\vx - \vx'$ should not lie in the null space of $\mW$.
All such possible differences $\vx - \vx'$ form a ball of radius $2\mu$ around each standard basis point, illustrated as the gray ball in \cref{fig:lb_width_dist}. 
Thus, the distance between each standard basis point and the null space of $\mW$ must be at least $2\mu$; otherwise, some gray balls intersect with the null space. 

The null space of $\mW$ is a $d - m$ dimensional space, assuming that $\mW$ has full row rank. (The full proof generalizes even without this assumption.)
By \cref{lem:max-min-dist}, the distance between the set of standard basis points and any subspace of dimension $d - m$ is at most $\sqrt{m/d}$. 
Therefore, we have $\rho  = 2\mu \leq \dist_2\left(\{\ve_j\}_{j \in [d]}, \nullsp(\mW)\right) \leq \sqrt{m/d}$ and thus the first hidden layer width satisfies $m \geq \rho^2d$.
\vspace{-5pt}
\begin{figure}[ht]
    \centering
    \begin{subfigure}[t]{0.42\textwidth}
        \centering
        \begin{tikzpicture}[scale=1.65]
    \draw[->] (0,0,0) -- (1.5,0,0) node[anchor=north east]{$y$};
    \draw[->] (0,0,0) -- (0,1.5,0) node[anchor=north west]{$z$};
    \draw[->] (0,0,0) -- (0,0,2) node[anchor=south]{$x$};
    
    \def\muu{0.125}
    \def\rz{1.5}
    \def\bopacity{0.7}

    \shade[ball color=tealgreen, opacity=\bopacity] (1,0,0) circle (\muu); 
    \shade[ball color=tealgreen, opacity=\bopacity] (0,1,0) circle (\muu); 
    \shade[ball color=tealgreen, opacity=\bopacity] (0,0,1) circle (\muu); 
    
    \shade[ball color=coral, opacity=\bopacity] (0,0,0) circle (\muu); 
    
    \filldraw[navyblue] (1,0,0) circle (0.5pt);
    \filldraw[navyblue] (0,1,0) circle (0.5pt);
    \filldraw[navyblue] (0,0,1) circle (0.5pt);
    
    \filldraw[crimson] (0,0,0) circle (0.5pt);
    
    \node at (1-0.1,-0.1,0) [left] {$\mathbf{e}_2$};
    \node at (-0.1,1,0) [left] {$\mathbf{e}_3$};
    \node at (-0.1,0,1) [left] {$\mathbf{e}_1$};

    \node at (-0.1,0,0) [left] {$\vzero$};

    \draw[-] (1,0,0) -- (1+\muu*0.71,-\muu*0.71,0);
    \node at (1.05,-0.5*\muu-0.05,0) [right] {$\mu$};
    
\end{tikzpicture}
        \caption{Dataset for \cref{prop:lb_width_p=2} ($d=3$).}
        \label{fig:lb_width_data}
    \end{subfigure}
    \hfill
    \begin{subfigure}[t]{0.56\textwidth}
        \centering
        \begin{tikzpicture}[scale=1.65]
    \draw[->] (0,0,0) -- (0,1.5,0) node[anchor=north west]{$z$};
    
    \def\muu{0.25}
    \def\rz{1.3}
    \def\bopacity{0.6}
    
    \shade[ball color=gray, opacity=\bopacity] (0,1,0) circle (\muu); 
    
    \fill[heather!80!black, opacity=0.4]
    (\rz,\rz,0) -- (0,\rz, \rz) -- (-\rz,-\rz,0) -- (0,-\rz,-\rz) -- cycle;
    \draw[heather!70!black, thick, ->] (0,0,0) -- (0.6,-0.6,0.6);
    \draw[dashed,-] (0.6,0,0.6) -- (0.6,-0.6,0.6);
    \draw[dashed,-] (0.6,0,0.6) -- (0.6,0,0);
    \draw[dashed,-] (0.6,0,0.6) -- (0,0,0.6);

    \def\perpd{0.04}
    \draw[heather!50!black] (\perpd,-\perpd,\perpd) -- (\perpd + \perpd * 1.225,-\perpd,\perpd - \perpd * 1.225);
    \draw[heather!50!black] (\perpd * 1.225,0,-\perpd * 1.225) -- (\perpd + \perpd * 1.225,-\perpd,\perpd - \perpd * 1.225);
    \draw[heather!50!black, thick] (\perpd * 1.225,0,-\perpd * 1.225) -- (\perpd * 1.225,0,-\perpd * 1.225)++(\perpd,-\perpd,\perpd);

    \draw[->] (0,0,0) -- (0,0,2) node[anchor=south]{$x$};
    \draw[->] (0,0,0) -- (1.5,0,0) node[anchor=north east]{$y$};
    
    \shade[ball color=gray, opacity=\bopacity] (1,0,0) circle (\muu); 
    \shade[ball color=gray, opacity=\bopacity] (0,0,1) circle (\muu); 
    
    \filldraw[black] (1,0,0) circle (0.5pt);
    \filldraw[black] (0,1,0) circle (0.5pt);
    \filldraw[black] (0,0,1) circle (0.5pt);
    
    \draw[-] (0,0,1) -- (-1/3,1/3,2/3);
    \filldraw[heather!80!black] (-1/3,1/3,2/3) circle (0.5pt);

    \draw[-] (0,1,0) -- (1/3,2/3,1/3);
    \filldraw[heather!80!black] (1/3,2/3,1/3) circle (0.5pt);

    \draw[-] (1,0,0) -- (2/3,1/3,-1/3);
    \filldraw[heather!80!black] (2/3,1/3,-1/3) circle (0.5pt);
    
    \node at (1,0,0) [left] {$\mathbf{e}_2$};
    \node at (0,1,0) [left] {$\mathbf{e}_3$};
    \node at (0,0,1) [left] {$\mathbf{e}_1$};
    
    \node at (0,-1.1,0) [heather!70!black] {$\nullsp(\mW) = \{(x,y,z)\in \sR^3 \, \mid \, z = x + y\}$};
    
    \draw[thick, dashed, heather!80!black]
    (-\rz/2,0,\rz/2) -- (\rz/2,0,-\rz/2);
    
    \node at (5/6 ,1/6 +0.15,-1/6) [right] {$\dist_2(\ve_2, \nullsp(\mW))$};

    \draw[-] (1,0,0) -- (1,-\muu,0);
    \node at (1,-0.5*\muu-0.25,0) [right] {$2\mu$};
    
\end{tikzpicture}
        \caption{
        $\nullsp(\mW) \subset \sR^3$ and the standard basis
        }
        \label{fig:lb_width_dist}
    \end{subfigure}
    \caption{
    In (a), blue balls have label 1; the red ball has label 2. 
    (b) illustrates the distance between $\nullsp(\mW) \subset \sR^3$ and the standard basis for $\mW = \begin{bmatrix}
            1 & 1 & -1
        \end{bmatrix}$ with the first hidden layer width 1.
    }
    \label{fig:lbwidth}
\end{figure}
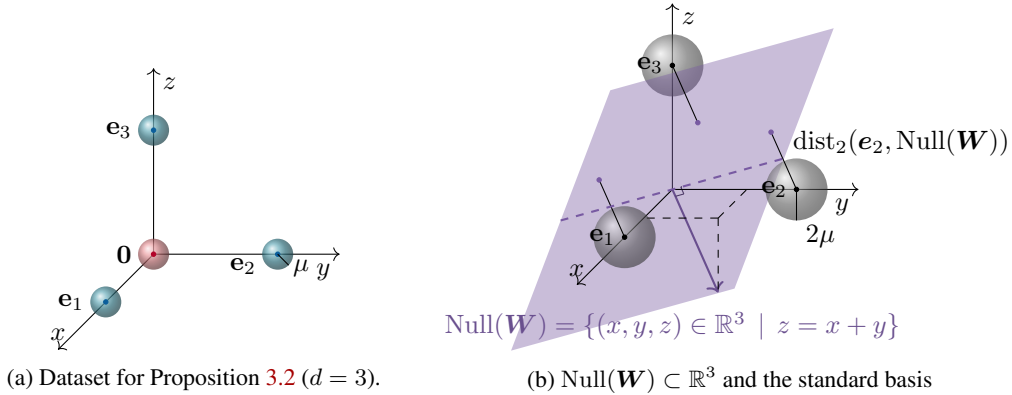

\subsection{Proof Sketch for \texorpdfstring{\cref{thm:ub}}{Theorem 4.2}}
\label{subsec:pfsketchub}

We now highlight the key construction techniques used to prove \cref{thm:ub}.

\paragraph{Separation-Preserving Dimensionality Reduction.}

\begin{wrapfigure}{r}{0.43\textwidth}
    \vspace{-10pt}
    \centering
    \def\radius{2}
    \def\muu{0.6}
    
    \begin{tikzpicture}[scale=0.5, thick]
 
    \node at (2,2.7) {$\mathbb{R}^d$};
    
    \fill[crimson] (0, 0) circle (4pt);
    \fill[coral, opacity=0.5] (0, 0) circle (\muu);
    \draw[coral] (0, 0) circle (\muu);
    \draw[coral] (-0.6,0) arc (180:360:0.6 and 0.25);
    \draw[coral, dashed] (0.6,0) arc (0:180:0.6 and 0.25);
    \draw (0,0) -- (-0.42,0.42) node[left, below, xshift=-5pt, yshift=5pt] {$\mu$};
    
    \fill[navyblue] (\radius, 0) circle (4pt);
    \fill[tealgreen, opacity=0.5] (\radius, 0) circle (\muu);
    \draw[tealgreen] (\radius, 0) circle (\muu);
    \draw[tealgreen] (1.4, 0) arc (180:360:0.6 and 0.25);
    \draw[tealgreen, dashed] (2.6, 0) arc (0:180:0.6 and 0.25);
    
    \fill[crimson] (-0.6,2.2) circle (4pt);
    \fill[coral, opacity=0.5] (-0.6,2.2) circle (\muu);
    \draw[coral] (-0.6,2.2) circle (\muu);
    \draw[coral] (-0.6-0.6,2.2) arc (180:360:0.6 and 0.25);
    \draw[coral, dashed] (-0.6+0.6,2.2) arc (0:180:0.6 and 0.25);
    
    \fill[navyblue] (-1.2,-2) circle (4pt);
    \fill[tealgreen, opacity=0.5] (-1.2,-2) circle (\muu);
    \draw[tealgreen] (-1.2,-2) circle (\muu);
    \draw[tealgreen] (-1.2-0.6,-2) arc (180:360:0.6 and 0.25);
    \draw[tealgreen, dashed] (-1.2+0.6,-2) arc (0:180:0.6 and 0.25);
    
    \draw (0,0) -- (\radius,0) node[midway, below, yshift=-4pt] {$2\epsilon_\gD$};

    \draw[thick,->] (3.2,0) -- (4.2,0);
    
    \node at (8,2.7) {$\mathbb{R}^m$};
    
    \fill[crimson] (6,0) circle (4pt) ;
    \fill[coral, opacity=0.5] (6, 0) circle (\muu);
    \draw[coral] (6, 0) circle (\muu);
    \draw (6,0) -- (6-0.42,0.42) node[left, below, xshift=-5pt, yshift=5pt] {$\mu$};
    
    \fill[crimson] (5.2,1.5) circle (4pt);
    \fill[coral, opacity=0.5] (5.2,1.5) circle (\muu);
    \draw[coral] (5.2,1.5) circle (\muu);
    
    \fill[navyblue] (5.3,-1.6) circle (4pt);
    \fill[tealgreen, opacity=0.5] (5.3,-1.6) circle (\muu);
    \draw[tealgreen] (5.3,-1.6) circle (\muu);
    
    \fill[navyblue] (6+2/3*\radius,0) circle (4pt);
    \fill[tealgreen, opacity=0.5] (6+2/3*\radius,0) circle (\muu);
    \draw[tealgreen] (6+2/3*\radius,0) circle (\muu);

    \draw (6,0) -- (6+2/3*\radius,0)
      node[midway, below, xshift=8pt, yshift=-5.5pt] 
      {$\frac{4}{5}\!\sqrt{\!\frac{m}{d}}\epsilon_\gD$};
    \end{tikzpicture}
    \caption{Separation-Preserving Projection}
    \label{fig:jl}
    \vspace{-10pt}
\end{wrapfigure}
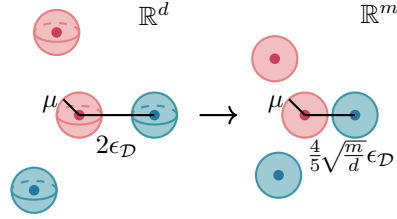

All three results in \cref{thm:ub} leverage a strengthened version of the Johnson-Lindenstrauss (JL) lemma (\cref{lem:dimensionreduction}) to project data from a high-dimensional space $\mathbb{R}^d$ (left in \cref{fig:jl}) to a lower-dimensional space $\mathbb{R}^m$ (right), while preserving pairwise distances up to a multiplicative factor. Specifically, any pair of points that are $2\epsilon_{\mathcal{D}}$-separated in $\mathbb{R}^d$ can remain at least $\frac{4}{5} \sqrt{\frac{m}{d}} \epsilon_{\mathcal{D}}$-separated after the projection. 
Meanwhile, each robustness ball of radius $\mu$ is preserved under the projection because our strengthened JL lemma uses randomized orthonormal projections \citep{matousek2013lectures}.
Since the geometry is preserved---specifically, the separation remains at least $\frac{4}{5} \sqrt{\frac{m}{d}}$ times its original value and the robustness radius is unchanged under projection---we can $\rho$-robustly memorize data points in $\sR^d$ by projecting them to $\sR^m$ and memorizing the projected points, provided that projected robustness balls do not overlap, i.e., as long as $\rho \leq \frac{2}{5}\sqrt{\frac{m}{d}}$.

In Theorems~\ref{thm:ub-3} and \ref{thm:ub-4}, we project to $\mathbb{R}^{m}$ with $m=O(\log N)$ in the first hidden layer. The remaining layers have width $O(m)$, so the network width is $O(m)=O(\log N)$, i.e., constant up to polylogarithmic factors. This logarithmic projection is valid only for $\rho=O(1/\sqrt{d})$: projected $\rho$-balls remain disjoint as long as $\rho \le \frac{2}{5}\sqrt{m/d}=\tilde{O}(1/\sqrt{d})$. If $\rho$ exceeds this scale, the projected balls overlap. 
For the larger-$\rho$ regime, Theorem~\ref{thm:ub-2} increases the projection dimension. As long as $\rho \le \tfrac{2}{5}\sqrt{m/d}$, the projected robustness balls remain disjoint; accordingly, taking $m \propto \rho^{2}d$ maintains disjointness. Consequently, the width is proportional to $\rho^{2}d$, and the parameter count is proportional to $\rho^{4}d^{2}$.

The idea of separation-preserving dimension reduction and deriving conditions under which robustness balls remain disjoint after projection is concurrently proposed by \citet{egosi2025logarithmicwidthsufficesrobust}. However, their approach to ensuring the separability of robustness balls is substantially different from ours. Since the classical JL lemma does not inherently guarantee the preservation of ball separability, the authors do not rely on the JL lemma directly. Instead, they establish a probabilistic analogue through a technically involved analysis that bounds the probability that a random projection satisfies the required separation property.
In contrast, we employ a strengthened version of the JL lemma and give a straightforward proof that there exists a projection preserving separability; see \cref{app:jl lemma}.

\paragraph{Mapping to Lattices from Grid.}
For Theorem~\ref{thm:ub-3} and \ref{thm:ub-4}, we utilize the $\tilde{O}(\sqrt{N})$-parameter memorization devised by \citet{vardi2021optimalmemorizationpowerrelu}. In order to adopt the technique, it is necessary to assign a scalar value in $\mathbb{R}$ to each data point. This is because the construction memorizes the data after projecting them onto $\mathbb{R}$. Furthermore, this scalar assignment must meaningfully reflect the spatial structure of the data---preserving relative distances and neighborhood relationships of robustness ball. 

We achieve this using grid-based lattice mapping. Specifically, we first reduce the dimension to $m =~O(\log N)$. Then we partition $\mathbb{R}^m$ into a regular grid, and assign an integer index to each grid cell. Through this grid indexing, we map each unit cube $\prod_{j \in [m]} [z_j, z_j + 1)$ to an index $z_1R^{m-1} + z_2R^{m-2} + \cdots +z_m$ for each $\vz = (z_1, \cdots, z_m) \in \sZ^m$ and some sufficiently large integer $R$. Finally, we associate each index with the label of the projected robustness ball contained in that cell. The network then memorizes the mapping from each grid index to its corresponding label. 

Under the condition on $\rho$ in Theorem~\ref{thm:ub-3}, after an appropriate translation of the projected data, every projected robustness ball can be contained in a single grid cell in a way that no cell contains balls of two different labels; see \cref{fig:lattice-perfect}. Hence, the label is constant on each cell that contains a ball, and all points in the ball can be associated with the cell’s grid index.

What remains is implementability with $\mathrm{ReLU}$ networks. The grid-indexing map is discontinuous, while $\mathrm{ReLU}$ networks are continuous and can only approximate it. Consequently, approximation errors can occur only in thin neighborhoods of cell boundaries (the purple bands in \cref{fig:lattice-perfect}).

Theorem~\ref{thm:ub-3} guarantees a translation that places every (projected) robustness ball strictly inside a cell and sufficiently far from all cell boundaries so that the $\mathrm{ReLU}$-based indexing is accurate on the entire ball. Hence, each ball is disjoint from the purple error-tolerant regions, every point in the ball is mapped to the same grid index, and this yields $\rho$-robust memorization using only $\tilde{O}(\sqrt{N})$ parameters.

However, in Theorem~\ref{thm:ub-4}, we consider larger $\rho$, where projected robustness balls can overlap more than one grid cell and may intersect the error-tolerant regions where the $\mathrm{ReLU}$-based indexing is inaccurate. 
As $\rho$ grows, the number of such balls increases. 
To cope with this regime, we use a sequential memorization strategy. We robustly memorize only the subset whose robustness balls are disjoint from the error-tolerant regions. The remaining balls may intersect those regions, but any resulting error is confined to those error-tolerant regions and can be made arbitrarily small by narrowing the error-tolerant regions.

In particular, we partition the $N$ points into multiple groups of approximately equal size and, at each stage, we robustly memorize one group, which we call the active group of this stage and we call the remaining groups of data points as inactive groups.
We apply a translation so that the robustness balls of the active group lie strictly inside grid cells and away from the error-tolerant regions, while inactive balls may cross cell boundaries, provided they do not interfere with the cells occupied by the active group of this stage; see \cref{fig:lattice-error}.
The grid indexing is then implemented by a $\mathrm{ReLU}$ approximator whose error-tolerant regions are chosen sufficiently thin—by increasing the slope as in \cref{lem:floor-function-approx}—so that indexing is exact on the active balls.
Any error for the inactive balls is confined to those thin error-tolerant regions. 
By \cref{lem:projection prob}, the portion of a robustness ball covered by the error region scales with the region’s width, and this width decreases as the ReLU slope grows; hence, the error can be driven arbitrarily small.
The active group is robustly memorized using the construction of Theorem~\ref{thm:ub-3}, and inactive balls do not interfere with the labels assigned in this stage.
Iterating the stages and composing the resulting subnetworks yields memorization of all $N$ points with arbitrarily small error.
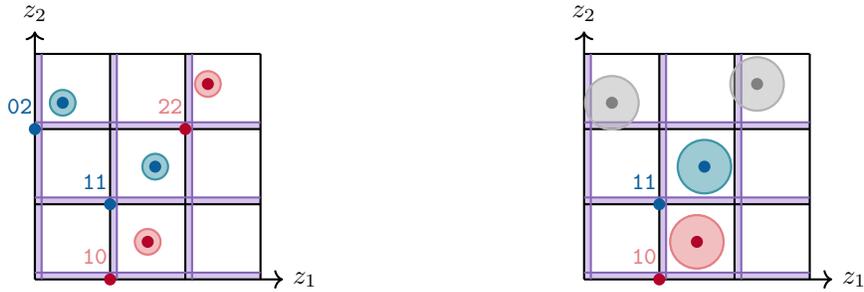
\begin{figure}[ht]
    \centering
    \begin{subfigure}[t]{0.47\textwidth}
        \centering
        \begin{tikzpicture}[scale=1, thick]
    \def\bopacity{0.5}
    \def\rho{0.17}
    \def\shift{0.2}
    \foreach \x in {0,1,2} \fill[heather!40] (\x,0) rectangle (\x+0.09,3);
    \foreach \y in {0,1,2} \fill[heather!40] (0,\y) rectangle (3,\y+0.09);
    
    \draw[->] (0,0) -- (3.3,0) node[right] {$z_1$};
    \draw[->] (0,0) -- (0,3.3) node[above] {$z_2$};
    \foreach \x in {1,2,3} \draw (\x,0) -- (\x,3);
    \foreach \y in {1,2,3} \draw (0,\y) -- (3,\y);

    \foreach \x in {0,1,2} \draw[heather!90!black] (\x+0.09,0) -- (\x+0.09,3);
    \foreach \y in {0,1,2} \draw[heather!90!black] (0,\y+0.09) -- (3,\y+0.09);

    \draw[coral, thick] (1.5,0.5) circle (\rho);
    \draw[tealgreen, thick] (1.6,1.5) circle (\rho);
    \draw[tealgreen, thick] (0.37,2.35) circle (\rho);
    \draw[coral, thick] (2.3,2.6) circle (\rho);
    
    \filldraw[coral,opacity=\bopacity] (1.5,0.5) circle (\rho);
    \filldraw[tealgreen,opacity=\bopacity] (1.6,1.5) circle (\rho);
    \filldraw[tealgreen,opacity=\bopacity] (0.37,2.35) circle (\rho);
    \filldraw[coral,opacity=\bopacity] (2.3,2.6) circle (\rho);
    
    \fill[crimson] (1.5,0.5) circle (0.08);
    \fill[navyblue] (1.6,1.5) circle (0.08);
    \fill[navyblue] (0.37,2.35) circle (0.08);
    \fill[crimson] (2.3,2.6) circle (0.08);

    \fill[crimson] (1,0) circle (0.08);
    \fill[navyblue] (1,1) circle (0.08);
    \fill[navyblue] (0,2) circle (0.08);
    \fill[crimson] (2,2) circle (0.08);
    
    \node[coral] at (1 - \shift,0.3) {\small \texttt{10}};
    \node[navyblue] at (1 - \shift,1.3) {\small \texttt{11}};
    \node[navyblue] at (0 - \shift,2.3) {\small \texttt{02}};
    \node[coral] at (2 - \shift,2.3) {\small \texttt{22}};
    
\end{tikzpicture}
        \caption{The setting for Theorem~\ref{thm:ub-3}, where each robust ball is entirely contained within a single grid cell, and no two balls with different labels occupy the same cell. This guarantees well-defined indexing without ambiguity.
        }
        \label{fig:lattice-perfect}
    \end{subfigure}
    \hfill
    \begin{subfigure}[t]{0.47\textwidth}
        \centering
        \begin{tikzpicture}[scale=1, thick]
    \def\bopacity{0.5}
    \def\rho{0.355}
    \def\shift{0.2}
    \foreach \x in {0,1,2} \fill[heather!40] (\x,0) rectangle (\x+0.09,3);
    \foreach \y in {0,1,2} \fill[heather!40] (0,\y) rectangle (3,\y+0.09);
    
    \draw[->] (0,0) -- (3.3,0) node[right] {$z_1$};
    \draw[->] (0,0) -- (0,3.3) node[above] {$z_2$};
    \foreach \x in {1,2,3} \draw (\x,0) -- (\x,3);
    \foreach \y in {1,2,3} \draw (0,\y) -- (3,\y);

    \foreach \x in {0,1,2} \draw[heather!90!black] (\x+0.09,0) -- (\x+0.09,3);
    \foreach \y in {0,1,2} \draw[heather!90!black] (0,\y+0.09) -- (3,\y+0.09);
    
    \draw[coral, thick] (1.5,0.5) circle (\rho);
    \draw[tealgreen, thick] (1.6,1.5) circle (\rho);
    \draw[gray!60, thick] (0.37,2.35) circle (\rho);
    \draw[gray!60, thick] (2.3,2.6) circle (\rho);
    
    \filldraw[coral,opacity=\bopacity] (1.5,0.5) circle (\rho);
    \filldraw[tealgreen,opacity=\bopacity] (1.6,1.5) circle (\rho);
    \filldraw[gray!60,opacity=\bopacity] (0.37,2.35) circle (\rho);
    \filldraw[gray!60,opacity=\bopacity] (2.3,2.6) circle (\rho);
    
    \fill[crimson] (1.5,0.5) circle (0.08);
    \fill[navyblue] (1.6,1.5) circle (0.08);
    \fill[gray] (0.37,2.35) circle (0.08);
    \fill[gray] (2.3,2.6) circle (0.08);
    
    \fill[crimson] (1,0) circle (0.08);
    \fill[navyblue] (1,1) circle (0.08);
    
    \node[coral] at (1 - \shift,0.3) {\small \texttt{10}};
    \node[navyblue] at (1 - \shift,1.3) {\small \texttt{11}};
    
\end{tikzpicture}
        \caption{The relaxed setting in Theorem~\ref{thm:ub-4} allows some balls to extend across adjacent grid cell boundaries, as long as they do not interfere with the specific cells being memorized at that step.}
        \label{fig:lattice-error}
    \end{subfigure}
    \caption{Grid-based Lattice Mapping.}
    \label{fig:lattice}
\end{figure}

\section{Conclusion}
\label{sec:conclusion}
We present a tighter characterization of the parameter complexity necessary and sufficient for robust memorization across the full range of robustness ratio $\rho \in (0,1)$. Our results establish matching upper and lower bounds for small $\rho$, and show that robustness demands significantly more parameters than classical memorization as $\rho$ grows. These findings highlight how robustness fundamentally increases memorization difficulty under adversarial attacks.

We establish tight complexity bounds in the regime where $\rho < \frac{1}{5N\sqrt{d}}$. However, in the remaining cases, a gap between the upper and lower bounds persists. A precise characterization of the parameter complexity for some $\rho$ remains open and is essential for a complete understanding of the trade-off between robustness and network complexity.

\newpage
\section*{Acknowledgement}
This work was supported by three Institute of Information \& communications Technology Planning \& Evaluation (IITP) grants (No.\ RS-2019-II190075, Artificial Intelligence Graduate School Program (KAIST); No.\ RS-2022-II220184, Development and Study of AI Technologies to Inexpensively Conform to Evolving Policy on Ethics; No.\ RS-2024-00457882, National AI Research Lab Project) funded by the Korean government (MSIT) and the InnoCORE program of the Ministry of Science and ICT (No.\ N10250156).

\bibliography{references.bib}
\bibliographystyle{plainnat}


\newpage
\appendix
\allowdisplaybreaks

\tableofcontents
\newpage
\section{Proofs for \texorpdfstring{\Cref{sec:necessity}}{Section~3}}
\label{app:necessity}

\subsection{Explicit Proof of \texorpdfstring{\cref{thm:lb}}{Theorem 3.1}}
\label{app:lbthm-proof}

\lbthm*

\begin{proof} 
    From \cref{prop:lb_width_p=2}, we obtain $\gD \in \mD_{d, N, 2}$ such that any $f:\sR^d \rightarrow \sR$ that $\rho$-robustly memorizes $\gD$ must have the first hidden layer width at least $\rho^2 \min \{N-1, d\}$. 
    By the assumption of \cref{thm:lb}, there exists $f \in \gF_{d,P}$ that $\rho$-robustly memorizes $\gD$ with the first hidden layer width $m \ge \rho^2 \min \{N-1, d\}$.
    With the trivial lower bound that $m \geq 1$, we have
    $$m\geq\max\{\rho^2\min\{N - 1, d\}, 1\} \geq \frac{1}{2}(\rho^2\min\{N - 1, d\} + 1).$$
    
    Since we count all parameters according to \cref{def:params}, the number of parameters in the first layer is $(d+1)m$. Therefore, 
    $$P \geq (d+1) \cdot m \geq (d+1) \cdot\frac{1}{2}(\rho^2\min\{N - 1, d\} + 1)=\Omega(d(\rho^2\min\{N, d\} + 1)).$$

    In addition, for $\rho \in \left(0, \sqrt{1-\frac{1}{d}} \right]$, using \cref{prop:lb_vc_p=2} gives the lower bound of parameters 
    $$P=\Omega\left( \sqrt{\frac{N}{1-\rho^2}}\right).$$ 
    For $\rho \in \left(0, \sqrt{1-\frac{1}{d}} \right]$, we have $\frac{1}{\sqrt{1-\rho^2}} \leq \sqrt{d}$ so that the following relation holds: 
    \begin{align*}
        \min\bigset{\frac{1}{\sqrt{1-\rho^2}}, \sqrt{d}} \cdot \sqrt{N} = \sqrt{\frac{N}{1-\rho^2}}.
    \end{align*}

    For $\rho \in \left(\sqrt{1-\frac{1}{d}},1 \right)$, the lower bound $P = \Omega(\sqrt{Nd})$ obtained by the case $\rho = \sqrt{1 - \frac{1}{d}}$ also can be applied. 
    In this case, $\frac{1}{\sqrt{1-\rho^2}}>\sqrt{d}$ so that the following relation holds:
    \begin{align*}
        \min\bigset{\frac{1}{\sqrt{1-\rho^2}}, \sqrt{d}} \cdot \sqrt{N} = \sqrt{Nd}.
    \end{align*}
    
    Hence, in both $\rho$ regimes,
    $$P=\Omega\left(\min\{\frac{1}{\sqrt{1-\rho^2}}, \sqrt{d}\} \sqrt{N}\right),$$
    serves as the lower bound on the number of parameters. 

    By combining the bounds from \cref{prop:lb_width_p=2} and \cref{prop:lb_vc_p=2}, we conclude:
    \begin{align*}
        P=&\Omega\left( \max \left\{(\rho^2\min\{N, d\} + 1)d, \min\{\frac{1}{\sqrt{1-\rho^2}}, \sqrt{d}\} \sqrt{N}\right\} \right) \\ = & \Omega\left((\rho^2\min\{N, d\} + 1)d+\min\{\frac{1}{\sqrt{1-\rho^2}}, \sqrt{d}\} \sqrt{N}\ \right).
    \end{align*}
\end{proof}

\subsection{Necessary Condition on Width for Robust Memorization}
\label{app:lb1}

\lbwidth*

\begin{proof}
To prove \cref{prop:lb_width_p=2}, we consider two cases based on the relationship between $N-1$ and $d$. In the first case, where $N-1 \leq d$, establishing the proposition requires that the first hidden layer has width at least $\rho^2 (N-1)$. In the second case, where $N-1 > d$, the required width is at least $\rho^2 d$. For each case, we construct a dataset $\gD \in \mD_{d, N, 2}$ such that any network that $\rho$-robustly memorizes $\gD$ must have a first hidden layer of width no smaller than the corresponding bound. 

\paragraph{Case I : $N-1 \leq d$.}

Let $\gD = \{(\ve_j, 2)\}_{j \in [N-1]} \cup \{(\vzero, 1)\}$. 
Then, $\gD$ has separation constant $\epsilon_\gD = 1/2$. 
Let $f$ be a neural network that $\rho$-robust memorizes $\gD$, and denote the width of its first hidden layer as $m$. 
Denote by $\mW \in \sR^{m \times d}$ the weight matrix of the first hidden layer of $f$. 
Assume for contradiction that $m < \rho^2(N-1)$. 

Let $\mu = \rho \epsilon_\gD$ denote the robustness radius. 
Then, the network $f$ must distinguish every point in $\gB_2(\ve_j, \mu)$ from every point in $\gB_2(\vzero, \mu)$, for all $j \in [N-1]$.
Therefore, for any $\vx \in \gB_2(\ve_j, \mu)$ and $\vx' \in \gB_2(\vzero, \mu)$, we must have
\begin{align*}
    \mW\vx \neq \mW\vx',
\end{align*}
or equivalently, $\vx - \vx' \notin \nullsp (\mW)$, where $\nullsp (\cdot)$ denotes the null space of a given matrix. 
Note that
\begin{align*}
    \gB_2(\ve_j, \mu) - \gB_2(\vzero, \mu) := \{\vx - \vx' \mid \vx \in \gB_2(\ve_j, \mu) \textrm{ and } \vx' \in \gB_2(\vzero, \mu) \} = \gB_2(\ve_j, 2\mu).
\end{align*}

Hence, it is necessary that $\gB_2(\ve_j, 2\mu) \cap \nullsp(\mW) = \emptyset$ for all $j \in [N-1]$, or equivalently,
\begin{align}
    \label{eq:no-intersect}
    \dist_2(\ve_j, \nullsp(\mW)) \geq 2\mu \quad \text{for all } j \in [N-1].
\end{align}

Since $\dim (\col (\mW^\top)) \leq m$, where $\col(\cdot)$ denotes the column space of the given matrix, it follows that $\dim (\nullsp (\mW)) \geq d - m$.
Using \cref{lem:max-min-dist-gen}, we can upper bound the distance between the set $\{\ve_j\}_{j \in [N-1]} \subseteq \sR^d$ and any subspace of dimension $d-m$. 

Let $Z \subseteq \nullsp(\mW)$ be a subspace such that $\dim (Z) = d - m$, and apply \cref{lem:max-min-dist-gen} with substitutions $d = d$, $t = N-1$, $k = d - m$ and $Z = Z$. 
The conditions of lemma, namely $t \leq d$ and $k \geq d - t$, are satisfied since $N-1 \leq d$ and $m < \rho^2(N-1) \leq N-1$. 
Therefore, we obtain the bound
\begin{align*}
    \min_{j \in [N-1]} \dist_2(\ve_j, Z) \leq \sqrt{\frac{m}{N-1}}.
\end{align*}
By combining the above inequality with \cref{eq:no-intersect}, we obtain
\begin{align}
\label{eq:no-intersect2}
    2\mu \leq \min_{j \in [N-1]} \dist_2(\ve_j, \nullsp(\mW)) \overset{(a)}{\leq} \min_{j \in [N-1]} \dist_2(\ve_j, Z) \leq \sqrt{\frac{m}{N-1}},
\end{align}
where (a) follows from that $Z \subseteq \nullsp(\mW)$. 
Since $\epsilon_\gD = 1/2$, we have $2\mu = 2 \rho \epsilon_\gD = \rho$, so \cref{eq:no-intersect2} becomes 
$$\rho \leq \sqrt{\frac{m}{N-1}} \;.$$
This implies that $m \geq \rho^2(N-1)$, contradicting the assumption $m < \rho^2(N-1)$. Therefore, the width requirement $m \geq \rho^2(N-1)$ is necessary.
This concludes the statement for the case $N-1 \leq d$. 

\paragraph{Case II : $N-1 > d$.}
We construct the first $d + 1$ data points in the same manner as in Case I, using the construction for $N=d+1$. 
For the remaining $N - d - 1$ data points, we set them sufficiently distant from the first $d + 1$ data points to ensure that the separation constant remains $\epsilon_{\gD} = 1/2$. 

In particular, we set $\vx_{d+2} = 2\ve_1,\, \vx_{d+3} = 3\ve_1,\, \cdots,\, \vx_N = (N - d)\ve_1$ and assign $y_{d+2} = y_{d+3} = \cdots = y_N = 2$. 
Compared to the case $N = d + 1$, this construction preserves $\epsilon_\gD$ while adding more data points to memorize.
Since the first $d+1$ data points are constructed as in the case $N=d+1$, the same lower bound applies. Specifically, by the result of Case I, any network that $\rho$-robustly memorizes this dataset must have a first hidden layer of width at least $\rho^2((d + 1) - 1) = \rho^2 d$. This concludes the argument for the case $N - 1 > d$.

Combining the results from the two cases $N - 1 \leq d$ and $N - 1 > d$ completes the proof of the proposition.
\end{proof}

\subsection{Necessary Condition on Parameters for Robust Memorization}
\label{app:lb2}
For sufficiently large $\rho$, \citet{gao2019convergence} and \citet{li2022robust} prove that, for any $\gD \in \mD_{d, N, C}$, if there exists $f \in \gF_{d, P}$ that $\rho$-robustly memorizes $\gD$, the number of parameters $P$ should satisfy $P = \Omega(\sqrt{Nd})$. 
However, the authors do not characterize the range of $\rho$ over which this lower bound remains valid. 

Motivated from \citet{gao2019convergence} and \citet{li2022robust}, we establish a lower bound that depends on $\rho$ in the regime $\rho \leq \sqrt{1-1/d}$, which becomes $\sqrt{Nd}$ when $\rho=\sqrt{1-1/d}$. This implies that the existing lower bound $\sqrt{Nd}$ remains valid for $\rho \in~[\sqrt{1-1/d},1)$. As a result, we obtain a lower bound that holds continuously from $\rho\approx0$ up to $\rho \approx 1$, and thus interpolates between the lower bound $\sqrt{N}$ for memorization to the lower bound $\sqrt{Nd}$ for robust memorization.

\lbvc*

\begin{proof}
To prove the statement, we show that for any $\gD \in \mD_{d, N, 2}$, if there exists a network $f \in \gF_{d, P}$ that $\rho$-robustly memorizes $\gD$, then 
\begin{align}
    \vcdim(\gF_{d, P}) = \Omega\left(\frac{N}{1 - \rho^2}\right).\text{\footnotemark}
    \label{eq:vc-goal}
\end{align}
\footnotetext{
    We follow the definition of VC-dimension by \citet{bartlett2019nearly}. Note that the VC-dimension of a real-valued function class is defined as the VC-dimension of $\textrm{sign}(\gF) := \{\textrm{sign}\circ f \mid f \in \gF\}$. 
    Since we consider the label set $[2] = \{1, 2\}$ for robust memorization while the VC-dimension requires the label set $\{+1, -1\}$, we take an additional step of an affine transformation in the last step of the proof.
}

If the above bound holds, then as $\vcdim(\gF_{d, P}) = O(P^2)$, it follows that $P = \Omega(\sqrt{N/(1 - \rho^2)})$. 

Let $k := \floor{\frac{1}{1 - \rho^2}}$. To establish the desired VC-dimension lower bound, it suffices to show that
$$
\vcdim(\gF_{d, P})\geq k \cdot \floor{\frac{N}{2}}.
$$
This implies \cref{eq:vc-goal}, as desired.
To this end, it suffices to construct $k \cdot \floor{\frac{N}{2}}$ points in $\sR^d$ that can be shattered by $\gF_{d, P}$.
These points are organized as an union of $\floor{\frac{N}{2}}$ groups, each group consisting of $k$ points.

\textbf{Step 1} (Constructing $\Omega(N/(1 - \rho^2))$ points $\gX$ to be shattered by $\gF_{d, P}$). 

We begin by constructing the first group.
Since $\rho \in \left(0, \sqrt{\frac{d - 1}{d}}\right]$, we have $k = \floor{\frac{1}{1 - \rho^2}} \in (1, d]$. 
Define the first group $\gX_1 := \{\ve_j\}_{j=1}^k \subseteq \sR^d$, consisting of the first $k$ standard basis vectors in $\sR^d$. 
The remaining $\floor{\frac{N}{2}} - 1$ groups are constructed by translating $\gX_1$. 
For each $l = 1, \cdots \floor{\frac{N}{2}}$, define
\begin{align*}
    \gX_l := \vc_l + \gX_1 = \bigset{\vc_l + \vx \, \mid \, \vx \in \gX_1},
\end{align*}
where $\vc_l := 2d^2(l - 1) \cdot \ve_1$ ensures that the groups are sufficiently distant from one another.
Note that $\vc_1 = \vzero$, so that $\gX_1$ is consistent with the definition above.  
Now, define $\gX := \cup_{l \in [\floor{N/2}]} \gX_l$ as the union of all groups, comprising $k \times \floor{\frac{N}{2}}$ points in total. 

\textbf{Step 2} (Showing $\gF_{d, P}$ shatter $\gX$). 

We claim that for any $\gD \in \mD_{d, N, 2}$, if there exists a network $f \in \gF_{d, P}$ that $\rho$-robustly memorizes $\gD$, then the point set $\gX$ is shattered by $\gF_{d, P}$. 
To prove the claim, consider an arbitrary labeling $\gY = \{y_{l, j}\}_{l \in [\floor{N/2}], j \in [k]}$ of the points in $\gX$, where each label $y_{l, j} \in \{\pm 1\}$ corresponds to the point $\vx_{l, j} := \vc_l + \ve_j \in \gX$. 

Given the labeling $\gY$, we construct $\gD \in \mD_{d, N, 2}$ with labels in $\{1,2\}$ such that any function $f \in \gF_{d, P}$ that $\rho$-robustly memorizes $\gD$ can be affinely transformed to $f' = 2f - 3 \in \gF_{d, P}$, which satisfies $f'(\vx_{l, j}) = y_{l, j} \in \{\pm 1\}$ for all $\vx_{l, j} \in \gX$. 
In other words, $f'$ exactly memorizes the given labeling $\gY$ over $\gX$, thereby showing that $\gX$ is shatterd by $\gF_{d,P}$.
The affine transformation is necessary to match the $\{1,2\}$-valued outputs of $f$ with the $\{\pm 1\}$ labeling required for the shattering argument.

For each $l \in [\floor{N/2}]$, define the index sets 
$$J_l^+ = \bigset{j \in [k] \, \mid \, y_{l, j} = +1}, \quad J_l^- = \bigset{j \in [k] \, \mid \, y_{l, j} = -1},$$
which partition the group-wise labeling $ \{y_{l,j}\}_{j \in [k]} \subset \gY$ into positive and negative indices.
We then define
\begin{align*}
    \vx_{2l - 1} & = \vc_l + \sum_{j \in J_l^+} \ve_j - \sum_{j \in J_l^-} \ve_j,\\
    \vx_{2l} & = \vc_l + \sum_{j \in J_l^-} \ve_j - \sum_{j \in J_l^+} \ve_j.
\end{align*}
Let $y_{2l - 1} = 2, y_{2l} = 1$, and define the dataset $\gD = \{(\vx_i, y_i)\}_{i \in [N]} \in \mD_{d, N, 2}$.
\cref{fig:lbvc} illustrates the first group $l=1$ with $k=3$ where the labels gives the index sets $J_1^+=\{1,3\}$ and $J_1^-=\{2\}$. The blue and red dots denote the points $\vx_1$ and $\vx_2$, respectively.
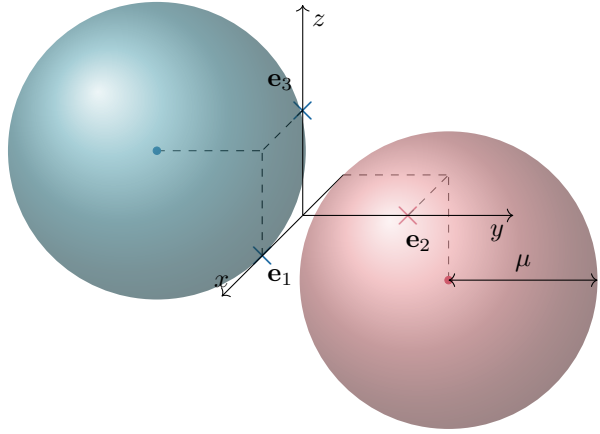
\begin{figure}
    \centering
    \begin{tikzpicture}[scale=1.4]
        
        \def\muu{1.41421}
        \def\rzz{5}
        \def\bopacity{0.6}
        
        \node[crimson] at (1,0,0) {\textbf{\Large$\times$}};
        \node[navyblue] at (0,1,0) {\textbf{\Large$\times$}};
        \node[navyblue] at (0,0,1) {\textbf{\Large$\times$}};

        \draw[dashed,-] (-1,1,1) -- (0,1,1);
        \draw[dashed,-] (0,1,0) -- (0,1,1);
        \draw[dashed,-] (0,0,1) -- (0,1,1);

        \draw[dashed,-] (1,-1,-1) -- (1,0,-1);
        \draw[dashed,-] (1,0,0) -- (1,0,-1);
        \draw[dashed,-] (0,0,-1) -- (1,0,-1);
        \draw[-] (0,0,-1) -- (0,0,0);

        \filldraw[crimson] (1,-1,-1) circle (1pt);
        \filldraw[navyblue] (-1,1,1) circle (1pt);

        \shade[ball color=coral, opacity=\bopacity] (1,-1,-1) circle (\muu);
        \shade[ball color=tealgreen, opacity=\bopacity] (-1,1,1) circle (\muu);
        
        \draw[->] (0,0,0) -- (2,0,0) node[anchor=north east]{$y$};
        \draw[->] (0,0,0) -- (0,2,0) node[anchor=north west]{$z$};
        \draw[->] (0,0,0) -- (0,0,2) node[anchor=south]{$x$};
        
        \node at (1.1,-0.1,0) [below] {$\mathbf{e}_2$};
        \node at (0,1.1,0) [above left] {$\mathbf{e}_3$};
        \node at (0,0,1.1) [below right] {$\mathbf{e}_1$};

        \draw[<->] (1,-1,-1) -- (1+\muu,-1,-1);
        \node at (1+0.5*\muu,-1,-1) [above] {$\mu$};
        
    \end{tikzpicture}
    \caption{Reduction of Shattering to Robust Memorization. The cross marks refer to the points to be shattered, and the circular dots refer to the points for robust memorization. The centers of robustness balls change with respect to the labels of the points to be shattered. }
    \label{fig:lbvc}
\end{figure}

To analyze the separation constant $\epsilon_\gD$, we consider the distance between pairs of points with different labels. Specifically, for each $l$, the two points $\vx_{2l - 1}$ and $\vx_{2l}$ have opposite labels by construction. Consider their distance:
\begin{align*}
    \norm{\vx_{2l - 1} - \vx_{2l}}_2 = \norm{2\bigopen{\sum_{j \in J_l^+} \ve_j - \sum_{j \in J_l^-} \ve_j}}_2 \overset{(a)}{=} 2\sqrt{k}, 
\end{align*}
where (a) holds since $J_l^+ \cap J_l^- = \emptyset$ and $J_l^+ \cup J_l^- = [k]$. 
Now, for $l \neq l'$, consider the distance between $\vx_{2l - 1}$ and $\vx_{2l'}$, which again correspond to different labels. We have:
\begin{align*}
    \dist_2(\vx_{2l - 1}, \vx_{2l'}) \overset{(a)}{\geq} & \dist_2(\vc_l, \vc_{l'}) - \dist_2(\vc_l, \vx_{2l - 1}) - \dist_2(\vc_{l'}, \vx_{2l'})\\
    \overset{(b)}{\geq} & 2d^2 - \sqrt{k} - \sqrt{k}\\
    \overset{(c)}{\geq} & 2d^2 - 2\sqrt{d}\\
    \overset{(d)}{\geq} & 2\sqrt{d}\\
    \overset{(e)}{\geq} & 2\sqrt{k},
\end{align*}
where (a) follows from the triangle inequality, (b) uses $\dist_2(\vc_l, \vx_{2l - 1}) = \dist_2(\vc_{l'}, \vx_{2l'}) = \sqrt{k}$, (c) and (e) use $k \leq d$, and (d) holds for all $d \geq 2$. Thus, we conclude that $\epsilon_\gD \geq \sqrt{k}$. 

Let $f \in \gF_{d, P}$ be a function that $\rho$-robustly memorizes $\gD$. 
We begin by deriving a lower bound on the robustness radius $\mu$ in order to verify that $f'=2f-3$ correctly memorizes the given labeling $\gY$ over $\gX$. 
Define $\phi(t):=\sqrt{\frac{t-1}{t}}$. 
The function $\phi$ is strictly increasing for $t \geq 1$, and maps $[1,\infty)$ onto $[0, 1)$. 
Hence, it admits an inverse $\phi^{-1}:[0, 1) \rightarrow [1, \infty)$, defined as $\phi^{-1}(\rho) = \frac{1}{1 - \rho^2}$. Therefore, we have
\begin{align*}
    \rho = \phi(\phi^{-1}(\rho)) = \phi\bigopen{\frac{1}{1 - \rho^2}} \geq \phi\bigopen{\floor{\frac{1}{1 - \rho^2}}} = \phi(k) = \sqrt{\frac{k - 1}{k}}.
\end{align*}

Given $\epsilon_\gD \geq \sqrt{k}$ and $\rho \geq \sqrt{\frac{k-1}{k}}$, it follows that $\mu = \rho \epsilon_\gD \geq \sqrt{k-1}$. 
Thus, any function $f$ that $\rho$-robustly memorizes $\gD$ must also memorize all points within an $\ell_2$-ball of radius $\sqrt{k-1}$ centered at each point in $\gD$. 

Next, for $\vx_{l, j} \in \gX$ with positive label $y_{l, j} = +1$, we have 
\begin{align*}
    \norm{\vx_{l, j} - \vx_{2l - 1}}_2 & = \norm{(\vc_l + \ve_j) - (\vc_l + \sum_{j' \in J_l^+} \ve_{j'} - \sum_{j' \in J_l^-} \ve_{j'})}_2\\
    & = \norm{\sum_{\substack{j' \in J_l^+\\ j' \neq j}}\ve_{j'} - \sum_{j' \in J_l^-} \ve_{j'}}_2\\
    & = \sqrt{k - 1}.
\end{align*}
Now consider a sequence $\{\vz_n\}_{n \in \sN}$ such that $\vz_n \rightarrow \vx_{l, j}$ as $n \rightarrow \infty$ and
\begin{align*}
    \norm{\vz_n - \vx_{2l - 1}}_2 < \sqrt{k-1} \quad \text{for all } n \in \sN.
\end{align*}
In particular, we can take
\begin{align*}
    \vz_n := \frac{n - 1}{n}\vx_{l, j} + \frac{1}{n}\vx_{2l - 1},
\end{align*}
which satisfies such properties. 
Then, $\vz_n \in \gB(\vx_{2l-1}, \mu)$ for all $n$, and by robustness of $f$, $f(\vz_n) = f(\vx_{2l - 1}) = 2$. 
By continuity of $f$, we have
\begin{align*}
    f(\vx_{l, j}) = f(\lim_{n \rightarrow \infty} \vz_n) = \lim_{n \rightarrow \infty} f(\vz_n) = \lim_{n \rightarrow \infty} 2 = 2.
\end{align*}
Similarly, for $\vx_{l, j} \in \gX$ with negative label $y_{l, j} = -1$, we have $\norm{\vx_{l, j} - \vx_{2l}}_2 = \sqrt{k - 1}$, so that $f(\vx_{l, j}) = 1$. 

Since we can adjust the weight and the bias of the last hidden layer, $\gF_{d, P}$ is closed under affine transformation; that is, $af + b \in \gF_{d, P}$ whenever $f \in \gF_{d, P}$. 
In particular, $f' := 2f - 3 \in \gF_{d, P}$. 
This $f'$ satisfies $f'(\vx_{l, j}) = 2f(\vx_{l, j}) - 3 = 2 \cdot 2 - 3 = +1$ whenever $y_{l, j} = +1$ and $f'(\vx_{l, j}) = 2f(\vx_{l, j}) - 3 = 2 \cdot 1 - 3 = -1$ whenever $y_{l, j} = -1$.
Thus, $\sign \circ f'$ perfectly classifies $\gX$ according to the given labeling $\gY$. 
Since such $f' \in \gF_{d, P}$ exists for an arbitrary labeling $\gY$, it follows that $\gF_{d, P}$ shatters $\gX$, completing the proof of the theorem.

\end{proof}

\subsection{Lemmas for \texorpdfstring{\Cref{app:necessity}}{Appendix~A}}
The following lemma upper bounds the $\ell_2$-distance between the standard basis and any subspace of a given dimension. 

\begin{restatable}{lemma}{maxmindist}
\label{lem:max-min-dist}
Let $\{\ve_j\}_{j \in [d]} \subseteq \sR^d$ denote the standard basis of $\sR^d$. 
Then, for any $k$-dimensional subspace $Z \subseteq \sR^d$, 
\begin{align*}
    \max_{j \in [d]} \norm{\proj_Z(\ve_j)}_2 \geq \sqrt{\frac{k}{d}}.
\end{align*}
In particular,
\begin{align*}
    \min_{j \in [d]} \dist_2(\ve_j, Z) \leq \sqrt{\frac{d-k}{d}}.
\end{align*}
\end{restatable}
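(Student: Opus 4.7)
The plan is to work with the orthogonal projection matrix onto $Z$ and use a trace/averaging argument.

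First I would let $\mP \in \sR^{d \times d}$ denote the orthogonal projection matrix onto $Z$, so that $\proj_Z(\vx) = \mP \vx$ for all $\vx \in \sR^d$. Since $\mP$ is an orthogonal projector, it is symmetric and idempotent, i.e., $\mP = \mP^\top$ and $\mP^2 = \mP$. Its rank is $\dim Z = k$, so its eigenvalues consist of $k$ ones and $d-k$ zeros, giving $\Tr(\mP) = k$.

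Next, I would expand the trace in the standard basis:
$$k = \Tr(\mP) = \sum_{j=1}^d \ve_j^\top \mP \ve_j = \sum_{j=1}^d \ve_j^\top \mP^\top \mP \ve_j = \sum_{j=1}^d \norm{\mP \ve_j}_2^2 = \sum_{j=1}^d \norm{\proj_Z(\ve_j)}_2^2,$$
where I used $\mP = \mP^\top \mP$ (which follows from idempotence and symmetry). By averaging, the maximum summand is at least the average $k/d$, so
$$\max_{j \in [d]} \norm{\proj_Z(\ve_j)}_2^2 \geq \frac{k}{d},$$
yielding the first inequality after taking square roots.

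For the ``in particular'' statement, I would use the Pythagorean identity $\norm{\ve_j}_2^2 = \norm{\proj_Z(\ve_j)}_2^2 + \dist_2(\ve_j, Z)^2$, which gives $\dist_2(\ve_j, Z)^2 = 1 - \norm{\proj_Z(\ve_j)}_2^2$. Taking the minimum over $j$ flips the bound, producing $\min_j \dist_2(\ve_j, Z)^2 \leq 1 - k/d = (d-k)/d$, and the conclusion follows. There is no real obstacle here; the only thing to be careful about is correctly invoking $\mP = \mP^\top \mP$ to interpret the diagonal trace entries as squared norms of projections.
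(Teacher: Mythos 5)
Your proof is correct and follows essentially the same argument as the paper: both compute $\sum_{j\in[d]}\norm{\proj_Z(\ve_j)}_2^2 = k$ (you via $\Tr(\mP)$, the paper via summing squared entries of an orthonormal basis matrix for $Z$, which is the same computation) and then average, followed by the identical Pythagorean step for the second claim.
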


\begin{proof}
Let $\{\vu_1, \vu_2, \cdots, \vu_k\} \subseteq \sR^d$ be an orthonormal basis of $Z$, and denote each $\vu_l = (u_{l1}, u_{l2}, \cdots, u_{ld})^\top$. Let $\mU \in \sR^{d \times k}$ be the matrix whose columns are $\vu_1, \cdots, \vu_k$, so that
\begin{align*}
    \mU = \begin{bmatrix}
        \vert & \vert & & \vert\\
        \vu_1 & \vu_2 & \cdots & \vu_k\\
        \vert & \vert & & \vert
    \end{bmatrix}.
\end{align*}

Then the projection matrix $\mP$ onto $Z$ is given by
\begin{align*}
    \mP = \mU(\mU^\top \mU)^{-1}\mU^\top = \mU\mU^\top = \sum_{l=1}^k \vu_l \vu_l^\top \in \sR^{d \times d}.
\end{align*}
Now, for each standard basis vector $\ve_j$, the squared norm of its projection onto $Z$ is:
\begin{align*}
    \norm{\mP \ve_j}_2^2 = \norm{\sum_{l=1}^k \vu_l \vu_l^\top \ve_j}_2^2 = \norm{\sum_{l=1}^k u_{lj} \vu_l}_2^2 = \sum_{l=1}^k (u_{lj})^2,
\end{align*}
where the last equality holds as $\vu_l$ are orthonormal. 
Moreover,
\begin{align*}
    \max_{j \in [d]} \norm{\mP\ve_j}_2^2 \geq \frac{1}{d} \sum_{j \in [d]} \norm{\mP \ve_j}_2^2 = \frac{1}{d} \sum_{j \in [d]} \sum_{l=1}^k (u_{lj})^2 = \frac{1}{d} \sum_{l=1}^k \sum_{j \in [d]} (u_{lj})^2 = \frac{1}{d} \sum_{l=1}^k 1 = \frac{k}{d}.
\end{align*}
This proves the first statement of the lemma. 
To prove the second statement, observe that for any $\vv \in \sR^d$, we can write
\begin{align*}
    \vv = \proj_Z(\vv) + \proj_{Z^\perp}(\vv),
\end{align*}
so that $\norm{\vv}_2^2 = \norm{\proj_{Z}(\vv)}_2^2 + \norm{\proj_{Z^\perp}(\vv)}_2^2$. 
Noticing $\dist_2(\vv, Z) = \norm{\proj_{Z^\perp}(\vv)}_2$ together with the first statement, we have
\begin{align*}
    \min_{j \in [d]} \dist_2(\ve_j, Z) & = \min_{j \in [d]} \norm{\proj_{Z^\perp}(\ve_j)}_2\\
    & = \min_{j \in [d]} \sqrt{1 - \norm{\proj_Z(\ve_j)}_2^2}\\
    & = \sqrt{1 - \max_{j \in [d]}\norm{\proj_Z(\ve_j)}_2^2}\\
    & \leq \sqrt{1 - \frac{k}{d}}\\
    & = \sqrt{\frac{d - k}{d}},
\end{align*}
which concludes the second statement.
\end{proof}

The next lemma generalizes \Cref{lem:max-min-dist} to the case where we consider only the distance to a subset of the standard basis, instead of the whole standard basis. 
\begin{restatable}{lemma}{maxmindistgen}
\label{lem:max-min-dist-gen}
Let $1 \leq t \leq d$, and let $\{\ve_j\}_{j \in [t]} \subseteq \sR^d$ denote the first $t$ standard basis vectors. 
Then, for any $k$-dimensional subspace $Z \subseteq \sR^d$ with $k \geq d - t$, we have
\begin{align*}
    \max_{j \in [t]} \norm{\proj_Z(\ve_j)}_2 \geq \sqrt{\frac{k - (d - t)}{t}}.
\end{align*}
In particular, 
\begin{align*}
    \min_{j \in [t]} \dist_2(\ve_j, Z) \leq \sqrt{\frac{d - k}{t}}.
\end{align*}
\end{restatable}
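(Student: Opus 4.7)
The plan is to mirror the proof of \cref{lem:max-min-dist} but account for the fact that we only have access to the first $t$ standard basis vectors rather than all $d$ of them. The key observation is that the projection matrix $\mP$ onto $Z$ has trace equal to $k$ (its rank), and this trace decomposes as $\sum_{j=1}^d \norm{\mP\ve_j}_2^2$ since the diagonal entries of $\mP$ are exactly these squared norms.

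First, I would introduce an orthonormal basis $\{\vu_1,\dots,\vu_k\}$ of $Z$, set $\mP = \sum_{l=1}^k \vu_l\vu_l^\top$, and note that $\Tr(\mP) = k$ together with the identity $\Tr(\mP) = \sum_{j=1}^d \norm{\mP\ve_j}_2^2$, as derived in the proof of \cref{lem:max-min-dist}. Next, I would split this sum at index $t$ and use the trivial bound $\norm{\mP\ve_j}_2^2 \leq \norm{\ve_j}_2^2 = 1$ for every $j \in \{t+1,\dots,d\}$ to obtain
\begin{align*}
    \sum_{j=1}^{t} \norm{\mP\ve_j}_2^2 \;=\; k - \sum_{j=t+1}^{d} \norm{\mP\ve_j}_2^2 \;\geq\; k - (d-t).
\end{align*}
The hypothesis $k \geq d-t$ ensures the right-hand side is nonnegative, which is exactly why it is needed. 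Averaging the left-hand side over the $t$ terms then gives
\begin{align*}
    \max_{j \in [t]} \norm{\mP\ve_j}_2^2 \;\geq\; \frac{1}{t}\sum_{j=1}^{t}\norm{\mP\ve_j}_2^2 \;\geq\; \frac{k-(d-t)}{t},
\end{align*}
which yields the first inequality upon taking square roots.

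For the second inequality, I would use the orthogonal decomposition $\ve_j = \proj_Z(\ve_j) + \proj_{Z^\perp}(\ve_j)$, so that $\dist_2(\ve_j, Z)^2 = 1 - \norm{\proj_Z(\ve_j)}_2^2$. Taking the minimum over $j \in [t]$ swaps with the maximum inside, giving
\begin{align*}
    \min_{j \in [t]} \dist_2(\ve_j, Z)^2 \;=\; 1 - \max_{j\in[t]}\norm{\proj_Z(\ve_j)}_2^2 \;\leq\; 1 - \frac{k-(d-t)}{t} \;=\; \frac{d-k}{t}.
\end{align*}

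There is no real obstacle here; the proof is essentially a trace-and-averaging argument. The only subtle point is recognizing that one must pay for the unobserved basis vectors $\ve_{t+1},\dots,\ve_d$ by subtracting at most $d-t$ from the trace, which is exactly what forces the hypothesis $k \geq d-t$ and cleanly produces the $(k-(d-t))/t$ and $(d-k)/t$ factors. When $t = d$, this recovers \cref{lem:max-min-dist} verbatim.
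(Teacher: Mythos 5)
Your proof is correct, and it takes a genuinely different route from the paper's. The paper decomposes $\sR^d = (Z \cap \col(\mQ^\top)) \oplus (Z^\perp \cap \col(\mQ^\top)) \oplus \nullsp(\mQ)$ where $\mQ = [\ve_1\,\cdots\,\ve_t]^\top$, bounds $\dim(Z \cap \col(\mQ^\top)) \geq k - (d-t)$ by dimension counting, restricts to a copy of $\sR^t$, invokes \cref{lem:max-min-dist} there, and then lifts the bound back up using the monotonicity $\norm{\proj_{Z'}(\ve_j)}_2 \leq \norm{\proj_Z(\ve_j)}_2$ for $Z' \subseteq Z$. You instead stay in $\sR^d$ and argue directly from $\Tr(\mP) = \sum_{j=1}^d \norm{\mP\ve_j}_2^2 = k$, paying at most $1$ for each of the $d-t$ unobserved diagonal entries and averaging what remains over the first $t$. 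Both approaches are valid and yield the same constant, but yours is more elementary and self-contained: it avoids constructing an intermediate subspace, the isomorphism $\phi$, and the re-application of the earlier lemma, and it makes the role of the hypothesis $k \geq d-t$ transparent (the trace deficit must remain nonnegative). The paper's proof, by reducing to \cref{lem:max-min-dist}, does make the case $t=d$ a literal specialization, whereas in your argument that specialization is immediate anyway since the correction term vanishes.
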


\begin{proof}
Let $\mQ = [\ve_1 \ve_2 \cdots \ve_t]^\top\in \sR^{t \times d}$. Then, we have the orthogonal decomposition:
\begin{align*}
    \sR^d = \col(\mQ^\top) \oplus \nullsp(\mQ) = (Z \cap \col(\mQ^\top)) \oplus (Z^\perp \cap \col(\mQ^\top)) \oplus \nullsp(\mQ).
\end{align*}
By taking dimensions, 
\begin{align*}
    \dim (Z \cap \col(\mQ^\top)) & = \dim (\sR^d) - \dim (Z^\perp \cap \col(\mQ^\top)) - \dim (\nullsp(\mQ))\\
    & \geq \dim (\sR^d) - \dim (Z^\perp) - \dim (\nullsp(\mQ))\\
    & = d - (d - k) - (d - t)\\
    & = k - (d - t).
\end{align*}

Now, consider the restriction of $\sR^d$ to $\sR^t$ by the linear map $$\phi: \mathrm{span}\{\ve_1, \dots, \ve_t\}  \subset \mathbb{R}^d \to \mathbb{R}^t, \quad \phi\left(\sum_{i=1}^t a_i \ve_i\right) = \begin{bmatrix} a_1 \\ \vdots \\ a_t \end{bmatrix}.$$
Since $\col(\mQ^\top) = \mathrm{span}\{\ve_1, \dots, \ve_t\}$, the projection satisfies:

\begin{align*}
     \max_{j \in [t]}\norm{\proj_{Z \cap \col(\mQ^\top)}(\ve_j)}_2 = \max_{j \in [t]}\norm{\proj_{\phi(Z \cap \col(\mQ^\top))}(\phi(\ve_j))}_2.
\end{align*}

By applying \cref{lem:max-min-dist} with the restricted space $\sR^t$, we obtain 
\begin{align*}
    \max_{j \in [t]}\norm{\proj_{Z \cap \col(\mQ^\top)}(\ve_j)}_2 \geq \sqrt{\frac{k - (d - t)}{t}}.
\end{align*}
Since $Z \supseteq Z \cap \col(\mQ^\top)$, it follows that
\begin{align*}
    \max_{j \in [t]}\norm{\proj_{Z}(\ve_j)}_2 \geq \max_{j \in [t]}\norm{\proj_{Z \cap \col(\mQ^\top)}(\ve_j)}_2 \geq \sqrt{\frac{k - (d - t)}{t}}.
\end{align*}
This proves the first statement. To prove the second statement, for any $\vv \in \sR^d$, decompose $\vv$ as 
\begin{align*}
    \vv = \proj_Z(\vv) + \proj_{Z^\perp}(\vv),
\end{align*}
and note that $\norm{\vv}_2^2 = \norm{\proj_Z(\vv)}_2^2 + \norm{\proj_{Z^\perp}(\vv)}_2^2$. Using $\dist_2(\vv, Z) = \norm{\proj_{Z^\perp}(\vv)}_2$ together with the first statement, we have
\begin{align*}
    \min_{j \in [t]} \dist_2(\ve_j, Z) & = \min_{j \in [t]} \norm{\proj_{Z^\perp}(\ve_j)}_2\\
    & = \min_{j \in [t]} \sqrt{1 -\norm{\proj_Z(\ve_j)}_2^2}\\
    & = \sqrt{1 -\max_{j \in [t]}\norm{\proj_Z(\ve_j)}_2^2}\\
    & \leq \sqrt{1 - \frac{k - (d - t)}{t}}\\
    & = \sqrt{\frac{d - k}{t}},
\end{align*}
which concludes the second statement. 
\end{proof}

\newpage
\section{Proofs for \texorpdfstring{\cref{sec:sufficiency}}{Section~4}}
\label{appendix:sufficiency}
In this section, we prove an extended version of \cref{thm:ub}, which additionally states the explicit bounds on depth, width, and bit complexity, in addition to the sufficient number of parameters. We present the $\ell_p$-norm version of \cref{thm:ub} in \cref{thm:ub_pnorm}. 

\begin{theorem}
    \label{thm:ub-detailed}
    For any dataset $\gD \in \mD_{d, N, C}$ and $\eta \in (0, 1)$, the following statements hold:
    \begin{enumerate}[label=\upshape(\roman*),ref=\thetheorem (\roman*), leftmargin=30pt, itemsep=-3pt]
        \item \label{thm:ub-detailed-1} If $\rho \in \Big(0, \frac{1}{{5} N \sqrt{d} }\Big]$, there exists $f$ with $\tilde{O}(\sqrt{N})$ parameters, depth $\tilde{O}(\sqrt{N})$, width $\tilde{O}(1)$ and bit complexity $\tilde{O}(\sqrt{N})$ that $\rho$-robustly memorizes $\gD$.
        \item \label{thm:ub-detailed-2} If $\rho \in \Big(\frac{1}{{5}N \sqrt{d}},\frac{1}{ 5\sqrt{d} }\Big]$, there exists $f$ with $\tilde{O}(Nd^\frac{1}{4}\rho^\frac{1}{2})$ parameters, depth $\tilde{O}(Nd^\frac{1}{4}\rho^\frac{1}{2})$, width $\tilde{O}(1)$ and bit complexity $\tilde{O}\left(\nicefrac{1}{d^\frac{1}{4}\rho^\frac{1}{2}}\right)$ that $\rho$-robustly memorizes $\gD$ with error at most $\eta$.
        \item \label{thm:ub-detailed-3} If $\rho \in \Big(\frac{1}{5\sqrt{d}},1\Big)$, there exists $f$ with $\tilde{O}(N d^{{2}} \rho^{{4}})$ parameters, depth $\tilde{O}(N)$, width $\tilde{O}(\rho^2d)$ and bit complexity $\tilde{O}({N})$ that $\rho$-robustly memorizes $\gD$.
    \end{enumerate}
\end{theorem}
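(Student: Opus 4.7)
My three constructions share a common three-ingredient pipeline already assembled in the paper: a strengthened Johnson--Lindenstrauss projection (\cref{lem:dimensionreduction}) that maps $\R^{d}$ to $\R^{m}$ while preserving pairwise separation up to a factor $\tfrac{4}{5}\sqrt{m/d}$ and preserving every robustness radius (because the projection is orthogonal); a ReLU implementation of the grid-indexing map $\vz \mapsto z_{1}R^{m-1}+\cdots+z_{m}$ on $\sZ^{m}$, obtained from a ReLU approximation of the discontinuous $\mathrm{Floor}$ function (\cref{lem:floor-function-approx}); and the $\tilde{O}(\sqrt{N})$-parameter scalar memorization of \citet{vardi2021optimalmemorizationpowerrelu} that maps each realized grid index to its label. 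The three regimes of $\rho$ differ only in how the projection dimension $m$ is chosen and in whether the memorization is done in one shot or in groups, and they all inherit the pipeline's width, depth, and bit-complexity bookkeeping.

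\textbf{Regimes \ref{thm:ub-detailed-1} and \ref{thm:ub-detailed-3}.} For the small-$\rho$ regime \ref{thm:ub-detailed-1} I take $m=\Theta(\log N)$, and for the large-$\rho$ regime \ref{thm:ub-detailed-3} I inflate $m$ to the smallest value $m=\tilde{\Theta}(\rho^{2}d)$ for which the condition $\rho\le\tfrac{2}{5}\sqrt{m/d}$ still holds. In both cases each projected robustness ball is strictly contained in a single grid cell; I then choose a translation of the grid by a volume/pigeonhole argument over the $N$ balls that pushes every ball away from the ``boundary bands'' on which the ReLU approximation of $\mathrm{Floor}$ is inexact, so the indexing map is exact on every ball. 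Composing with the Vardi et al.\ scalar module produces, for regime \ref{thm:ub-detailed-1}, total parameters $\tilde{O}(\sqrt{N})$ with width $\tilde{O}(1)$ and depth $\tilde{O}(\sqrt{N})$; and, for regime \ref{thm:ub-detailed-3}, width $\tilde{O}(\rho^{2}d)$ times depth $\tilde{O}(N)$, giving parameters $\tilde{O}(N\rho^{4}d^{2})$.

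\textbf{Regime \ref{thm:ub-detailed-2}.} Here $\mu$ is comparable to the grid spacing, so no single translation can place all $N$ projected balls in cell interiors. I will partition the $N$ points into $K$ equal-size groups and memorize them in $K$ sequential stages. In stage $k$ I choose a translation tailored to the $k$-th (\emph{active}) group so that its balls are strictly interior to cells; the remaining (\emph{inactive}) balls may straddle cell boundaries, but the resulting errors are confined to the boundary bands. By increasing the slope of the ReLU floor approximation via \cref{lem:floor-function-approx}, the bands can be made arbitrarily narrow, and \cref{lem:projection prob} bounds the Lebesgue measure of each inactive ball that falls in a band by that width. Choosing the slope so that each stage contributes at most $\eta/K$ error, memorizing each active group of size $N/K$ by the regime-\ref{thm:ub-detailed-1} sub-network, and concatenating the $K$ sub-networks yields a total parameter count of the form $K \cdot (\text{per-stage cost})$; optimizing $K$ against the geometric factor involving $\rho\sqrt{d}$ produces the claimed $\tilde{O}(Nd^{1/4}\rho^{1/2})$, with width still $\tilde{O}(1)$ because each sub-network has width $\tilde{O}(1)$.

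\textbf{Main obstacle.} The technical heart is regime \ref{thm:ub-detailed-2}, where I must reconcile three coupled constraints: (a) the per-stage translation must place every active ball strictly in the interior of its cell; (b) the ReLU floor approximator must be sharp enough to be exact on active balls yet have boundary bands narrow enough that the aggregate inactive-ball error stays below $\eta$; and (c) the slope and bit complexity required to realize such narrow bands must not blow up the per-stage parameter count, and must remain compatible with the bit-complexity claim in \cref{remark:bit-complexity}. Turning these three constraints into the clean optimum that yields the exponents $\tfrac{1}{4}$ on $d$ and $\tfrac{1}{2}$ on $\rho$ is, in my view, the only genuinely delicate step; regimes \ref{thm:ub-detailed-1} and \ref{thm:ub-detailed-3} are comparatively routine instantiations of the shared pipeline.
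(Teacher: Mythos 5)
There is a genuine gap, and it is in regime \ref{thm:ub-detailed-3}. Your plan to reuse the grid-indexing pipeline with $m=\tilde{\Theta}(\rho^{2}d)$ cannot work: the grid-cell argument needs two things simultaneously, namely that no cell contains balls of two different labels (which forces the cell diameter, $\sqrt{m}$ times the side length, to be at most the projected separation $\approx \sqrt{m/d}\,\epsilon_\gD$) and that each ball of radius $\mu=\rho\epsilon_\gD$ fits strictly inside a cell (which forces $2\mu$ to be below the side length). Combining the two constraints gives $\rho\lesssim 1/\sqrt{d}$, i.e.\ exactly the boundary of regime \ref{thm:ub-detailed-2}; for $\rho>\frac{1}{5\sqrt d}$ the ball diameter is comparable to the separation and hence to the cell diagonal, so no rescaling makes the balls cell-interior. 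The translation step fails for the same quantitative reason: the pigeonhole translation (\cref{lem:distance_lattice}) only guarantees a coordinate-wise margin of order $1/N$ from the lattice, which is far smaller than a ball radius that is a constant fraction of the cell size. Your bookkeeping also betrays the problem: the pipeline you describe (Flatten plus the scalar memorizer of \citet{vardi2021optimalmemorizationpowerrelu}) has depth $\tilde O(\sqrt N)$ and would give roughly $\tilde O(d^{2}\rho^{4}+\sqrt N)$ parameters, not the ``width $\tilde O(\rho^{2}d)$ times depth $\tilde O(N)$'' product you quote; the claimed count is copied from the target, not derived from your construction. The paper proves \ref{thm:ub-detailed-3} by a different route (\cref{thm:ub4}): compose a $1$-Lipschitz projection (first \cref{prop:natproj} to $\R^{N}$ when $d>N$, then \cref{prop:lipschitzprojwithsep} to $m\asymp\rho^{2}d$ or $\rho^{2}N$) with the prior robust-memorization construction of \citet{yu2024optimal} (\cref{thm:yu-params}), which has width $O(m)$, depth $\tilde O(N)$ and $\tilde O(Nm^{2})$ parameters, handling several small-$d$/small-$N$ edge cases separately and fixing bit complexity afterwards via \cref{lem:bit-complexity} and a final Floor rounding. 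The grid/Vardi machinery is not used there at all, so regime \ref{thm:ub-detailed-3} needs a genuinely different argument than the one you propose.

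Regimes \ref{thm:ub-detailed-1} and \ref{thm:ub-detailed-2} follow the paper's actual route (JL projection, lattice-avoiding translation, approximate Floor, scalar memorization; sequential groups in the moderate regime), but two points you gloss over are load-bearing in \ref{thm:ub-detailed-2}. First, the group size is not obtained by optimizing $K$ freely: it is forced by the lattice-margin constraint, since the translation for an active group of size $|I|$ only yields margin $\approx 1/(3|I|)$, which must exceed the projected ball radius $\approx\rho\sqrt d$, giving $|I|\approx 1/(5\rho\sqrt d)$ and hence the exponents $d^{1/4}\rho^{1/2}$. Second, ``concatenating the $K$ sub-networks'' hides the combination mechanism: the paper needs each stage's network to output, on every inactive ball, either $0$ or the \emph{correct} label with probability $1-\eta/K$ (possible because the separation equals the cell diameter, so a ball never straddles into a cell claimed by a different label), and the stages are merged by a running maximum $y+\sigma(\tilde f_j(\vz)-y)$; without this property a straightforward concatenation could assign wrong labels to balls that cross cell boundaries. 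Finally, the bit-complexity claims (including the $\tilde O(\nicefrac{1}{d^{1/4}\rho^{1/2}})$ bound in \ref{thm:ub-detailed-2}) are asserted rather than derived; in the paper they come from explicit finite-precision versions of the projection (\cref{lem:bit-complexity}), the translation, and the Floor approximation.
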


Here, the bit complexity is defined as a bit needed per parameter under a fixed point precision.
To prove \cref{thm:ub-detailed}, we decompose it into three theorems (\cref{thm:ub2,thm:ub3,thm:ub4}), each corresponding to one of the cases in the statement. Their proofs are provided in \cref{app:ub2,app:ub3,app:ub4}, respectively.

\begin{remark}[Tight Bit Complexity]
    \label{remark:bit-complexity}
    The bit complexities in Theorems~\ref{thm:ub-detailed-1} and \ref{thm:ub-detailed-2} are essentially tight within our construction framework.
    \citet{vardi2021optimalmemorizationpowerrelu} provide a lower bound on bit complexity using upper and lower bounds on VC-dimension. 
    In particular, for a network with $P$ \emph{nonzero} parameters (refer \cref{app:nonzero} for detailed analysis on nonzero parameters) and bit complexity $B$, the VC-dimension is upper bounded as
    $$
    \textrm{VC-Dim} = O(PB +P\log P).
    $$
    Since VC-dimension is lower bounded by $N$ by the robust memorization, combining these two bounds suggests the necessary bit complexity required under our constructions in \cref{thm:ub}. 
    For simplicity, assume the case where the omitted $d$ in the upper bound is not dominant. 
    In \cref{thm:ub-nonzero}, we show that under our constructions, the number of nonzero parameters satisfies $P=\tilde{O}(\sqrt{N})$ for small $\rho$ and $P=\tilde{O}(Nd^{1/4}\rho^{1/2})$ for moderate $\rho$.
    Consequently, the bit complexity becomes 
    $$
    B=\tilde{\Omega}(\sqrt{N}) \text{ and } B=\tilde{\Omega}(\frac{1}{d^{1/4}\rho^{1/2}}),
    $$
    respectively, which matches the upper bounds.
\end{remark}

\subsection{Sufficient Condition for Robust Memorization with Small Robustness Radius}
\label{app:ub2}

\begin{theorem}
\label{thm:ub2}
Let $\rho \in \Big(0, \frac{1}{5 N \sqrt{d}}\Big]$. For any dataset $\gD \in \mD_{d, N, C}$, there exists $f$ with $\tilde{O}(\sqrt{N})$ parameters, depth $\tilde{O}(\sqrt{N})$, width $\tilde{O}(1)$ and bit complexity $\tilde{O}(\sqrt{N})$ that $\rho$-robustly memorizes $\gD$.
\end{theorem}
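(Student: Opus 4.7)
The plan is to build $f$ as a composition of three subnetworks: a \emph{separation-preserving linear projection} $\Pi:\R^d\to\R^m$ with $m=\Theta(\log N)$, a \emph{grid-indexing map} $\Phi:\R^m\to\R$ that collapses a unit lattice cell to a single integer, and a \emph{scalar memorizer} $g:\R\to\R$ that assigns the correct label to each index. The first hidden layer implements $\Pi$ and contributes $dm=\tilde{O}(d)$ parameters, which are absorbed into the omitted additive $d$ term; the remaining two pieces must fit in $\tilde{O}(\sqrt{N})$ parameters with width $\tilde{O}(1)$.

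For the projection, I invoke the strengthened Johnson--Lindenstrauss lemma of \cref{lem:dimensionreduction} to pick a random \emph{orthonormal} $\Pi$. Orthonormality preserves the radius of every ball $\gB_2(\vx_i,\mu)$ exactly, while distances between differently labeled points shrink by at most a factor of $\tfrac{4}{5}\sqrt{m/d}$. Since $\rho\le\tfrac{1}{5N\sqrt{d}}$ is far below $\tfrac{2}{5}\sqrt{m/d}$ when $m=\Theta(\log N)$, the projected robustness balls are pairwise disjoint, and after rescaling coordinates so the preserved inter-class separation is much larger than a unit cell, each ball has radius $r=\tilde{O}(1/N)$.

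With the projected balls tiny, I use a probabilistic translation argument: shift the lattice by a uniform $\vt\in[0,1)^m$ and bound the measure of ``bad'' translations (those for which some ball crosses a cell boundary) by $O(Nmr)\ll 1$, producing a fixed $\vt$ such that every projected ball lies strictly inside some unit cell at distance at least $\delta=\Theta(1/N)$ from all cell boundaries; differently labeled balls occupy different cells because the inter-class separation in the rescaled coordinates exceeds the cell diameter. I then realize $\Phi(\vz)=\sum_{j=1}^{m}z_jR^{m-j}$ for a sufficiently large integer $R$ via the ReLU floor approximator of \cref{lem:floor-function-approx}, setting its slope so that the boundary-error zone is narrower than $\delta$; the index map is therefore \emph{exact} on every projected ball. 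This subnetwork has width $O(m)=\tilde{O}(1)$ and $\tilde{O}(1)$ parameters.

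Finally, I feed the $N$ distinct integer indices into the $\tilde{O}(\sqrt{N})$-parameter scalar memorizer of \citet{vardi2021optimalmemorizationpowerrelu}, which outputs the correct label on each index. The composed network has width $\tilde{O}(1)$, depth and parameter count $\tilde{O}(\sqrt{N})$, and bit complexity $\tilde{O}(\sqrt{N})$ inherited from the scalar memorizer. The main obstacle is the translation step: one must ensure that the same $\vt$ works simultaneously for all $N$ balls and that the resulting margin $\delta$ is large enough for the floor approximator to be exact on the balls while using only polylogarithmically many parameters and polylogarithmic bits per weight. The sharp condition $\rho\le\tfrac{1}{5N\sqrt{d}}$ is exactly what makes $r=\tilde{O}(1/N)$ and hence what makes both the measure bound $Nmr\ll 1$ and the bit-complexity accounting go through.
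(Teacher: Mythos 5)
Your architecture is exactly the paper's: a strengthened-JL orthonormal projection to $m=\Theta(\log N)$ dimensions with rescaling, a lattice translation so every projected ball sits strictly inside a unit cell, a ReLU floor-based grid-indexing map, and the $\tilde{O}(\sqrt{N})$-parameter scalar memorizer of \citet{vardi2021optimalmemorizationpowerrelu}. The only substantive divergence is the translation step, and that is where your argument as written has a gap.

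You bound the measure of bad translations $\vt\in[0,1)^m$ by $O(Nmr)$ and assert this is $\ll 1$. But the rescaling is pinned down by the requirement that the projected inter-class separation exceed the cell diameter $\sqrt{m}$, which forces the rescaled ball radius to be $r=\Theta(\rho\sqrt{d})=\Theta(1/N)$ — it cannot be made asymptotically smaller without letting two differently labeled balls share a cell. Hence $Nmr=\Theta(m)=\Theta(\log N)$, and the union bound over all $Nm$ (ball, coordinate) pairs does \emph{not} give a measure less than $1$. The repair is to argue coordinate-by-coordinate: for each fixed $j\in[m]$ the bad set of scalar shifts $t_j$ has measure at most $2N(r+\delta)<1$, so a good $t_j$ exists for each coordinate separately, and these assemble into a single good $\vt$ (no simultaneity across coordinates is needed, since each coordinate has its own shift component). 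This is precisely what the paper does, deterministically, via the maximum-fractional-gap pigeonhole argument of \cref{lem:distance_lattice}, which yields a per-coordinate lattice distance of $\Omega(1/N)$ against a ball radius of $\frac{1}{4N}$. With that fix your proof goes through; the remaining points you gloss over (finite-precision implementation of the projection matrix, and collapsing duplicate indices arising from same-label balls in the same cell) are handled in the paper but are routine.
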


\begin{proof}
For given $\rho$ and $\gD = \{(\vx_i, y_i)\}_{i \in [N]} \in \mD_{d, N, C}$, we construct a network $f \in \gF_{d, P}$ that $\rho$-robustly memorizes $\gD$ with $\tilde{O}(\sqrt{N})$ parameters. 
The construction proceeds in four stages. In each stage, we define a function implementable by a neural network, such that their composition yields a $\rho$-robust memorizer for $\gD$. 

\textbf{Stage I} (Projection onto $\log$-scale Dimension and Scaling via the First Hidden Layer Weight Matrix). 
By \cref{lem:proj-log}, we obtain an integer $m=\tilde{O}(\log N)$ and a 1-Lipschitz linear map $\phi:\R^d \to \R^m$ such that the projected dataset $\gD' := \{(\phi(\vx_i), y_i)\}_{i \in [N]} \in \mD_{m,N,C}$ satisfies the separation bound 
\begin{align}
\epsilon_\gD' \geq \frac{5}{12}\sqrt{\frac{m}{d}}\epsilon_\gD.
\label{eq:epsilonsep}
\end{align}

We define $f_{\mathrm{proj}}:\sR^d \rightarrow \sR^m$ as $f_{\mathrm{proj}}(\vx) = \frac{11}{9}\cdot \frac{\sqrt{d}}{\epsilon_\gD}\phi(\vx)$, which is $\frac{11}{9}\cdot \frac{\sqrt{d}}{\epsilon_\gD}$-Lipschitz.

We apply \cref{lem:bit-complexity} with $f_{\mathrm{proj}}$ whose depth is $1$, $\nu=\min\left\{\frac{109}{11880}\sqrt{m}, \frac{1}{88}\mu,\frac{1}{360N}, 1\right\}$ and $\bar R:= \max \{\|\vx\|_2 \mid \vx \in \gB_2(\vx_i, \mu) \quad \textrm{ for some } i \in [N]\}$ to obtain $\bar{f}_{\mathrm{proj}}$ with the same number of parameters, depth and width and $\tilde{O}(1)$ bit complexity such that 
\begin{align}
    \max_{\|\vx\|_2 \le {\bar R}} \norm{\Bar{f} - f}_2\leq \nu.\label{eq:f-bar}
\end{align}

We set the first hidden layer bias $\vb\in \R^m$ so that 
\begin{align}
    \bar{f}_{\mathrm{proj}}(\vx) + \vb \geq \vzero \textrm{ for all } i \in [N] \textrm{ and all } \vx \in \gB_2(\vx_i, \mu), \label{eq:relusafe}
\end{align}
where the comparison between two vectors is element-wise.

We claim that for $\gD'' := \{(\relu(\bar{f}_{\mathrm{proj}}(\vx_i)+\vb), y_i)\}_{i \in [N]}$, we have (i) $\epsilon_{\gD''} \geq \sqrt{m}/2$ and (ii) for $\rho'' := \frac{1}{4N\epsilon_{\gD''}}$, if $g(\vx) \in \gF_{m, P}$ can $\rho''$-robustly memorize $\gD''$, then $g \circ \relu \circ (\bar f_{\mathrm{proj}}(\vx)+\vb)$ can $\rho$-robustly memorize $\gD$.
For any $i \neq j$ with $y_i \neq y_j$, we have
\begin{align*}
    &\norm{\relu(\bar{f}_{\mathrm{proj}}(\vx_i)+\vb) - \relu(\bar{f}_{\mathrm{proj}}(\vx_j)+\vb)}_2 \\
    \overset{(a)}{=}&\norm{(\bar{f}_{\mathrm{proj}}(\vx_i)+\vb) - (\bar{f}_{\mathrm{proj}}(\vx_j)+\vb)}_2 \\
    =&\norm{\bar{f}_{\mathrm{proj}}(\vx_i) - \bar{f}_{\mathrm{proj}}(\vx_j)}_2 \\
    =& \norm{(\bar{f}_{\mathrm{proj}}(\vx_i)-{f}_{\mathrm{proj}}(\vx_i)) - (\bar{f}_{\mathrm{proj}}(\vx_j)-{f}_{\mathrm{proj}}(\vx_j))+({f}_{\mathrm{proj}}(\vx_i)-{f}_{\mathrm{proj}}(\vx_j))}_2.
\end{align*}
where (a) holds by the construction of $\vb$ (\cref{eq:relusafe}).
For simplicity, we denote 
$$\Delta(\vx_i, \vx_j):=(\bar{f}_{\mathrm{proj}}(\vx_i)-{f}_{\mathrm{proj}}(\vx_i)) - (\bar{f}_{\mathrm{proj}}(\vx_j)-{f}_{\mathrm{proj}}(\vx_j)).$$
Then, we have 
\begin{align*}
    &\norm{\relu(\bar{f}_{\mathrm{proj}}(\vx_i)+\vb) - \relu(\bar{f}_{\mathrm{proj}}(\vx_j)+\vb)}_2^2 \\
    =& \norm{\Delta(\vx_i, \vx_j)+({f}_{\mathrm{proj}}(\vx_i)-{f}_{\mathrm{proj}}(\vx_j))}_2^2 \\
    \overset{(a)}{\ge}& (\|{f}_{\mathrm{proj}}(\vx_i)-{f}_{\mathrm{proj}}(\vx_j)\|_2 - \|\Delta(\vx_i, \vx_j)\|_2)^2 \\
    =&\|{f}_{\mathrm{proj}}(\vx_i)-{f}_{\mathrm{proj}}(\vx_j)\|_2^2 -2 \|{f}_{\mathrm{proj}}(\vx_i)-{f}_{\mathrm{proj}}(\vx_j)\|_2 \|\Delta(\vx_i, \vx_j)\|_2 + \|\Delta(\vx_i, \vx_j)\|_2^2 \\
    \ge& \|{f}_{\mathrm{proj}}(\vx_i)-{f}_{\mathrm{proj}}(\vx_j)\|_2^2 -2 \|{f}_{\mathrm{proj}}(\vx_i)-{f}_{\mathrm{proj}}(\vx_j)\|_2 \|\Delta(\vx_i, \vx_j)\|_2,
\end{align*}
where (a) holds from $\|\va+\vb\|^2_2 \ge (\|\va\|_2 - \|\vb\|_2)^2$.
By the construction of $\bar{f}$ (\cref{eq:f-bar}),  
$$
\|\Delta(\vx_i, \vx_j)\|_2 \le \|\bar{f}_{\mathrm{proj}}(\vx_i)-{f}_{\mathrm{proj}}(\vx_i)\|_2 + \|\bar{f}_{\mathrm{proj}}(\vx_j)-{f}_{\mathrm{proj}}(\vx_j)\|_2 \le 2\nu,
$$
so we have
\begin{align}
&\norm{\relu(\bar{f}_{\mathrm{proj}}(\vx_i)+\vb) - \relu(\bar{f}_{\mathrm{proj}}(\vx_j)+\vb)}_2^2 \notag \\
\ge &\|{f}_{\mathrm{proj}}(\vx_i)-{f}_{\mathrm{proj}}(\vx_j)\|_2^2 - 4\nu \|{f}_{\mathrm{proj}}(\vx_i)-{f}_{\mathrm{proj}}(\vx_j)\|_2. \label{eq:nu}
\end{align}

Now we derive
\begin{align}
    \norm{f_{\mathrm{proj}}(\vx_i) - f_{\mathrm{proj}}(\vx_j)}_2
    & \overset{(a)}{=} \frac{11}{9} \cdot \frac{\sqrt{d}}{\epsilon_\gD} \norm{\phi(\vx_i) - \phi(\vx_j)}_2 \notag\\
    & \overset{(b)}{\geq} \frac{11}{9} \cdot \frac{\sqrt{d}}{\epsilon_\gD} \cdot 2\epsilon_{\gD'} \notag\\
    & \overset{(c)}{\geq} \frac{11}{9} \cdot \frac{\sqrt{d}}{\epsilon_\gD} \times 2\cdot \frac{5}{12}\sqrt{\frac{m}{d}}\epsilon_\gD \notag\\
    & =\frac{55}{54}\sqrt{m}, \label{eq:f_proj}
\end{align}
where (a) is by the definition of $f_{\mathrm{proj}}$, (b) is by the definition of $\gD'$ and its separation constant, and (c) follows from \cref{eq:epsilonsep}.

Plugging this inequality to \cref{eq:nu} gives
\begin{align*}
    \norm{\relu(\bar{f}_{\mathrm{proj}}(\vx_i)+\vb) - \relu(\bar{f}_{\mathrm{proj}}(\vx_j)+\vb)}_2^2 
&\ge \|{f}_{\mathrm{proj}}(\vx_i)-{f}_{\mathrm{proj}}(\vx_j)\|_2(\frac{55}{54}\sqrt{m} - 4\nu)\\
&\overset{(a)}{\ge} \|{f}_{\mathrm{proj}}(\vx_i)-{f}_{\mathrm{proj}}(\vx_j)\|_2(\frac{55}{54}\sqrt{m} - \frac{109}{2970}\sqrt{m})\\
&= \|{f}_{\mathrm{proj}}(\vx_i)-{f}_{\mathrm{proj}}(\vx_j)\|_2 \cdot \frac{54}{55}\sqrt{m}\\
&\overset{(b)}{\ge} \frac{55}{54}\sqrt{m} \cdot \frac{54}{55}\sqrt{m}\\
&=m
\end{align*}
where (a) holds from $\nu \le \frac{109}{11880}\sqrt{m}$, and (b) holds from \cref{eq:f_proj}.
This proves the first claim $\epsilon_{\gD''} \geq \sqrt{m}/2$. 
To prove the second claim, let $\mu := \rho \epsilon_\gD$ and $\mu'' := \rho'' \epsilon_{\gD''}$.
Then, 
\begin{align*}
    \relu (\bar f_{\mathrm{proj}}(\gB_2(\vx_i, \mu))+\vb)
    &\overset{(a)}{=} \bar f_{\mathrm{proj}}(\gB_2(\vx_i, \mu))+\vb \\
    & \overset{(b)}{\subseteq}  f_{\mathrm{proj}}(\gB_2(\vx_i, \mu+\nu)) +\vb\\
    & \overset{(c)}{\subseteq}  f_{\mathrm{proj}}(\gB_2(\vx_i, \frac{89}{88}\mu)) +\vb\\
    & \overset{(d)}{\subseteq} \gB_2(f_{\mathrm{proj}}(\vx_i), \frac{89}{72}\cdot \frac{\sqrt{d}}{\epsilon_\gD} \times \mu)+\vb\\
    & \overset{(e)}{=} \gB_2(f_{\mathrm{proj}}(\vx_i), \frac{89}{72}\cdot \sqrt{d} \rho)+\vb\\
    & \overset{(f)}{\subseteq} \gB_2(f_{\mathrm{proj}}(\vx_i), \frac{89}{360N})+\vb\\
    & \overset{(g)}{\subseteq} \gB_2(\bar f_{\mathrm{proj}}(\vx_i), \frac{89}{360N}+\nu)+\vb \\
    & \overset{(h)}{\subseteq} \gB_2(\bar f_{\mathrm{proj}}(\vx_i), \frac{1}{4N})+\vb \\
    & \overset{(i)}{=} \gB_2(\bar f_{\mathrm{proj}}(\vx_i), \rho''\epsilon_{\gD''})+\vb\\
    &= \gB_2(\bar f_{\mathrm{proj}}(\vx_i)+\vb, \rho''\epsilon_{\gD''})\\
    & \overset{(j)}{=} \gB_2(\relu(\bar f_{\mathrm{proj}}(\vx_i)+\vb), \rho''\epsilon_{\gD''}),
\end{align*}
where (a) and (j) are by \cref{eq:relusafe}, (b) and (g) are by the construction of $\bar f$ (\cref{eq:f-bar}), (c) is because $\nu \le \frac{1}{88}\mu$, (d) is because $f_{\mathrm{proj}}$ is $\frac{11}{9}\cdot \frac{\sqrt{d}}{\epsilon_\gD}$-Lipschitz, (e) uses $\mu = \rho \epsilon_\gD$, (f) uses $\rho \leq \frac{1}{5 N\sqrt{d}}$, (h) is because $\nu \le \frac{1}{360N}$ and (i) is because $\rho'' \epsilon_{\gD''} = \frac{1}{4N\epsilon_{\gD''}}\epsilon_{\gD''} = \frac{1}{4N}$.

Hence, $g(\vx)$ memorizing the robustness ball $\gB_2(\relu(\bar f_{\mathrm{proj}}(\vx_i)+\vb), \rho''\epsilon_{\gD''})$ on projected space leads to $g \circ \relu \circ (\bar f_{\mathrm{proj}}(\vx)+\vb)$ memorizing the robustness ball for $\gD$. 
In other words,  if $g(\vx) \in \gF_{m, P}$ can $\rho''$-robustly memorize $\gD''$, then $g \circ \relu \circ (\bar f_{\mathrm{proj}}(\vx)+\vb)$ can $\rho$-robustly memorize $\gD$.
With $\rho'' = \frac{1}{4N\epsilon_{\gD''}}$, Stage II to IV aims to find a $\rho''$-robust memorizer $g$ of $\gD''$. 

\textbf{Stage II} (Translation for Distancing from Lattice via the Bias)
For simplicity of the notation, let $\vz_i := \relu(\bar f_{\mathrm{proj}}(\vx_i)+\vb)$ for each $i \in [N]$, so that $\gD'' = \{(\vz_i, y_i)\}_{i \in [N]}$.
Recall that $\rho'' = \frac{1}{4N\epsilon_{\gD''}}$ gives the robustness radius is $\mu'' = \rho'' \epsilon_{\gD''} = \frac{1}{4N}$.

By applying \cref{lem:distance_lattice} to $\vz_1, \cdots, \vz_N$, we obtain a translation vector $\vb_2 = (b_{21}, \cdots, b_{2m}) \in \R^m$ with bit complexity $\ceil{\log(6N)}$ such that
\begin{align}
    \mathrm{dist}(z_{i,j} - b_{2j}, \mathbb{Z}) \geq \frac{1}{3N}, \quad \forall i \in [N], j \in [d], \label{ineq:lattice distance}    
\end{align}
i.e., the translated points $\{\vz_i - \vb_2\}_{i \in [N]}$ are coordinate-wise far from the integer lattice. 
Moreover, by additional translation to $\{\vz_i - \vb_2\}_{i \in [N]}$ (by adding some natural number, coordinate-wise), we can ensure all coordinates are positive while keeping the property \cref{ineq:lattice distance}.
Hence, we may assume without loss of generality $\vb_2$ also has the property
\begin{align}
    \vz_i - \vb_2 \geq \vzero \textrm{ for all } i \in [N]. \label{eq:nonzerocoord}
\end{align}

Let us denote $\gD''' = \{(\vz_i', y_i)\}_{i \in [N]}$, where $\vz_i' := \vz_i - \vb_2$. 
Then $\epsilon_{\gD'''} = \epsilon_{\gD''}(\geq \sqrt{m}/2)$. 
For $\rho''':=\rho'' = \frac{1}{4N\epsilon_{\gD''}}$, we have the robustness radius $\mu''' := \rho''' \epsilon_{\gD'''}= \rho'' \epsilon_{\gD''}= \mu'' = \frac{1}{4N}$. 
Define $f_{\mathrm{trans}}$ as $f_{\mathrm{trans}}(\vz) := \vz - \vb_2$. 
Then, $f_{\mathrm{trans}}$ can be implemented via one hidden layer in a neural network, with $O(m^2)$ parameters. 

Upon the two layers constructed from stage I and II, it suffices to construct a network that $\rho'''$-robustly memorizes $\gD'''$ since the translation preserves separation and ball containment properties. 
Note that the robustness balls after stage II are not affected when passing the $\relu$, by \cref{ineq:lattice distance,eq:nonzerocoord}.

\textbf{Stage III} (Grid Indexing)
From \cref{ineq:lattice distance}, each $\vz_i'\in \R^m$ is at least $\frac{4}{3}\mu'''$ distant away from any lattice hyperplane $H_{z, j} := \{\vz \in \sR^m \mid z_j = z\}$ with any $j \in [m]$ and $z \in \mathbb{Z}$. 
Thus, each robustness ball of $\gD'''$ lies completely within a single integer lattice (or unit grid) of the form $\prod_{j=1}^m [n_j, n_{j}+1)$, for some $(n_1, \cdots, n_m) \in \sZ^m$.  

Moreover, as $\epsilon_{\gD'''} \geq \sqrt{m}/2$, for any $i \neq i'$ with $y_i \neq y_{i'}$, we have $\norm{\vz_i' - \vz_{i'}'}_2 \geq \sqrt{m}$ . 
Since $\sup \{\norm{\vz - \vz'}_2 \, \mid \, \vz, \vz' \in \prod_{j=1}^m [n_j, n_{j}+1)\} = \sqrt{m}$, two such points $\vz_i'$ and $\vz_{i'}'$ that corresponds to distinct labels cannot lie in the same grid. 
Since each $\mu'''$-ball lies within a single grid, we conclude that no two $\mu'''$-ball with different labels lie within the same grid. 

We define $R := \ceil{\max_{i \in [N]} \norm{\vz_i'}_\infty (= \max_{i\in[N], j \in [m]} (z_{i,j}'))} \in \sN$. 
Our goal in this stage is to construct $\mathrm{Flatten}$ mapping defined as 
\[
\mathrm{Flatten}(\vz) := R^{m-1} \floor{z_1} + R^{m-2} \floor{z_2} + \cdots + \floor{z_m}.
\]
This maps each grid $ \prod_{j=1}^{m} [n_j, n_{j+1})$ onto the point $ \sum_{j=1}^{m} R^{j-1} n_j$.

However, since $\mathrm{Flatten}$ is discontinuous due to the use of floor functions, we construct $\overline{\mathrm{Flatten}}$, which is a continuous approximation that exactly matches $\mathrm{Flatten}$ in the region of our interest. 
By applying \cref{lem:floor-function-approx} to $\gamma = \frac{1}{4N}$ and $n=\ceil{\log _2 R}$, we obtain the network $\overline{\fl} := \overline{\fl_{\ceil{\log_2 R}}}$ with $O(\log_2 R)$ parameters such that 
\begin{align}
\label{eq:flapproxregion}
\overline{\fl}(z)=\floor{z} \quad \forall z \in [0, R] \textrm{ with } z - \floor{z} > \frac{1}{4N}.
\end{align}
Moreover, since we apply $\gamma = 1/4N$ to \cref{lem:floor-function-approx}, the lemma guarantees that $\overline{\fl_n}$ can be exactly implemented with $O(n + \log N) = O(\log R + \log N)$ bit complexity. In particular, we can define our network $\overline{\mathrm{Flatten}}$ with $O(\log R + \log N + \log R^{m-1}) = O(\log R + \log N + \log N \log R) = \tilde{O}(1)$ bit complexity as
\begin{align}
    \overline{\mathrm{Flatten}}(\vz) = R^{m-1} \overline{\fl}(z_1) + \cdots + \overline{\fl} (z_m). \label{eq:flatten}
\end{align}
This implementation is valid---i.e. $\overline{\mathrm{Flatten}}(\vz) = \mathrm{Flatten}(\vz)$---in the region of interest ($\{\vz \in [0, R]^m \, | \, \textrm{dist}_2(z_j, \sZ) \geq \frac{1}{2N} \textrm{ for all } j\in[m]\}$) characterized by the margin guaranteed by \cref{ineq:lattice distance}.

As $\overline{\mathrm{Floor}}:\sR\rightarrow\sR$ can be implemented with width 5 and depth $O(\log_2 R)$ network (\cref{lem:floor-function-approx}), $\overline{\mathrm{Flatten}}$ can be implemented with width $5m$ and depth $O(\log_2 R)$ network.
Thus, we can construct $\overline{\mathrm{Flatten}}$ with $O(m^2 \log_2 R) = \tilde{O}(m^2)$ parameters. 

By \cref{ineq:lattice distance,eq:flapproxregion}, we guarantee that each robustness ball lies in the region where the $\mathrm{Flatten}$ is properly approximated by $\overline{\mathrm{Flatten}}$. i.e.
\begin{align*}
    \overline{\mathrm{Flatten}}(\vz) = \mathrm{Flatten}(\vz) \textrm{ for all } \vz \in \gB_2(\vz_i', \mu''').
\end{align*}

Since $\mathrm{Flatten}$ maps each unit grid into a point and each robustness ball of $\gD'''$ lies on a single unit grid, we conclude
\begin{align*}
    \overline{\mathrm{Flatten}}(\vz) = \mathrm{Flatten}(\vz) = \mathrm{Flatten}(\vz_i') \textrm{ for all } \vz \in \gB_2(\vz_i', \mu''').
\end{align*}
Let $m_i := \mathrm{Flatten}(\vz_i')$.
Then each robustness ball around $\vx_i$ is mapped to $m_i$. We have $m_i \in~\sZ \cap [0, R^{m+1}]$ for all $i \in [N]$, since
\begin{align*}
    m_i & = \mathrm{Flatten}(\vz_i')\\
    & = R^{m - 1}\floor{z_{i1}'} + R^{m-2}\floor{z_{i2}'} + \cdots \floor{z_{im}'}\\
    & \overset{(a)}{\leq} R^{m - 1}R + R^{m - 2}R + \cdots + R\\
    & \leq R^{m + 1},
\end{align*}
where (a) is by $\norm{\vz_i'}_\infty \leq R$. 

\textbf{Stage IV} (Memorization)
Finally, it remains to memorize $N$ points $\{(m_i,y_i)\}_{i=1}^N \subset \sZ_{\geq 0} \times [C]$. 
Since multiple robustness balls for $\gD'''$ with the same label may correspond to the same grid index in Stage III, it is possible that for some $i \neq j$ with $y_i = y_j$, we have $m_i = m_j$. 
Let $N' \leq N$ denote the number of distinct pairs $(m_i, y_i)$. 
It remains to memorize these $N'$ distinct data points in $\sR$. 

Since $m_i=\mathrm{Flatten}(\vx_i)\leq R^{m + 1}$, we apply \cref{thm:vardi} from \citet{vardi2021optimalmemorizationpowerrelu} with $r=R^{m + 1}$ to construct $f_{mem}:\R \to \R$ with width $12$ and depth 
$$\tilde{O}(\sqrt{N'}\cdot \log(5 R^m N^2 \epsilon^{-1}\sqrt{\pi m})) = \tilde{O}(m\sqrt{N'}) = \tilde{O}(\log N\sqrt{N'}) =\tilde{O}(\sqrt{N'})= \tilde{O}(\sqrt{N})$$ such that $f_{mem}(m_i)=y_i$. 

The final network $f:\sR^d \rightarrow \sR$ is defined as
\begin{align*}
    f(\vx) = f_{\mathrm{mem}} \circ \relu \circ \overline{\mathrm{Flatten}} \circ \relu \circ f_{\mathrm{trans}} \circ \relu \circ (\bar f_{\mathrm{proj}}(\vx)+\vb).
\end{align*}
The depth 1 network $\bar f_{\mathrm{proj}}(\vx)+\vb$ has width $m$, and also the depth 1 network $f_{\mathrm{trans}}$ has width $m$. $\overline{\mathrm{Flatten}}$ has width $5m$ and depth $O(\log_2R)$ and $f_{\mathrm{mem}}$ has width $12$ and depth $\tilde{O}(\sqrt{N})$. 
The total construction requires $\tilde{O}(md + m^2 + m^2 + \sqrt{N}) = \tilde{O}(d + \sqrt{N})$ parameters, where each term $md, m^2, m^2$, and $\sqrt{N}$ comes from $f_{\mathrm{proj}}$, $f_{\mathrm{trans}}$, $\overline{\mathrm{Flatten}}$, and $f_{\mathrm{mem}}$ respectively. 
The width of the final network is $\tilde{O}(1)$ and the depth is $\tilde{O}(\sqrt{N})$.

The bit complexity of $\bar f_{\mathrm{proj}}$ is $\tilde{O}(1)$ and that of $\vb$ is 
$$O(\ceil{\log(6N)},\log (\max \{ \| \bar{f}_{\mathrm{proj}}(\vx)\|_\infty\mid \vx \in \gB_2(\vx_i, \mu) \quad \text{ for some } i \in [N]\}))=\tilde{O}(1).$$ The network $f_{\mathrm{trans}}$ has the bit complexity $\log (\max \{ \| \vz_i\|_\infty\mid i \in [N]\})=\tilde{O}(1)$. $\overline{\mathrm{Flatten}}$ has the bit complexity $\tilde{O}(1)$, and $f_{\mathrm{mem}}$ needs at most $\tilde{O}(\sqrt{N})$.
Hence, the bit complexity of the final network is $\tilde{O}(\sqrt{N})$.
\end{proof}

The following is the classical memorization upper bound of parameters used in the proof of \cref{thm:ub2}

\begin{theorem}[Classical Memorization, Theorem 3.1 from \citet{vardi2021optimalmemorizationpowerrelu}]
\label{thm:vardi}
Let $N, d, C \in \mathbb{N}$, and $r, \epsilon > 0$, and let
$(\vx_1, y_1), \dots, (\vx_N, y_N) \in \mathbb{R}^d \times [C]$
be a set of $N$ labeled samples with $\|\vx_i\| \leq r$ for every $i$ and $\|\vx_i - \vx_j\| \geq 2\epsilon$ for every $i \neq j$. Denote $R := 5rN^2 \epsilon^{-1} \sqrt{\pi d}$. Then, there exists a neural network $F : \mathbb{R}^d \to \mathbb{R}$ with width $12$ and depth
$$
O\left( \sqrt{N \log N} + \sqrt{\frac{N}{\log N}} \cdot \max \left\{ \log R, \log C \right\} \right),
$$
and bit complexity bounded by $O(\log d +\sqrt{\frac{N}{\log N}} \cdot \max \{ \log R, \log C\} )$
such that $F(\vx_i) = y_i$ for every $i \in [N]$.
\end{theorem}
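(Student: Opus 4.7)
The plan is to follow the two-stage scheme of \citet{vardi2021optimalmemorizationpowerrelu}: first collapse the $d$-dimensional data to a single scalar with one linear layer, then memorize $N$ distinct one-dimensional points with a narrow but moderately deep network that uses ``bit extraction'' to beat the naive $\Theta(N)$ parameter count. The guiding identity is that a single real-valued weight can carry many labels in its binary expansion, so roughly $\sqrt{N/\log N}$ such weights, each carrying $\sqrt{N \log N}\log C$ bits, suffice to encode all $N$ labels.

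\textbf{Step 1 (Projection to one dimension).} First I would pick a unit vector $\vw \in \sR^d$ so that the scalars $z_i := \vw^\top \vx_i$ are pairwise distinct with a quantitative lower bound on their separation. Taking $\vw$ uniform on the unit sphere, a direct calculation shows the pairwise gaps are at least $\delta := 2\epsilon / (N\sqrt{\pi d})$ with positive probability, so a deterministic such $\vw$ exists by a probabilistic argument. This single linear layer costs $O(d)$ parameters and $O(\log d)$ bits, and after it the problem reduces to memorizing $N$ distinct scalars in $[-r,r]$ with pairwise gap at least $\delta$. Note that $R = 5 r N^2 \epsilon^{-1} \sqrt{\pi d}$ is essentially the ``dynamic range'' $r/\delta$ up to constants---the number of bits required to locate a single $z_i$ inside $[-r,r]$.

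\textbf{Step 2 (Bit-extraction memorization in 1D).} Sort the projected scalars and partition them into $k = \lceil \sqrt{N/\log N}\, \rceil$ consecutive blocks of size $s = \lceil \sqrt{N \log N}\, \rceil$. For each block I would encode the labels of its $s$ members as a single real number whose binary expansion concatenates their $\lceil \log C \rceil$-bit representations, a scalar requiring $O(s \log C)$ bits. The forward pass then has two phases: (i) identify the block containing $z$ by a chain of threshold-subtract gadgets of constant width, each invoking a floor-like operation on $z/\delta$ of depth $O(\log R)$ (cf.\ \cref{lem:floor-function-approx}) to locate the block and load its encoded scalar into a state variable; (ii) peel off the correct label by iterating the gadget $x \mapsto x - 2\,\overline{\mathrm{Floor}}(x/2)$ approximately $s \log C$ times, each iteration of constant width and depth. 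Summing gives total depth $O\bigl(\sqrt{N \log N} + \sqrt{N/\log N}\cdot \max\{\log R, \log C\}\bigr)$ at width bounded by a small constant (12 suffices once counter, residual, current bit, loaded scalar, and output accumulator are packed together), and the per-weight bit complexity is $O(s\log C)$, matching the stated bound.

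\textbf{Main obstacle.} The crux is making the bit extraction \emph{exact} rather than merely approximate, despite the discontinuity of $\lfloor \cdot \rfloor$. The strategy is to arrange the encoding grain and translations so that every input encountered by the floor-approximation gadget lies strictly inside its ``safe'' interval; this exploits the separation $\delta$ established in Step 1 to keep all intermediate residuals away from the critical integer boundaries as long as the scaling constants are chosen carefully. A secondary difficulty is keeping all state variables---block index, loaded block scalar, extracted bits, residual, and counter---within the fixed width $12$, which forces a streamlined construction that overwrites channels in place at each stage. These two constraints together pin down the constants in both the depth bound and the bit-complexity bound.
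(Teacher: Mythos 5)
First, note that the paper does not prove this statement: it is imported verbatim from \citet{vardi2021optimalmemorizationpowerrelu} (the paper only proves adapted variants, cf.\ \cref{lem:var_interval_memorizing,lem:var_bit_extracting,lem:vardi_integer}). Your overall scheme --- random 1D projection followed by block-wise bit-extraction --- is indeed the right one, but as written it has two concrete gaps. The main one is that your block allocation is inverted and your within-block mechanism is incomplete. In the actual construction there are $\sqrt{N\log N}$ blocks each containing $\sqrt{N/\log N}$ points; block identification costs constant depth per block (an interval indicator as in \cref{lem:var_interval_memorizing}), giving the $O(\sqrt{N\log N})$ term, while within a block one must encode \emph{both} the discretized positions (each $\log R$ bits, the scalar $u_j$) and the labels (each $\log C$ bits, the scalar $w_j$), and then compare the input against the extracted positions item by item --- this is where the $O(\sqrt{N/\log N}\cdot\max\{\log R,\log C\})$ depth and the bit complexity come from. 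With your choice of $k=\sqrt{N/\log N}$ blocks of size $s=\sqrt{N\log N}$, phase (ii) costs depth and bit complexity $O(\sqrt{N\log N}\cdot\max\{\log R,\log C\})$, a $\log N$ factor worse than the stated bounds, so your final "summing gives" step does not follow from your own construction. Moreover, your block scalar stores only the concatenated labels; peeling bits with $x\mapsto x-2\,\overline{\mathrm{Floor}}(x/2)$ extracts all of them but gives no way to decide \emph{which} $\lceil\log C\rceil$-bit segment belongs to the input point, since the points inside a block are arbitrary $\delta$-separated reals, not consecutive grid cells. The missing ingredient is exactly the position-encoding scalar and the per-item threshold comparison of \cref{lem:var_bit_extracting}.

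A secondary, quantitative slip: with a random unit vector the union bound runs over $\Theta(N^2)$ pairs, so the guaranteed projected separation is $\delta=\Theta\bigl(\epsilon/(N^2\sqrt{\pi d})\bigr)$, not $2\epsilon/(N\sqrt{\pi d})$; this is precisely why the dynamic range in the theorem is $R=5rN^2\epsilon^{-1}\sqrt{\pi d}$. Your value of $\delta$ is optimistic by a factor of $N$, which does not break the argument but is inconsistent with the $R$ you then invoke. Fixing the block allocation, adding the position encoding $u_j$ with the item-by-item comparison, and correcting $\delta$ would bring your sketch in line with the cited proof.
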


\subsection{Sufficient Condition for Near-Perfect Robust Memorization with Moderate Robustness Radius}
\label{app:ub3}

\begin{theorem}
\label{thm:ub3}
Let $\rho \in \Big(0, \frac{1}{5\sqrt{d}}\Big]$, and $\eta\in(0,1)$. For any  dataset $\gD \in \mD_{d, N, C}$, there exists $f$ with $\tilde{O}(Nd^\frac{1}{4}\rho^\frac{1}{2})$ parameters, depth $\tilde{O}(Nd^\frac{1}{4}\rho^\frac{1}{2})$, width $\tilde{O}(1)$ and bit complexity $\tilde{O}\left(\nicefrac{1}{d^\frac{1}{4}\rho^\frac{1}{2}}\right)$ that $\rho$-robustly memorizes $\gD$ with error at most $\eta$.
\end{theorem}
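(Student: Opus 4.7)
The plan is to extend the four-stage pipeline of \cref{thm:ub2} by memorizing the data \emph{group by group}, since in this regime a single translation can no longer place every projected robustness ball simultaneously inside a single grid cell. After the log-dimensional projection used in \cref{thm:ub2}, the scaled robustness radius in $\sR^m$ is $\mu' = \Theta(\rho\sqrt{d})$, which can be a constant fraction of the unit-cell side when $\rho$ approaches $\tfrac{1}{5\sqrt{d}}$, so some projected balls are forced to straddle lattice hyperplanes no matter how we translate. I would partition the $N$ points into $G := \lceil N\sqrt{d}\rho\rceil$ groups of size roughly $s := 1/(\sqrt{d}\rho)$; this choice is dictated by the parameter budget, since applying the $\tilde{O}(\sqrt{s})$-parameter memorizer of \cref{thm:vardi} in each stage yields total $G \cdot \tilde{O}(\sqrt{s}) = \tilde{O}(Nd^{1/4}\rho^{1/2})$ parameters, which also absorbs the per-stage $\tilde{O}(m^2)$ overhead for translation and grid indexing.

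For each stage $k \in [G]$ I would build a subnetwork $f_k:\sR^d \to \sR$ designed to output $y_i$ on $\gB_2(\vx_i,\mu)$ for every \emph{active} point $i$ in the current group $A_k$ and to output $0$ on the remaining (inactive) balls; the final network is $f := \sum_{k=1}^{G} f_k$. Each stage reuses the pipeline of \cref{thm:ub2}: the projection $\bar f_{\mathrm{proj}}$ is shared across stages; a stage-dependent translation $\vb_2^{(k)}$ is chosen via \cref{lem:distance_lattice} so that the $A_k$-balls land in distinct grid cells at $\ell_\infty$-distance at least $\delta$ from every lattice hyperplane while no inactive ball occupies a cell marked active at this stage; the Floor blocks of $\overline{\mathrm{Flatten}}$ are instantiated via \cref{lem:floor-function-approx} with slope sharp enough that $\overline{\mathrm{Flatten}} = \mathrm{Flatten}$ outside a coordinate-wise $\delta$-band around the lattice; and the memorizer of \cref{thm:vardi} maps each active grid index to its label and every other queried index to a ``default'' $0$ class.

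For the error, every active $\vx_i \in A_{k_i}$ is labeled exactly by $f_{k_i}$ on its entire robustness ball. For $k \ne k_i$, the subnetwork $f_k$ equals $0$ on $\gB_2(\vx_i,\mu)$ outside the preimage of stage $k$'s width-$\delta$ error slab, whose uniform measure inside a projected ball of radius $\mu'$ is $O(m\delta/\mu')$ by \cref{lem:projection prob}. Setting $\delta = \eta\mu'/(cmG)$ for an absolute constant $c$ drives the total per-point failure probability below $\eta$, and since $1/\delta = \mathrm{poly}(N,d,\rho^{-1},\eta^{-1})$ the Floor slope inflates depth and bit complexity by only $\mathrm{polylog}$ factors, matching the claimed $\tilde{O}(1/(d^{1/4}\rho^{1/2}))$ bit complexity (which is in fact set by the per-group memorizer of size $s$). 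The hardest step I expect to be the geometric one above: producing, at every stage, a translation $\vb_2^{(k)}$ that \emph{simultaneously} keeps every active ball $\delta$-far from the lattice and clears every inactive ball out of the cells the stage marks as active, so that the ``$0$ outside active cells'' behavior of $f_k$ actually holds rather than introducing a hard, measure-one error on some inactive ball. Both conditions cut out only small-measure subsets of the unit translation torus, so a careful volume and union-bound argument should yield feasibility under the chosen balance of $G$ and $\delta$; pushing this through uniformly over all $G = \tilde{O}(N)$ stages while controlling the bit complexity of each $\vb_2^{(k)}$ is the delicate point.
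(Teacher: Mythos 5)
Your overall architecture (log-dimensional projection, groups of size $\approx 1/(5\rho\sqrt d)$, a stage-dependent translation, grid indexing with a sharpened Floor, a $\tilde O(\sqrt s)$-parameter integer memorizer per stage, and a union bound over $\tilde O(N\sqrt d\,\rho)$ stages) coincides with the paper's construction, and your parameter, depth, and bit-complexity accounting matches. The genuine gap is in how each stage is supposed to behave on the \emph{inactive} balls. You assert that $f_k$ equals $0$ on an inactive ball outside the width-$\delta$ error slab, and you propose to enforce this by choosing $\vb_2^{(k)}$ so that no inactive ball touches any cell marked active at stage $k$. That clearing condition is infeasible in general: a translation moves the whole dataset rigidly, so it never changes relative positions; if two \emph{same-label} points are very close after projection (nothing forbids this, since only differently-labeled points enjoy the $2\epsilon_\gD$ separation), then every unit cell that strictly contains the active ball also intersects the nearby inactive ball, for \emph{every} translation, so the bad event has probability one on the translation torus and no volume/union-bound argument can fix it. As a result, some stage $k\neq k_i$ outputs the (correct) label $y_i$ on a large-measure portion of an inactive ball rather than $0$, and your additive aggregation $f=\sum_k f_k$ then returns $2y_i$ there --- a hard error not covered by your slab budget $\delta$.

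The paper resolves exactly this point without any clearing requirement: because $\epsilon_{\gD''}\ge \sqrt m/2$ while a unit cell has diameter $\sqrt m$, an inactive ball can only overlap an active cell carrying the \emph{same} label, so each stage can be built (via the label-specific index sets and the integer memorizer of \cref{lem:vardi_integer}, as packaged in \cref{lem:N alpha points}) to output a value in $\{0,y_i\}$ on every inactive ball except on the thin slab; and the stages are combined by a running maximum, $y\mapsto y+\sigma(\tilde f_k(\vz)-y)$, rather than a sum, so a stage that happens to output $y_i$ on an inactive ball is harmless. If you replace your clearing step and your additive aggregation with these two ingredients, the rest of your argument (slab width $\delta$, the measure bound from \cref{lem:projection prob}, the union bound over stages, and the count $G\cdot\tilde O(\sqrt s)=\tilde O(Nd^{1/4}\rho^{1/2})$) goes through essentially as you wrote it.
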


\begin{proof}
For given $\rho$, any desired error $\eta$, and $\gD  =  \{(\vx_i, y_i)\}_{i \in [N]} \in \mD_{d, N, C}$, we construct a network $f$ that $\rho$-robustly memorizes $\gD$ with $\tilde{O}(N d^{\frac{1}{4}} \rho^{\frac{1}{2}})$ parameters.

\textbf{Stage I} (Projection onto $\log$-scale Dimension and Scaling via the First Hidden Layer Weight Matrix). 

The first stage closely follows that of \cref{thm:ub2}.
By \cref{lem:proj-log}, we obtain an integer $m=\tilde{O}(\log N)$ and a 1-Lipschitz linear map $\phi:\R^d \to \R^m$ such that the projected dataset $\gD' := \{(\phi(\vx_i), y_i)\}_{i \in [N]} \in \mD_{m,N,C}$ satisfies the separation bound 
\begin{align}
    \epsilon_\gD' \geq \frac{5}{12}\sqrt{\frac{m}{d}}\epsilon_\gD.
\label{eq:epsilonsep2}
\end{align}
 
We define $f_{\mathrm{proj}}:\sR^d \rightarrow \sR^m$ as $f_{\mathrm{proj}}(\vx) = \frac{11}{9}\cdot \frac{\sqrt{d}}{\epsilon_\gD}\phi(\vx)$, which is $\frac{11}{9}\cdot \frac{\sqrt{d}}{\epsilon_\gD}$-Lipschitz.

We apply \cref{lem:bit-complexity} with $f_{\mathrm{proj}}$ whose depth is $1$, $\nu=\min\left\{\frac{109}{11880}\sqrt{m}, \frac{1}{88}\mu,\frac{1}{360N}, 1\right\}$ and $\bar R:= \max \{\|\vx\|_2 \mid \vx \in \gB_2(\vx_i, \mu) \quad \text{ for some } i \in [N]\}$ to obtain $\bar{f}_{\mathrm{proj}}$ with the same number of parameters, depth and width and $\tilde{O}(1)$ bit complexity such that 
\begin{align}
    \max_{\|\vx\|_2 \le {\bar R}} \norm{\Bar{f} - f}_2\leq \nu.\label{eq:f-bar2}
\end{align}

We set the first hidden layer bias $\vb\in \R^m$ so that 
\begin{align}
    \bar{f}_{\mathrm{proj}}(\vx) + \vb \geq \vzero \textrm{ for all } i \in [N] \textrm{ and all } \vx \in \gB_2(\vx_i, \mu), \label{eq:relusafe2}
\end{align}
where the comparison between two vectors is element-wise.

We obtain the grouping scale $\alpha \in [0,1]$ here for Stage II---we call $\alpha$ the grouping scale, as we group the points by approximately $N^\alpha$ points per group in Stage II. From the $\rho$ condition, we have $\frac{1}{5\rho\sqrt{d}}\geq 1$. Thus, there exists $\alpha \in [0,1]$ such that satisfies $\ceil{N^\alpha}=\floor{\frac{1}{5\rho\sqrt{d}}}$. 
Let us bound the $\rho$ in terms of $\alpha$. Since $\ceil{N^\alpha}=\floor{\frac{1}{5\rho\sqrt{d}}}\leq\frac{1}{5\rho\sqrt{d}}$, we have 
\begin{align}
    \label{rho-Nalpha}
    \rho \leq \frac{1}{5 \ceil{N^\alpha} \sqrt{d}} \leq \frac{1}{5 \floor{N^\alpha} \sqrt{d}}.
\end{align}

We claim that for $\gD'' := \{(\relu(\bar f_{\mathrm{proj}}(\vx_i)+\vb), y_i)\}_{i \in [N]} \in \mD_{m,N,C}$, we have (i) $\epsilon_{\gD''} \geq \sqrt{m}/2$ and (ii) for $\rho'' := \frac{1}{4\floor{N^\alpha} \epsilon_{\gD''}}$, if $g(\vx) \in \gF_{m, P}$ can $\rho''$-robustly memorize $\gD''$, then $g \circ \relu \circ (\bar f_{\mathrm{proj}}(\vx_i)+\vb)$ can $\rho$-robustly memorize $\gD$.
For any $i \neq j$ with $y_i \neq y_j$, we have
\begin{align*}
    &\norm{\relu(\bar{f}_{\mathrm{proj}}(\vx_i)+\vb) - \relu(\bar{f}_{\mathrm{proj}}(\vx_j)+\vb)}_2 \\
    \overset{(a)}{=}&\norm{(\bar{f}_{\mathrm{proj}}(\vx_i)+\vb) - (\bar{f}_{\mathrm{proj}}(\vx_j)+\vb)}_2 \\
    =&\norm{\bar{f}_{\mathrm{proj}}(\vx_i) - \bar{f}_{\mathrm{proj}}(\vx_j)}_2 \\
    =& \norm{(\bar{f}_{\mathrm{proj}}(\vx_i)-{f}_{\mathrm{proj}}(\vx_i)) - (\bar{f}_{\mathrm{proj}}(\vx_j)-{f}_{\mathrm{proj}}(\vx_j))+({f}_{\mathrm{proj}}(\vx_i)-{f}_{\mathrm{proj}}(\vx_j))}_2.
\end{align*}
where (a) holds by the construction of $\vb$ (\cref{eq:relusafe2}).
For simplicity, we denote 
$$\Delta(\vx_i, \vx_j):=(\bar{f}_{\mathrm{proj}}(\vx_i)-{f}_{\mathrm{proj}}(\vx_i)) - (\bar{f}_{\mathrm{proj}}(\vx_j)-{f}_{\mathrm{proj}}(\vx_j)).$$
Then, we have 
\begin{align*}
    &\norm{\relu(\bar{f}_{\mathrm{proj}}(\vx_i)+\vb) - \relu(\bar{f}_{\mathrm{proj}}(\vx_j)+\vb)}_2^2 \\
    =& \norm{\Delta(\vx_i, \vx_j)+({f}_{\mathrm{proj}}(\vx_i)-{f}_{\mathrm{proj}}(\vx_j))}_2^2 \\
    \overset{(a)}{\ge}& (\|{f}_{\mathrm{proj}}(\vx_i)-{f}_{\mathrm{proj}}(\vx_j)\|_2 - \|\Delta(\vx_i, \vx_j)\|_2)^2 \\
    =&\|{f}_{\mathrm{proj}}(\vx_i)-{f}_{\mathrm{proj}}(\vx_j)\|_2^2 -2 \|{f}_{\mathrm{proj}}(\vx_i)-{f}_{\mathrm{proj}}(\vx_j)\|_2 \|\Delta(\vx_i, \vx_j)\|_2 + \|\Delta(\vx_i, \vx_j)\|_2^2 \\
    \ge& \|{f}_{\mathrm{proj}}(\vx_i)-{f}_{\mathrm{proj}}(\vx_j)\|_2^2 -2 \|{f}_{\mathrm{proj}}(\vx_i)-{f}_{\mathrm{proj}}(\vx_j)\|_2 \|\Delta(\vx_i, \vx_j)\|_2,
\end{align*}
where (a) holds from $\|\va+\vb\|^2_2 \ge (\|\va\|_2 - \|\vb\|_2)^2$.
By the construction of $\bar{f}$ (\cref{eq:f-bar2}),  
$$
\|\Delta(\vx_i, \vx_j)\|_2 \le \|\bar{f}_{\mathrm{proj}}(\vx_i)-{f}_{\mathrm{proj}}(\vx_i)\|_2 + \|\bar{f}_{\mathrm{proj}}(\vx_j)-{f}_{\mathrm{proj}}(\vx_j)\|_2 \le 2\nu,
$$
so we have
\begin{align}
&\norm{\relu(\bar{f}_{\mathrm{proj}}(\vx_i)+\vb) - \relu(\bar{f}_{\mathrm{proj}}(\vx_j)+\vb)}_2^2 \notag\\
\ge &\|{f}_{\mathrm{proj}}(\vx_i)-{f}_{\mathrm{proj}}(\vx_j)\|_2^2 - 4\nu \|{f}_{\mathrm{proj}}(\vx_i)-{f}_{\mathrm{proj}}(\vx_j)\|_2. \label{eq:nu2}
\end{align}

Now we derive
\begin{align}
    \norm{f_{\mathrm{proj}}(\vx_i) - f_{\mathrm{proj}}(\vx_j)}_2
    & \overset{(a)}{=} \frac{11}{9} \cdot \frac{\sqrt{d}}{\epsilon_\gD} \norm{\phi(\vx_i) - \phi(\vx_j)}_2 \notag\\
    & \overset{(b)}{\geq} \frac{11}{9} \cdot \frac{\sqrt{d}}{\epsilon_\gD} \cdot 2\epsilon_{\gD'} \notag\\
    & \overset{(c)}{\geq} \frac{11}{9} \cdot \frac{\sqrt{d}}{\epsilon_\gD} \times 2\cdot \frac{5}{12}\sqrt{\frac{m}{d}}\epsilon_\gD \notag\\
    & =\frac{55}{54}\sqrt{m}, \label{eq:f_proj2}
\end{align}
where (a) is by the definition of $f_{\mathrm{proj}}$, (b) is by the definition of $\gD'$ and its separation constant, and (c) follows from \cref{eq:epsilonsep2}.

Plugging this inequality to \cref{eq:nu2} gives
\begin{align*}
    \norm{\relu(\bar{f}_{\mathrm{proj}}(\vx_i)+\vb) - \relu(\bar{f}_{\mathrm{proj}}(\vx_j)+\vb)}_2^2 
&\ge \|{f}_{\mathrm{proj}}(\vx_i)-{f}_{\mathrm{proj}}(\vx_j)\|_2(\frac{55}{54}\sqrt{m} - 4\nu)\\
&\overset{(a)}{\ge} \|{f}_{\mathrm{proj}}(\vx_i)-{f}_{\mathrm{proj}}(\vx_j)\|_2(\frac{55}{54}\sqrt{m} - \frac{109}{2970}\sqrt{m})\\
&= \|{f}_{\mathrm{proj}}(\vx_i)-{f}_{\mathrm{proj}}(\vx_j)\|_2 \cdot \frac{54}{55}\sqrt{m}\\
&\overset{(b)}{\ge} \frac{55}{54}\sqrt{m} \cdot \frac{54}{55}\sqrt{m}\\
&=m
\end{align*}
where (a) holds from $\nu \le \frac{109}{11880}\sqrt{m}$, and (b) holds from \cref{eq:f_proj2}.
This proves the first claim $\epsilon_{\gD''} \geq \sqrt{m}/2$. 
To prove the second claim, let $\mu := \rho \epsilon_\gD$ and $\mu'' := \rho'' \epsilon_{\gD''}$.
Then, 
\begin{align*}
    \relu (\bar f_{\mathrm{proj}}(\gB_2(\vx_i, \mu))+\vb)
    &\overset{(a)}{=} \bar f_{\mathrm{proj}}(\gB_2(\vx_i, \mu))+\vb \\
    & \overset{(b)}{\subseteq}  f_{\mathrm{proj}}(\gB_2(\vx_i, \mu+\nu)) +\vb\\
    & \overset{(c)}{\subseteq}  f_{\mathrm{proj}}(\gB_2(\vx_i, \frac{89}{88}\mu)) +\vb\\
    & \overset{(d)}{\subseteq} \gB_2(f_{\mathrm{proj}}(\vx_i), \frac{89}{72}\cdot \frac{\sqrt{d}}{\epsilon_\gD} \times \mu)+\vb\\
    & \overset{(e)}{=} \gB_2(f_{\mathrm{proj}}(\vx_i), \frac{89}{72}\cdot \sqrt{d} \rho)+\vb\\
    & \overset{(f)}{\subseteq} \gB_2(f_{\mathrm{proj}}(\vx_i), \frac{89}{360N})+\vb\\
    & \overset{(g)}{\subseteq} \gB_2(\bar f_{\mathrm{proj}}(\vx_i), \frac{89}{360N}+\nu)+\vb \\
    & \overset{(h)}{\subseteq} \gB_2(\bar f_{\mathrm{proj}}(\vx_i), \frac{1}{4N})+\vb \\
    & \overset{(i)}{=} \gB_2(\bar f_{\mathrm{proj}}(\vx_i), \rho''\epsilon_{\gD''})+\vb\\
    &= \gB_2(\bar f_{\mathrm{proj}}(\vx_i)+\vb, \rho''\epsilon_{\gD''})\\
    & \overset{(j)}{=} \gB_2(\relu(\bar f_{\mathrm{proj}}(\vx_i)+\vb), \rho''\epsilon_{\gD''}),
\end{align*}
where (a) and (j) are by \cref{eq:relusafe2}, (b) and (g) are by the construction of $\bar f$ (\cref{eq:f-bar2}), (c) is because $\nu \le \frac{1}{88}\mu$, (d) is because $f_{\mathrm{proj}}$ is $\frac{11}{9}\cdot \frac{\sqrt{d}}{\epsilon_\gD}$-Lipschitz, (e) uses $\mu = \rho \epsilon_\gD$, (f) uses $\rho \leq \frac{1}{5 N\sqrt{d}}$, (h) is because $\nu \le \frac{1}{360N}$ and (i) is because $\rho'' \epsilon_{\gD''} = \frac{1}{4N\epsilon_{\gD''}}\epsilon_{\gD''} = \frac{1}{4N}$.

Hence, $g(\vx)$ memorizing the robustness ball $\gB_2(\relu(\bar f_{\mathrm{proj}}(\vx_i)+\vb), \rho''\epsilon_{\gD''})$ on projected space leads to $g \circ \relu \circ (\bar f_{\mathrm{proj}}(\vx)+\vb)$ memorizing the robustness ball for $\gD$. 
In other words,  if $g(\vx) \in \gF_{m, P}$ can $\rho''$-robustly memorize $\gD''$, then $g \circ \relu \circ (\bar f_{\mathrm{proj}}(\vx)+\vb)$ can $\rho$-robustly memorize $\gD$.
With $\rho'' = \frac{1}{4\floor{N^\alpha} \epsilon_{\gD''}}$, Stage II aims to find a $\rho''$-robust memorizer $g$ of $\gD''$. For simplicity of the notation, let $\vz_i := \relu(\bar f_{\mathrm{proj}}(\vx_i)+\vb)$ for each $i \in [N]$, so that $\gD'' = \{(\vz_i, y_i)\}_{i \in [N]}$.

\textbf{Stage II} (Memorizing $N^\alpha$ Points at Each Layer): Using the grouping scale $\alpha$ obtain in Stage I, we group $N$ data points to $\ceil{N^{1-\alpha}}$ groups with index $\{I_j\}_{j=1}^{N^{1-\alpha}}$, each with $|I_j| \leq \floor{N^{\alpha}}+1$. Then, we construct $\tilde{f}_j$ that memorizes data points and their robustness balls with index $I_j$, and the error rate remains small for other data points and their robustness balls.

For each $j \in [\ceil{N^{1-\alpha}}]$, we apply \cref{lem:N alpha points} with error rate $\eta\gets \frac{\eta}{N^{1-\alpha}}$, $\alpha \gets \alpha$, $\gD \gets \gD'' \in~\mD_{m,N,C}$, $\rho \gets \rho''$ and $I \gets I_j$. Then it satisfies that $\epsilon_{\gD''} \geq \sqrt{m}/2$, $\rho'' = \frac{1}{4\floor{N^\alpha} \epsilon_{\gD''}}$, and $|I|\leq \floor{N^\alpha}+1$. Thus, we obtain a neural network $\tilde{f}_j$ with width $O(m)=\tilde O(1)$, depth $\tilde O(N^\frac{\alpha}{2})$ and
$\tilde{O}\left(N^\frac{\alpha}{2}+m^2\right)=\tilde{O}\left(N^\frac{\alpha}{2}\right)$ parameters and bit complexity $\tilde{O}\left(N^\frac{\alpha}{2}+m\right)=\tilde{O}\left(N^\frac{\alpha}{2}\right)$ such that:
\begin{align*}
    \tilde{f}_j(\vz)&=y_i & \forall & i \in I_j, \vz \in \gB(\vz_i, \rho'' \epsilon_{\gD''}), \\
    \mathbb{P}_{\vz \in \operatorname{Unif}(\gB(\vz_i, \rho'' \epsilon_{\gD''}))} \left[\tilde{f}_j(\vz) \in \{0, y_i\} \right] &\geq 1-\frac{\eta}{N^{1-\alpha}} \quad & \forall &  i \in [N]\backslash I_j.
\end{align*}   
Thus, we have 
\begin{align}
    \mathbb{P}_{\vz \in \operatorname{Unif}(\gB(\vz_i, \rho'' \epsilon_{\gD''}))} \left[\tilde{f}_j(\vz) \in \{0, y_i\} \right] \geq 1-\frac{\eta}{N^{1-\alpha}} \quad \forall  i \in [N], j \in [N^{1-\alpha}]. \label{ineq:n_alpha}
\end{align}

We define for each $j$:
\begin{align*}
    f_j \left(
    \begin{pmatrix}
        \vz \\ y 
    \end{pmatrix}\right) = 
    \begin{pmatrix}
        \vz \\ y+\sigma\left(\tilde{f}_j(\vz) -y \right)
    \end{pmatrix},
\end{align*}
so that the last coordinate is given as $y+\sigma\left(\tilde{f}_j(\vz) -y \right) = \max \{\tilde{f}_j(\vz), y\}$.
Finally, we define the full robust memorizing network as
\begin{align*}
f(\vx) := 
\begin{pmatrix}
    \mathbf{0} \\ 1
\end{pmatrix}^\top
f_{\ceil{N^{1-\alpha}}} \circ \cdots \circ f_{2}\circ f_{1} 
\begin{pmatrix}
    \bar f_{\text{proj}}(\vx)+\vb\\
    0
\end{pmatrix}.
\end{align*}

We now verify the correctness of the construction. For any $\vx \in \mathcal{B}(\vx_i, \rho \epsilon_\gD)$ and $\vz = f_{\mathrm{proj}}(\vx) \in \mathcal{B}(\vz_i, \rho'' \epsilon_{\mathcal{D}''})$, since we partition $[N]$ into disjoint groups $\{I_j\}_{j \in [N^{1 - \alpha}]}$, there exists a unique index $j_i$ such that $i \in I_{j_i}$ and thus $\tilde{f}_{j_i}(\vz) = y_i$ holds. For all $j \ne j_i$, the networks satisfy $\tilde{f}_{j}(\vz) \in \{0, y_i\}$ with high probability, so none of them can exceed $y_i$. Since the final network outputs the maximum among $y$ and all $\tilde{f}_j(\vz)$, we have $f(\vx) = y_i$ as long as each $\tilde{f}_j(\vz) \in \{0, y_i\}$. Therefore, it suffices to show that $\tilde{f}_j(\vz) \in \{0, y_i\}$ holds for $\forall j$, namely,
$$
\left[\tilde{f}_j(\vz) \in \{0, y_i\} \quad \forall j \in [N^{1-\alpha}]
\right] \Rightarrow f(\vx)=y_i.
$$

Considering the contrapositive, we have
\begin{align*}
    f(\vx) \neq y_i \Rightarrow \left[\tilde{f}_j(\vz) \notin \{0, y_i\} \quad \textrm{ for some } j \in [N^{1-\alpha}]
\right]
\end{align*}

Since each $\tilde{f}_j$ satisfies $\mathbb{P}_{\vz \sim \mathrm{Unif}(\mathcal{B}(\vz_i, \rho'' \epsilon_{\mathcal{D}''}))} [\tilde{f}_j(\vz) \in \{0, y_i\}] \ge 1 - \frac{\eta}{N^{1 - \alpha}}$ for all $j \in [N^{1 - \alpha}]$, we upper bound the error probability using the union bound:
\begin{align*}
    \mathbb{P}_{\vx \in \operatorname{Unif}(\gB(\vx_i, \rho \epsilon_{\gD}))} \left[f(\vx)  \neq y_i\right] 
    \leq & \mathbb{P}_{\vz \in \operatorname{Unif}(\gB(\vz_i, \rho'' \epsilon_{\gD''}))} \left[\tilde{f}_j(\vz) \notin \{0, y_i\} \quad \textrm{ for some } j \in [N^{1-\alpha}]\right] \\
    \leq & \sum_{j \in [N^{1-\alpha}]} \mathbb{P}_{\vz \in \operatorname{Unif}(\gB(\vz_i, \rho'' \epsilon_{\gD''}))} \left[\tilde{f}_j(\vz) \notin \{0, y_i\} \right] \\
    \overset{(a)}{\leq} & {N^{1-\alpha}} \times \left(\frac{\eta}{N^{1-\alpha}}\right) \\
    \leq & \eta,
\end{align*}
where (a) holds by \cref{ineq:n_alpha}.
Hence, 
\begin{align*}
    \mathbb{P}_{\vx \in \operatorname{Unif}(\gB(\vx_i, \rho \epsilon_{\gD}))} \left[f(\vx) = y_i\right] & = 1 - \mathbb{P}_{\vx \in \operatorname{Unif}(\gB(\vx_i, \rho \epsilon_{\gD}))} \left[f(\vx) \neq y_i\right] \\
    & \geq 1 - \eta.
\end{align*}

We verify width, depth and the number of parameters of $f$.
Recall that the final network is $$f(\vx) := 
\begin{pmatrix}
    \mathbf{0} \\ 1
\end{pmatrix}^\top
f_{\ceil{N^{1-\alpha}}} \circ \cdots \circ f_{2}\circ f_{1} 
\begin{pmatrix}
    \bar f_{\text{proj}}(\vx)+\vb\\
    0
\end{pmatrix}.$$

The depth 1 network $\bar f_{\mathrm{proj}}(\vx)+\vb$ has width $m=\tilde O(1)$. For $j \in [\ceil{N^{1-\alpha}}]$, each $\tilde{f}_j$ has width $\tilde O(1)$, depth $\tilde O(N^\frac{\alpha}{2})$ and
$\tilde{O}\left(N^\frac{\alpha}{2}\right)$ parameters.

The network $\bar f_{\text{proj}}$ needs $dm=\tilde O(d)$ parameters. 
Hence, the number of parameters of $f$ is 
$$ \tilde{O}\left(N^{1-\alpha} \times N^{\frac{\alpha}{2}}\right) = \tilde{O}\left(N^{1-\frac{\alpha}{2}}\right) \overset{(a)}{=}\tilde{O}\left(N d^{\frac{1}{4}} \rho^{\frac{1}{2}}\right),$$
where (a) holds by $\ceil{N^\alpha} =\floor{\frac{1}{5\rho \sqrt{d}}}$.
The width of $f$ is $\tilde O(1)$ and the depth of $f$ is $\tilde{O}\left(N^{1-\alpha} \times N^{\frac{\alpha}{2}}\right) =\tilde{O}\left(N d^{\frac{1}{4}} \rho^{\frac{1}{2}}\right)$.

The bit complexity of $\bar f_{\mathrm{proj}}$ is $\tilde{O}(1)$ and that of $\vb$ is $\log (\max \{ \| \bar{f}_{\mathrm{proj}}(\vx)\|_\infty\mid \vx \in \gB_2(\vx_i, \mu) \quad \text{ for some } i \in [N]\})=\tilde{O}(1)$. 
The network $f_j$ has the same bit complexity as $\tilde f_j$, which is $\tilde{O}\left(N^{\frac{\alpha}{2}}\right)=\tilde{O}\left(\nicefrac{1}{d^{\frac{1}{4}} \rho^{\frac{1}{2}}}\right)$.
Hence, the bit complexity of the final network is $\tilde{O}\left(\nicefrac{1}{d^{\frac{1}{4}} \rho^{\frac{1}{2}}}\right)$.
    
\end{proof}

The above construction is motivated by the need to handle overlapped robustness balls with the same label. We transform the construction of classical memorization in \citet{vardi2021optimalmemorizationpowerrelu} in two key directions: first, from memorizing isolated data points~$\vx_i$ to memorizing entire robustness neighborhoods~$\gB_p(\vx_i, \mu)$; and second, to ensuring correct classification even within regions where multiple robustness balls with the same label overlap. To accomplish this, we introduce disjoint, integer-aligned interval encodings and carefully control the error propagation caused by dimension reduction, as addressed in \cref{lem:projection prob}.

\subsubsection{Memorization of Integers with Sublinear Parameters in \texorpdfstring{$N$}{N}}

Lemmas in this subsection are a slight extension of those in \citet{vardi2021optimalmemorizationpowerrelu}, adapted to our integer-based encoding scheme.

From here, $\bin_{i:j}(n)$ denotes the bit string from position $i$ to $j$ (inclusive) in the binary representation of $n$.
For example, $\bin_{1:3}(37) = 4$, since $(37)_{10}= (100101)_{2}$ so that $\bin_{1:3}(37) = (100)_{2} = (4)_{10}$.

\begin{lemma}
    \label{lem:relu construct}
    Let $\eta>0$ and $m, n \in \mathbb{N}$ with $m<n$. Then, there exists a neural network $F:\R \to \R$ with width 2, depth 2 and bit complexity $\tilde O(1)$ such that
    \begin{align*}
        F(x) = \begin{cases}
            1 & \textrm{ for } x \in [m, n - \eta],\\
            0 & \textrm{ for } x \in (-\infty, m - \eta] \cup [n, \infty).
        \end{cases}
    \end{align*}
\end{lemma}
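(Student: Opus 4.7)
The plan is to realize $F$ as a trapezoidal bump on $[m-\eta,n]$: zero outside, plateau height $1$ on $[m,n-\eta]$, with linear ramps on the two transition windows. The obvious four-clamped-ramp construction uses four first-layer neurons, so the width-$2$ budget forces a different structure. The key trick I would exploit is that a sum of two opposite-sloped ReLUs is \emph{constant} on the desired plateau interval, so two first-layer neurons already encode the plateau geometry and the second layer only needs a scalar clamp. I may further assume $\eta<n-m$; otherwise $[m,n-\eta]=\emptyset$ and the constant function $F\equiv 0$ trivially satisfies the required conditions.

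In the first hidden layer I set
\[
h_1(x) = \sigma\!\left(\frac{x-m}{\eta}\right), \qquad h_2(x) = \sigma\!\left(\frac{n-\eta-x}{\eta}\right),
\]
two ReLUs with slopes $\pm 1/\eta$ hinging at $x=m$ and $x=n-\eta$. A direct case analysis gives $h_1(x)+h_2(x)=(n-\eta-m)/\eta$ for all $x\in[m,n-\eta]$, while outside this interval the sum grows linearly with slope $1/\eta$, attaining $(n-\eta-m)/\eta+1$ at both $x=m-\eta$ and $x=n$. Consequently $V(x):= h_1(x)+h_2(x)-(n-\eta-m)/\eta$ is nonnegative everywhere, equals $0$ on $[m,n-\eta]$, lies in $[0,1]$ on the transition windows $[m-\eta,m]$ and $[n-\eta,n]$, and satisfies $V\geq 1$ on $(-\infty,m-\eta]\cup[n,\infty)$.

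The second hidden layer computes $g_1=\sigma(V)$ and $g_2=\sigma(V-1)$, again using only two neurons, and the output layer returns $F(x) := 1-g_1+g_2$. Since $V\geq 0$, the standard clamp identity $\sigma(t)-\sigma(t-1)=\min(\max(t,0),1)$ gives $g_1-g_2=\min(V,1)$, which equals $0$ on $[m,n-\eta]$ and $1$ on $(-\infty,m-\eta]\cup[n,\infty)$, so $F=1-(g_1-g_2)$ takes the required values. The resulting network has width $2$ and depth $2$, and every weight/bias is a rational of the form $\pm 1/\eta$, $\pm 1$, $-m/\eta$, $(n-\eta)/\eta$, or $-(n-\eta-m)/\eta-c$ with $c\in\{0,1\}$; each is representable with $O(\log \max\{m,n,1/\eta\})=\tilde O(1)$ bits under fixed-point precision.

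The main obstacle---and the reason the construction must be depth $2$ rather than depth $1$---is the width-$2$ cap: a trapezoid has four hinge points and hence seems to demand four first-layer ReLUs, since a depth-$1$ width-$2$ network is piecewise linear with at most two breakpoints. The cancellation identity $h_1+h_2\equiv(n-\eta-m)/\eta$ on $[m,n-\eta]$ is what circumvents this, letting the first layer encode a V-shaped profile whose flat bottom marks precisely the plateau, after which a single scalar clamp in the second layer extracts $F$.
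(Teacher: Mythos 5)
Your construction is correct and takes essentially the same route as the paper: an explicit width-$2$, depth-$2$ ReLU realization of the trapezoidal bump $F = 1-\min(V,1)$, differing only in the choice of first-layer hinges (your inward-facing ramps whose sum is constant on the plateau, versus the paper's outward ``violation'' ramps $\sigma\!\left(-\tfrac{x-m}{\eta}\right)$, $\sigma\!\left(\tfrac{x-(n-\eta)}{\eta}\right)$ clamped individually via $\sigma(1-\sigma(\cdot))$ and summed). Your separate treatment of the degenerate case $\eta \ge n-m$ is harmless (the paper's formula happens to cover it automatically), and your bit-complexity accounting matches the paper's $O(\log m+\log n+\log(1/\eta))=\tilde O(1)$.
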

\begin{proof}
    We construct a network $F$:
    $$
    F(x)=\sigma\left(1-\sigma\left(-\frac{1}{\eta}(x-m)\right)\right)+\sigma\left(1-\sigma\left(\frac{1}{\eta}(x-(n-\eta))\right)\right)-1.
    $$
    It satisfies the requirements with depth 2 and width 2. The bit complexity is $O(\log m+\log n + \log (1/\eta))=\tilde O(1)$. 
\end{proof}

\begin{lemma}\label{lem:var_interval_memorizing}
Let $\eta \in (0,1)$, and let $m_1<\dots<m_N$ be natural numbers. Let $N_1, N_2\in\mathbb{N}$ satisfy $N_1 \cdot N_2 \geq N$, and let $w_1,\dots,w_{N_1}\in\mathbb{N}$. Then, there exists a neural network $F:\reals\rightarrow\reals$ with width~$4$, depth $3N_1 + 2$ and bit complexity $\tilde O(1)$ such that, 
\begin{align*}
    F(x) = \begin{cases}
        w_{\lceil\frac{i}{N_2}\rceil} & \textrm{ if } x \in [m_i, m_i + 1 - \eta] \textrm{ for some } i \in [N],\\
        0 & \textrm{ if } x \notin \bigcup_{j\in[N_1]} (m_{(j-1)N_2 + 1}-\eta,\; m_{jN_2} + 1).
    \end{cases}
\end{align*}
where we define $m_{N+1}=\cdots = m_{N_1N_2}=m_N$.
\end{lemma}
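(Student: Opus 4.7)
The plan is to exploit the weak specification of $F$: within a single group $j$, the lemma only constrains $F$ to equal $w_j$ on the disjoint intervals $[m_i,m_i+1-\eta]$ for $i$ in that group, leaving $F$ unconstrained on the ``gaps'' inside the hull. Thus it suffices to implement a simpler target, namely $F(x)=w_j$ on the entire inner hull $H_j := [m_{(j-1)N_2+1},\, m_{jN_2}+1-\eta]$ of group $j$, and $F(x)=0$ outside the enlarged hull $\tilde H_j := (m_{(j-1)N_2+1}-\eta,\, m_{jN_2}+1)$. Concretely, I would invoke \cref{lem:relu construct} for each $j\in[N_1]$ with $m\gets m_{(j-1)N_2+1}$ and $n\gets m_{jN_2}+1$ to obtain a width-$2$, depth-$2$ subnetwork $g_j:\R\to\R$ with $g_j\equiv 1$ on $H_j$ and $g_j\equiv 0$ outside $\tilde H_j$, and then set $F(x):=\sum_{j=1}^{N_1} w_j\, g_j(x)$.

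Correctness follows from two observations. First, for any $i\in[N]$ with $x\in[m_i,m_i+1-\eta]$ and $j=\lceil i/N_2\rceil$, the point $x$ lies in $H_j$ but outside every $\tilde H_{j'}$ with $j'\neq j$ (since $m_1<\cdots<m_N$ are integers and $\eta<1$, the only possible overlap between $\tilde H_j$ and $\tilde H_{j'}$ is a thin band of width at most $\eta$ strictly outside any specified interval), giving $F(x)=w_j$. Second, outside $\bigcup_j \tilde H_j$ every $g_j$ vanishes, so $F(x)=0$, as required.

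To realize the sum in a narrow network, I would serialize the computation across depth using width $4$: one slot passes through the input as $\sigma(x+B)$ for a large constant $B$ (so the ReLU is transparent on the range of interest), one slot stores the running accumulator $\sum_{j'\le j} w_{j'}g_{j'}(x)$ (nonnegative since each $w_{j'}\in\mathbb{N}$, hence also unaffected by ReLU), and two slots execute the two-layer width-$2$ construction of $g_j$ from \cref{lem:relu construct}. Each group thus consumes $3$ hidden layers (two for the inner ReLU compositions of $g_j$, one to fold $w_j g_j$ into the accumulator while launching the first layer of $g_{j+1}$), yielding depth $3N_1$, plus one setup layer and one final readout layer for a total of $3N_1+2$. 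Since $B$, the $w_j$, the $m_i$, and $1/\eta$ each contribute only polylogarithmic bit length and \cref{lem:relu construct} already has $\tilde O(1)$ bit complexity, the overall bit complexity remains $\tilde O(1)$.

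The main obstacle is the bookkeeping that ensures the pass-through and accumulator survive every ReLU layer while width stays at $4$; this is resolved by the shift $x\mapsto \sigma(x+B)$ and the nonnegativity of the partial sums. A secondary subtlety is the potential overlap of adjacent enlarged hulls $\tilde H_j$ and $\tilde H_{j+1}$ when $m_{jN_2+1}=m_{jN_2}+1$, but as noted above such overlaps lie strictly outside every specified interval, so the summation $F=\sum_j w_j g_j$ still meets the required input-output behavior verbatim.
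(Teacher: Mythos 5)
Your proposal is correct and follows essentially the same route as the paper: invoke \cref{lem:relu construct} once per group $j$ to get a width-$2$, depth-$2$ indicator $g_j$ of the group's convex hull, realize $F(x)=\sum_j w_j g_j(x)$ as a serial accumulation across depth with one pass-through slot, one accumulator slot, and two working slots (width $4$, depth $3N_1+2$), and observe that a point in a specified interval of group $j$ falls in the zero region of every $g_{j'}$ with $j'\neq j$ because the $m_i$ are distinct integers and $\eta<1$. The only cosmetic differences from the paper's proof are the explicit $\sigma(x+B)$ shift for the pass-through slot (the paper leaves the nonnegativity of the carried coordinates implicit) and the slightly informal phrasing of the disjointness argument, neither of which changes the substance.
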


\begin{proof}
Let $j\in[N_1]$. We define network blocks $\tilde{F}_j:\reals\rightarrow\reals$ and $F_j:\reals^2\rightarrow\reals^2$ as follows. By applying \cref{lem:relu construct}, we construct $\tilde{F}_j$ such that:
$$
\tilde{F}_j(x) =
\begin{cases}
1 & \text{if } x \in \left[ m_{(j-1)N_2 + 1},\; m_{jN_2} + 1 - \eta \right], \\
0 & \text{if } x \leq m_{(j-1)N_2 + 1} - \eta \text{ or } x \geq m_{jN_2} + 1.
\end{cases}
$$
As a result, for any $i \in [N]$, any $x\in[m_i, m_i+1-\eta]$ satisfies
\begin{align*}
    \tilde{F}_i(x)=\begin{cases}
        1 & \textrm{ if } i \in [(j-1)\cdot N_2+1, j\cdot N_2],\\
        0 & \textrm{ otherwise}.
    \end{cases}
\end{align*}

Next, we define:
\[
F_j\left(\begin{pmatrix}x \\ y\end{pmatrix}\right) = \begin{pmatrix}x\\ y + w_j\cdot\tilde{F}_j(x) \end{pmatrix}~.
\]
Finally, we define the network $F(x)=\begin{pmatrix}0 \\ 1\end{pmatrix}^\top F_{N_1}\circ\cdots\circ F_1\left(\begin{pmatrix}x \\ 0\end{pmatrix}\right).$ 

We now verify the correctness of the construction. For $i \in [N]$, let $x \in [m_i, m_i+1-\eta]$. For $j=\lceil\frac{i}{N_2}\rceil$, we have $\tilde{F}_j(x)=1$, and for all $j' \neq j$, $\tilde{F}_{j'}(x)=0$. Therefore, the output of $F$ satisfies $F(x) = w_j= w_{\lceil\frac{i}{N_2}\rceil}$. 

The width of each $F_j$ is at most the width required to implement $\tilde{F}_j$, plus two additional units to carry the values of $x$ and $y$. Since the width of $\tilde{F}_j$ is 2, the width of $F$ is at most $4$. Each block $F_j$ has depth 3, and $F$ is a composition of $N_1$ blocks. Additionally, one layer is used for the input to get $x\mapsto \begin{pmatrix} x \\ 0 \end{pmatrix}$, and another to extract the last coordinate of the final input. Thus, the total depth of $F$ is $3N_1+2$. The bit complexity is $\tilde O(1)$.
\end{proof}

\begin{lemma}[Lemma A.7, \citet{vardi2021optimalmemorizationpowerrelu}]
\label{lem:telgarski extraction}
Let $n\in\mathbb{N}$ and let $i,j\in\mathbb{N}$ with $i<j\leq n$. Denote Telgarsky's triangle function by $\varphi(z):=\sigma(\sigma(2z)-\sigma(4z-2))$. Then, there exists a neural network $F:\reals^2\rightarrow\reals^3$ with width $5$ and depth $3(j-i+1)$, and bit complexity $n+2$, such that for any $x\in\mathbb{N}$ with $x\leq 2^n$, if the input of $F$ is $\begin{pmatrix} \varphi^{(i-1)}\left(\frac{x}{2^n} + \frac{1}{2^{n+1}}\right) \\ \varphi^{(i-1)}\left(\frac{x}{2^n} + \frac{1}{2^{n+2}}\right) \end{pmatrix}$, then it outputs: $\begin{pmatrix}\varphi^{(j)}\left(\frac{x}{2^n} + \frac{1}{2^{n+1}}\right) \\ \varphi^{(j)}\left(\frac{x}{2^n} + \frac{1}{2^{n+2}}\right) \\ \bin_{i:j}(x)\end{pmatrix}$.
\end{lemma}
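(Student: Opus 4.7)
The plan is to prove the lemma by constructing $F$ as a cascade of $j-i+1$ identical ``extraction blocks,'' each of width $5$ and depth $3$, where the $k$-th block (for $k$ ranging from $i$ to $j$) simultaneously (i) advances both parallel tracks $\varphi^{(k-1)}(y_1)$ and $\varphi^{(k-1)}(y_2)$ to $\varphi^{(k)}(y_1)$ and $\varphi^{(k)}(y_2)$, where $y_1 := \frac{x}{2^n} + \frac{1}{2^{n+1}}$ and $y_2 := \frac{x}{2^n} + \frac{1}{2^{n+2}}$, and (ii) shifts the running accumulator by a factor of two and appends the $k$-th bit $b_k$ of $x$ (counting from the most significant position among $n$ bits). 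After chaining the $j - i + 1$ blocks, the accumulator equals $\sum_{l=i}^{j} b_l \, 2^{j-l} = \bin_{i:j}(x)$, which is the third output coordinate.

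The key auxiliary fact I need to establish is a ``bit-extraction formula'' that expresses $b_k$ as a bounded-depth, bounded-width ReLU combination of $\varphi^{(k-1)}(y_1)$ and $\varphi^{(k-1)}(y_2)$. Writing $y_1 = 0.b_1 b_2 \cdots b_n 1$ and $y_2 = 0.b_1 b_2 \cdots b_n 0 1$ in binary, an induction on $k$ using the tent-map identity $\varphi(0.c_1 c_2 \cdots) = 0.c_2 c_3 \cdots$ if $c_1 = 0$ and $\varphi(0.c_1 c_2 \cdots) = 0.\overline{c_2}\,\overline{c_3}\cdots$ if $c_1 = 1$ shows that $\varphi^{(k-1)}(y_1)$ and $\varphi^{(k-1)}(y_2)$ always lie strictly inside $(0,1)$, differ by a tiny known amount, and land on a predictable side of $\tfrac12$ determined by $b_k$. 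The two parallel tracks are needed precisely because a single trajectory would collapse to $0$ or $1$ and lose information about subsequent bits; the ``$+1/2^{n+2}$'' offset in $y_2$ ensures neither trajectory ever hits a dyadic breakpoint of $\varphi$. From this, one extracts $b_k$ by a single ReLU thresholding of a linear combination of the two tracks.

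For each block, I use the representation $\varphi(z) = \sigma(\sigma(2z) - \sigma(4z-2))$: the first layer applies the inner ReLUs on both tracks (four units) and carries the accumulator (one unit); the second layer applies the outer ReLU on each track (two units), carries the accumulator (one unit), and computes the bit $b_k$ by sharing activations already available from the first layer (two units, which can overlap with the slots freed by the completed inner ReLUs); the third layer updates the accumulator as $a_k \leftarrow 2 a_{k-1} + b_k$ and re-assembles the output. All weights and biases are dyadic with exponents bounded by $n+1$ in absolute value, so the bit complexity is $n+2$. Chaining $j-i+1$ such blocks yields total depth $3(j-i+1)$ and width $5$.

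The main technical obstacle is the tight width budget inside a single block: the bit-extraction gadget, the two parallel $\varphi$ computations, and the accumulator passthrough must all coexist within width $5$. This forces me to reuse the intermediate activations $\sigma(2\varphi^{(k-1)}(y_r))$ and $\sigma(4\varphi^{(k-1)}(y_r) - 2)$ (for $r \in \{1,2\}$) as inputs to the bit-extraction linear combination, rather than computing the bit on fresh units. Carefully tracking which of the $5$ units plays which role in each of the $3$ layers, and verifying that the exact formula for $b_k$ derived from the induction can be written using only these shared intermediate quantities (together with the carried accumulator), is the most delicate part of the proof.
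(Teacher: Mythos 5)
The paper does not prove this lemma itself; it states it as a direct citation of Lemma~A.7 of \citet{vardi2021optimalmemorizationpowerrelu}, so there is no internal proof to compare against. Your reconstruction follows essentially the same strategy as theirs: a cascade of $j-i+1$ width-$5$, depth-$3$ blocks, each simultaneously advancing both triangle trajectories one step and extracting one more bit into a running accumulator via $a \leftarrow 2a + b_k$. The plan is sound, and the width-budget tension you identify is real and is exactly the delicate point of the construction.

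Two remarks to sharpen the argument. First, the stated motivation for the second track (``a single trajectory would collapse to $0$ or $1$'') is not quite the right explanation: the offset $1/2^{n+1}$ already keeps $\varphi^{(k)}(y_1)$ away from dyadic breakpoints, so a single trajectory does not degenerate. The second track is needed to disambiguate the \emph{complementation state} of the iterated tent map: after $k-1$ tent-map applications, the leading binary digit of the current value is either $b_k$ or $\overline{b_k}$ depending on how many ``fold'' steps occurred so far, and from $u_k := \varphi^{(k-1)}(y_1)$ alone one cannot tell which; the sign of $u_k - v_k$ (with $v_k := \varphi^{(k-1)}(y_2)$) records this parity. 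Second, the argument really hinges on the clean bit-extraction identity which you leave implicit and which should be stated explicitly: one checks by case analysis that
\[
b_k \;=\; \mathbb{1}\!\left[\varphi^{(k)}(y_2) > \varphi^{(k)}(y_1)\right].
\]
This is what makes the width-$5$ budget go through. Because $\sigma(2z)-\sigma(4z-2)=\varphi(z)$ for $z\in[0,1]$ (the outer $\sigma$ is redundant there), both $\varphi(u_k)$ and $\varphi(v_k)$ are \emph{linear} in the four layer-one ReLU activations, so the comparator
\[
b_k \;=\; \sigma\!\bigl(C(\varphi(v_k)-\varphi(u_k))\bigr) - \sigma\!\bigl(C(\varphi(v_k)-\varphi(u_k))-1\bigr),
\qquad C = 2^{n+1},
\]
occupies only the two remaining units of layer two, and since $|\varphi^{(k)}(y_1)-\varphi^{(k)}(y_2)| \geq 2^{-(n+1)}$ the output is exactly $0$ or $1$. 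With this identity in hand, your layer-by-layer slot accounting gives precisely width $5$, depth $3(j-i+1)$, and weights with bit complexity $n+2$, as claimed.
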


In the following lemma, note that $\rho$ does not refer to the robustness ratio. 
\begin{lemma}[Extension of Lemma A.5, \citet{vardi2021optimalmemorizationpowerrelu}]
    \label{lem:var_bit_extracting}
    Let $\eta>0$, and let $n,\rho,c\in\mathbb{N}$ and $u, w \in\mathbb{N}$. 
    Assume that for all $\ell,k\in\{0,1,\dots,n-1\}$ with $\ell\neq k$, the bit segments of $u$ satisfy 
    $$\bin_{\rho\cdot \ell +1:\rho\cdot(\ell+1)}(u) \neq \bin_{\rho\cdot k +1:\rho\cdot(k+1)}(u).$$
    Then, there exists a neural network $F:\reals^3\rightarrow\reals$ with width $12$, depth $3n\cdot\max\{\rho,c\}+2n+2$ and bit complexity $n \max \{\rho, c\}+2$, such that the following holds:
    
    For every $x>0$, if there exist $j\in\{0,1,\dots,n-1\}$ such that 
    $$ x\in [\bin_{\rho\cdot j+1:\rho\cdot(j+1)}(u), \bin_{\rho\cdot j+1:\rho\cdot(j+1)}(u)+1-\eta],$$
    then the network satisfies
    \[
    F\left(\begin{pmatrix}x \\ w \\ u\end{pmatrix}\right) = \bin_{c\cdot j+1:c\cdot(j+1)}(w)~.
    \]
    Moreover, $F\left(\begin{pmatrix}x \\ w \\ u\end{pmatrix}\right)=0$ for 
    $$x \in \R \setminus \bigcup _ {j \in \{0, \cdots, n-1\}}(\bin_{\rho\cdot j+1:\rho\cdot(j+1)}(u)-\eta, \bin_{\rho\cdot j+1:\rho\cdot(j+1)}(u)+1).$$
\end{lemma}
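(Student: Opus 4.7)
\textbf{Proof plan for \cref{lem:var_bit_extracting}.}
The construction mirrors the bit-extraction argument of \citet{vardi2021optimalmemorizationpowerrelu}, with the extension that we extract two bit streams of different chunk sizes ($\rho$ for $u$ and $c$ for $w$) in parallel and use one to index into the other. The plan is to build a network that sequentially sweeps $j = 0, 1, \dots, n-1$, and at each iteration $j$ (i)~extracts $a_j := \bin_{\rho j+1:\rho(j+1)}(u)$ and $b_j := \bin_{cj+1:c(j+1)}(w)$ using Telgarsky-style iterations (\cref{lem:telgarski extraction}), (ii)~tests whether $x \in [a_j, a_j+1-\eta]$ via a ReLU gadget, and (iii)~adds $b_j$ to a running accumulator whenever the test is positive. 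The distinctness hypothesis on the $\rho$-bit chunks of $u$ guarantees that at most one iteration contributes, so the accumulator ends with exactly $\bin_{cj+1:c(j+1)}(w)$ when $x$ lies in the $j$-th interval, and with $0$ otherwise.

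Concretely, I would initialize two pairs of ``Telgarsky states'' before the first iteration: one pair seeded from $u/2^{n\rho} + 1/2^{n\rho+1}$ and $u/2^{n\rho} + 1/2^{n\rho+2}$ (so that \cref{lem:telgarski extraction} with $n \gets n\rho$, $i \gets \rho j + 1$, $j \gets \rho(j+1)$ peels off $a_j$), and a second pair seeded from $w/2^{nc} + 1/2^{nc+1}$ and $w/2^{nc} + 1/2^{nc+2}$ (peeling off $b_j$ with $n \gets nc$). Both pairs can be prepared by affine maps of the inputs $u, w$, which accounts for the additive $2n$ and constant $+2$ in the depth. In iteration $j$ these extractions run in parallel; since one uses $\rho$ steps of $\varphi$ and the other uses $c$ steps, padding the shorter one brings the per-iteration depth to $3\max\{\rho,c\}$, matching the factor in the stated bound. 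The width stays at a constant value (bounded by $12$) because at any layer I only need to carry: the two states for the $u$-extraction, the two states for the $w$-extraction, the scalar $x$, the partial accumulator, the running value of $a_j$ as it is assembled, the running value of $b_j$, and two auxiliary neurons for the indicator gadget.

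The interval test at iteration $j$ is the key non-Telgarsky gadget. Writing $\Delta := x - a_j$, the predicate $x \in [a_j, a_j + 1 - \eta]$ becomes $\Delta \in [0, 1-\eta]$, which is exactly the form realized by \cref{lem:relu construct} (instantiated with $m=0$, $n=1$) with constant width, constant depth, and $\tilde{O}(1)$ bit complexity. Multiplying the indicator by $b_j$ is implemented as an affine combination of the indicator times the (already-extracted) $c$-bit integer $b_j$, which is a single-layer operation. Adding this to the accumulator uses one more affine layer. Summed over the $n$ iterations and combined with the $3n\max\{\rho,c\}$ contribution from the parallel Telgarsky chains plus a constant overhead for the initial seed layers and a final extraction layer, the total depth is $3n\max\{\rho,c\} + 2n + 2$. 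For the ``moreover'' clause, when $x$ avoids every interval, the indicator at every iteration evaluates to $0$ by \cref{lem:relu construct}, so no $b_j$ is ever added and the output is $0$.

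The main obstacle I anticipate is the width bound of $12$: running two separate Telgarsky chains plus an indicator gadget plus an accumulator tight-packed into constant width requires reusing the same channels across iterations rather than spawning new ones per $j$. This is handled by noting that once $a_j$ and $b_j$ have been assembled at iteration $j$ and combined with the indicator, the channels that held the partial sums within iteration $j$ can be freed (zeroed out by a ReLU) and reused at iteration $j+1$; only the accumulator and the four Telgarsky-state channels persist across iterations. The bit complexity is controlled by the constants inside the Telgarsky layers, which use entries of size $\max\{1, 2^{-n\rho-2}, 2^{-nc-2}\}$, plus the constants $1/\eta$ in \cref{lem:relu construct}; all fit within $n\max\{\rho,c\} + 2$ bits.
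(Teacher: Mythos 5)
Your plan matches the paper's construction essentially line for line: two parallel Telgarsky chains peeling $\rho$-bit chunks from $u$ and $c$-bit chunks from $w$, the interval gadget from \cref{lem:relu construct} producing an indicator $\tilde y_j\in\{0,1\}$, and an accumulator that picks up $b_j$ exactly when the indicator fires, with the distinctness of the $\rho$-bit chunks of $u$ ensuring at most one term contributes. The depth accounting ($3\max\{\rho,c\}$ per iteration from the Telgarsky blocks, $2$ per iteration for the test-and-select layers, plus $2$ for initialization and output extraction) and the bit-complexity bound are both as in the paper.

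Two places deserve tightening. First, "multiplying the indicator by $b_j$ is implemented as an affine combination" is not literally correct: $\tilde y_j\cdot b_j$ is bilinear and cannot be an affine map of $(\tilde y_j, b_j)$. The paper instead uses the ReLU selection trick, outputting $\sigma\bigl(\tilde y_j\cdot 2^{c+1}-2^{c+1}+b_j\bigr)$, which equals $b_j$ when $\tilde y_j=1$ and $0$ when $\tilde y_j=0$ because $0\le b_j<2^{c}$; your phrase should be replaced by this. Second, the width-$12$ claim should be justified by observing that each Telgarsky-extraction subnetwork (\cref{lem:telgarski extraction}) has internal width $5$, so running the $u$- and $w$-chains in parallel takes $10$ channels, and two more channels carry $x$ and the accumulator $y$; counting only the "persistent" state variables (as in your list of ten items) does not bound the maximum width of an intermediate layer inside the Telgarsky blocks.
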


\begin{proof}
We define the triangle function $\varphi(z):=\sigma(\sigma(2z)-\sigma(4z-2))$ as introduced by \citet{Telgarsky16}. For $i\in\{0,1,\dots,n-1\}$, we construct a network block $F_i$:
\[F_i:
\begin{pmatrix}
x \\
\varphi^{(i\cdot \rho )}\left(\frac{u}{2^{n\cdot \rho}} + \frac{1}{2^{n\cdot\rho+1}}\right) \\ \varphi^{(i\cdot \rho )}\left(\frac{u}{2^{n\cdot \rho}} + \frac{1}{2^{n\cdot\rho+2}}\right) \\ 
\varphi^{(i\cdot c )}\left(\frac{w}{2^{n\cdot c}} + \frac{1}{2^{n\cdot c + 1}}\right) \\ \varphi^{(i\cdot c )}\left(\frac{w}{2^{n\cdot c}} + \frac{1}{2^{n\cdot c + 2}}\right) \\
y
\end{pmatrix}
\mapsto 
\begin{pmatrix}
x \\
\varphi^{((i+1)\cdot \rho )}\left(\frac{u}{2^{n\cdot \rho}} + \frac{1}{2^{n\cdot\rho+1}}\right) \\ \varphi^{((i+1)\cdot \rho )}\left(\frac{u}{2^{n\cdot \rho}} + \frac{1}{2^{n\cdot\rho}+2}\right) \\ 
\varphi^{((i+1)\cdot c )}\left(\frac{w}{2^{n\cdot c}} + \frac{1}{2^{n\cdot c + 1}}\right) \\ \varphi^{((i+1)\cdot c )}\left(\frac{w}{2^{n\cdot c}} + \frac{1}{2^{n\cdot c + 2}}\right) \\
y + y_i
\end{pmatrix},
\]
where
\begin{align*}
    y_i = \begin{cases}
        \bin_{i\cdot c+1:(i+1)\cdot c}(w) & \textrm{ if } x\in [\bin_{i\cdot\rho+1:(i+1)\cdot\rho}(u),~\bin_{i\cdot\rho+1:(i+1)\cdot\rho}(u)+1-\eta], \\
        0 &\textrm{ if } x \leq \bin_{i\cdot\rho+1:(i+1)\cdot\rho}(u)- \eta \textrm{ or } x \geq \bin_{i\cdot\rho+1:(i+1)\cdot\rho}(u)+1.
    \end{cases}
\end{align*}

To compute $y_i$, we first extract the relevant bit segments from $u$ and $w$ using \cref{lem:telgarski extraction}. We define two subnetworks $F_i^w,F_i^u$:
\begin{align*}
   &F_i^u:\begin{pmatrix}\varphi^{(i\cdot \rho )}\left(\frac{u}{2^{n\cdot \rho}} + \frac{1}{2^{n\cdot\rho+1}}\right) \\ \varphi^{(i\cdot \rho )}\left(\frac{u}{2^{n\cdot \rho}} + \frac{1}{2^{n\cdot\rho+2}}\right)\end{pmatrix} \mapsto \begin{pmatrix}\varphi^{((i+1)\cdot \rho )}\left(\frac{u}{2^{n\cdot \rho}} + \frac{1}{2^{n\cdot\rho+1}}\right) \\ \varphi^{((i+1)\cdot \rho )}\left(\frac{u}{2^{n\cdot \rho}} + \frac{1}{2^{n\cdot\rho+2}}\right) \\
\bin_{i\cdot\rho+1:(i+1)\cdot\rho}(u)
\end{pmatrix} \\
    & F_i^w:\begin{pmatrix}
    \varphi^{(i\cdot c )}\left(\frac{w}{2^{n\cdot c}} + \frac{1}{2^{n\cdot c + 1}}\right) \\ \varphi^{(i\cdot c )}\left(\frac{w}{2^{n\cdot c}} + \frac{1}{2^{n\cdot c + 2}}\right)
    \end{pmatrix}\mapsto
    \begin{pmatrix}
    \varphi^{((i+1)\cdot c )}\left(\frac{w}{2^{n\cdot c}} + \frac{1}{2^{n\cdot c + 1}}\right) \\ \varphi^{((i+1)\cdot c )}\left(\frac{w}{2^{n\cdot c}} + \frac{1}{2^{n\cdot c + 2}}\right)\\
    \bin_{i\cdot c+1:(i+1)\cdot c}(w)
\end{pmatrix}~.
\end{align*}
A subnetwork $F_i^u$ maps the pair of triangle encodings of $u$ to the updated encodings for $i+1$, along with the extracted bits $\bin_{i\cdot\rho+1:(i+1)\cdot\rho}(u)$. A subnetwork $F_i^w$ does the same for $w$, yielding $\bin_{i\cdot c+1:(i+1)\cdot c}(w)$.

We then construct a network with width 2 and depth 2 to obtain $y_i$ from inputs $\bin_{i\cdot\rho+1:(i+1)\cdot\rho}(u)$ and $x$. Firstly, we use \cref{lem:relu construct} to construct a network that output $\tilde{y}_i$:

$$
\tilde{y}_i =
\begin{cases}
1 & \text{if } x\in [\bin_{i\cdot\rho+1:(i+1)\cdot\rho}(u),~\bin_{i\cdot\rho+1:(i+1)\cdot\rho}(u)+1-\eta], \\
0 & \text{if } x \leq \bin_{i\cdot\rho+1:(i+1)\cdot\rho}(u)- \eta \text{ or } x \geq \bin_{i\cdot\rho+1:(i+1)\cdot\rho}(u)+1.
\end{cases}
$$
Secondly, we construct the following $1$-layer network that use $\tilde{y}_i$ as input:
\[
\begin{pmatrix}
\tilde{y}_i \\ \bin_{i\cdot c+1:(i+1)\cdot c}(w)
\end{pmatrix} \mapsto \sigma\left(\tilde{y}_i\cdot 2^{c+1} - 2^{c+1} + \bin_{i\cdot c+1:(i+1)\cdot c}(w)\right).
\]
This ensures that the output is $\bin_{i\cdot c+1:(i+1)\cdot c}(w)$ if $\tilde{y}_i=1$, and the output is $0$ if $\tilde{y}_i=0$ since $\bin_{i\cdot c+1:(i+1)\cdot c}(w) \leq 2^c$.

Finally, the full network $F$ is constructed as a composition:
\[
F := G\circ F_{n-1}\circ\cdots\circ F_0\circ H~,
\]
where for $x,w,u >0$: 

(1) $H:\reals^3\rightarrow\reals^6$ is a 1-layer network that maps $(x,w,u)$ to the required initial encoding inputs, namely:
\[
H:\begin{pmatrix}
x \\ w \\ u
\end{pmatrix}\mapsto 
\begin{pmatrix}
x\\ \frac{u}{2^{n\cdot \rho}} + \frac{1}{2^{n\cdot\rho+1}} \\ \frac{u}{2^{n\cdot \rho}} + \frac{1}{2^{n\cdot\rho+2}} \\ \frac{w}{2^{n\cdot c}} + \frac{1}{2^{n\cdot c + 1}} \\ \frac{w}{2^{n\cdot c}} + \frac{1}{2^{n\cdot c + 2}} \\ 0
\end{pmatrix},
\] 
(2) $G:\reals^6\rightarrow\reals$ is a 1-layer network that outputs the last coordinate.

We verify the correctness of the construction. The output of the full network is given by:
$$F\left(\begin{pmatrix}x \\ w \\ u\end{pmatrix}\right) = \sum_{i=0}^{n-1}y_i.$$ 
If there exists $j\in\{0,1\dots,n-1\}$ such that $x\in [\bin_{\rho\cdot j+1:\rho\cdot(j+1)}(u), \bin_{\rho\cdot j+1:\rho\cdot(j+1)}(u)+1-\eta]$, then by the construction we obtain $y_j=\bin_{c\cdot j+1:c\cdot(j+1)}(w)$, while $y_\ell=0$ for all $\ell \neq j$. This is because the bit-encoded intervals are disjoint as $\bin_{\rho\cdot \ell +1:\rho\cdot(\ell+1)}(u) \neq \bin_{\rho\cdot k +1:\rho\cdot(k+1)}(u)$. Hence, the final output of $F$ is:
$$\sum_{i=0}^{n-1}y_i = y_j = \bin_{c\cdot j+1:c\cdot(j+1)}(w).$$

We now analyze the width and depth of the constructed network $F$. Each block $F_i$ comprises $F_i^w$ and $F_i^u$, each of width 5. In addition, two neurons are used to process $x$ and $y$, resulting in a total width of 12. The outputs $\tilde{y}_i$ and $y_i$ are produced by additional layers with width 2 and 1, respectively, both of which are smaller than $12$. We also compose the networks $H$ and $G$, with width 6 and 1, respectively, again remaining within $12$.

Each of the networks $F_i^u$ and $F_i^w$ has depth at most $3 \max\{\rho,c\}$. The layers obtaining $\tilde{y}_i$ and $y_i$ contribute an additional 2 layers, resulting in a total depth of $3 \max\{\rho,c\}+2$ for each block $F_i$. Composing all $n$ such blocks, and including one additional layer each for $H$ and $G$, the total depth of the network $F$ is $3n\cdot\max\{\rho,c\} + 2n+2$. 

The bit complexity of $F_i^u$, $F_i^w$ and $H$ is bounded by $n \max\{\rho, c\}+2$, and all other parts of the network require less bit complexity. Hence, the bit complexity of $F$ is bounded by $n \max\{\rho, c\}+2$.
\end{proof}

\subsubsection{Precise Control of Robust Memorization Error}

\Cref{lem:N alpha points} constructs the network for Stage~II in \cref{thm:ub3}, while the robust memorization error is controlled in \cref{lem:projection prob}. 

\begin{lemma}\label{lem:vardi_integer}
Let $N, C \in \mathbb{N}$, and let
$(m_1, y_1), \dots, (m_N, y_N) \in \mD_{1,N,C} \subset \mathbb{N} \times [C]$
be a set of $N$ labeled samples with $m_i \neq m_j$ for every $i \neq j$. 
Then, there exists a neural network $F : \mathbb{R} \to \mathbb{R}$ with width $12$, depth $\tilde{O}(\sqrt{N})$, $\tilde{O}(\sqrt{N})$ parameters and bit complexity $\tilde{O}(\sqrt{N})$ such that 
\begin{align*}
    F(m) = 
    \begin{cases}
        y_i \text{\quad for every } m = m_i \text{ with } i \in [N],\\
        0 \text{\;\quad for every } m \in  \mathbb{N}\backslash \{m_i\}_{i \in [N]}.
    \end{cases}
\end{align*}
\end{lemma}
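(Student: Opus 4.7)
The plan is to combine the two helper lemmas \cref{lem:var_interval_memorizing} and \cref{lem:var_bit_extracting} in the same bit-packing architecture of \citet{vardi2021optimalmemorizationpowerrelu}, with tolerance $\eta = 1/2$ chosen so that the construction is correct at \emph{every} integer input, not merely at training points. Assume without loss of generality that $m_1 < \cdots < m_N$; let $M := m_N$ and set $\rho := \lceil \log_2(M+1) \rceil$, $c := \lceil \log_2(C+1) \rceil$, $N_2 := \lceil \sqrt{N} \rceil$, and $N_1 := \lceil N/N_2 \rceil$, padding the list with copies of $(m_N, y_N)$ up to $N_1 N_2$ entries. For each group $j \in [N_1]$, pack the group's values into signatures
\[
    u_j := \sum_{k=0}^{N_2 - 1} m_{(j-1)N_2 + k + 1} \cdot 2^{\rho k}, \qquad w_j := \sum_{k=0}^{N_2 - 1} y_{(j-1)N_2 + k + 1} \cdot 2^{c k},
\]
so that the $k$-th length-$\rho$ segment of $u_j$ is $m_{(j-1)N_2 + k + 1}$ and the $k$-th length-$c$ segment of $w_j$ is $y_{(j-1)N_2 + k + 1}$. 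Applying \cref{lem:var_interval_memorizing} twice, with weights $(w_j)_j$ and $(u_j)_j$, yields width-$4$, depth-$O(\sqrt{N})$ subnetworks $F_w$ and $F_u$; then \cref{lem:var_bit_extracting} with $n = N_2$ produces a width-$12$, depth-$\tilde{O}(\sqrt{N})$ subnetwork $G$. The final network is $F(m) := G(m, F_w(m), F_u(m))$, with $F_u$, $F_w$ and the identity channel on $m$ run in parallel so the total width stays at $12$.

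Correctness is checked case by case. If $m = m_i$ lies in group $j$ with intra-group index $k$, then $F_u(m_i) = u_j$, $F_w(m_i) = w_j$, and since $m_i$ equals the $k$-th $\rho$-bit segment of $u_j$, the extractor returns the $k$-th $c$-bit segment of $w_j$, namely $y_i$. If $m \in \mathbb{N}$ lies in some group $j$'s active region but differs from every $m_i$ in that group, then $m$ is an integer at distance $\geq 1$ from each such $m_i$, so $m \notin [m_i, m_i + 1/2]$ for any $i$, all indicator terms in \cref{lem:var_bit_extracting} vanish, and $G$ outputs $0$. If $m \in \mathbb{N}$ lies outside every group's active region, then $F_u(m) = F_w(m) = 0$; direct inspection of the bit-extractor construction shows that with $u = w = 0$ the output is $0$ at every integer $m \geq 1$, and also at $m = 0$ since every extracted $c$-bit segment of $0$ is $0$. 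The precondition of \cref{lem:var_bit_extracting}—pairwise distinctness of the $\rho$-bit segments of $u_j$—is automatic from the distinctness of $m_{(j-1)N_2+1}, \ldots, m_{jN_2}$.

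The resulting network has width $12$ and depth $O(N_1) + O(n \max\{\rho, c\}) = \tilde{O}(\sqrt{N})$, hence $\tilde{O}(\sqrt{N})$ parameters; the largest stored constant has magnitude at most $2^{N_2 \max\{\rho, c\}}$, giving bit complexity $O(N_2 \max\{\rho, c\}) = \tilde{O}(\sqrt{N})$ once the logarithmic factors $\log M$ and $\log C$ are absorbed into the $\tilde{O}(\cdot)$. The main obstacle, and the reason this lemma must be stated separately from \cref{thm:vardi}, is certifying that the output really vanishes at \emph{every} non-training integer, not just at the training points; this is what forces the tolerance $\eta = 1/2$ together with the integrality of the input alphabet, and requires the hand-check of the degenerate $u = w = 0$ branch of the bit-extraction network above.
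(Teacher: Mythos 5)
Your architecture is the same as the paper's: a group-membership/signature network built from \cref{lem:var_interval_memorizing} with $\eta=\tfrac12$ feeding the packed integers $(m,w_j,u_j)$ into the bit extractor of \cref{lem:var_bit_extracting}, with correctness at all integers coming from the half-unit tolerance plus a hand-check of the degenerate $(u,w)=(0,0)$ branch (which you rightly flag, and which the paper glosses over). So the route is fine; the problem is your padding.

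You pad the list to $N_1N_2$ entries with copies of $(m_N,y_N)$, and all of these copies land in the last group together with the genuine $m_N$. Consequently the last signature $u_{N_1}$ contains the $\rho$-bit segment $m_N$ with multiplicity $k\ge 2$ whenever $N_2\nmid N$, so your claim that the pairwise-distinctness hypothesis of \cref{lem:var_bit_extracting} is ``automatic'' is false for that group, and the lemma cannot be invoked there. Worse, tracing the extractor's construction (its output is the sum $\sum_i y_i$ of the gated segment contributions), at the \emph{training} input $m=m_N$ all $k$ duplicate segments activate and each contributes the corresponding $c$-bit segment of $w_{N_1}$, which under your padding equals $y_N$; the network therefore outputs $k\,y_N\neq y_N$. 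So the construction as written fails at a data point, not merely at an untrained integer. The fix is one line: pad the label segments with $0$ (so spurious activations contribute nothing), or pad with distinct dummy integers larger than $m_N$ carrying zero label segments, or simply leave the unused segments of $u_{N_1},w_{N_1}$ at zero as the paper implicitly does---in each case you should then note, exactly as you did for the $(0,0)$ branch, that the distinctness hypothesis of \cref{lem:var_bit_extracting} is not literally met but the duplicate activations contribute $0$, so the output is still correct. With that repair, your width/depth/bit-complexity accounting ($12$, $\tilde O(\sqrt N)$, $\tilde O(\sqrt N)$) matches the paper's.
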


\begin{proof}
    Let $\mathcal{M} = \{m_i\}_{i \in [N]}$. We group the elements in $\mathcal{M}$ to $\lceil \sqrt{N} \rceil$ groups, each containing at most $\lfloor \sqrt{N} \rfloor +1$ natural numbers inside. 
    For each interval indexed by $j \in \{1, \dots, \lceil \sqrt{N} \rceil\}$, we define two integers $w_j, u_j \in \mathbb{N}$ to encode the integer $m_i \in \mathcal{M}$ and the corresponding labels $y_i$ as follows. 
    
    For each $i \in [N]$, letting $j := \left\lceil \frac{i}{\lfloor \sqrt{N} \rfloor +1 }\right\rceil$, $k := i \mod (\lfloor \sqrt{N} \rfloor +1)$
    and $R := max _ {i \in [N]} \: m_i$, we define: 
    \begin{align*}
    \bin&_{k \cdot \log_2 R+1:(k+1)\cdot \log_2 R}(u_j)  = m_i \\ 
    \bin&_{k\cdot \log_2 C +1:(k+1)\cdot \log_2 C}(w_j)  = y_i~.
    \end{align*}
    Thus, in each group $j$, the integer $u_j$ contains $\log_2 R$ bits per integer, which represent the $k$-th integer in this group. In the same manner, $w_j$ contains $\log_2 C$ bits per integer, which represent the label of the $k$-th integer in this group.

    By applying \cref{lem:var_interval_memorizing} to $\eta=\frac{1}{2}$, we construct a neural network $F_1$ that maps $m \in \mathcal{M}$ to their corresponding groups, and maps $m \in \mathbb{N}\backslash \bigcup_{j\in[\lceil \sqrt{N} \rceil]} [m_{(j-1)(\lfloor \sqrt{N} \rfloor +1) + 1},\; m_{j(\lfloor \sqrt{N} \rfloor +1)} + 1)$ to $0$. 
    Thus, all natural numbers are assigned to their corresponding group or $0$.

    For each $i \in [N]$, we define the group index 
    $$
    j_i := \left\lceil \frac{i}{\lfloor \sqrt{N} \rfloor + 1} \right\rceil.
    $$
    Then, the network $F_1$ maps any input $m \in \mathcal{M}$ to the representation
    $$
    F_1(m) = \begin{pmatrix} m \\ w_{j_i} \\ u_{j_i} \end{pmatrix},
    $$
    and $F_1(m) = \begin{pmatrix} m \\ 0 \\ 0 \end{pmatrix}$ for $m \in \mathbb{N}\backslash \bigcup_{j\in[\lceil \sqrt{N} \rceil]} [m_{(j-1)(\lfloor \sqrt{N} \rfloor +1) + 1},\; m_{j(\lfloor \sqrt{N} \rfloor +1)} + 1)$.
    The network $F_1$ has width 9, depth $\tilde O(\sqrt{N})$ and bit complexity $\tilde O(1)$.

    Now, we apply \cref{lem:var_bit_extracting} to construct a network $F_2:\R^3 \to \R$ with the following property. For each $i \in [N], j \in \left[ \lceil \sqrt{N} \rceil \right]$, and $k \in \left\{0,\ldots,\lfloor \sqrt{N} \rfloor \right\}$, suppose that $m_i$ is the $k$-th integer in the $j$-th group. Then, the network satisfies :
    $$
    F_2\left(\begin{pmatrix} m_i \\ w_{j} \\ u_{j}\end{pmatrix}\right) = \bin_{ k \cdot \log_2 C+1:(k+1)\cdot \log_2 C}(w_{j})=y_i.
    $$
    Moreover, for $m \in \mathbb{N} \setminus\mathcal{M}$, $F_2\left(\begin{pmatrix} m \\ w_{j} \\ u_{j}\end{pmatrix}\right)=0$ or $F_2\left(\begin{pmatrix} m \\ 0 \\ 0\end{pmatrix}\right)=0$.
    Thus, the network $F_2$ extracts the label corresponding to each data point from the encoded label set of the group to which the interval belongs or outputs $0$. 
    The network $F_2$ has width 12, depth $\tilde O(\sqrt{N})$ and bit complexity $\tilde O(\sqrt{N})$.

    Finally, we define the classifier network $F : \R^d \to \R$ as 
    $$F(\vx) = F_2 \circ F_1 (\vx).$$ 
    
    The overall network $F$ has width $12$ and depth $\tilde{O}(\sqrt{N})$, which corresponds to the maximum width and total depth of its component networks. The bit complexity of $F$ is $\tilde O(\sqrt{N})$.
\end{proof}

\begin{lemma}
    \label{lem:projection prob}
    Let $\mathcal{B}_2(\vx_0, \mu)$ be a Euclidean ball with center $\vx_0 \in \mathbb{R}^d$ and radius $\mu > 0$. 
    Let $\vu \in \mathbb{R}^d$ be a unit vector, and define the affine function $f(\vx) := \frac{1}{2\mu}(\vu^\top \vx + b)$ for some $b \in \mathbb{R}$. Then for any interval $I \subset \mathbb{R}$ of length $\eta$, the volume fraction of the ball mapped into $I$ satisfies:
    $$
    \frac{\operatorname{Vol}\left( \{ x \in \mathcal{B}_2(x_0, \mu) \,\middle|\, f(x) \in I \} \right)}{\operatorname{Vol}\left( \mathcal{B}_2(x_0, \mu) \right)} \leq {2\eta}\frac{V_{d-1}}{V_d},
    $$
where $V_d =\frac{\pi^{d/2}}{\Gamma(\frac{d}{2}+1)}$ denotes the volume of the $d$-dimensional unit ball.
\end{lemma}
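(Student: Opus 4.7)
The plan is to reduce the problem to a one–dimensional integral via translation and rotation, and then bound that integral by the maximum of the ball's cross‑sectional volume.

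First, I would simplify the geometry. Observe that $f(\vx) \in I$ if and only if $\vu^\top \vx$ lies in an interval of length $2\mu\eta$, since the coefficient $\tfrac{1}{2\mu}$ scales length of $I$ by $2\mu$ and the affine shift $b$ is absorbed into the location of that interval. Translating by $\vx_0$ and then applying an orthogonal change of coordinates sending $\vu$ to $\ve_1$ preserves both Euclidean volumes and the ball $\mathcal{B}_2(\vx_0,\mu)$. Hence it suffices to bound
\begin{equation*}
\operatorname{Vol}\bigl\{\vy \in \mathcal{B}_2(\vzero,\mu) \, : \, y_1 \in [s, s+2\mu\eta]\bigr\}
\end{equation*}
for some real $s$, where $y_1$ denotes the first coordinate of $\vy$.

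Next, I would use Fubini's theorem to slice the slab by hyperplanes orthogonal to $\ve_1$. Each slice at height $y_1 = t$ is a $(d{-}1)$-dimensional ball of radius $\sqrt{\mu^2 - t^2}$ when $|t|\le\mu$, and empty otherwise, so its $(d-1)$-volume is $V_{d-1}(\mu^2 - t^2)^{(d-1)/2}$. Thus the slab volume equals
\begin{equation*}
V_{d-1}\int_{s}^{s+2\mu\eta} (\mu^2 - t^2)^{(d-1)/2}_{+} \, dt.
\end{equation*}
Since the integrand is maximized at $t=0$, where it equals $\mu^{d-1}$, and since the interval of integration has length $2\mu\eta$, this integral is at most $V_{d-1}\cdot 2\mu\eta \cdot \mu^{d-1} = 2V_{d-1}\mu^{d}\eta$.

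Finally, I would divide by the ball volume $V_d \mu^d$ to obtain the desired bound $2\eta \, V_{d-1}/V_d$. There is no real obstacle: the only care needed is tracking the factor of $2\mu$ introduced by $f$ so that the $\mu^d$ factors cancel against those in $\operatorname{Vol}(\mathcal{B}_2(\vx_0,\mu))$, leaving a bound that depends only on $\eta$ and the dimensional constant $V_{d-1}/V_d$.
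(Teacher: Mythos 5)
Your proof is correct and follows essentially the same route as the paper: the paper rescales to the unit ball and bounds the marginal density $p(t)=\frac{V_{d-1}}{V_d}(1-t^2)^{(d-1)/2}$ of the projection by its maximum $\frac{V_{d-1}}{V_d}$, which is exactly your Fubini slicing with the cross-sectional volume $(\mu^2-t^2)_+^{(d-1)/2}V_{d-1}$ bounded by $\mu^{d-1}V_{d-1}$. The bookkeeping of the factor $2\mu\eta$ and the cancellation of $\mu^d$ in your argument match the paper's computation.
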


\begin{proof}
    Let $\vx = \vx_0 + \mu \vy$, so that $\vy \in \mathcal{B}_d(\vzero, 1)$. Under this change of variables,
    $$f(\vx) = \frac{1}{2\mu}(\vu^\top(\vx_0 + \mu \vy) + b) = \frac{1}{2}(\vu^\top \vy) + \frac{1}{2\mu}(\vu^\top \vx_0 + b).$$
    Thus, $f(\vx) \in I $ if and only if $ \vu^\top \vy \in J$, where 
    $$J := 2I - \frac{1}{\mu}(\vu^\top \vx_0 + b) \subset \R$$ 
    is an interval of length $2\eta$.
    We define the preimage of $I$ under $f$ with the intersection of $\gB_2 (\vx_0, \mu)$ as 
    $$A := \left\{ x \in \mathcal{B}_2(\vx_0, \mu) \,\middle|\, f(\vx) \in I \right\}.$$
    Then, 
    $$\operatorname{Vol}(A) = \mu^d \cdot \operatorname{Vol}\left( \left\{ y \in \mathcal{B}_d(\vzero,1) \middle| \vu^\top \vy \in J \right\} \right).$$ 
    The distribution of $\vu^\top \vy$, where $\vy \sim \operatorname{Unif}(\mathcal{B}_2(\vzero,1))$, has density 
    $$p(t) = \frac{V_{d-1}}{V_d}(1 - t^2)^{\frac{d-1}{2}} \text{ for } t \in [-1,1], \quad \textrm{ where }V_d =\frac{\pi^{d/2}}{\Gamma(\frac{d}{2}+1)}.$$
    Thus, 
    \begin{align*}
    \operatorname{Vol}(A)
    &= \mu^d \cdot V_d\int_J p(t) dt \leq \mu^d \int_J {V_{d-1}}\, dt =  2\eta \cdot\mu^d{V_{d-1}}, \\
    \operatorname{Vol}(\gB_2(\vx_0, \mu)) &= \mu^d V_d.
    \end{align*}
    Hence,
    $$
    \frac{\operatorname{Vol}\left( { \vx \in \mathcal{B}_2(\vx_0, \mu) : f(\vx) \in I } \right)}{\operatorname{Vol}\left( \mathcal{B}_2(\vx_0, \mu) \right)} = \frac{\operatorname{Vol}(A)}{\operatorname{Vol}(\gB_2(\vx_0, \mu))} \leq {2\eta}\frac{V_{d-1}}{V_d}.
    $$
\end{proof}

\begin{lemma}
\label{lem:V_d}
For all integers $d\ge 1$,
$$
\frac{V_d}{V_{d-1}} \le 2,
$$
where $V_d=\dfrac{\pi^{d/2}}{\Gamma\!\left(\frac d2+1\right)}$ is the volume of the $d$-dimensional unit ball.
\end{lemma}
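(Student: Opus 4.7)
The plan is to reduce the ratio $V_d/V_{d-1}$ to an explicit integral by slicing the unit ball with parallel hyperplanes, and then bound the integrand pointwise by $1$.

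First I would write the unit ball in $\mathbb{R}^d$ as a union of $(d-1)$-dimensional slices. Fixing the last coordinate $t \in [-1,1]$, the slice is a $(d-1)$-dimensional ball of radius $\sqrt{1-t^2}$, which has volume $V_{d-1}(1-t^2)^{(d-1)/2}$. Integrating over $t$ yields the standard identity
\begin{equation*}
V_d \;=\; V_{d-1}\int_{-1}^{1}(1-t^2)^{(d-1)/2}\,dt.
\end{equation*}
(For the boundary case $d=1$, this formula is consistent once we adopt the convention $V_0=1$, since the integrand becomes the indicator of $[-1,1]$.)

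Next I would use the trivial pointwise bound $(1-t^2)^{(d-1)/2}\le 1$ on $[-1,1]$, valid whenever $d\ge 1$. This immediately gives
\begin{equation*}
\frac{V_d}{V_{d-1}} \;=\; \int_{-1}^{1}(1-t^2)^{(d-1)/2}\,dt \;\le\; \int_{-1}^{1} 1\,dt \;=\; 2,
\end{equation*}
which is exactly the desired inequality. A sanity check confirms the bound is tight at $d=1$ (where $V_1/V_0=2$) and strictly less than $2$ for $d\ge 2$ (e.g.\ $V_2/V_1=\pi/2$, $V_3/V_2=4/3$).

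There is essentially no hard step here: the only thing to be careful about is justifying the slicing identity in $d=1$, where $V_0$ must be interpreted as $1$ (the ``volume'' of a point). An alternative route, if one prefers to avoid the slicing formula, is to write $V_d/V_{d-1}=\sqrt{\pi}\,\Gamma((d+1)/2)/\Gamma(d/2+1)$ and then show monotonic decrease in $d$ via log-convexity of $\Gamma$ (so that $\Gamma((d+1)/2)^2 \le \Gamma(d/2)\Gamma(d/2+1)$), reducing the claim to the direct evaluation $V_1/V_0=2$; however, the slicing argument is strictly shorter and self-contained.
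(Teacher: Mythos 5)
Your proof is correct, and it takes a genuinely different route from the paper. The paper writes the ratio as $R_d = \sqrt{\pi}\,\Gamma\!\left(\frac{d+1}{2}\right)/\Gamma\!\left(\frac d2+1\right)$ and shows $R_d$ is strictly decreasing in $d$ by differentiating $\log R_{2x}$ and using that the digamma function is strictly increasing; the bound then follows from the single evaluation $R_1 = 2$. You instead use the Fubini slicing identity $V_d = V_{d-1}\int_{-1}^{1}(1-t^2)^{(d-1)/2}\,dt$ and bound the integrand pointwise by $1$. Your argument is shorter and more elementary --- it needs no special-function machinery, only the slicing formula and the convention $V_0 = 1$, both of which you handle correctly (including the $d=1$ edge case). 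The paper's argument buys a strictly stronger conclusion, namely that $V_d/V_{d-1}$ is monotonically decreasing with equality to $2$ only at $d=1$, but that extra strength is not used anywhere: only the inequality $V_d/V_{d-1}\le 2$ is invoked (in the proof of Lemma~\ref{lem:N alpha points}), so your proof suffices as a drop-in replacement. Your closing remark about the log-convexity route is essentially the paper's argument in disguise, so you have correctly identified both paths.
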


\begin{proof}
Set
$$
R_d:=\frac{V_d}{V_{d-1}}
= \sqrt{\pi}\,\frac{\Gamma\!\left(\frac{d+1}{2}\right)}{\Gamma\!\left(\frac d2+1\right)}.
$$
Let $x=\tfrac d2$ and define
$$
g(x):=\log R_{2x}
= \tfrac12\log\pi + \log\Gamma\!\left(x+\tfrac12\right) - \log\Gamma(x+1).
$$
Differentiating and using the digamma function $\psi=\Gamma'/\Gamma$, we get
$$
g'(x)=\psi\!\left(x+\tfrac12\right) - \psi(x+1) < 0,
$$
since $\psi$ is strictly increasing. Hence $R_d$ is strictly decreasing in $d$.
Therefore $\max_{d\ge 1} R_d = R_1 = V_1/V_0 = 2$, which proves $R_d\le 2$ with equality only at $d=1$.
\end{proof}

\begin{lemma}
    \label{lem:N alpha points}
    Let $\eta \in (0,1)$, $\alpha \in [0,1] $ and $\gD=\{(\vx_i, y_i)\}_{i\in[N]} \in \mD_{d, N, C} $ be a dataset with separation $\epsilon_\gD \geq \sqrt{d}/2$, and let the robustness ratio be $\rho=\frac{1}{4\floor{N^\alpha} \epsilon_\gD}$. Then, for any index set $I \subseteq [N]$ with $|I|\leq \floor{N^\alpha}+1$, there exists a neural network $f$ with width $O(d)$, depth $\tilde{O}( N^{\frac{\alpha}{2}})$, $\tilde{O}\left( N^{\frac{\alpha}{2}}+d^2 \right)$ parameters and $\tilde{O}(N^{\frac{\alpha}{2}}+d)$ bit complexity such that:
    \begin{align*}
        f(\vx) & = y_i \quad & \forall & i \in I, \vx \in \gB(\vx_i, \rho \epsilon_{\gD}),\\
        \mathbb{P}_{\vx \in \operatorname{Unif}(\gB(\vx_i, \rho \epsilon_{\gD}))} \left[f(\vx) \in \{0, y_i\} \right] & \geq 1-\eta \quad & \forall & i \in [N]\backslash I.
    \end{align*}   
\end{lemma}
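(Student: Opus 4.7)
The plan is to adapt the three-stage pipeline from the small-$\rho$ regime of \cref{thm:ub-detailed-1}---translation, grid indexing via a floor approximation, and one-dimensional integer memorization---but to run it only on the active index set $I$, tuning the slope of the floor approximation so that leakage from inactive balls stays below $\eta$.

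First I would invoke \cref{lem:distance_lattice} on the at most $\floor{N^\alpha}+1$ active points $\{\vx_i\}_{i\in I}$ to obtain a translation $\vb\in\R^d$ of bit complexity $\tilde O(\log N)$ satisfying $\dist(x_{i,j}-b_j,\sZ)\ge c/|I|$ for every $i\in I$, $j\in[d]$, for some universal $c>0$. Because $\mu=1/(4\floor{N^\alpha})=O(1/|I|)$, a suitable choice of $c$ forces each active ball $\gB_2(\vx_i-\vb,\mu)$ strictly inside a single unit grid cell with uniform margin from every face. The map $\vx\mapsto\relu(\vx-\vb+c'\vone)$, with $c'$ chosen large enough to make all outputs nonnegative on the domain of interest, is realized by one $d\to d$ layer with $O(d^2)$ parameters.

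Next, with $R:=\ceil{\max_{i\in[N]}\|\vx_i-\vb\|_\infty}+1$ and the floor approximator of \cref{lem:floor-function-approx} with slope $\gamma$, define
\[
\overline{\mathrm{Flatten}}(\vz):=\sum_{j=1}^d R^{d-j}\,\overline{\fl}_\gamma(z_j).
\]
Pick $\gamma$ so that (a) $\gamma+\mu<c/|I|$, which makes $\overline{\mathrm{Flatten}}$ agree with the true lattice indexing on the interior of each cell and in particular on every active ball, and (b) for any $\mu$-ball in $\R^d$, \cref{lem:projection prob} together with \cref{lem:V_d} and a union bound over the $d$ coordinates bounds the volume fraction of the unsafe strip $\{\vz:\exists j,\ z_j-\floor{z_j}\le\gamma\}$ by $O(d\,(\gamma/\mu)\,V_{d-1}/V_d)\le\eta$; taking $\gamma=\tilde\Theta(\eta\mu/d^{3/2})$ suffices. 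The resulting subnetwork has width $O(d)$, depth $\tilde O(1)$, $\tilde O(d^2)$ parameters and $\tilde O(d)$ bit complexity. Set $m_i:=\mathrm{Flatten}(\vx_i-\vb)$ for $i\in I$. The hypothesis $\epsilon_\gD\ge\sqrt{d}/2$ and the unit-cell diameter $\sqrt{d}$ forbid two active points of different labels from sharing a cell, so $\{(m_i,y_i):i\in I\}$ is label-consistent; applying \cref{lem:vardi_integer} to these at most $\floor{N^\alpha}+1$ integer-label pairs yields $F_{\mathrm{mem}}:\R\to\R$ of width $12$, depth and parameter count $\tilde O(N^{\alpha/2})$ and bit complexity $\tilde O(N^{\alpha/2})$, with $F_{\mathrm{mem}}(m_i)=y_i$ for $i\in I$ and $F_{\mathrm{mem}}(m)=0$ on $\sN\setminus\{m_i\}_{i\in I}$.

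Setting $f:=F_{\mathrm{mem}}\circ\overline{\mathrm{Flatten}}\circ\relu(\cdot-\vb+c'\vone)$, condition (a) yields $f(\vx)=y_i$ for every $i\in I$ and $\vx\in\gB_2(\vx_i,\mu)$. For $i\notin I$, the bad event $\{f(\vx)\notin\{0,y_i\}\}$ lies in the union of (1) the unsafe strip (of measure $\le\eta$ by (b)) and (2) the event that $\vx-\vb$ lies in some cell $C_k$ of an active $k\in I$ with $y_k\ne y_i$; in case (2), $\|\vx-\vx_k\|_\infty<1$ together with $\|\vx_i-\vx_k\|_2\ge 2\epsilon_\gD\ge\sqrt d$ forces $\|\vx-\vx_k\|_2\in[\sqrt d-\mu,\sqrt d)$, pinning $\vx$ within $\mu$ of the vertex of $C_k$ diagonally opposite $\vx_k-\vb$ and hence within $\mu$ of several faces of $C_k$; after symmetrizing the floor approximation so that its error strip lies on both sides of each integer (by combining two copies of \cref{lem:floor-function-approx} with a half-unit offset), event (2) is absorbed into event (1). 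Summing stage-wise resources yields width $O(d)$, depth $\tilde O(N^{\alpha/2})$, $\tilde O(d^2+N^{\alpha/2})$ parameters and $\tilde O(d+N^{\alpha/2})$ bit complexity. The main obstacle is the joint calibration of the Stage~I margin and the slope $\gamma$: $\gamma$ must be small enough that \cref{lem:projection prob} produces a leakage at most $\eta$ on every inactive ball, yet large enough (together with the Stage~I margin) to accommodate $\mu$ and the diagonal shell of event (2); the symmetrization above is the cleanest way to close the one-sidedness of the original error strip, and an alternative is to pre-shift each coordinate so active-cell occupants always lie in the lower half of their cell.
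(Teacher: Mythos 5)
Your three-stage pipeline (translate the active points away from the lattice, grid-index with a one-sided ReLU floor approximator, memorize the resulting integers) is exactly the paper's construction, and Stages I--III, the choice $\gamma=\tilde\Theta(\eta\mu/d^{3/2})$, and the leakage bound via \cref{lem:projection prob} all match. The genuine gap is in your treatment of ``event (2)'' --- an inactive ball landing in the cell $C_k$ of an active point with a different label. You try to absorb this event into the floor-approximation error strip, but that cannot work with your parameters: you locate such $\vx$ only within distance $\mu$ of the faces of $C_k$, whereas the error strip has width $\gamma\ll\mu$ (and must, or else the \cref{lem:projection prob} bound exceeds $\eta$); symmetrizing the strip about each integer does not change its width, so the claimed containment of event (2) in event (1) fails, and you correctly sense this tension in your closing sentence without resolving it.

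The resolution is that event (2) is \emph{empty}, and you already have the needed ingredient from Stage I but do not use it. The active point $\vx_k$ satisfies $\dist(x_{k,j}-b_j,\sZ)\ge \tfrac{4}{3}\mu$ in every coordinate, so any $\vx$ in the same half-open cell obeys $\|\vx-\vx_k\|_\infty\le 1-\tfrac{4}{3}\mu$, not merely $<1$. Hence $\|\vx-\vx_k\|_2\le\sqrt{d}\,(1-\tfrac{4}{3}\mu)$, and by the triangle inequality
\begin{align*}
\|\vx_i-\vx_k\|_2 \;<\; \mu+\sqrt{d}\,\Bigl(1-\tfrac{4}{3}\mu\Bigr) \;=\; \sqrt{d}+\mu\Bigl(1-\tfrac{4}{3}\sqrt{d}\Bigr)\;\le\;\sqrt{d},
\end{align*}
contradicting the separation $\|\vx_i-\vx_k\|_2\ge 2\epsilon_\gD\ge\sqrt{d}$. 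So whenever $\vx$ lies outside the error strip, $\mathrm{Flatten}(\vx)$ is either an index claimed by class $y_i$ or not a claimed index at all, and the output is in $\{0,y_i\}$; the $\eta$ budget then only needs to cover the strip itself. Replacing your symmetrization argument with this margin computation closes the proof; no modification of the floor approximator is needed.
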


A network $f$, obtained from this lemma, memorizes each data point and its robustness ball for all indices $i \in I$. $f$ maps every other data point and its robustness ball to either its correct label or $0$ with high probability $1-\eta$.

\begin{proof}

We construct a network proceeding in three stages. In each stage, we define subnetworks such that their composition satisfies the requirements.

\textbf{Stage I} (Translation for Distancing from Lattice via the Bias)
We first translate the data points so that for $i \in I$, the robustness ball centered at $\vx_i$ lies far from integer lattice boundaries. This ensures that each ball lies entirely within a single unit grid cell.
By applying \cref{lem:distance_lattice} to the points $\{\vx_i\}_{i\in I}$, we obtain a translation vector $\vb = (b_{1}, \cdots, b_{d}) \in \R^d$ with bit complexity $\ceil{\log(6|I|)}$ such that
\begin{align}
    \mathrm{dist}(x_{i,j} - b_{j}, \mathbb{Z}) \geq \frac{1}{3 \floor{N^\alpha}}, \quad \forall i \in I, j \in [d], \label{ineq:lattice distance3}    
\end{align}
i.e., the translated points $\{\vx_i - \vb\}_{i \in I}$ are coordinate-wise far from the integer lattice. 
Additionally, we apply an integer-valued translation (coordinate-wise) so that all coordinates of the points $\{\vx_i - \vb\}_{i \in [N]}$ become positive, while preserving the distance property in \cref{ineq:lattice distance3}.
Hence, without loss of generality, we can assume $\vb$ also has the property
\begin{align}
    \vx_i - \vb \geq \vzero \textrm{ for all } i \in [N]. \label{eq:nonzerocoord3}
\end{align}

Let $\gD' := \{(\vx_i', y_i)\}_{i \in [N]}$, where $\vx_i' := \vx_i - \vb$. 
Then $\epsilon_{\gD'} = \epsilon_{\gD}$. 
For $\rho':=\rho = \frac{1}{4\floor{N^\alpha}\epsilon_{\gD}}$, we have the robustness radius $\mu' := \rho' \epsilon_{\gD'}= \rho\epsilon_{\gD}= \frac{1}{4\floor{N^\alpha}}$. 
Define $f_{\mathrm{trans}}$ as $f_{\mathrm{trans}}(\vx) := \vx - \vb$. 
Then, $f_{\mathrm{trans}}$ can be implemented via one hidden layer with $O(d^2)$ parameters in a neural network. 

Since the translation preserves separation ($\epsilon_\gD = \epsilon_{\gD'}$) and ball containment properties (robustness ball of $\gD$ is mapped to the robustness ball of $\gD'$ through the translation), it suffices to construct a network that satisfies the requirements with $\rho \gets\rho'$ and $\gD \gets\gD'$. 
Observe that the robustness balls after Stage I are not affected when passing the $\relu$, by \cref{ineq:lattice distance3,eq:nonzerocoord3}.

\textbf{Stage II} (Grid Indexing)
From \cref{ineq:lattice distance3}, each $\vx_i'\in \R^d$ (for $i \in I$) is at least $\frac{4}{3}\mu'$ distant from any lattice hyperplane $H_{z, j} := \{\vx \in \sR^d \mid x_j = z\}$ for each $j \in [d]$ and $z \in \mathbb{Z}$. 
Hence, each robustness ball centered at $\vx_i'$ (for $i \in I$) lies completely within a single integer lattice (or unit grid) $\prod_{j=1}^d [n_j, n_{j}+1)$, for some $(n_1, \cdots, n_d) \in \sZ^d$. Moreover, for any $\vx \in \gB_2(\vx_i', \mu')$, the distance from the integer lattice remains at least $\mu'$.

Furthermore, by the separation condition $\epsilon_{\gD'}= \epsilon_{\gD}\geq \sqrt{d}/2$, for any $i \neq i'$ with $y_i \neq y_{i'}$, we have $\norm{\vx_i' - \vx_{i'}'}_2 \geq \sqrt{d}$ . 
Since $\sup \{\norm{\vx - \vx'}_2 \, \mid \, \vx, \vx' \in \prod_{j=1}^d [n_j, n_{j}+1)\} = \sqrt{d}$, two such points cannot lie in the same grid. 
Recall the separation condition holds for all data points $\vx_i'$ for $i \in [N]$ and each ball $\gB_2(\vx_i', \mu')$ (for ${i \in I}$) lies within a single grid. We conclude that for each $i \in I$, the robustness ball $\mathcal{B}_2(\vx_i', \mu')$ is not intersected by any other robustness ball $\mathcal{B}_2(\vx_j', \mu')$ with a different label, for any $j \in [N]$, i.e., no ball with a different label overlaps the grid cell containing $\mathcal{B}_2(\vx_i', \mu')$.

We define $R := \ceil{\max_{i \in I} \norm{\vx_i'}_\infty (= \max_{i\in I, j \in [d]} (x_{i,j}'))} \in \sN$. 
Our goal in this stage is to construct $\mathrm{Flatten}$ mapping defined as 
\[
\mathrm{Flatten}(\vx) := R^{d-1} \floor{x_1} + R^{d-2} \floor{x_2} + \cdots + \floor{x_d}.
\]
This maps each grid $ \prod_{j=1}^{d} [n_j, n_{j+1})$ onto the point $ \sum_{j=1}^{d} R^{j-1} n_j$.

However, since $\mathrm{Flatten}$ is discontinuous due to the use of floor functions, we construct $\overline{\mathrm{Flatten}}$ which is a continuous approximation that exactly matches $\mathrm{Flatten}$ on the region $\bigcup_{i \in I}\gB_2 (\vx_i', \mu')$, and incurs only a small error on the remaining region $\bigcup_{i \in [N]\setminus I}\gB_2 (\vx_i', \mu')$.
We choose large enough $t \in \sN$ so that for $\eta':=1/t$, we have $\eta'\leq \frac{ V_d}{2 d V_{d-1}} \mu'\eta$ where $V_d =\frac{\pi^{d/2}}{\Gamma(\frac{d}{2}+1)}$ denotes the volume of the $d$-dimensional unit ball.
Moreover, we can take such $t \in \sN$ which at the same time satisfies \begin{align*}
    t \leq \frac{2 d V_{d-1}}{V_d \mu'\eta} + 1 = \frac{2d \Gamma(\frac{d}{2}+1)}{\pi^{1/2} \Gamma(\frac{d-1}{2}+1)\mu'\eta} + 1 = O(d^2/(\mu'\eta)) = O(d^2\floor{N^\alpha} / \eta).
\end{align*} 
By \cref{lem:floor-function-approx}, for $\gamma := 1 / t = \eta'$ and $n:=\ceil{\log _2 R}$, we obtain the network $\overline{\fl} := \overline{\fl_{\ceil{\log_2 R}}}$ with $O(\log_2 R)$ parameters such that 
\begin{align}
\label{eq:flapproxregion3}
\overline{\fl}(x)=\floor{x} \quad \forall x \in [0, R] \textrm{ with } x - \floor{x} > \eta'.
\end{align}
Since we apply $\gamma = 1/t$ to \cref{lem:floor-function-approx}, $\overline{\fl}$ can be implemented with $O(n + \log t) = O(\log R + \log (d^2\floor{N^\alpha} / \eta)) = O(\log (dRN / \eta))$ bit complexity. 
In particular, we can define our network $\overline{\mathrm{Flatten}}$ with $O(\log (dRN / \eta) + \log R^{d-1}) = O(\log (dRN / \eta) + d \log R) = \tilde{O}(d)$ bit complexity as
\begin{align}
    \overline{\mathrm{Flatten}}(\vx) = R^{d-1} \overline{\fl}(x_1) + \cdots + \overline{\fl} (x_d). \label{eq:flatten3}
\end{align}

As $\overline{\mathrm{Floor}}:\sR\rightarrow\sR$ can be implemented with width 5 and depth $O(\log_2 R)$ network (\cref{lem:floor-function-approx}), $\overline{\mathrm{Flatten}}$ can be implemented with width $5d$ and depth $O(\log_2 R)$ network.
Thus, we can construct $\overline{\mathrm{Flatten}}$ with $O(d^2 \log_2 R) = \tilde{O}(d^2)$ parameters. 

We first observe that this implementation is valid on the region $\bigcup_{i \in I}\gB_2 (\vx_i', \mu')$.
For $i \in I$ and $\vx \in \gB_2(\vx_i', \mu')$, we have
\begin{align*}
    x_j-\floor{x_j} \overset{(a)}{>} \mu' \overset{(b)}{>} \mu' \eta \overset{(c)}{\ge}\frac{  V_d}{2 V_{d-1}} \mu'\eta >\frac{ V_d}{2d V_{d-1}} \mu'\eta \overset{(d)}{\geq} \eta',
\end{align*}
where (a) holds by \cref{ineq:lattice distance3}, (b) holds since $\eta<1$, (c) holds since $\frac{  V_d}{ V_{d-1}} \leq 2$ by \cref{lem:V_d}, and (d) holds from the choice of $\eta'$. 
Thus, for any $i \in I$ and any $\vx \in \gB_2(\vx_i', \mu')$, $x_j$ satisfies the requirement in \cref{eq:flapproxregion3}. Therefore, we guarantee that each robustness ball centered at $\vx'_i$ for ${i \in I}$ lies in the region where the $\mathrm{Flatten}$ is properly approximated by $\overline{\mathrm{Flatten}}$. i.e.
\begin{align*}
    \overline{\mathrm{Flatten}}(\vx) = \mathrm{Flatten}(\vx) \textrm{ for all } i \in I \textrm{ and }\vx \in \gB_2(\vx_i', \mu').
\end{align*}

Since $\mathrm{Flatten}$ maps each unit grid into a point and each robustness ball centered at $\vx'_i$ for ${i \in I}$ lies on a single unit grid, we conclude
\begin{align*}
    \overline{\mathrm{Flatten}}(\vx) = \mathrm{Flatten}(\vx) = \mathrm{Flatten}(\vx_i') \textrm{ for all } i \in I \textrm{ and }\vx \in \gB_2(\vx_i', \mu').
\end{align*}
Let $m_i := \mathrm{Flatten}(\vx_i')$ for $i \in I$.
Then for $i \in I$, each robustness ball centered at $\vx_i'$ is mapped to $m_i$. We have $m_i \in~\sZ \cap [0, R^{d+1}]$ for all $i \in I$, since
\begin{align*}
    m_i & = \mathrm{Flatten}(\vx_i')\\
    & = R^{d - 1}\floor{x_{i1}'} + R^{d-2}\floor{x_{i2}'} + \cdots \floor{x_{id}'}\\
    & \overset{(a)}{\leq} R^{d - 1}R + R^{d - 2}R + \cdots + R\\
    & \leq R^{d + 1},
\end{align*}
where (a) is by $\norm{\vx_i'}_\infty \leq R$. 

Next, we consider the case $i \in [N] \setminus I$. 
Note that the lattice distance condition in \cref{ineq:lattice distance3} applies only to the subset $\{(\vx_i, y_i)\}_{i \in I}$, rather than the entire dataset. As a result, for indices $i \in [N] \setminus I$, the distance from the lattice is not guaranteed. Thus, it can lie across the lattice. 

For $i \in [N] \setminus I$, we analyze the error of $\overline{\mathrm{Flatten}}$ on the remaining region $\bigcup_{i \in [N]\setminus I}\gB_2 (\vx_i', \mu')$. For $i \in [N] \setminus I$, we have
\begin{align*}
    &\mathbb{P}_{\vx \in \operatorname{Unif}(\gB_2(\vx_i', \mu'))} \left[\overline{\mathrm{Flatten}}(\vx) \neq \mathrm{Flatten}(\vx) \right] \\
    \overset{(a)}{\leq} &\mathbb{P}_{\vx \in \operatorname{Unif}(\gB_2(\vx_i', \mu'))} \left[\exists j\in [d], \quad x_j - \floor{x_j} \leq \eta' \right]\\
    \leq &\sum_{j \in [d]} \mathbb{P}_{\vx \in \operatorname{Unif}(\gB(\vx_i', \mu'))} \left[ x_j - \floor{x_j} \leq \eta'\right]\\
    \leq & \sum_{j \in [d]} \max _ {\substack{\mathcal{I} \subseteq \sR \\ \text{s.t. } \operatorname{Len}(\mathcal{I})=\eta'}} \mathbb{P}_{\vx \in \operatorname{Unif}(\gB(\vx_i', \mu'))} \left[ x_j \in \mathcal{I} \right]\\
    \overset{(b)}{\leq} & \sum_{j \in [d]} \frac{2\eta'V_{d-1}}{\mu' V_d} \quad\\
    = & \frac{2\eta' dV_{d-1}}{\mu'V_d} \\
    \overset{(c)}{\leq} & \eta,
\end{align*}
where (a) follows from \cref{eq:flapproxregion3} and the fact that $\overline{\mathrm{Flatten}}(\vx) = \mathrm{Flatten}(\vx)$ whenever $x_j - \floor{x_j}>\eta'$ for all $j \in [d]$, (b) follows from \cref{lem:projection prob} applied to a unit vector $u=\mathbf{e_j}$, $b=0$, and an interval $\frac{\mathcal{I}_j}{\mu'}$, and (c) holds by the choice of $\eta'$.
Hence, we have
\begin{align}
    \mathbb{P}_{\vx \in \operatorname{Unif}(\gB_2(\vx_i', \mu'))} \left[\overline{\mathrm{Flatten}}(\vx) \neq \mathrm{Flatten}(\vx) \right] \leq \eta.
    \label{ineq:flatten-prob}
\end{align}

We observe at what happens if $\overline{\mathrm{Flatten}}(\vx) = \mathrm{Flatten}(\vx)$ for $i \in [N] \setminus I$ and $\vx \in \gB_2 (\vx_i', \mu')$. 
To ensure that no robustness ball centered at $\vx_i$ for $i \in [N]\setminus I$ is mapped to grid index $m_j$ with a different label, namely, satisfying $j \in I$ with $y_i \neq y_j$, we define label-specific grid index sets.
For each class $c \in [C]$, define the set
$$G_c:=\bigcup_{\substack{i \in I \\ \text{s.t. } y_i=c}} \{m_i\}, \text{ and } G := \bigcup_{c \in [C]} G_c = \{m_i\}_{i \in [N]},$$
where $G_c$ is the collection of all grid indices $m_i$ assigned to data points in $I$ that have label $y_i=c$. 
In other words, $G_c$ contains all grid cells that are claimed by class $c$. 
The set $G$ represents all valid grid indices.

Recall that for each $i \in I$, the robustness ball $\mathcal{B}_2(\vx_i', \mu')$ is not intersected by any other robustness ball $\mathcal{B}_2(\vx_j', \mu')$ with a different label. 
Specifically, consider $i \in [N] \setminus I$. 
For $j\in I$ with $y_i=y_j$, the robustness ball $\mathcal{B}_2(\vx_i', \mu')$ can have a portion that intersects the grid containing $\mathcal{B}_2(\vx_j', \mu')$, then the portion is mapped to the corresponding grid index $m_j$. 
However, for $j\in I$ with $y_i \neq y_j$, the robustness ball never intersects the grid, and is never mapped to $m_j$.
Formally, if $\mathrm{Flatten}(\vx) \in G$, it must be $\mathrm{Flatten}(\vx) \in G_{y_i}$. Otherwise, $\mathrm{Flatten}(\vx) \notin G$, i.e., the robustness ball does not intersect any selected grid. 
Thus, combining the probabilities, 
\begin{align}
    &\mathbb{P}_{\vx \in \operatorname{Unif}(\gB_2(\vx_i', \mu'))} \left[\overline{\mathrm{Flatten}}(\vx) \in G_{y_i} \text{ or } \overline{\mathrm{Flatten}}(\vx) \notin G\right] \nonumber\\
    \overset{(a)}{\geq} &\mathbb{P}_{\vx \in \operatorname{Unif}(\gB_2(\vx_i', \mu'))} \left[\overline{\mathrm{Flatten}}(\vx) = \mathrm{Flatten}(\vx) \right] \nonumber\\
    \overset{(b)}{\geq}& 1-\eta,\label{ineq:flatten-g-prob}
\end{align}
where (a) holds since $\mathbb{P}_{\vx \in \operatorname{Unif}(\gB_2(\vx_i', \mu'))} \left[{\mathrm{Flatten}}(\vx) \in G_{y_i} \text{ or } {\mathrm{Flatten}}(\vx) \notin G\right]=1$, and (b) follows by \cref{ineq:flatten-prob}.
Hence, if we memorize $\{(m_i, y_i)\}_{i \in I}$ and map other integer $\mathbb{N}\setminus\{m_i\}_{i \in I}$ to zero, $\gB_2(\vx_i', \mu')$ for $i \in I$ is exactly mapped to $y_i$, and with high probability $1-\eta$, $\gB_2(\vx_i', \mu')$ for $i \in [N] \setminus I$ is mapped to either $y_i$ or $0$.

\textbf{Stage III} (Memorization)
Finally, we construct the network to memorize $\floor{N^\alpha}$ points $\{(m_i,y_i)\}_{i=1}^I \subset \sZ_{\geq 0} \times [C]$. 
Since multiple robustness balls for $\gD'$ with the same label may correspond to the same grid index in Stage II, it is possible that for some $i \neq j$ with $y_i = y_j$, we have $m_i = m_j$. 
Let $N' \leq |I|$ denote the number of distinct pairs $(m_i, y_i)$. 
It remains to memorize these $N'$ distinct data points in $\sR$. 

Applying \cref{lem:vardi_integer}, we obtain a neural network $f_{\text{mem}}$ with width 12, depth $\tilde O(N)$, $\tilde{O}(N^{\frac{\alpha}{2}})$ parameters and bit complexity $\tilde{O}(N^{\frac{\alpha}{2}})$ satisfying:
\begin{align*}
    f_{\text{mem}}(m) = 
    \begin{cases}
        y_i \text{\quad for every } m = m_i \text{ with } i \in I,\\
        0 \text{\;\quad for every } m \in  \mathbb{N}\backslash G.
    \end{cases}
\end{align*}
For $m \in \mathbb{N}$, $f_{\text{mem}}(m) = c$ for some $c \in [C]$ if and only if $m \in G_{c}$.

The final network $f:\sR^d \rightarrow \sR$ is defined as
\begin{align*}
    f = f_{\mathrm{mem}} \circ \relu \circ \overline{\mathrm{Flatten}} \circ \relu \circ f_{\mathrm{trans}}.
\end{align*}

Let us verify the correctness of the construction.

For $i \in I$ and any $\vx \in \gB(\vx_i, \rho \epsilon_{\gD})$, we have
$$
f(\vx) = f_{\mathrm{mem}} \circ \relu \circ \overline{\mathrm{Flatten}} \circ \relu \circ f_{\mathrm{trans}} (\vx) \overset{(a)}{=}  f_{\mathrm{mem}} \circ \relu (m_i) \overset{(b)}{=} f_{\text{mem}} (m_i) = y_i,
$$
where (a) holds since $\overline{\mathrm{Flatten}} \circ \relu \circ f_{\mathrm{trans}} (\vx) {=}  m_i$, (b) holds since $m_i \in \mathbb{N}$, and (c) follows that $f$ is constructed to memorize $\{(m_i,y_i)\}_{i=1}^I$.

Next, consider $i \in [N] \setminus I$. We observe 
\begin{align*}
    &\mathbb{P}_{\vx \in \operatorname{Unif}(\gB(\vx_i, \rho \epsilon_{\gD}))} \left[f(\vx) \in \{0, y_i \} \right] \\
    \overset{(a)}{=}&\mathbb{P}_{\vx \in \operatorname{Unif}(\gB_2(\vx_i, \mu))} \left[\relu \circ\overline{\mathrm{Flatten}} \circ \relu \circ f_{\mathrm{trans}}(\vx) \in G_{y_i} \text{ or } \relu \circ\overline{\mathrm{Flatten}} \circ \relu \circ f_{\mathrm{trans}}(\vx) \notin G\right] \\
    \overset{(b)}{=}&\mathbb{P}_{\vx' \in \operatorname{Unif}(\gB_2(\vx_i', \mu'))} \left[\relu \circ\overline{\mathrm{Flatten}}(\vx') \in G_{y_i} \text{ or } \relu \circ\overline{\mathrm{Flatten}}(\vx') \notin G\right] \\
    \overset{(c)}{=}&\mathbb{P}_{\vx' \in \operatorname{Unif}(\gB_2(\vx_i', \mu'))} \left[\overline{\mathrm{Flatten}}(\vx') \in G_{y_i} \text{ or } \overline{\mathrm{Flatten}}(\vx') \notin G\right] \\
    \overset{(d)}{\geq}& 1-\eta,
\end{align*}
where (a) holds from the construction of $f_{\mathrm{mem}}$, (b) holds using $\vx':=\relu \circ f_{\mathrm{trans}}(\vx)$, (c) holds by \cref{eq:nonzerocoord3} and (d) holds by \cref{ineq:flatten-g-prob}.
This concludes the proof.

The depth 1 network $f_{\mathrm{trans}}$ has width $d$. $\overline{\mathrm{Flatten}}$ has width $5d$ and depth $O(\log_2R)$ and $f_{\mathrm{mem}}$ has width $12$ and depth $\tilde{O}(N^{\frac{\alpha}{2}})$. 
The total construction requires $\tilde{O}( d^2 + d^2 + N^{\frac{\alpha}{2}}) = \tilde{O}(d^2 + N^{\frac{\alpha}{2}})$ parameters, where each term $ d^2, d^2$, and $ N^{\frac{\alpha}{2}}$ comes from  $f_{\mathrm{trans}}$, $\overline{\mathrm{Flatten}}$, and $f_{\mathrm{mem}}$ respectively. 
The width of the final network is ${O}(d)$ and the depth is $\tilde{O}(N^{\frac{\alpha}{2}})$.

The bit complexity of $f_{\mathrm{trans}}$ is $O(\log N^\alpha, \log (\max \{ \| \vx_i\|_\infty\mid i \in [N]\}))=\tilde{O}(1)$. $\overline{\mathrm{Flatten}}$ has the bit complexity $\tilde{O}(d)$, and $f_{\mathrm{mem}}$ needs at most $\tilde{O}(N^{\frac{\alpha}{2}})$. 
Hence, the bit complexity of the final network is $\tilde{O}(N^{\frac{\alpha}{2}}+d)$. 
\end{proof}

\subsection{Sufficient Condition for Robust Memorization with Large Robustness Radius}
\label{app:ub4}

\begin{theorem}
\label{thm:ub4}
Let $\rho \in \Big(\frac{1}{5\sqrt{d}}, 1\Big)$. For any dataset $\gD \in \mD_{d, N, C}$,  there exists $f$ with $\tilde{O}(N d^{{2}} \rho^{{4}})$ parameters, depth $\tilde{O}(N)$, width $\tilde{O}(\rho^2d)$ and bit complexity $\tilde{O}({N})$ that $\rho$-robustly memorizes $\gD$.
\end{theorem}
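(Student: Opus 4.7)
The plan is to adapt the projection-then-lattice-memorization pipeline used in \cref{thm:ub2} and \cref{thm:ub3}, but with two changes tailored to the large-$\rho$ regime. First, because $\rho>1/(5\sqrt{d})$, a logarithmic-dimension projection no longer keeps projected robustness balls disjoint; we instead project to $m=\Theta(\rho^{2}d\log N)$ so that the JL-preserved separation $\tfrac{5}{12}\sqrt{m/d}\,\epsilon_{\gD}$ strictly exceeds $2\mu$. Second, after projection the effective robustness ratio is brought down to the singleton regime $\floor{N^{\alpha}}=1$ of \cref{lem:N alpha points}, letting us memorize one point per stage and then cascade $N$ such stages.

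\textbf{Step 1 (Separation-preserving projection).} Apply the strengthened JL projection of \cref{lem:dimensionreduction} to obtain a $1$-Lipschitz orthonormal map $\phi:\R^{d}\to\R^{m}$ with $m=\Theta(\rho^{2}d\log N)$ such that $\epsilon_{\gD'}\geq \tfrac{5}{12}\sqrt{m/d}\,\epsilon_{\gD}$, while each radius-$\mu$ ball is preserved exactly. Rescale $\phi$ by $\tfrac{11}{9}\sqrt{d}/\epsilon_{\gD}$, discretize the weights using \cref{lem:bit-complexity}, and add a bias $\vb$ making every coordinate of $\vz_{i}:=\sigma(\bar f_{\mathrm{proj}}(\vx_{i})+\vb)$ nonnegative. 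As in the proofs of \cref{thm:ub2,thm:ub3}, the resulting $\gD''\in\mD_{m,N,C}$ satisfies $\epsilon_{\gD''}\geq\sqrt{m}/2$; the choice of $m$ forces the induced ratio $\rho''\leq\tfrac{1}{4\epsilon_{\gD''}}$, matching the precondition of \cref{lem:N alpha points} with $\floor{N^{\alpha}}=1$, i.e.\ $\alpha=0$.

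\textbf{Step 2 (Sequential per-point memorization).} Partition $[N]$ into singleton index sets $I_{j}=\{j\}$. For each $j$, invoke \cref{lem:N alpha points} with $d\gets m$, $I\gets I_{j}$, and an arbitrarily small $\eta$, obtaining $\tilde f_{j}:\R^{m}\to\R$ of width $O(m)=\tilde{O}(\rho^{2}d)$, depth $\tilde{O}(1)$, and $\tilde{O}(m^{2})=\tilde{O}(\rho^{4}d^{2})$ parameters, that maps every point of $\gB_{2}(\vz_{j},\rho''\epsilon_{\gD''})$ to $y_{j}$ and maps every other ball to either $0$ or to its own correct label. Cascade the subnetworks through the accumulator map $f_{j}(\vz,y)=(\vz,\,y+\sigma(\tilde f_{j}(\vz)-y))$ exactly as in the proof of \cref{thm:ub3}: the final output equals $\max_{j}\tilde f_{j}(\vz)$, which by the same disjointness argument returns $y_{i}$ on each ball of $\gD$. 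Summing contributions, the projection layer costs $O(dm)=\tilde{O}(\rho^{2}d^{2})$ parameters and the $N$ stages cost $N\cdot\tilde{O}(\rho^{4}d^{2})=\tilde{O}(N\rho^{4}d^{2})$ (which dominates), giving depth $\tilde{O}(N)$, width $\tilde{O}(\rho^{2}d)$, and bit complexity $\tilde{O}(N)$, as claimed.

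\textbf{Main obstacle.} The delicate part is propagating constants along the projection-scaling-discretization chain so that two competing requirements hold simultaneously: (i) projected radius-$\mu$ balls of differently labeled points remain strictly disjoint, which forces $m\geq\tfrac{144}{25}\rho^{2}d$ at the JL step; and (ii) the post-scaling ratio $\rho''$ satisfies the singleton precondition $\rho''\leq\tfrac{1}{4\epsilon_{\gD''}}$ of \cref{lem:N alpha points}. Because $\rho$ may approach $1$, there is no slack to waste, and a naive application of \cref{lem:N alpha points} at $\alpha=0$ must additionally be checked to produce \emph{zero} error on the single active ball—this reduces to showing that the translation step of the lemma places the single active ball sufficiently far from integer lattice boundaries, which is automatic for a single point using $O(\log N)$ bits of translation. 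Verifying that the resulting bit complexity remains $\tilde{O}(N)$ across all $N$ stages, while keeping the parameter count at the advertised $\tilde{O}(Nd^{2}\rho^{4})$, is the main bookkeeping burden.
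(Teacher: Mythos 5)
There is a genuine gap, and it sits exactly where you flag the ``main obstacle'': requirements (i) and (ii) cannot hold simultaneously in the regime $\rho>\tfrac{1}{5\sqrt d}$, and no choice of projection dimension $m$ fixes this. The lattice technique of \cref{lem:N alpha points} needs the \emph{scaled} robustness radius to be at most $\tfrac14$ (singleton regime $\floor{N^\alpha}=1$) while differently labeled points must land in different unit cells, i.e.\ scaled separation at least $\sqrt m/2$; so the effective ratio radius/separation must be $O(1/\sqrt m)$. But any separation-preserving projection shrinks the pairwise separation by a factor $\Theta(\sqrt{m/d})$ while (being $1$-Lipschitz, and essentially an orthogonal projection) it preserves the ball radius, so after projection the ratio is $\Theta(\rho\sqrt{d/m})$. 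Concretely: the scaling $c$ forced by $\epsilon_{\gD''}\ge\sqrt m/2$ is $c=\Theta(\sqrt d/\epsilon_\gD)$ \emph{independently of $m$} (both the cell diameter and the projected separation scale like $\sqrt m$), and then the scaled radius is $c\mu=\Theta(\sqrt d\,\rho)$, which exceeds $\tfrac14$ throughout essentially the whole range $\rho\in(\tfrac{1}{5\sqrt d},1)$ and is $\Theta(\sqrt d)$ for constant $\rho$. Taking $m=\Theta(\rho^2 d\log N)$ does not help; your Step~1 claim that ``the choice of $m$ forces $\rho''\le\tfrac{1}{4\epsilon_{\gD''}}$'' is false, and with it the reduction to \cref{lem:N alpha points} collapses. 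This is precisely why the paper abandons the grid/Flatten pipeline here: after projecting to $m=\Theta(\rho^2 d)$ the induced robustness ratio is a constant ($5/6$), and the paper then invokes the construction of \citet{yu2024optimal} (\cref{thm:yu-params}), which handles any constant ratio with $\tilde O(Nm^2)=\tilde O(Nd^2\rho^4)$ parameters, plus separate boundary cases ($\rho\ge 1/3$, $d=O(\log N)$, $N=O(\log N)$, $d>N$ via \cref{prop:natproj}) and a final rounding step for bit complexity.

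A secondary issue: even where your reduction were geometrically feasible, cascading \cref{lem:N alpha points} through the max-accumulator only guarantees that inactive balls are mapped to $\{0,y_i\}$ with probability $1-\eta$, so the composed network is a robust memorizer \emph{with error at most $N\eta$}, not an exact one. \cref{thm:ub4} claims exact $\rho$-robust memorization, so you would additionally need an argument eliminating the error (as the paper does implicitly by using a construction whose correctness is pointwise, not probabilistic).
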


\begin{proof}
Let $\gD = \{(\vx_i, y_i)\}_{i \in [N]} \in \mD_{d, N, C}$ be given. 
We divide the proof into five cases, the first case under $\rho \in [1/3, 1)$, the second case under $\rho \in (1/5\sqrt{d}, 1/3)$ and $d < 600 \log N$, the third case under $\rho \in (1/5\sqrt{d}, 1/3)$ and $N < 600 \log N \leq d$, the fourth case under $\rho \in (1/5\sqrt{d},1/3)$, $N \geq d \geq 600 \log N$, and finally the fifth case under $\rho \in (1/5\sqrt{d},1/3)$ and $d > N \geq 600 \log N$. 
To check that these cases cover all the cases, refer to \cref{fig:ub3cases}. 

\begin{figure}[ht]
    \centering
    
    \begin{tikzpicture}[sibling distance=8em, level distance=3em, every node/.style = {shape=rectangle, draw, align=center}]
    \node { \( \rho \in \left[\frac{1}{3}, 1\right)? \) }
    child { node {Case I} }
    child { node { \( d < 600 \log N? \) }
        child { node {Case II} }
        child { node { \( N < 600 \log N? \) }
            child { node {Case III} }
            child { node { \( d \leq N? \) }
                child { node {Case IV} }
                child { node {Case V} }
            }
        }
    };
    \end{tikzpicture}
    \caption{Different cases for \cref{thm:ub4}. 
    The left child is for the answer ``Yes'', and the right child is for the answer ``No''}
    \label{fig:ub3cases}
\end{figure}
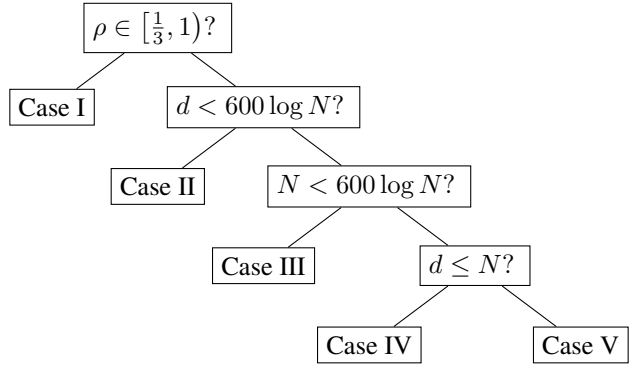

The first two cases follow easily from prior works, while the remaining cases require careful analysis using dimension reduction techniques.
The most interesting cases are cases IV and V. 
While we track the width, depth, and parameter complexity for each case, we initially implement them using infinite precision. 
We address the bit complexity by approximating the infinite precision network using a finite precision network at the very last part of the proof. 
As a spoiler, the bit complexity of all cases is handled within a unified framework using \cref{lem:bit-complexity}.
Let us deal with each case one by one. 

\paragraph{Case I: $\rho \in [1/3, 1)$.}
In the first case, where $\rho \in [1/3, 1)$, the result directly follows from the prior result by \citet{yu2024optimal}. 
In particular, we apply \cref{thm:yu-params}. 
Let us denote $R: = \max_{i \in [N]} \norm{\vx_i}_2$ and $\gamma:= (1 - \rho)\epsilon_{\gD}$.
Note that $R \geq \norm{\vx_i}_\infty$ for all $i \in [N]$ as $\norm{\vx}_2 \geq \norm{\vx}_\infty$ for all $\vx \in \sR^d$. 
By applying \cref{thm:yu-params}, there exists $f \in \gF_{d, P}$ with $P = O(Nd^2(\log(\frac{d}{\gamma^2}) + \log R))$ parameters that $\rho$-robustly memorize $\gD$. 
The number of parameters can be further bounded as follows:
\begin{align*}
    O(Nd^2(\log(\frac{d}{\gamma^2}) + \log R)) \overset{(a)}{=} O(Nd^2\rho^4 \cdot (\log(\frac{d}{\gamma^2}) + \log R)) \overset{(b)}{=} \tilde{O}(Nd^2\rho^4),
\end{align*}
where (a) is due to $\rho = \Omega(1)$, (b) hides the logarithmic factors. 
Moreover, by \cref{thm:yu-params}, the network has width $O(d) = O(\rho^2d)$ and depth $\tilde{O}(N)$.

\paragraph{Case II: $\rho \in (1/5\sqrt{d}, 1/3)$ and $d < 600 \log N$.}
In the second case, where $d < 600 \log N$ and $(1/5\sqrt{d}, 1/3)$, the result also directly follows from the prior result by \citet{yu2024optimal}.
In particular, we apply \cref{thm:yu-params}. 
Let us denote $R: = \max_{i \in [N]} \norm{\vx_i}_2$ and $\gamma:= (1 - \rho)\epsilon_\gD$.
Note that $R \geq \norm{\vx_i}_\infty$ for all $i \in [N]$ as $\norm{\vx}_2 \geq \norm{\vx}_\infty$ for all $\vx \in \sR^d$. 
By \cref{thm:yu-params}, there exists $f \in \gF_{d, P}$ with $P = O(Nd^2(\log(\frac{d}{\gamma^2}) + \log R))$ parameters that $\rho$-robustly memorize $\gD$. 
The number of parameters can be further bounded as follows:
\begin{align*}
    O(Nd^2(\log(\frac{d}{\gamma^2}) + \log R)) \overset{(a)}{=} O(N(\log N)^2 \cdot (\log(\frac{d}{\gamma^2}) + \log R)) \overset{(b)}{=} \tilde{O}(N) \overset{(c)}{=} \tilde{O}(Nd^2\rho^4),
\end{align*}
where (a) is due to $d \leq 600 \log N$, (b) hides the logarithmic factors, and (c) is because $N \leq~625 Nd^2\rho^4$ for all $\rho \in \bigopen{\frac{1}{5 \sqrt{d}}, \frac{1}{3}}$. 
Moreover, by \cref{thm:yu-params}, the network has width $O(d) = O(\log N) = \tilde{O}(1) = \tilde{O}(\rho^2d)$ and depth $\tilde{O}(N)$. 

\paragraph{Case III: $\rho \in (1/5\sqrt{d}, 1/3)$ and $N < 600 \log N \leq d$.}
In the third case, where $N < 600 \log N \leq d$ and $(1/5\sqrt{d}, 1/3)$, we first apply \cref{prop:natproj} to $\gD$ to obtain 1-Lipschitz linear $\varphi:\sR^d \rightarrow \sR^N$ such that $\gD' := \{(\varphi(\vx_i), y_i)\}_{i \in [N]}$ has $\epsilon_{\gD'} = \epsilon_{\gD}$. 
This is possible as $d \geq N$. 

We apply \cref{thm:yu-params} by \citet{yu2024optimal} to $\gD'$.
Let us denote $R: = \max_{i \in [N]} \norm{\varphi(\vz_i)}_2$ and $\gamma:= (1 - \rho)\epsilon_{\gD'}$.
Note that $R \geq \norm{\varphi(\vz_i)}_\infty$ for all $i \in [N]$ as $\norm{\vz}_2 \geq \norm{\vz}_\infty$ for all $\vz \in \sR^N$. 
By \cref{thm:yu-params}, there exists $f_1 \in \gF_{N, P}$ with $P = O(N\cdot N^2(\log(\frac{N}{\gamma^2}) + \log R))$ parameters that $\rho$-robustly memorize $\gD'$. $f_1$ has width $O(N)$ and depth $\tilde{O}(N)$. 

Let $f = f_1 \circ \varphi$. This can be implemented by changing the first hidden layer matrix of $f_1$ by composing $\varphi$. This is possible because $\varphi$ is linear. 
$f$ has at most $dN$ additional parameters compared to $f_1$, and has same width and depth as $f_1$. 
Since $f_1$ is 1-Lipschitz and $\epsilon_{\gD'} = \epsilon_\gD$, every robustness ball of $\gD$ is mapped to the robustness ball of $\gD'$ via $f_1$. 
As $f_1$ $\rho$-robustly memorizes $\gD'$, the composed $f$ satisfies the desired property.

The number of parameters can be further bounded as follows:
\begin{align*}
    O(Nd + N\cdot N^2(\log(\frac{d}{\gamma^2}) + \log R)) \overset{(a)}{=} O(d\log N + (\log N)^3 \cdot (\log(\frac{d}{\gamma^2}) + \log R)) \overset{(b)}{=} \tilde{O}(d),
\end{align*}
where (a) is due to $N \leq 600 \log N$, and (b) hides the logarithmic factors. The width of $f$ is $O(N) = O(\log(N)) = \tilde{O}(1) = \tilde{O}(\rho^2 d)$. The depth of $f$ is $\tilde{O}(N)$. 

\paragraph{Case IV: $\rho \in (1/5\sqrt{d}, 1/3)$, and $N \geq d \geq 600 \log N$.}
In the fourth case, where $d \geq 600 \log N$, we utilize the dimension reduction technique by \cref{prop:lipschitzprojwithsep}.
We apply \cref{prop:lipschitzprojwithsep} to $\gD$ with $m = \max\{\ceil{9d\rho^2}, \ceil{600 \log N}, \ceil{10 \log d}\}$ and $\alpha = 1/5$.
Let us first check that the specified $m$ satisfies the condition $24\alpha^{-2}\log N \leq m \leq d$ for the proposition to be applied. 
$\alpha = 1/5$ and $m \geq 600 \log N$ ensure the first inequality $24\alpha^{-2}\log N \leq m$.
The second inequality $m \leq d$ is decomposed into three parts.
Since $\rho \leq \frac{1}{3}$, we have $9d\rho^2 \leq d$ so that
\begin{align}
\label{eq:mlessd1}
    \ceil{9d\rho^2} \leq d.
\end{align}
Moreover, $600 \log N \leq d$ implies
\begin{align}
\label{eq:mlessd2}
    \ceil{600 \log N} \leq d.
\end{align}
Additionally, as $N \geq 2$, we have $d \geq 600 \log N \geq 600 \log 2 \geq 400$. 
By \cref{lem:dlogd}, this implies $10 \log d \leq d$ and therefore
\begin{align}
\label{eq:mlessd3}
    \ceil{10 \log d} \leq d.
\end{align}
Gathering \cref{eq:mlessd1,eq:mlessd2,eq:mlessd3} proves $m \leq d$.

By the \cref{prop:lipschitzprojwithsep}, there exists 1-Lipschitz linear mapping $\phi:\sR^d \rightarrow \sR^m$ and $\beta > 0$ such that $\gD' := \{(\phi(\vx_i), y_i)\}_{i \in [N]} \in \mD_{m, N, C}$ satisfies
\begin{align}
\label{eq:epsilon12}
    \epsilon_{\gD'} \geq \frac{4}{5}\beta\epsilon_\gD.
\end{align}
As $m \geq 10 \log d$, the inequality $\beta \geq \frac{1}{2}\sqrt{\frac{m}{d}}$ is also satisfied by \cref{prop:lipschitzprojwithsep}. 
Therefore, we have
\begin{align}
\label{eq:betarho}
    \beta \geq \frac{1}{2}\sqrt{\frac{m}{d}} \overset{(a)}{\geq} \frac{1}{2}\sqrt{\frac{\ceil{9d\rho^2}}{d}} \geq \frac{1}{2}\sqrt{\frac{9d\rho^2}{d}} = \frac{3}{2}\rho,
\end{align}
where (a) is by the definition of $m$.
Moreover, since $\phi$ is 1-Lipschitz linear,
\begin{align}
\label{eq:phicontraction}
    \norm{\phi(\vx_i)}_2 = \norm{\phi(\vx_i - \vzero)}_2 = \norm{\phi(\vx_i) - \phi(\vzero)}_2 \leq \norm{\vx_i - \vzero}_2 = \norm{\vx_i}_2,
\end{align}
for all $i \in [N]$. 
Hence, by letting $R:=\max_{i \in [N]}\{\norm{\vx_i}_2\}$, we have $\norm{\phi(\vx_i)}_2 \leq R$ for all $i \in [N]$. 

Now, we set the first layer hidden matrix as the matrix $\mW \in \sR^{m \times d}$ corresponding to $\phi$ under the standard basis of $\sR^d$ and $\sR^m$. 
Moreover, set the first hidden layer bias as $\vb := 2R\vone = 2R(1, 1, \cdots, 1) \in \sR^m$.
Then, we have 
\begin{align}
\label{eq:no-relu}
    \mW\vx + \vb \geq \vzero,
\end{align}
for all $\vx \in \gB_2(\vx_i, \epsilon_{\gD})$ for all $i \in [N]$, where the comparison between two vectors are element-wise. 
This is because for all $i \in [N], j \in [m]$ and $\vx \in \gB_2(\vx, \epsilon_{\gD})$, we have 
\begin{align*}
    (\mW \vx + \vb)_j = (\mW \vx)_j + 2R \geq 2R - \norm{\mW\vx}_2 \overset{(a)}{\geq} 2R - \norm{\vx}_2 \overset{(b)}{\geq} 2R - (R + \epsilon_{\gD}) \overset{(c)}{\geq} 0,
\end{align*}
where (a) is by \cref{eq:phicontraction}, (b) is by the triangle inequality, and (c) is due to $R > \epsilon_{\gD}$. 

We construct the first layer of the neural network as $f_1(\vx):= \sigma(\mW\vx + \vb)$ which includes the activation $\sigma$. 
Then, by above properties, $\gD'' := \{(f_1(\vx_i), y_i)\}_{i \in [N]}$ satisfies
\begin{align}
\label{eq:epsilon13}
    \epsilon_{\gD''} \geq \frac{6}{5}\rho \epsilon_\gD.
\end{align}
This is because for $i \neq j$ with $y_i \neq y_j$ we have
\begin{align*}
    \norm{f_1(\vx_i) - f_1(\vx_j)}_2 & = \norm{\sigma(\mW \vx_i + \vb) - \sigma(\mW \vx_j + \vb)}_2\\
    & \overset{(a)}{=} \norm{(\mW \vx_i + \vb) - (\mW \vx_j + \vb)}_2\\
    & = \norm{\phi(\vx_i) - \phi(\vx_k)}_2\\
    & \overset{(b)}{\geq} 2\epsilon_{\gD'}\\
    & \overset{(c)}{\geq} 2 \times \frac{4}{5}\beta \epsilon_\gD\\
    & \overset{(d)}{\geq} 2 \times \frac{4}{5} \times \frac{3}{2}\rho \epsilon_\gD\\
    & = \frac{12}{5}\rho \epsilon_\gD,
\end{align*}
where (a) is by \cref{eq:no-relu}, (b) is by the definition of the $\epsilon_{\gD'}$, (c) is by \cref{eq:epsilon12}, and (d) is by \cref{eq:betarho}.
By \cref{thm:yu-params} applied to $\gD'' \in \mD_{m, N, C}$, there exists $f_2 \in \gF_{m, P}$ with $P = O(Nm^2(\log(\frac{m}{(\gamma'')^2}) + \log R''))$ number of parameters that $\frac{5}{6}$-robustly memorize $\gD''$, where
\begin{align*}
    \gamma'' & := (1 - \frac{5}{6})\epsilon_{\gD''} \overset{(a)}{\geq} \frac{1}{6} \times \frac{12}{5}\rho \epsilon_\gD = \frac{2}{5} \rho \epsilon_\gD,\\
    R''&:=\max_{i \in [N]} \norm{f_1(\vx_i)}_2 = \max_{i \in [N]} \norm{\sigma(\mW\vx_i + \vb)}_2 = \max_{i \in [N]} \norm{\mW\vx_i + \vb}_2\\
    & \leq \max_{i \in [N]} \norm{\mW\vx_i}_2 + \norm{\vb}_2 \leq 3R.
\end{align*}
Here (a) is by \cref{eq:epsilon13}.
Moreover $f_2$ has width $O(m)$ and depth $\tilde{O}(N)$ by \cref{thm:yu-params}. 

Now, we claim that $f:= f_2\circ f_1$ $\rho$-robustly memorize $\gD$. 
For any $i \in [N]$, take $\vx \in \gB_2(\vx_i, \rho \epsilon_{\gD})$. 
Then, by \cref{eq:no-relu}, we have $f_1(\vx) = \mW\vx + \vb$ and $f_1(\vx_i) = \mW\vx_i + \vb$ so that
\begin{align}
\label{eq:f1dist}
    \norm{f_1(\vx) - f_1(\vx_i)}_2 = \norm{\mW\vx - \mW\vx_i}_2 \leq \norm{\vx - \vx_i}_2 \leq \rho \epsilon_\gD.
\end{align}
Moreover, combining \cref{eq:epsilon13,eq:f1dist} results $\norm{f_1(\vx) - f_1(\vx_i)}_2 \leq \frac{5}{6} \epsilon_{\gD''}$. 
Since $f_2$ $\frac{5}{6}$-robustly memorize $\gD''$, we have
\begin{align*}
    f(\vx) = f_2(f_1(\vx)) = f_2(f_1(\vx_i)) = y_i. 
\end{align*}
In particular, $f(\vx) = y_i$ for any $\vx \in \gB_2(\vx_i, \rho \epsilon_{\gD})$, concluding that $f$ is a $\rho$-robust memorizer $\gD$. 
Regarding the number of parameters to construct $f$, notice that $f_1$ consists of $(d+1)m = \tilde{O}(d^2\rho^2)$ parameters as $m = \tilde{O}(d\rho^2)$. $f_2$ consists of $\tilde{O}(Nm^2) = \tilde{O}(Nd^2\rho^4)$ parameters. 
Since the case IV assumes $N \geq d$ and large $\rho$ regime deals with $\rho \geq \frac{1}{5\sqrt{d}}$, we have 
\begin{align*}
    d^2\rho^2 \leq Nd\rho^2 \leq 25Nd^2\rho^4
\end{align*}

Therefore, $f$ in total consists of $\tilde{O}(d^2\rho^2 + Nd^2\rho^4) = \tilde{O}(Nd^2\rho^4)$ number of parameters. 
Moreover, since $f$ has the same width as $f_2$ and depth one larger than the depth of $f_2$, it follows that $f$ has width $O(m) = \tilde{O}(\rho^2 d)$ and depth $\tilde{O}(N)$. 
This proves the theorem for the fourth case. 

\paragraph{Case V: $\rho \in (1/5\sqrt{d}, 1/3)$, and $d > N \geq 600 \log N$.}
The last case combines the two techniques used in Cases III and IV. 
We first apply \cref{prop:natproj} to $\gD$ to obtain 1-Lipschitz linear $\varphi:\sR^d \rightarrow \sR^N$ such that $\gD' := \{(\varphi(\vx_i), y_i)\}_{i \in [N]} \in \mD_{N, N, C}$ has $\epsilon_{\gD'} = \epsilon_{\gD}$. 
Note that we can apply the proposition since $d \geq N$. 

Next, we apply \cref{prop:lipschitzprojwithsep} to $\gD' \in \mD_{N, N, C}$ with $m = \max\{\ceil{9N\rho^2}, \ceil{600 \log N}\}$ and $\alpha = 1/5$.
Let us first check that the specified $m$ satisfies the condition $24\alpha^{-2}\log N \leq m \leq N$ for the proposition to be applied. 
$\alpha = 1/5$ and $m \geq 600 \log N$ ensure the first inequality $24\alpha^{-2}\log N \leq m$.
The second inequality $m \leq N$ is decomposed into two parts.
Since $\rho \leq \frac{1}{3}$, we have $9N\rho^2 \leq N$ so that
\begin{align}
\label{eq:mlessn1}
    \ceil{9N\rho^2} \leq N.
\end{align}
Moreover, $600 \log N \leq N$ implies
\begin{align}
\label{eq:mlessn2}
    \ceil{600 \log N} \leq N.
\end{align}
Gathering \cref{eq:mlessd1,eq:mlessd2} proves $m \leq N$.
Additionally, as $N \geq 2$, we have $N \geq 600 \log N \geq 600 \log 2 \geq 400$. 
By \cref{lem:dlogd}, this implies $10 \log N \leq N$.

By the \cref{prop:lipschitzprojwithsep}, there exists 1-Lipschitz linear mapping $\phi:\sR^N \rightarrow \sR^m$ and $\beta > 0$ such that $\gD'' := \{(\phi(\varphi(\vx_i)), y_i)\}_{i \in [N]} \in \mD_{m, N, C}$ satisfies
\begin{align}
\label{eq:epsilon12n}
    \epsilon_{\gD''} \geq \frac{4}{5}\beta\epsilon_\gD'.
\end{align}
As $m \geq 600 \log N \geq 10 \log N$, the inequality $\beta \geq \frac{1}{2}\sqrt{\frac{m}{N}}$ is also satisfied by \cref{prop:lipschitzprojwithsep}. 
Therefore, we have
\begin{align}
\label{eq:betarhon}
    \beta \geq \frac{1}{2}\sqrt{\frac{m}{N}} \overset{(a)}{\geq} \frac{1}{2}\sqrt{\frac{\ceil{9N\rho^2}}{N}} \geq \frac{1}{2}\sqrt{\frac{9N\rho^2}{N}} = \frac{3}{2}\rho,
\end{align}
where (a) is by the definition of $m$.
Moreover, since $\varphi$ and $\phi$ are both 1-Lipschitz linear, $\phi \circ \varphi: \sR^d \rightarrow \sR^m$ is also 1-Lipschitz linear.
Therefore,
\begin{align}
\label{eq:phicontractionn}
    \norm{\phi(\varphi(\vx_i))}_2 = \norm{\phi(\varphi(\vx_i - \vzero))}_2 = \norm{\phi(\varphi(\vx_i)) - \phi(\varphi(\vzero))}_2 \leq \norm{\vx_i - \vzero}_2 = \norm{\vx_i}_2,
\end{align}
for all $i \in [N]$. 
Hence, by letting $R:=\max_{i \in [N]}\{\norm{\vx_i}_2\}$, we have $\norm{\phi(\varphi(\vx_i))}_2 \leq R$ for all $i \in [N]$. 

Now, we set the first layer hidden matrix as the matrix $\mW \in \sR^{m \times d}$ corresponding to $\phi \circ \varphi$ under the standard basis of $\sR^d$ and $\sR^m$. 
Moreover, set the first hidden layer bias as $\vb := 2R\vone = 2R(1, 1, \cdots, 1) \in \sR^m$.
Then, we have 
\begin{align}
\label{eq:no-relun}
    \mW\vx + \vb \geq \vzero,
\end{align}
for all $\vx \in \gB_2(\vx_i, \epsilon_{\gD})$ for all $i \in [N]$, where the comparison between two vectors are element-wise. 
This is because for all $i \in [N], j \in [m]$ and $\vx \in \gB_2(\vx, \epsilon_{\gD})$, we have 
\begin{align*}
    (\mW \vx + \vb)_j = (\mW \vx)_j + 2R \geq 2R - \norm{\mW\vx}_2 \overset{(a)}{\geq} 2R - \norm{\vx}_2 \overset{(b)}{\geq} 2R - (R + \epsilon_\gD) \overset{(c)}{\geq} 0,
\end{align*}
where (a) is by \cref{eq:phicontractionn}, (b) is by the triangle inequality, and (c) is due to $R \geq \epsilon_{\gD}$. 

We construct the first layer of the neural network as $f_1(\vx):= \sigma(\mW\vx + \vb)$ which includes the activation $\sigma$. 
Next, we show that, $\gD'' := \{(f_1(\vx_i), y_i)\}_{i \in [N]}$ satisfies
\begin{align}
\label{eq:epsilon13n}
    \epsilon_{\gD''} \geq \frac{6}{5}\rho \epsilon_\gD,
\end{align}
by the above properties. 
This is because for $i \neq j$ with $y_i \neq y_j$ we have
\begin{align*}
    \norm{f_1(\vx_i) - f_1(\vx_j)}_2 & = \norm{\sigma(\mW \vx_i + \vb) - \sigma(\mW \vx_j + \vb)}_2\\
    & \overset{(a)}{=} \norm{(\mW \vx_i + \vb) - (\mW \vx_j + \vb)}_2\\
    & = \norm{\phi(\varphi(\vx_i)) - \phi(\varphi(\vx_k))}_2\\
    & \overset{(b)}{\geq} 2\epsilon_{\gD''}\\
    & \overset{(c)}{\geq} 2 \times \frac{4}{5}\beta \epsilon_\gD'\\
    & \overset{(d)}{\geq} 2 \times \frac{4}{5} \times \frac{3}{2}\rho \epsilon_\gD'\\
    & = \frac{12}{5}\rho \epsilon_\gD'\\
    & \overset{(e)}{=} \frac{12}{5}\rho \epsilon_\gD,
\end{align*}
where (a) is by \cref{eq:no-relun}, (b) is by the definition of the $\epsilon_{\gD''}$, (c) is by \cref{eq:epsilon12n}, (d) is by \cref{eq:betarhon}, and (e) is because $\epsilon_{\gD'} = \epsilon_\gD$. 

By \cref{thm:yu-params} applied to $\gD'' \in \mD_{m, N, C}$, there exists $f_2 \in \gF_{m, P}$ with $P = O(Nm^2(\log(\frac{m}{(\gamma'')^2}) + \log R''))$ number of parameters that $\frac{5}{6}$-robustly memorize $\gD''$, where
\begin{align*}
    \gamma'' & := (1 - \frac{5}{6})\epsilon_{\gD''} \overset{(a)}{\geq} \frac{1}{6} \times \frac{12}{5}\rho \epsilon_\gD = \frac{2}{5} \rho \epsilon_\gD,\\
    R''&:=\max_{i \in [N]} \norm{f_1(\vx_i)}_2 = \max_{i \in [N]} \norm{\sigma(\mW\vx_i + \vb)}_2 = \max_{i \in [N]} \norm{\mW\vx_i + \vb}_2\\
    & \leq \max_{i \in [N]} \norm{\mW\vx_i}_2 + \norm{\vb}_2 \leq 3R.
\end{align*}
Here, (a) is by \cref{eq:epsilon13n}.
Moreover, $f_2$ has width $O(m)$ and depth $\tilde{O}(N)$ by \cref{thm:yu-params}. 

Now, we claim that $f:= f_2\circ f_1$ $\rho$-robustly memorize $\gD$. 
For any $i \in [N]$, take $\vx \in \gB_2(\vx_i, \rho \epsilon_{\gD})$. 
Then, by \cref{eq:no-relun}, we have $f_1(\vx) = \mW\vx + \vb$ and $f_1(\vx_i) = \mW\vx_i + \vb$ so that
\begin{align}
\label{eq:f1distn}
    \norm{f_1(\vx) - f_1(\vx_i)}_2 = \norm{\mW\vx - \mW\vx_i}_2 \leq \norm{\vx - \vx_i}_2 \leq \rho \epsilon_\gD.
\end{align}
Moreover, putting \cref{eq:epsilon13n} to \cref{eq:f1distn} results $\norm{f_1(\vx) - f_1(\vx_i)}_2 \leq \frac{5}{6} \epsilon_{\gD''}$. 
Since $f_2$ $\frac{5}{6}$-robustly memorize $\gD''$, we have
\begin{align*}
    f(\vx) = f_2(f_1(\vx)) = f_2(f_1(\vx_i)) = y_i. 
\end{align*}
In particular, $f(\vx) = y_i$ for any $\vx \in \gB_2(\vx_i, \rho \epsilon_{\gD})$, concluding that $f$ is a $\rho$-robust memorizer $\gD$.

Regarding the number of parameters to construct $f$, notice that $f_1$ consists of $(d+1)m = \tilde{O}(Nd\rho^2)$ parameters as $m = \tilde{O}(N\rho^2)$. $f_2$ consists of $\tilde{O}(Nm^2) = \tilde{O}(N^3\rho^4)$ parameters. 
Since the case V assumes $N < d$ and large $\rho$ regime deals with $\rho \geq \frac{1}{5\sqrt{d}}$, we have 
\begin{align*}
    Nd\rho^2 & \leq 25Nd^2\rho^4,\\
    N^3\rho^4 & \leq Nd^2\rho^4.
\end{align*}

Therefore, $f$ in total consists of $\tilde{O}(N^3\rho^4 + Nd\rho^2) = \tilde{O}(Nd^2\rho^4)$ number of parameters. 
Moreover, since $f$ has the same width as $f_2$ and depth one larger than the depth of $f_2$, it follows that width of $f$ is $O(m) = \tilde{O}(\rho^2 N) = \tilde{O}(\rho^2 d)$ and the depth of $f$ is $\tilde{O}(N)$. 
This proves the theorem for the last case. 

\paragraph{Bounding the Bit Complexity.}
Now, let us analyze how we can implement the above network under a finite precision. 
We have demonstrated that for every five cases, the depth $\tilde{O}(N)$ and width $\rho^2d$ suffice for constructing $f$ that robustly memorizes $\gD$. 

Let $R: = \max_{i \in [N]} \norm{\vx_i}_2 + \mu$. Let $D = \tilde{O}(\rho^2d)$ and $L = \tilde{O}(N)$ denote the width and the depth of the constructed network. 
Let $M$ be the maximum absolute value of the parameter used for constructing $f$.
Finally let $\nu = 0.1$. 
By \cref{lem:bit-complexity}, there exists $\bar{f}$ with $\tilde{O}(N)$ bit complexity, that approximates $f$ uniformly over $\gB_2(\vzero, R)$ with error at most $\nu$ , where $\tilde{O}(\cdot)$ here hides polylogarithmic terms in $D, M, L$ and $R$. 
i.e.
\begin{align*}
    \max_{\norm{\vx}_2 \leq R} \abs{\bar{f} - f} \leq \nu.
\end{align*}

Finally, to handle the error $\nu$, we use the floor function approximation from \cref{lem:floor-function-approx}. 
By \cref{lem:floor-function-approx} with $\gamma = 1/10$, there exists $\overline{\mathrm{Floor}}:\sR \rightarrow\sR$ with depth $n := \ceil{\log_2 (C+1)}$ and width $5$ such that $\overline{\mathrm{Floor}}(x) = \floor{x}$ for all $x \in [0, C+1)$ with $x - \floor{x} > \gamma = 0.1$. 
Moreover, the lemma guarantees that $\overline{\mathrm{Floor}}$ can be exactly implemented with $O(n + \log 10) = O(\log C) = \tilde{O}(1)$ bit complexity. 

Thus, if $y' \in \sR$ satisfies $|y' - y| \leq \nu = 0.1$ for some $y \in [C]$, then
\begin{align*}
    y' + 0.5 \in [y - 0.1 + 0.5, y + 0.1 + 0.5] \subseteq (y + 0.1, y+1).
\end{align*}
In particular, $\floor{y' + 0.5} = y$ and $\floor{y' + 0.5} - (y' + 0.5) \in (0.1, 1)$ so that $\mathrm{Floor}(y') = \floor{y'} = y$. 
For $\vx \in \gB_2(\vx_i, \mu)$, we have $f(\vx) = y_i$ so that the approximation $\bar{f}$ outputs $y' = \bar{f} (\vx)$ such that
\begin{align*}
    \abs{\bar{f}(\vx) - f(\vx)} = \abs{y' - y_i}\leq \nu. 
\end{align*}
This shows $\overline{\mathrm{Floor}}(\bar{f}(\vx)) = \overline{\mathrm{Floor}}(y') = y_i$.
Moreover, $\overline{\mathrm{Floor}} \circ \bar{f}$ can be implemented with parameters, width, and depth of the same scale as $\bar{f}$, and bit complexity $\tilde{O}(N)$. 
This finishes the proof for the bit complexity. 
\end{proof}

\subsection{Lemmas for Lattice Mapping}

\begin{lemma}[Avoiding Being Near Grid]
\label{lem:distance_lattice}
    Let $N,d \in \mathbb{N}$ and $\vx_1, \cdots, \vx_N \in \R^d$. Then, there exists a translation vector $\vb \in \R^d$ such that:
    $$
    \mathrm{dist}(x_{i,j} - b_j, \mathbb{Z}) \geq \frac{1}{2N}, \quad \forall i \in [N], j \in [d],
    $$  
    i.e., the translated points $\{\vx_i - \vb\}_{i \in [N]}$ are coordinate-wise $\frac{1}{2N}$-far from the integer lattice.

    Moreover, there exists $\bar \vb \in \R^d$ which has bit complexity $\ceil{\log(6N)}$ and satisfies
    $$
    \mathrm{dist}(x_{i,j} - b_j, \mathbb{Z}) \geq \frac{1}{3N}, \quad \forall i \in [N], j \in [d].
    $$  
\end{lemma}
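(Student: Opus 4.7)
The plan is to decouple the problem across coordinates: since $\dist(x_{i,j}-b_j,\sZ)\ge c$ depends only on the $j$-th coordinate, I choose each $b_j$ independently and stack them into $\vb$ (resp.\ $\bar\vb$). Because $\dist(\cdot,\sZ)$ is invariant under integer shifts, I may reduce the search for $b_j$ to the torus $\sT:=\sR/\sZ$ and view $x_{1,j}\bmod 1,\dots,x_{N,j}\bmod 1$ as $N$ points on $\sT$ (of total length $1$).

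For the first claim, let $B_j:=\bigcup_{i=1}^{N}(x_{i,j}-\tfrac{1}{2N},\,x_{i,j}+\tfrac{1}{2N})\subset\sT$ be the set of forbidden translations at coordinate $j$. Each arc has length $1/N$, so a naive measure union bound only yields $|B_j|\le 1=|\sT|$, which is not strict. I close this gap by a dichotomy: if two arcs overlap, then inclusion-exclusion forces $|B_j|<1$ and the complement has positive measure; otherwise the arcs are pairwise disjoint, and since $\sT$ is connected while every single arc has length $1/N<1$ for $N\ge 2$ (and equals $\sT$ minus one point when $N=1$), their disjoint union cannot equal the connected torus. In either case $\sT\setminus B_j\ne\emptyset$, and I pick any $b_j\in\sT\setminus B_j$.

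For the second claim, I restrict to the finite candidate set $C:=\{k/(6N):k=0,\dots,6N-1\}\subset[0,1)$, every element of which has bit complexity $\lceil\log(6N)\rceil$. The bad arcs now have length $2/(3N)=4/(6N)$, i.e.\ exactly four grid spacings of $1/(6N)$; the key observation is that any open real interval of length $4$ contains at most $4$ integers, so each bad arc contains at most $4$ points of $C$. Hence at most $4N$ of the $6N$ candidates are forbidden at coordinate $j$, leaving at least $2N>0$ good candidates, and I set $\bar b_j$ to any such good one.

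The main subtlety is the tightness of the first claim: in extremal configurations (e.g.\ $x_{i,j}\equiv(i-1)/N\bmod 1$) the bad arcs exactly tile the torus modulo a finite set of boundary points, so a purely measure-based union bound narrowly fails; the connectedness/overlap dichotomy is what rescues the argument. The second claim, by contrast, is a clean pigeonhole once the spacing $1/(6N)$ is chosen so that $(\text{arc length})/(\text{spacing})=4$ is strictly less than $|C|/N=6$.
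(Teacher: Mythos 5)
Your proof of the first claim is correct but follows a genuinely different route from the paper: the paper sorts the fractional parts $\{x_{i,j}\}$ in each coordinate, invokes a pigeonhole bound $g_j \ge 1/N$ on the largest cyclic gap, and takes $b_j$ to be the midpoint of that gap, which yields an explicit translation; you instead argue non-constructively that the forbidden set $B_j$ (a union of $N$ open arcs of length $1/N$ on $\sT$) cannot be all of $\sT$, via the overlap-vs-disjoint dichotomy and connectedness. Both arguments are sound, and your handling of the tight case (arcs exactly tiling the torus) is exactly the point where a naive measure bound fails, so the dichotomy is the right patch. For the second claim the paper simply truncates the $b_j$ from the first claim to $\ceil{\log(6N)}$ binary digits, incurring error at most $1/(6N)$ and hence degrading the margin from $1/(2N)$ to $1/(3N)$; your grid-pigeonhole argument is independent of the first claim, which is a nice simplification in spirit.

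However, there is a small but real gap in your second claim as written: the paper defines bit complexity as the number of bits per parameter under \emph{fixed-point} precision, i.e.\ the candidates must be exactly representable binary fixed-point numbers. Your candidate set $C=\{k/(6N): k=0,\dots,6N-1\}$ consists of rationals with denominator $6N$, which are generally not dyadic (e.g.\ $1/6$ has no finite binary expansion), so the claim that every element of $C$ ``has bit complexity $\ceil{\log(6N)}$'' does not hold under the paper's convention. The fix is immediate and preserves your argument: take the dyadic grid $C=\{k\,2^{-m}: k=0,\dots,2^m-1\}$ with $m=\ceil{\log_2(6N)}$, so every candidate is an exact $m$-bit number; each bad arc of length $2/(3N)$ then contains at most $(2/(3N))2^m+1$ grid points, so the total number of forbidden candidates is at most $\tfrac{2}{3}2^m+N<2^m$ because $2^m\ge 6N>3N$, and the pigeonhole goes through. (Alternatively, one can follow the paper and round the $b_j$ from your first claim to $m$ binary digits, losing at most $2^{-m}\le 1/(6N)$ of margin.)
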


\begin{proof}

For each coordinate $j \in [d]$, consider the set $\{x_{i,j}\}_{i \in [N]}$ of all $j$-th coordinate values. 
For $x \in \sR$, let $\{x\}:=x - \floor{x}$ denote the fractional part of $x$. We consider the collection of fractional parts $\left\{\{x_{i,j}\}\right\}_{i \in [N]}$.
Without loss of generality, we may assume $0 \leq \{x_{1,j}\} < \{x_{2,j}\} < \cdots < \{x_{N,j}\} < 1$. 

For each $j \in [d]$, define the maximum fractional gap $g_j \in [0, 1)$ as 
$$
g_j := \max \left( \max_{i \in [N-1]} \left( \{x_{i+1,j}\} - \{x_{i,j} \}\right),\; 1 - \{x_{N,j}\} +\{x_{1,j}\} \right).
$$
We claim: 
$$
g_j \geq \frac{1}{N}, \textrm{ for all } j \in [d].
$$
Suppose for a contradiction that $g_j < \frac{1}{N}$ for some $j \in [d]$. Then, we have by the definition of $g_j$,
\begin{align}
\{x_{N,j}\} - \{x_{1,j}\} = \sum_{i=1}^{N-1} (\{x_{i+1,j}\} - \{x_{i,j}\}) \leq (N-1)g_j < \frac{N-1}{N}, \label{eq:fracless}\\
1 - \{x_{N,j}\}+\{x_{1,j}\} \leq g_j < \frac{1}{N} \iff \{x_{N,j}\} - \{x_{1,j}\} > \frac{N-1}{N} \label{eq:fracgtr}.   
\end{align}
\cref{eq:fracless,eq:fracgtr} lead to a contradiction, proving the claim. 

Now, we define the translation of the $j$-th coordinate, $b_j \in \R$, based on the location where the maximum $g_j$ is attained. If the maximum in the definition of $g_j$ occurs by the difference of some consecutive pair $(\{x_{i',j}\}, \{x_{i'+1,j}\})$ satisfying $\{x_{i'+1,j}\} - \{x_{i',j}\} = g_j$, we set
$$b_j = \frac{\{x_{i',j}\} + \{x_{i'+1,j}\}}{2}.$$
In this case, $\mathrm{dist}(x_{i', j} - b_j, \sZ) = \mathrm{dist}(x_{i'+1, j} - b_j, \sZ) = \frac{1}{2}g_j \geq \frac{1}{2N}$. For other $i$, $\mathrm{dist}(x_{i, j} - b_j, \sZ)$ is even larger by the order relation within $\{\{x_{i, j}\}\}_{i \in [N]}$.

Otherwise, if the maximum in the definition of $g_j$ is attained as $1 - \{x_{N,j}\} + \{x_{1,j}\} = g_j$, we define
$$b_j = \frac{(\{x_{N,j}\} - 1) + \{x_{1,j}\}}{2}.$$
In this case, $\mathrm{dist}(x_{1, j} - b_j, \sZ) = \mathrm{dist}(x_{N, j} - b_j, \sZ) = \frac{1}{2}g_j \geq \frac{1}{2N}$. For other $i$, $\mathrm{dist}(x_{i, j} - b_j, \sZ)$ is even larger by the order relation within $\{\{x_{i, j}\}\}_{i \in [N]}$. 

We define the full translation vector $\vb = (b_1, \dots, b_d) \in \R^d$. Then the translated points $\{\vx_i - \vb\}_{i \in [N]}$ satisfy:
\[
\mathrm{dist}(x_{i,j} - b_j, \mathbb{Z}) \geq \frac{1}{2N}, \quad \text{for all } i \in [N], j \in [d].
\]
Intuitively, $b_j$ is chosen as the midpoint of the widest gap between fractional values, ensuring that all fractional parts after the translation are at least $\frac{g_j}{2}$ away from the nearest integer. Therefore, the translated points are coordinate-wise $\frac{1}{2N}$-far from lattice points.

We define $\bar \vb$ such that each of its coordinates is equal to the first $\ceil{\log(6N)}$ bits of the corresponding coordinate of $\vb$. Then, for all $j \in [d]$, we have $|b_j-\bar b_j| \le \frac{1}{2^{\log(6N)}}\le \frac{1}{6N}$.
Using $\bar \vb$ with bit complexity $\ceil{\log(6N)}$, we can still ensure the distance $\frac{1}{3N}$ from the lattice points.
$$
\mathrm{dist}(x_{i,j} - \bar b_j, \mathbb{Z}) \ge \mathrm{dist}(x_{i,j} - b_j, \mathbb{Z})-|b_j-\bar b_j|
\geq \frac{1}{2N}-\frac{1}{6N}=\frac{1}{3N}, \quad \text{for all } i \in [N], j \in [d].
$$

\end{proof}

The following lemma shows that we can approximate the floor function using a logarithmic number of ReLU units with respect to the length of the interval of interest. 
\begin{restatable}[Floor Function Approximation]{lemma}{floorfunction}
\label{lem:floor-function-approx}
For any $n \in \mathbb{N}$ and any $\gamma \in (0, 1)$, there exists an $n$-layer network $\overline{\fl_n}$ with width 5 and $5n$ ReLU units such that
\begin{align*}
    \overline{\fl_n} (x) = \floor{x} \textrm{ for all } x \in [0, 2^n) \textrm{ such that } x - \floor{x} > \gamma. 
\end{align*}
Moreover, if $\gamma = \frac{1}{t}$ for some $t \in \sN$, then $\overline{\fl_n}$ can be exactly implemented with $2n + \log_2 t$ bit complexity under a fixed point precision. 
\end{restatable}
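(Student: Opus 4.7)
The plan is to implement $\overline{\fl_n}$ via recursive bit-by-bit extraction of $\floor{x}$, with one bit extracted per layer. The network will carry two scalar state variables through its layers: a running remainder $y_k$ and an accumulated sum $s_k$, initialized as $y_0=x$ and $s_0=0$. At layer $k=1,\dots,n$, writing $p_k := 2^{n-k}$, I approximate the true top bit $b_k := \mathbf{1}[y_{k-1}\geq p_k]$ by the strictly one-sided ReLU ramp
\[
\tilde{b}_k(y) := \sigma\bigopen{(y-p_k)/\gamma} - \sigma\bigopen{(y-p_k)/\gamma - 1},
\]
which equals $0$ on $(-\infty,p_k]$, interpolates linearly on $[p_k,p_k+\gamma]$, and equals $1$ on $[p_k+\gamma,\infty)$. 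The state is then updated as $y_k = y_{k-1} - p_k\tilde{b}_k$ and $s_k = s_{k-1} + p_k\tilde{b}_k$, and the network outputs $s_n$.

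Each layer is realized using at most four ReLU units: two units $\sigma(y_{k-1}),\sigma(s_{k-1})$ to carry the non-negative state, and two units to compute $\tilde{b}_k$. Since the pre-activations at layer $k+1$ are linear combinations of the four outputs of layer $k$ (plus a constant bias), this assembles into an $n$-layer ReLU network of width $4\le 5$ with $4n\le 5n$ total ReLU units, followed by a single linear readout of $s_n$.

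Correctness will follow by an inductive invariant in the good region $x-\floor{x}>\gamma$: writing $\delta := x-\floor{x}\in(\gamma,1)$ and $m_k := \floor{x}\bmod 2^{n-k}$, I claim that $y_k = m_k + \delta$ and $s_k = \floor{x}-m_k$ for every $k$. The induction step reduces to the single geometric check that $y_{k-1}$ avoids the ramp's transition interval $(p_k,p_k+\gamma)$, so that $\tilde{b}_k(y_{k-1})$ equals the true bit $b_k \in \{0,1\}$ exactly. If $b_k=0$, then $m_{k-1}\le p_k-1$, hence $y_{k-1} = m_{k-1}+\delta < p_k$ (using $\delta<1$), giving $\tilde{b}_k=0$; if $b_k=1$, then $m_{k-1}\ge p_k$, hence $y_{k-1}\ge p_k+\delta > p_k+\gamma$ (using $\delta>\gamma$), giving $\tilde{b}_k=1$. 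After $n$ layers this yields $s_n = \sum_{k=1}^n b_k p_k = \floor{x}$ exactly, and also $y_n=\delta\ge 0$ so every intermediate ReLU passes its argument through unchanged.

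The main delicacy, and the reason for placing the ramp strictly to the right of $p_k$ rather than centering it there, is that $\delta$ can be arbitrarily close to $1$ while simultaneously $m_{k-1}=p_k-1$, putting $y_{k-1}$ just below $p_k$; a two-sided ramp centered at $p_k$ would produce a fractional $\tilde{b}_k$ here and destroy the invariant, whereas the one-sided ramp cleanly gives $\tilde{b}_k=0$. For the bit-complexity claim with $\gamma = 1/t$, every weight and bias in the construction is of the form $\pm p_k$, $\pm p_k t$, $\pm t$, or $\pm 1$, each representable exactly in fixed-point arithmetic with at most $n+\log_2 t\le 2n+\log_2 t$ bits, as claimed.
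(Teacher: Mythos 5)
Your construction is correct, and it takes a genuinely different route from the paper's. The paper implements $\fl_n(x)=\Delta^{n}(-x/2^{n}+1)+x-1$ by iterating a ReLU approximation $\overline{\Delta_{\gamma,n}}$ of Telgarsky's doubling map on the rescaled input, and the induction tracks the shrinking good set $\bigcup_k\bigl(\tfrac{k-1}{2^n},\tfrac{k-\gamma}{2^n}\bigr)$, which forces the per-layer tolerance to be $\gamma_n=\gamma/2^{n}$. You instead extract the binary expansion of $\floor{x}$ most-significant-bit first, carrying the remainder $y_k=m_k+\delta$ and the partial sum $s_k$ through the layers; because the thresholded quantity is always an integer plus the fixed fractional part $\delta$, the ramp width can stay equal to $\gamma$ at every layer, and the invariant check reduces to the two clean inequalities $m_{k-1}+\delta<p_k$ (when $b_k=0$, using $\delta<1$) and $m_{k-1}+\delta>p_k+\gamma$ (when $b_k=1$, using $\delta>\gamma$). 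Your observation about placing the ramp strictly to the right of $p_k$ is exactly the right delicacy: a centered ramp would fail when $\delta\to 1$ with $m_{k-1}=p_k-1$. The resource accounting also checks out: four units per layer (two to carry the nonnegative state $y_{k-1},s_{k-1}$ through $\sigma$, two for the ramp) gives width $4\le 5$ and $4n\le 5n$ units, and with $\gamma=1/t$ all weights and biases are integers of magnitude at most $t\cdot 2^{n}$, within the claimed $2n+\log_2 t$ bits. Both approaches yield the same bounds; yours has a more transparent invariant, while the paper's sawtooth composition is the one reused elsewhere in its bit-extraction machinery (Lemma A.7 of \citet{vardi2021optimalmemorizationpowerrelu}).
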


\begin{proof}
To reconcile the discontinuity of the floor function with the continuity of ReLU networks, we first define a discontinuous ideal building block that exactly replicates the floor function on the target interval $[0, 2^n)$.  
We then approximate this building block using a continuous neural network with ReLU activations.

The ideal building block $\Delta$ is defined as:
\begin{align*}
    \Delta(x) := \begin{cases}
        2x & \textrm{ if } x \in (0, \frac{1}{2}],\\
        2x - 1 & \textrm{ if } x \in (\frac{1}{2}, 1],\\
        0 & \textrm{ otherwise}.
    \end{cases}
\end{align*}

For $n \in \mathbb{N}$, define the function $\mathrm{Floor}_n$ by:
\begin{align*}
    \fl_n (x)= \Delta^n( - \frac{x}{2^n} + 1)  + x - 1.
\end{align*}

We will show by induction that $\fl_n = \floor{x}$ for all $x \in [0, 2^n)$.

For the base case $n = 1$, 
\begin{align*}
    \fl_1(x) = \Delta(- \frac{x}{2} + 1) + x - 1 = \begin{cases}
        2 (- \frac{x}{2} + 1) - 1 + x - 1  = 0 & \textrm{ if } x \in [0, 1), \\
        2 ( - \frac{x}{2} + 1) + x - 1 = 1 & \textrm{ if } x \in [1, 2), \\
        0 + x - 1 = x - 1 & \textrm{ otherwise}.
    \end{cases}
\end{align*}
This proves the base case: for all $x \in [0, 2)$, we have $\mathrm{Floor}_1(x) = \lfloor x \rfloor$.

For the inductive step, assume that $\mathrm{Floor}_n(x) = \lfloor x \rfloor$ holds for all $x \in [0,\; 2^n)$.  
We aim to prove that $\mathrm{Floor}_{n+1}(x) = \lfloor x \rfloor$ for all $x \in [0,\; 2^{n+1})$.

Recall that:
\begin{align*}
    \Delta(-\frac{x}{2^{n+1}} + 1) = \left\{
\begin{array}{lll}
-\dfrac{x}{2^n} + 1 & \text{if } x \in [0, 2^n) \quad &(\Leftrightarrow -\dfrac{x}{2^{n+1}} + 1 \in (\tfrac{1}{2}, 1]), \\
-\dfrac{x}{2^n} + 2 = -\dfrac{x - 2^n}{2^n} + 1 & \text{if } x \in [2^n, 2^{n+1}) \quad &(\Leftrightarrow -\dfrac{x}{2^{n+1}} + 1 \in (0, \tfrac{1}{2}]), \\
0 & \text{otherwise}.
\end{array}
\right.
\end{align*}
Thus, we have

\begin{align*}
    \fl_{n+1}(x) & = \Delta^{n+1}(- \frac{x}{2^{n+1}} + 1) + x - 1\\
    & = \Delta^n(\Delta(- \frac{x}{2^{n+1}} + 1)) + x - 1\\
    & = \begin{cases}
        \Delta^n(-\frac{x}{2^n} + 1) + x - 1 = \fl_{n}(x) =\floor{x} & \textrm{ if } x \in [0, 2^n),\\
        \Delta^n(-\frac{x - 2^n}{2^n} + 1) + x - 1 = \fl_{n}(x - 2^n)+2^n = \floor{x} & \textrm{ if } x \in [2^n, 2^{n+1}),\\
        \Delta^n(0) + x - 1 = x - 1 & \textrm{ otherwise}.
    \end{cases}
\end{align*}
Therefore, by induction, 
$$
\fl_n = \floor{x} \text{ for all } x \in [0, 2^n).
$$

Next, we define the $\relu$ approximation $\overline{\Delta_{\gamma, n}}$ of the discontinuous block $\Delta$ as:
\begin{align}
    \label{eq:delta-module-approx}
    \overline{\Delta_{\gamma, n}}(x) := & 2 \sigma(x) - \frac{1}{\gamma_n} \sigma\left(x - \frac{1}{2} + \gamma_n\right) + \frac{1}{\gamma_n}\sigma\left(x - \frac{1}{2}\right)\nonumber\\
    & - \frac{1}{\gamma_n}\sigma\left(x -  1 + \gamma_n\right) + \bigopen{\frac{1}{\gamma_n} - 2}\sigma\bigopen{x - 1},
\end{align}
where $\gamma_n =\frac{\gamma}{2^n}$.
Check \cref{fig:deltafloor} for an illustration of how $\overline{\Delta_{\gamma, n}}$ looks like on $[0, 1]$. 
It is straightforward to check that 
$$\overline{\Delta_{\gamma, n}}(x) = \Delta(x) \text{ for all } x \in [0, \frac{1}{2} - \gamma_n] \cup [\frac{1}{2}, 1 - \gamma_n].$$

\begin{figure}[H]
    \centering
    \scalebox{0.9}{\begin{tikzpicture}[scale=4]
  \draw[->] (-0.05,0) -- (1.1,0) node[right] {$x$};
  \draw[->] (0,-0.05) -- (0,1.2) node[above] {$y$};

  \draw[thick] (0,0) -- (0.43,0.946) -- (0.5,0);
  \draw[dotted] (0.43,0) -- (0.43,0.946);
  \draw[dotted] (0.5,0) -- (0.5,1.1);
  \draw[dotted] (0,0) -- (0.5,1.1);

  \draw[thick] (0.5,0) -- (0.93,0.946) -- (1,0);
  \draw[dotted] (0.93,0) -- (0.93,0.946);
  \draw[dotted] (1,0) -- (1,1.1);
  \draw[dotted] (0.5,0) -- (1,1.1);

  \draw (0.43,-0.02) -- (0.43,0.02);
  \node[below] at (0.34,-0.02) { $\tfrac{1}{2} - \gamma_n$};

  \draw (0.5,-0.02) -- (0.5,0.02);
  \node[below] at (0.5,-0.02) { $\tfrac{1}{2}$};

  \draw (0.93,-0.02) -- (0.93,0.02);
  \node[below] at (0.84,-0.02) {$1 - \gamma_n$};

  \draw (1,-0.02) -- (1,0.02);
  \node[below] at (1,-0.02) {$1$};

  \draw (-0.02,1.1) -- (0.02,1.1);
  \node[left] at (0,1.1) { $1$};
  \draw[dotted] (0,1.1) -- (1,1.1);

  \node at (0.72,1) {$y=\overline{\Delta_{\gamma, n}}(x)$};

\end{tikzpicture}}
    \caption{Plot of the ReLU-based approximation $\overline{\Delta_{\gamma, n}}(x)$ of the ideal discontinuous building block $\Delta(x)$ on $[0, 1]$.}
    \label{fig:deltafloor}
\end{figure}
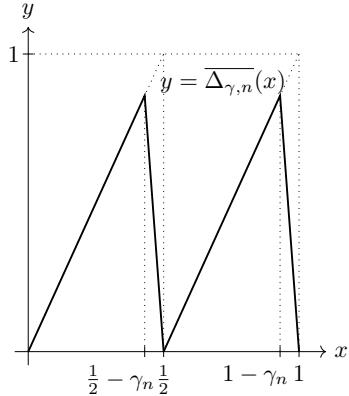

We now explain why this approximation remains valid under recursive composition up to depth $n$. 

Let us define the variable $x' := - \frac{x}{2^n} + 1$, so that $x = 2^n(1-x')$ and $x' \in (0,1]$. Our target function is:
$$\fl_n (x)= \Delta^n( - \frac{x}{2^n} + 1)  + x - 1 = \Delta^n(x')  + x - 1.$$
We are given the assumption $x-\floor{x} >\gamma$, and we aim to express this in terms of $x'$ to ensure $\overline{\Delta_{\gamma, n}}^{n}(x') = \Delta^{ n}(x')$. We proceed step-by-step:
\begin{align*}
    \quad &x-\floor{x} >\gamma \\
    \iff &2^n(1-x')-\floor{2^n(1-x')} > \gamma \\
    \iff &-2^n x'-\floor{-2^n x'} > \gamma \\
    \iff &-2^n x'+\ceil{2^n x'} > \gamma \\
    \iff &2^n x' <\ceil{2^n x'}- \gamma \\
    \iff &2^n x' \in (\ceil{2^n x'}-1, \ceil{2^n x'}- \gamma) \\
    \iff &2^n x' \in \bigcup_{k \in \mathbb{Z}} \left(k-1, k-\gamma\right) \\
    \iff &x' \in \bigcup_{k \in \mathbb{Z}} \left(\frac{k-1}{2^n}, \frac{k-\gamma}{2^n}\right).
\end{align*}

Since $x' \in (0,1]$, we only need to consider $k \in [2^n]$, i.e., 
$$x' \in \bigcup_{k \in [2^n]} \left(\frac{k-1}{2^n}, \frac{k-\gamma}{2^n}\right).$$

We will now prove by induction on $n$ the following statement: 
$$\textrm{For any } \gamma \in (0, 1), \, \overline{\Delta_{\gamma, n}}^{n}(x) = \Delta^{ n}(x) \text{ for } x \in \bigcup_{k \in [2^n]} \left(\frac{k-1}{2^n}, \frac{k-\gamma}{2^n}\right).$$

For the base case $n = 1$, by construction of $\overline{\Delta_{\gamma, 1}}(x)$, we know $\overline{\Delta_{\gamma, 1}}(x) = \Delta(x)$ for all $x \in [0, \tfrac{1}{2} - \tfrac{\gamma}{2}] \cup [\tfrac{1}{2}, 1 - \tfrac{\gamma}{2}]$, which contains the union $\bigcup_{k \in [2]} \left(\frac{k-1}{2}, \frac{k-\gamma}{2}\right).$ Hence the base case holds.

For the inductive step, assume the claim holds for $n$. We show it holds for $n+1$.
By using $\gamma/2$ in place of $\gamma$ for the inductive hypothesis, we have
\begin{align}
\label{eq:ind-hyp}
    \overline{\Delta_{\gamma/2, n}}^n(x) = \overline{\Delta_{\gamma, n+1}}^n(x) = \Delta^n(x) \textrm{ for all } x \in \bigcup_{k \in [2^n]}\bigopen{\frac{k - 1}{2^n}, \frac{k - \gamma/2}{2^n}}.
\end{align}

Let $ x \in \bigcup_{k \in [2^{n+1}]} \left(\frac{k-1}{2^{n+1}}, \frac{k-\gamma}{2^{n+1}}\right)$. We analyze two cases based on $x \in [0,\frac{1}{2})$ or $x \in [\frac{1}{2},1)$.

First, consider the case $ x \in \bigcup_{k \in [2^{n}]} \left(\frac{k-1}{2^{n+1}}, \frac{k-\gamma}{2^{n+1}}\right) \subset [0,\frac{1}{2})$. Then $x<\frac{k-\gamma}{2^{n+1}} \leq \frac{1}{2}-\gamma_{n+1}$, so $\overline{\Delta_{\gamma, n+1}}(x)=2x$.
Let $y:=2x$. Then:
$$ y \in \bigcup_{k \in [2^{n}]} \left(\frac{k-1}{2^{n}}, \frac{k-\gamma}{2^{n}}\right) \subseteq \bigcup_{k \in [2^{n}]} \left(\frac{k-1}{2^{n}}, \frac{k-\gamma/2}{2^{n}}\right).$$
Thus, we have
\begin{align*}
    \overline{\Delta_{\gamma, n+1}}^{n+1}(x) 
    = & \overline{\Delta_{\gamma, n+1}}^{n}(\overline{\Delta_{\gamma, n+1}}(x)) \\
    = & \overline{\Delta_{\gamma, n+1}}^{n}(2x) \\
    = &\overline{\Delta_{\gamma, n+1}}^{n}(y)  \\
    \overset{(a)}{=}&\Delta^{n}(y)  \\
    = & \Delta^{n}(2x) \\
    = & \Delta^{n+1}(x),\nonumber
\end{align*}
where the equation (a) follows by \cref{eq:ind-hyp}.

Second, consider the case $ x \in \bigcup_{k \in [2^{n+1}]\backslash [2^n]} \left(\frac{k-1}{2^{n+1}}, \frac{k-\gamma}{2^{n+1}}\right) \subset [\frac{1}{2},1)$. Then $\frac{1}{2} \leq x<\frac{k-\gamma}{2^{n+1}} \leq 1-\gamma_{n+1}$, so $\overline{\Delta_{\gamma, n+1}}(x)=2x-1$.
Let $y:=2x-1$. Then:
$$ y \in \bigcup_{k \in [2^{n+1}]\backslash [2^n]} \left(\frac{k-2^{n}-1}{2^{n}}, \frac{k-2^{n}-\gamma}{2^{n}}\right) = \bigcup_{k \in [2^n]} \left(\frac{k-1}{2^{n}}, \frac{k-\gamma}{2^{n}}\right) \subseteq \bigcup_{k \in [2^n]} \left(\frac{k-1}{2^{n}}, \frac{k-\gamma/2}{2^{n}}\right).$$
Thus, we have
\begin{align*}
    \overline{\Delta_{\gamma, n+1}}^{n+1}(x) 
    = & \overline{\Delta_{\gamma, n+1}}^{n}(\overline{\Delta_{\gamma, n+1}}(x))\\
    = & \overline{\Delta_{\gamma, n+1}}^{n}(2x-1)\\
    = & \overline{\Delta_{\gamma, n+1}}^{n}(y)\\
    \overset{(a)}{=} & \Delta^{n}(y)\\
    = &\Delta^{n}(2x-1)\\
    = & \Delta^{n+1}(x),
\end{align*}
where the equation (a) follows by \cref{eq:ind-hyp}. 

Therefore, by induction, we have shown that for any $\gamma \in (0, 1)$ and any $n \in \sN$, 
$$
\overline{\Delta_{\gamma, n}}^{\,n}(x') = \Delta^{\,n}(x') \quad \text{for all } x' \in \bigcup_{k \in [2^n]} \left( \frac{k-1}{2^n},\; \frac{k - \gamma}{2^n} \right).
$$

We now define the ReLU-based floor approximation by
\begin{align}
\label{eq:floor-approx}
    \overline{\fl_n}(x) := \overline{\Delta_{\gamma, n}}^{\,n} \left( -\frac{x}{2^n} + 1 \right) + x - 1.
\end{align}
Recall that the ideal target function is given by
$$
\fl_n(x) = \Delta^{\,n} \left( -\frac{x}{2^n} + 1 \right) + x - 1.
$$
Let us denote $x' := -\frac{x}{2^n} + 1$.
When $x - \floor{x} > \gamma$, the value $x'$ satisfies
$$
x' \in \bigcup_{k \in [2^n]} \left( \frac{k-1}{2^n},\; \frac{k - \gamma}{2^n} \right),
$$
so that $\overline{\Delta_{\gamma, n}}^{\,n}(x') = \Delta^{\,n}(x')$ by the result above.
Therefore, we conclude:
$$
\overline{\fl}_n(x) = \fl_n(x) = \floor{x} \quad \text{for all } x \in [0, 2^n) \text{ such that } x - \floor{x} > \gamma.
$$

Finally to prove the additional statement regarding the bit complexity, consider the case $\gamma = \frac{1}{t}$ for some $t \in \sN$. By \cref{eq:floor-approx}, the bit complexity to implement $\overline{\fl_n}$ is upper bounded by $n$ plus the bit complexity to implement $\overline{\Delta_{\gamma, n}}$. 
Now, it suffices to consider the bit complexity required to implement $\overline{\Delta_{\gamma, n}}$.
From \cref{eq:delta-module-approx}, observe that $1/\gamma_n = 2^n/\gamma = 2^n \times t$ for $\gamma = 1/t$. Since $t \in \sN$, this can be exactly implemented with $\log(2^n \times t) = n + \log_2 t$ bit complexity. 
Thus, $\overline{\fl_n}$ can be implemented exactly with $2n + \log_2 t$ bit complexity. 
\end{proof}

\subsection{Dimension Reduction via Careful Analysis of the Johnson-Lindenstrauss Lemma}
\label{app:jl lemma}

We begin with a lemma that states a concentration of the length of the projection.

\begin{restatable}[Lemma 15.2.2, \citet{matousek2013lectures}]{lemma}{conoflegnth}
\label{lem:con-of-length}
For a unit vector $\vx \in S^{d-1}$, let 
\begin{align*}
    \phi(\vx) = (x_1, x_2, \cdots, x_m)
\end{align*}
be the mapping of $\vx$ onto the subspace spanned by the first $m$ coordinates. Consider $\vx \in S^{d-1}$ chosen uniformly at random. Then, there exists $\beta$ such that $\norm{\phi(\vx)}_2$ is sharply concentrated around $\beta$,
\begin{align*}
    \sP[\norm{\phi(\vx)}_2 \geq \beta + t] \leq 2e^{-t^2d/2} \textrm{ and } \sP[\norm{\phi(\vx)}_2 \leq \beta - t] \leq 2e^{-t^2d/2},
\end{align*}
where for $m \geq 10\log d$, we have $\beta \geq \frac{1}{2}\sqrt{\frac{m}{d}}$. 
\end{restatable}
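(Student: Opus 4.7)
The plan is to combine Lévy's spherical concentration inequality with a second-moment computation. For the first claim (the two-sided tail bounds), first observe that the coordinate projection $\phi:S^{d-1} \to \R^m$ is $1$-Lipschitz, and composing with the Euclidean norm preserves Lipschitzness, so $f(\vx) := \norm{\phi(\vx)}_2$ is $1$-Lipschitz on $S^{d-1}$. Defining $\beta$ to be the median of $f$ under the uniform measure on $S^{d-1}$ and applying Lévy's isoperimetric inequality to $f$ immediately yields $\sP[|f(\vx) - \beta| \geq t] \leq 2 e^{-t^2 d /2}$, which decomposes into the two one-sided bounds in the statement.

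For the second claim, the strategy is to anchor $\beta$ to the (exactly computable) second moment. By rotational symmetry of the uniform measure on $S^{d-1}$, $\E[x_i^2] = 1/d$ for every $i$, so $\E[f^2] = m/d$. Combining this with a variance bound obtained by integrating the tail estimate from the first claim,
\begin{align*}
    \mathrm{Var}(f) \;\leq\; \E[(f-\beta)^2] \;=\; \int_0^\infty 2t\, \sP[|f-\beta|\geq t]\, dt \;\leq\; \int_0^\infty 4t\, e^{-t^2 d/2}\, dt \;=\; \tfrac{4}{d},
\end{align*}
gives $\E[f]^2 = \E[f^2] - \mathrm{Var}(f) \geq (m-4)/d$. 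Next, the median $\beta$ lies close to the mean, since $|\beta - \E[f]| \leq \E[|f - \beta|] \leq \sqrt{\E[(f-\beta)^2]} \leq 2/\sqrt{d}$ by Cauchy--Schwarz and the same variance bound. Combining these two inequalities yields $\beta \geq \sqrt{(m-4)/d} - 2/\sqrt{d}$.

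It remains to check that $\sqrt{(m-4)/d} - 2/\sqrt{d} \geq \tfrac{1}{2}\sqrt{m/d}$ under the hypothesis $m \geq 10\log d$. After clearing $\sqrt{d}$, this reduces to the scalar inequality $\sqrt{m-4} - \tfrac{1}{2}\sqrt{m} \geq 2$, which holds once $m$ exceeds an absolute constant; the hypothesis $m \geq 10\log d$ supplies this slack in the regime $d \gtrsim $ constant where the lemma is of interest.

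The main obstacle is the last verification: both the median--mean correction $2/\sqrt{d}$ and the target bound $\tfrac{1}{2}\sqrt{m/d}$ scale as $1/\sqrt{d}$, so one must carefully extract enough lower bound from $\sqrt{(m-4)/d}$. Sharpening the variance constant---for instance by splitting the tail integral at the threshold $t_0$ where the Gaussian bound becomes nontrivial and using $\sP[\,\cdot\,]\leq 1$ on $[0,t_0]$---would tighten the constants and reduce the required threshold on $m$, but is not conceptually required for the stated bound.
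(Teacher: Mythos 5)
The paper does not prove this lemma at all---it is imported verbatim as Lemma 15.2.2 of \citet{matousek2013lectures}---so the only meaningful comparison is with the textbook argument, and your route (Lévy concentration for the $1$-Lipschitz map $f(\vx)=\norm{\phi(\vx)}_2$ about its median $\beta$, then anchoring $\beta$ to the exactly computable second moment $\E[f^2]=m/d$) is exactly that standard argument. The Lipschitz claim, the identity $\E[f^2]=m/d$, the inequality $\Var(f)\le\E[(f-\beta)^2]$, and the median--mean comparison $\abs{\beta-\E f}\le\sqrt{\E[(f-\beta)^2]}$ are all correct.

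There is, however, a quantitative slip that makes your final verification not quite close as written. The one-sided tails are each bounded by $2e^{-t^2d/2}$, so the two-sided tail is bounded by $4e^{-t^2d/2}$ (capped at $1$), not $2e^{-t^2d/2}$; hence your integral gives $\E[(f-\beta)^2]\le 8/d$, not $4/d$. With $8/d$ the chain yields $\beta\ge\sqrt{(m-8)/d}-\sqrt{8/d}$, and the scalar inequality $\sqrt{m-8}-\sqrt{8}\ge\tfrac12\sqrt{m}$ requires $m\ge 48$, which is \emph{not} guaranteed by the hypothesis alone: since $m\le d$, the hypothesis $m\ge 10\log d$ is non-vacuous only when $d\ge 10\log d$, forcing $d$ (hence $m$) to be at least roughly $36$, and e.g.\ $m=47$, $d=100$ satisfies the hypothesis but fails $\sqrt{m-8}-\sqrt{8}\ge\tfrac12\sqrt m$ by a hair. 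The fix is precisely the refinement you mention in passing but dismiss as optional: bound $\E[(f-\beta)^2]\le\int_0^\infty 2t\min\{1,4e^{-t^2d/2}\}\,dt\le 5/d$ (split at $t_0=\sqrt{2\ln 4/d}$), after which the required threshold drops to about $m\ge 29$, and the implicit constraint $m\le d$ together with $m\ge 10\log d$ supplies $m\ge 36$ whenever the statement is non-vacuous. So the argument is conceptually sound and matches the cited proof, but you should (i) correct the factor of $2$ in the two-sided tail, (ii) actually carry out the capped-tail refinement rather than treat it as optional, and (iii) make explicit that $m\le d$ plus $m\ge 10\log d$ forces $m$ above the needed absolute constant, instead of appealing loosely to ``the regime where the lemma is of interest.''
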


Based on the above concentration inequality, we state the Johnson-Lindenstrauss lemma, in a version which reflects the benefit on the ratio of the norm preserved when the projecting dimension increases.
The proof follows that of Theorem 15.2.1 in \citet{matousek2013lectures} with a slight modification.  

\begin{restatable}[Strengthened Version of the Johnson-Lindenstrauss Lemma]{lemma}{dimensionreduction}
\label{lem:dimensionreduction}
    For $N \geq 2$, let $X \subseteq \sR^d$ be an $N$ point set. Then, for any $\alpha \in (0, 1)$ and $24 \alpha^{-2} \log N \leq m \leq d$, there exists a 1-Lipschitz linear mapping $\phi:\sR^d \rightarrow \sR^m$ and $\beta>0$ such that
    \begin{align}
    \label{eq:concentration-pair}
        (1 - \alpha)\beta \norm{\vx - \vx'}_2 \leq \norm{\phi(\vx) - \phi(\vx')}_2 \leq (1 + \alpha)\beta \norm{\vx - \vx'}_2,
    \end{align}
    for all $\vx, \vx' \in X$. Moreover, $\beta \geq \frac{1}{2}\sqrt{\frac{m}{d}}$ whenever $m \geq 10 \log d$.
\end{restatable}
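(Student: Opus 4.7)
The plan is to prove this by the probabilistic method with a random orthonormal projection, analogous to the classical Johnson--Lindenstrauss argument but keeping $\beta$ as a single common scaling factor rather than absorbing it into the $(1\pm\alpha)$ slack. Concretely, I would take $\phi := P_m \circ R$, where $R$ is drawn from the Haar measure on $O(d)$ and $P_m : \sR^d \to \sR^m$ is the coordinate projection onto the first $m$ entries. As a composition of an orthogonal rotation with a coordinate projection, $\phi$ is deterministically $1$-Lipschitz linear, so the Lipschitz requirement is satisfied for free and, as a side effect, $\beta \leq 1$.

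Next, I would invoke \cref{lem:con-of-length} pair by pair. For distinct $\vx, \vx' \in X$, set $\vu := (\vx-\vx')/\|\vx-\vx'\|_2 \in S^{d-1}$. By rotational invariance of the Haar measure, $R\vu$ is uniform on $S^{d-1}$, so $\|\phi(\vu)\|_2$ matches the distribution analyzed in \cref{lem:con-of-length}. Applying that lemma with $t = \alpha\beta$ yields
$$\sP\bigl[\bigl|\|\phi(\vu)\|_2 - \beta\bigr| > \alpha\beta\bigr] \leq 4\,e^{-\alpha^2 \beta^2 d/2},$$
where $\beta$ depends only on $d,m$ and is therefore the \emph{same} across all pairs. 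Since $\phi$ is linear, $\|\phi(\vx)-\phi(\vx')\|_2 = \|\vx-\vx'\|_2 \cdot \|\phi(\vu)\|_2$, so the two-sided bound \eqref{eq:concentration-pair} for this pair is exactly the complement of the bad event above. A union bound over the fewer than $N^2$ pairs shows that every pair satisfies \eqref{eq:concentration-pair} simultaneously with probability at least $1 - 4 N^2 e^{-\alpha^2 \beta^2 d/2}$, so a valid realization exists whenever this quantity is positive.

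To close the probabilistic argument I would plug in $\beta^2 d \geq m/4$ (from \cref{lem:con-of-length}, when $m \geq 10\log d$) together with $m \geq 24\alpha^{-2}\log N$ to obtain $\alpha^2 \beta^2 d \geq 6\log N$, which dominates $2\log(4N^2) = 4\log N + O(1)$ and makes the failure probability strictly less than $1$; the moreover clause is then just a restatement of the same lower bound on $\beta$. The main obstacle I foresee is the intermediate window $24\alpha^{-2}\log N \leq m < 10\log d$, where the clean inequality $\beta \geq \tfrac{1}{2}\sqrt{m/d}$ is not directly supplied by \cref{lem:con-of-length}. In that window I would either appeal to $\E\|\phi(\vu)\|_2^2 = m/d$ combined with the tail bound to recover a comparable lower bound on $\beta$ (at the cost of a mildly worse constant), or strengthen \cref{lem:con-of-length} slightly to cover small $m$. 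Conceptually the argument is identical throughout; tracking the constants so that the union bound succeeds uniformly across the permitted range of $m$ is the only delicate part.
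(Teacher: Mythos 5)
Your proposal is essentially the paper's own argument: a Haar-random orthonormal projection (equivalently, projection onto a random $m$-dimensional subspace), \cref{lem:con-of-length} applied to each normalized difference $(\vx-\vx')/\norm{\vx-\vx'}_2$ with deviation $t=\alpha\beta$, a union bound over the $N(N-1)/2$ pairs, and the inequalities $m \geq 24\alpha^{-2}\log N$ and $\beta \geq \tfrac12\sqrt{m/d}$ to push the per-pair failure probability to $O(N^{-3})$, with the ``moreover'' clause read off from the same lemma. Two minor remarks: you should use the exact pair count $N(N-1)/2$ rather than $N^2$ (your bound $4N^2e^{-\alpha^2\beta^2 d/2} \leq 4/N$ is not below $1$ for $N\leq 4$, whereas the paper's count gives $1-\tfrac1N<1$ for all $N\geq 2$), and the ``intermediate window'' $24\alpha^{-2}\log N \leq m < 10\log d$ that worries you is not handled any more carefully in the paper --- its proof also invokes $\beta\geq\tfrac12\sqrt{m/d}$ at that step without checking $m\geq 10\log d$, so your suggested patch (a lower bound on $\beta$ valid for all $m$, e.g.\ via $\E\norm{\phi(\vx)}_2^2=m/d$ plus the tail bound) would, if anything, tighten the argument rather than depart from it.
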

\begin{proof}
If $\vx = \vx'$, the inequality trivially holds for any $\phi$. 
Hence, it suffices to find $\phi$ that satisfies \cref{eq:concentration-pair} for all $\vx, \vx' \in X$ with $\vx \neq \vx'$.
Consider a random $m$-dimensional subspace $L$, and $\phi$ be a projection onto $L$.
For any fixed $\vx \neq \vx' \in X$, \cref{lem:con-of-length} implies that $\norm{\phi(\frac{\vx - \vx'}{\norm{\vx - \vx'}_2})}_2$ is concentrated around some constant $\beta$. i.e. 
\begin{align*}
    \sP\bigclosed{\norm{\phi\bigopen{\frac{\vx - \vx'}{\norm{\vx - \vx'}_2}}}_2  \geq (1 + \alpha)\beta} & \leq 2e^{-\alpha^2\beta^2 d / 2} \overset{(a)}{\leq} 2e^{-\alpha^2 m / 8} \overset{(b)}{\leq} 2e^{-3\log N} = \frac{2}{N^3} \overset{(c)}{\leq} \frac{1}{N^2},
\end{align*}
where we use $\beta \geq \frac{1}{2}\sqrt{\frac{m}{d}}$ at (a), $m \geq 24\alpha^{-2} \log N$ at (b), and $N \geq 2$ at (c). Similarly, 
\begin{align*}
    \sP\bigclosed{\norm{\phi\bigopen{\frac{\vx - \vx'}{\norm{\vx - \vx'}_2}}}_2  \leq (1 - \alpha)\beta} & \leq \frac{1}{N^2}.
\end{align*}
By linearity of $\phi$, we have $\phi(\vx - \vx') = \phi(\vx) - \phi(\vx')$. Taking the union bound over the two probability bounds above, the following event happens with probability at most $2/N^2$:
\begin{align}
\label{eq:concecntration-pair2}
    \norm{\phi(\vx) - \phi(\vx')}_2  \geq (1 + \alpha)\beta \norm{\vx - \vx'}_2 \textrm{ or } \norm{\phi(\vx) - \phi(\vx')}_2  \leq (1 - \alpha)\beta \norm{\vx - \vx'}_2.
\end{align}
Next, we take a union bound over all $\frac{N(N-1)}{2}$ pairs $\vx, \vx' \in X$ with $\vx \neq \vx'$. 
Then, the probability that \cref{eq:concecntration-pair2} happens for any $\vx, \vx' \in X$ with $\vx \neq \vx'$ is at most $\frac{2}{N^2} \times \frac{N(N - 1)}{2} = 1 - \frac{1}{N} < 1$. 
Hence, there exists a $m$-dimensional subspace $L$ such that \cref{eq:concecntration-pair2} does not hold for any pair of $\vx, \vx' \in X$.
In other words, there exists a $m$-dimensional subspace $L$ such that
\begin{align*}
        (1 - \alpha)\beta \norm{\vx - \vx'}_2 \leq \norm{\phi(\vx) - \phi(\vx')}_2 \leq (1 + \alpha)\beta \norm{\vx - \vx'}_2,
    \end{align*}
for all $\vx \neq \vx'$. 
By \cref{lem:con-of-length}, $\beta \geq \frac{1}{2}\sqrt{\frac{m}{d}}$ whenever $m \geq 10 \log d$.
This concludes the lemma. 
\end{proof}

\begin{restatable}[Lipschitz Projection with Separation]{proposition}{lipschitzprojwithsep}
\label{prop:lipschitzprojwithsep}
    For $N \geq 2$, let $\gD = \{(\vx_i, y_i)\}_{i=1}^N \in \mD_{d, N, C}$. 
    For any $\alpha \in (0, 1)$ and $24\alpha^{-2} \log N \leq m \leq d$, there exists 1-Lipschitz linear mapping $\phi:\sR^d \rightarrow \sR^m$ and $\beta > 0 $ such that $\gD' := \{(\phi(\vx_i), y_i)\}_{i=1}^N \in \mD_{m, N, C}$ satisfies
    \begin{align*}
        \epsilon_\gD' \geq (1 - \alpha)\beta \epsilon_\gD.
    \end{align*}
    In particular, $\gD' \in \mD_{m, N, C}$ whenever $\gD \in \mD_{d, N, C}$.
    Moreover, $\beta \geq \frac{1}{2}\sqrt{\frac{m}{d}}$ whenever $m \geq 10 \log d$. 
\end{restatable}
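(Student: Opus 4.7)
The plan is a direct application of \cref{lem:dimensionreduction} to the point set $X = \{\vx_i\}_{i=1}^N$, followed by verification that the guaranteed distance-preservation bound implies the desired separation bound on the projected dataset.

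First I would invoke \cref{lem:dimensionreduction} with the given $\alpha$ and $m$ (note that the hypotheses $24\alpha^{-2}\log N \leq m \leq d$ match the lemma's requirements and $N \geq 2$). This yields a 1-Lipschitz linear map $\phi:\sR^d \rightarrow \sR^m$ and a constant $\beta > 0$ such that
\begin{align*}
(1-\alpha)\beta\,\norm{\vx - \vx'}_2 \;\leq\; \norm{\phi(\vx) - \phi(\vx')}_2 \;\leq\; (1+\alpha)\beta\,\norm{\vx - \vx'}_2 \quad \textrm{for all } \vx, \vx' \in X.
\end{align*}

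Next I would translate this into the separation statement for $\gD'$. For any pair $(\vx_i, y_i), (\vx_j, y_j) \in \gD$ with $y_i \neq y_j$, the definition of $\epsilon_\gD$ gives $\norm{\vx_i - \vx_j}_2 \geq 2\epsilon_\gD$, so the lower bound of the JL inequality yields
\begin{align*}
\norm{\phi(\vx_i) - \phi(\vx_j)}_2 \;\geq\; (1-\alpha)\beta\,\norm{\vx_i - \vx_j}_2 \;\geq\; 2(1-\alpha)\beta\,\epsilon_\gD.
\end{align*}
Taking the minimum over all differently-labeled pairs gives $\epsilon_{\gD'} \geq (1-\alpha)\beta\,\epsilon_\gD$ by \cref{def:separation}. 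For the membership $\gD' \in \mD_{m,N,C}$ I would check pairwise distinctness and the per-class occupancy: for any $i \neq j$, the original distinctness $\vx_i \neq \vx_j$ combined with the strict lower bound $\norm{\phi(\vx_i) - \phi(\vx_j)}_2 \geq (1-\alpha)\beta\,\norm{\vx_i - \vx_j}_2 > 0$ (note $\alpha < 1$ and $\beta > 0$) ensures $\phi(\vx_i) \neq \phi(\vx_j)$, while the labels are carried over unchanged so each class still has at least one representative. Finally, the additional claim $\beta \geq \tfrac{1}{2}\sqrt{m/d}$ under the hypothesis $m \geq 10\log d$ is inherited verbatim from the corresponding assertion in \cref{lem:dimensionreduction}.

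There is essentially no hard step here; the proposition is a clean corollary of the strengthened JL lemma. The only subtlety worth spelling out is that the hypothesis $24\alpha^{-2}\log N \leq m$ of \cref{lem:dimensionreduction} is exactly what is assumed, so no rebalancing of constants is needed, and the 1-Lipschitz property of $\phi$ transfers directly (since the JL lemma produces an orthogonal projection, which is 1-Lipschitz regardless of $\alpha$ and $\beta$).
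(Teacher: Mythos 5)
Your proposal is correct and follows essentially the same route as the paper's own proof: invoke the strengthened JL lemma (\cref{lem:dimensionreduction}) on $X=\{\vx_i\}_{i=1}^N$, pass the lower distance bound through the definition of the separation constant, verify pairwise distinctness of the projected points via $(1-\alpha)\beta>0$, and inherit the bound $\beta \geq \frac{1}{2}\sqrt{m/d}$ directly from the lemma. No gaps.
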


\begin{proof}
Let $X = \{\vx_i\}_{i=1}^N$.
By \cref{lem:dimensionreduction}, there exists 1-Lipschitz linear mapping $\phi:\sR^d \rightarrow \sR^m$ and $\beta>0$ such that 
\begin{align}
\label{eq:norm-preserve}
    (1 - \alpha)\beta \norm{\vx_i - \vx_j}_2 \leq \norm{\phi(\vx_i) - \phi(\vx_j)}_2 \leq (1 + \alpha)\beta \norm{\vx_i - \vx_j}_2
\end{align}
for all $i, j \in [N]$.

The inequality $\epsilon_{\gD'} \geq (1 - \alpha)\beta \epsilon_{\gD}$ follows from the inequality from \cref{lem:dimensionreduction}. 
In particular,
\begin{align*}
    \epsilon_{\gD'} & = \frac{1}{2}\min \{\norm{\phi(\vx_i) - \phi(\vx_j)}_2 \, \mid \, i, j \in [N] \textrm{ and } y_i \neq y_j\} \\
    & \overset{(a)}{\geq} \frac{1}{2}\min \{(1 - \alpha) \beta \norm{\vx_i - \vx_j}_2 \, \mid \, i, j \in [N] \textrm{ and } y_i \neq y_j\} \\
    & = (1 - \alpha) \beta \times \frac{1}{2}\min \{\norm{\vx_i - \vx_j}_2 \, \mid \, i, j \in [N] \textrm{ and } y_i \neq y_j\} \\
    & = (1 - \alpha)\beta \epsilon_{\gD}, 
\end{align*}
where we use \cref{eq:norm-preserve} at (a). 

We next show $\gD' \in \mD_{m, N, C}$ whenever $\gD \in \mD_{d, N, C}$. 
To show this, we need to prove $\phi(\vx_i) \neq \phi(\vx_j)$ for all $i \neq j$. 
Since $1 - \alpha > 0$ and $\beta > 0$, we have $\norm{\phi(\vx_i) - \phi(\vx_j)}_2 \geq (1 - \alpha)\beta \norm{\vx_i - \vx_j}_2 > 0$ whenever $\vx_i \neq \vx_j$.
Moreover, $\gD \in \mD_{d, N, C}$ indicates that $\vx_i \neq \vx_j$ whenever $i \neq j$. 
All together, we have $\phi(\vx_i) \neq \phi(\vx_j)$ for all $i \neq j$ so that $\gD' \in \mD_{m, N, C}$. 
\end{proof}

\begin{lemma}[Projection onto $\log$-scale Dimension]
\label{lem:proj-log}
    Let $\gD \in \mD_{d, N, C}$.
    Then, there exist an integer $m=\tilde{O}(\log N)$ and a 1-Lipschitz linear map $\phi:\R^d \to \R^m$ such that the projected dataset $\gD' = \{(\phi(\vx_i), y_i)\}_{i \in [N]} \in \mD_{m,N,C}$ satisfies the separation bound 
    $$
    \epsilon_\gD' \geq \frac{5}{12}\sqrt{\frac{m}{d}}\epsilon_\gD.
    $$
\end{lemma}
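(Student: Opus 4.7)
The plan is to derive this lemma almost immediately from \cref{prop:lipschitzprojwithsep} by carefully choosing the parameters $\alpha$ and $m$. The target separation constant $\frac{5}{12}\sqrt{m/d}$ is engineered precisely for $\alpha = 1/6$: \cref{prop:lipschitzprojwithsep} yields $\epsilon_{\gD'} \geq (1-\alpha)\beta\epsilon_\gD$ with $\beta \geq \frac{1}{2}\sqrt{m/d}$ whenever $m \geq 10\log d$, so $(1-1/6)\cdot(1/2) = 5/12$ matches exactly.

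First I would set $m := \max\{\lceil 864\log N\rceil, \lceil 10\log d\rceil\}$. This is the smallest integer that simultaneously satisfies the hypotheses $m \geq 24\alpha^{-2}\log N = 864\log N$ and $m \geq 10\log d$ required for \cref{prop:lipschitzprojwithsep} with $\alpha = 1/6$. Assuming we are in the nontrivial regime $m \leq d$, the proposition produces a 1-Lipschitz linear $\phi:\R^d \to \R^m$ and a constant $\beta \geq \frac{1}{2}\sqrt{m/d}$ such that the projected dataset satisfies
$$
\epsilon_{\gD'} \geq (1-\alpha)\beta\,\epsilon_\gD = \tfrac{5}{6}\beta\,\epsilon_\gD \geq \tfrac{5}{12}\sqrt{m/d}\,\epsilon_\gD,
$$
which is the desired bound. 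Since $m = O(\log N + \log d)$, this is $\tilde{O}(\log N)$ under the paper's convention that polylogarithmic factors in problem parameters (including $d$) are hidden.

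The only subtlety is handling the degenerate case $d < \max\{\lceil 864\log N\rceil, \lceil 10\log d\rceil\}$, where the hypothesis $m \leq d$ fails. Here I would simply take $m = d$ and $\phi$ the identity map (which is trivially 1-Lipschitz linear). Then $\epsilon_{\gD'} = \epsilon_\gD$ and $\sqrt{m/d} = 1$, so the bound $\epsilon_{\gD'} \geq \frac{5}{12}\sqrt{m/d}\epsilon_\gD$ holds trivially. Moreover, the case assumption bounds $d$ by $O(\log N + \log d)$, so $m = d$ is still $\tilde{O}(\log N)$.

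There is no genuine obstacle: all the heavy lifting is already done by the strengthened Johnson–Lindenstrauss lemma (\cref{lem:dimensionreduction}) and its packaging in \cref{prop:lipschitzprojwithsep}. The proof amounts to parameter bookkeeping to match the constant $\tfrac{5}{12}$ and to handle the small-$d$ edge case where dimension reduction is either vacuous or infeasible.
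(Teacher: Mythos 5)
Your proposal is correct and follows essentially the same route as the paper's proof: set $\alpha = 1/6$, take $m = \max\{\lceil 864\log N\rceil, \lceil 10\log d\rceil\}$ capped at $d$, apply \cref{prop:lipschitzprojwithsep} (with $\beta \geq \frac{1}{2}\sqrt{m/d}$ from the $m \geq 10\log d$ condition) in the main case, and fall back to the identity map when $d$ is already logarithmically small. The only omission is noting explicitly that $\gD' \in \mD_{m,N,C}$, which is supplied directly by the proposition you cite.
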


\begin{proof}
Let $\alpha = 1/6$ and $m := \min\{d,\max\{\ceil{24 \alpha^{-2} \log N}, \ceil{10 \log d}\}\}$, then $m=\tilde{O}(\log N)$. 
We construct the linear mapping into $m$ dimension by dividing the cases into $d < 24 \alpha^{-2} \log N$ or $d \geq 24 \alpha^{-2} \log N$.

For the case $d < 24 \alpha^{-2} \log N$, we have $d < \max\{\ceil{24 \alpha^{-2} \log N},\ceil{10 \log d}\}$, and therefore $m=d$.
We consider the identity map $\phi:\R^d \rightarrow\R^d(=\R^m)$, which is 1-Lipschitz. 
We have $\gD' := \{(\phi(\vx_i), y_i)\}_{i \in [N]}= \{(\vx_i, y_i)\}_{i \in [N]}=\gD$, so that $\epsilon_{\gD'} = \epsilon_\gD > \frac{5}{12}\epsilon_\gD=\frac{5}{12}\sqrt{\frac{m}{d}}\epsilon_\gD$.

Otherwise, for the case $d \geq 24 \alpha^{-2} \log N$, we first observe that $m \leq d$.
Since $24 \alpha^{-2} \log N \leq d$, we have
\begin{align}
    \ceil{24 \alpha^{-2} \log N} \leq d. \label{eq:mleqd1}
\end{align}
Additionally, as $N \geq 2$, we have $d \geq 24 \alpha^{-2} \log N \geq 864 \log 2 \geq e^4$. 
By \cref{lem:dlogd}, this implies $10 \log d \leq d$ and therefore
\begin{align}
    \ceil{10 \log d} \leq d. \label{eq:mleqd2}
\end{align}
By \cref{eq:mleqd1,eq:mleqd2}, we have $\max\{\ceil{24 \alpha^{-2} \log N}, \ceil{10 \log d}\} \leq d$. Thus, it follows $m=\max\{\ceil{24 \alpha^{-2} \log N}, \ceil{10 \log d}\} \leq d$.
By \cref{prop:lipschitzprojwithsep} with $\alpha=\frac{1}{6}$, there exists 1-Lipschitz linear mapping $\phi:\sR^d \rightarrow \sR^m$ and $\beta > 0$ such that $\gD' = \{(\phi(\vx_i), y_i)\}_{i \in [N]}$ satisfies $\epsilon_{\gD'} \geq \frac{5}{6}\beta \epsilon_\gD$. 
Since $m=\max\{\ceil{24 \alpha^{-2} \log N}, \ceil{10 \log d}\} \geq 10 \log d$, the inequality $\beta \geq \frac{1}{2}\sqrt{\frac{m}{d}}$ is also satisfied by \cref{prop:lipschitzprojwithsep}.
Therefore, $\epsilon_{\gD'} \geq \frac{5}{6}\beta \epsilon_\gD \geq \frac{5}{12}\sqrt{\frac{m}{d}}\epsilon_\gD$. 

In both cases, we have 1-Lipschitz linear map $\phi$ such that $\gD' = \{(\phi(\vx_i), y_i)\}_{i \in [N]}$ has separation
\begin{align*}
    \epsilon_{\gD'} \geq \frac{5}{12}\sqrt{\frac{m}{d}}\epsilon_\gD. 
\end{align*}
\end{proof}

\begin{restatable}[Natural Projection of High Dimensional Data]{proposition}{natproj}
\label{prop:natproj}
    For $d \geq N$, let $\gD = \{(\vx_i, y_i)\}_{i \in [N]} \in \mD_{d, N, C}$.
    Then, there exists 1-Lipschitz linear mapping $\varphi:\sR^d \rightarrow \sR^{N}$ such that $\gD' = \{(\varphi(\vx_i), y_i)\}_{i \in [N]} \in \mD_{N, N, C}$ satisfies
    \begin{align*}
        \epsilon_{\gD'} = \epsilon_\gD
    \end{align*}
\end{restatable}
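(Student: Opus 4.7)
The plan is to exhibit $\varphi$ as an orthogonal projection onto the subspace spanned by the data, followed by a coordinate identification with $\R^N$. Since there are only $N$ data points, the subspace $V := \mathrm{span}\{\vx_1, \ldots, \vx_N\} \subseteq \R^d$ has dimension $k := \dim V \leq N \leq d$, so the construction is well-defined.

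Concretely, I would first choose an orthonormal basis $\{\vu_1, \ldots, \vu_k\}$ of $V$, which exists by Gram--Schmidt, and then define the linear map
\begin{align*}
    \varphi: \R^d \to \R^N, \qquad \varphi(\vx) := (\langle \vx, \vu_1 \rangle, \ldots, \langle \vx, \vu_k \rangle, 0, \ldots, 0),
\end{align*}
padding the last $N - k$ coordinates with zeros. This $\varphi$ is clearly linear.

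Next I would verify the two required properties. For the 1-Lipschitz property, Bessel's inequality gives $\|\varphi(\vx)\|_2^2 = \sum_{j=1}^k \langle \vx, \vu_j \rangle^2 \leq \|\vx\|_2^2$ for every $\vx \in \R^d$, so $\|\varphi(\vx) - \varphi(\vx')\|_2 \leq \|\vx - \vx'\|_2$ by linearity. For the separation, note that each data point $\vx_i$ lies in $V$, hence can be written as $\vx_i = \sum_{j=1}^k \langle \vx_i, \vu_j \rangle \vu_j$; Parseval's identity within $V$ then yields $\|\vx_i - \vx_{i'}\|_2^2 = \sum_{j=1}^k \langle \vx_i - \vx_{i'}, \vu_j \rangle^2 = \|\varphi(\vx_i) - \varphi(\vx_{i'})\|_2^2$ for all $i, i' \in [N]$. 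In particular, pairwise distances among data points are preserved exactly, which immediately gives $\epsilon_{\gD'} = \epsilon_\gD$ and also ensures $\varphi(\vx_i) \neq \varphi(\vx_{i'})$ whenever $\vx_i \neq \vx_{i'}$, so $\gD' \in \mD_{N, N, C}$.

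I do not anticipate any real obstacle: the argument is essentially the observation that any finite set of vectors in $\R^d$ lives in a subspace of dimension at most equal to its cardinality, and orthogonal projection onto that subspace is an isometry on the subspace. The only mild care needed is to fix an isometric embedding of $V$ into $\R^N$ (rather than $\R^k$) so that the codomain matches the proposition's statement, which is handled by zero-padding.
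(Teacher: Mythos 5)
Your proof is correct and follows essentially the same route as the paper: both project orthogonally onto a subspace containing the data points (the paper pads the subspace up to dimension exactly $N$ and writes the map as $\vx \mapsto \mV^\top \vx$, while you use the span of the data and zero-pad the coordinates, which is equivalent). The key facts used — contraction on all of $\sR^d$ and exact isometry on the data subspace — are identical in both arguments.
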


\begin{proof}
Consider the tall matrix $\mX \in \sR^{d \times N}$ defined as
\begin{align}
    \mX = \begin{bmatrix}
        \vert & \vert & \cdots & \vert \\
        \vx_1 & \vx_2 & \cdots & \vx_N\\
        \vert & \vert & \cdots & \vert
    \end{bmatrix}. 
\end{align}
Then $\dim \col(\mX) \leq N \leq d$. 
Take any subspace $V$ such that $\col(\mX) \subseteq V \subseteq \sR^d$ and $\dim V = N$, and let $\gB = \{\vv_1, \cdots, \vv_N\}$ be an orthonormal basis of $V$. 
Let $\mV \in \sR^{d \times N}$ be the matrix whose columns consist of vectors in $\gB$:
\begin{align*}
    \mV = \begin{bmatrix}
        \vert & \vert & \cdots & \vert \\
        \vv_1 & \vv_2 & \cdots & \vv_N\\
        \vert & \vert & \cdots & \vert
    \end{bmatrix}. 
\end{align*}

Define $\varphi:\sR^d \rightarrow \sR^N$ as $\varphi(\vx) = \mV^\top \vx$. 
We first verify that $\varphi$ is 1-Lipschitz. 
For any $\vx \in \sR^d$, let $\vx = \vx_V + \vx_{V^\perp}$ where $\vx_V \in V$ and $\vx_{V^\perp} \in V^\perp$. 
Then, $\vx_V = \mV \vz$ for some $\vz \in \sR^N$, as $\vx_V \in \col(V)$. 
Moreover, 
\begin{align}
    \mV^\top\vx & = \mV^\top (\vx_V + \vx_{V^\perp}) \nonumber \\
    & = \mV^\top \vx_V + \vzero \nonumber \\
    & = \mV^\top \mV \vz \nonumber \\
    & = \mI_N \vz \nonumber \\
    & = \vz. \label{eq:vz} 
\end{align}
Therefore, we have
\begin{align}
    \norm{\mV^\top\vx}_2 \overset{(a)}{=} \norm{\vz}_2 \overset{(b)}{=} \norm{\vx_V}_2 \overset{(c)}{\leq} \norm{\vx}_2, \label{eq:propv}
\end{align}
where (a) is by \cref{eq:vz}, (b) is because $\norm{\vx_V}_2^2 = \norm{\sum_{i \in [N]} z_i \vv_i}_2^2 = \sum_{i \in [N]} z_i^2 = \norm{\vz}_2^2$, and (c) is because $\norm{\vx}_2^2 = \norm{\vx_V}_2^2 + \norm{\vx_{V^\perp}}_2^2$. 
Moreover, whenever $\vx \in V$, then the equality holds for (c) of \cref{eq:propv}. 
Therefore, $\norm{\mV^\top \vx}_2 = \norm{\vx}_2$ for all $\vx \in V$.

Since $\varphi$ is linear
\begin{align*}
    \norm{\varphi(\vx) - \varphi(\vx')}_2 & = \norm{\varphi (\vx - \vx')}_2\\
    & = \norm{\mV^\top (\vx - \vx')}_2\\
    & \overset{(a)}{\leq} \norm{\vx - \vx'}_2,
\end{align*}
where (a) is by \cref{eq:propv}. 
This shows that $\varphi$ is 1-Lipschitz. 

Next, for $i, j \in [N]$, we have
\begin{align*}
    \norm{\varphi(\vx_i) - \varphi(\vx_j)}_2 & = \norm{\varphi(\vx_i - \vx_j)}_2 \\
    & = \norm{\mV^\top(\vx_i - \vx_j)}_2\\
    & \overset{(a)}{=} \norm{\vx_i - \vx_j}_2, 
\end{align*}
where the last equality holds because $\vx_i - \vx_j \in \col(\mX) \subseteq V$. 

This shows that
\begin{align*}
    \epsilon_{\gD'} & = \frac{1}{2}\min \{\norm{\varphi(\vx_i) - \varphi(\vx_j)}_2 \, \mid \, y_i \neq y_j\}\\
    & = \frac{1}{2}\min \{\norm{\vx_i - \vx_j}_2 \, \mid \, y_i \neq y_j\}\\
    & = \epsilon_\gD.
\end{align*}
This shows that $\gD'$ also has the desired property.
\end{proof}

\begin{restatable}{lemma}{dlogd}
\label{lem:dlogd}
For $t \geq e^4$, we have $t \geq 10 \log t$.
\end{restatable}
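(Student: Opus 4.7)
The plan is to analyze the sign of the single-variable function $g(t) := t - 10\log t$ on the interval $[e^4, \infty)$ via standard calculus. Since the claim $t \geq 10\log t$ is equivalent to $g(t) \geq 0$, it suffices to show $g$ is nonnegative on $[e^4, \infty)$.

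First, I would differentiate to locate the minimum: $g'(t) = 1 - 10/t$, so $g'(t) \geq 0$ for $t \geq 10$, and $g$ is monotonically increasing on $[10, \infty)$. Since $e^4 > e^2 > 7.3 \cdot e \geq 10$ (using $e > 2.71$, so $e^4 > 54$), the interval $[e^4, \infty)$ lies in the region where $g$ is increasing. Therefore $\min_{t \geq e^4} g(t) = g(e^4)$.

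Second, I would evaluate at the endpoint: $g(e^4) = e^4 - 10\log(e^4) = e^4 - 40$. Since $e > 2.7$, we have $e^4 > 2.7^4 = 53.1441 > 40$, so $g(e^4) > 0$. Combining with monotonicity gives $g(t) \geq g(e^4) > 0$ for all $t \geq e^4$, which yields the desired inequality $t \geq 10 \log t$.

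There is no real obstacle here; the only thing to be slightly careful about is verifying the numerical inequality $e^4 > 40$ (equivalently $4 > \log 40$) without appealing to a calculator, which follows from the elementary bound $e > 2.7$ and $2.7^4 = (2.7^2)^2 = 7.29^2 > 53 > 40$.
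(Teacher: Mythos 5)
Your proposal is correct and follows essentially the same route as the paper: define $u(t)=t-10\log t$, note $u'(t)=1-10/t>0$ for $t>10$ so $u$ is increasing there, and check $u(e^4)=e^4-40\geq 0$. Your additional explicit verification that $e^4>40$ via $2.7^4>53$ is a harmless refinement of the same argument.
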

\begin{proof}
Define $u(t):=t - 10\log t$ on the domain $(0, \infty)$.
Then, for all $t > 10$,
\begin{align*}
    \frac{du}{dt} = 1 - \frac{10}{t} > 0, 
\end{align*}
so that $u$ is an increasing function on $(10, \infty)$.
In particular, 
\begin{align*}
    u(e^4) = e^4 - 10 \log(e^4) = e^4 - 40 \geq 0
\end{align*}
This concludes that $u(t) \geq 0$ for all $t \geq e^4$, or equivalently, $t \geq 10 \log t$ for all $t \geq e^4$. 
\end{proof}

\subsection{Lemmas for Bit Complexity}
The following lemma bounds how much bit complexity is sufficient for implementing the parameters of the neural network in order to obtain the required precision of the output. 
Note that we do not require the network to output scala values. i.e. the following lemma also applies to neural networks that output vectors. 

\begin{restatable}{lemma}{Bit Complexity Bound}
\label{lem:bit-complexity}
Let $f$ be a neural network of $P$ parameters, depth $L$ and width $D$ in which the parameters have infinite precision.
Let $R \geq 1$ be the radius of the domain in which we want to approximate $f$. 
If all the parameters of $f$ are bounded by some $M \geq 1$, then for any $0 < \nu < 1$, there exists $\Bar{f}$, which is implemented with $P$ parameters, depth $L$, width $D$, and $\tilde{O}(L)$ bit complexity such that
\begin{align*}
    \max_{\norm{\vx}_2 \leq R}\norm{\Bar{f} - f}_2 \leq \nu,
\end{align*}
where $\tilde{O}(\cdot)$ hides a polylogarithmic dependency on $D, M, L, R$ and $1/\nu$.
\end{restatable}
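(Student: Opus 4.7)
The plan is to obtain $\bar f$ by rounding each weight and bias of $f$ to $B$ bits under fixed-point precision, for some $B = \tilde O(L)$ to be determined, while keeping the architecture unchanged. The per-parameter rounding error is then at most $2^{-B}$; the substantive work is to control how this error propagates through $L$ layers so that the sup-norm deviation on $\{\norm{\vx}_2 \leq R\}$ stays below $\nu$.

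First, I would bound the magnitudes of the original activations $\va_\ell(\vx)$. Since each weight matrix has Frobenius (hence operator) norm at most $M\sqrt{D\, D_{\ell-1}} \leq MD$, each bias has $\ell_2$-norm at most $M\sqrt D$, and $\sigma$ is $1$-Lipschitz with $\sigma(\vzero)=\vzero$, a layer-wise induction yields $\norm{\va_\ell(\vx)}_2 \leq A_\ell$ with $A_\ell = O((MD)^\ell R)$; the same bound (with constants doubled) holds for the rounded activations $\bar\va_\ell$ whose parameters are bounded by $M + 2^{-B} \leq 2M$. Next, writing
$$
\mW_\ell \va_{\ell-1} + \vb_\ell - \bar\mW_\ell \bar\va_{\ell-1} - \bar\vb_\ell = \mW_\ell(\va_{\ell-1}-\bar\va_{\ell-1}) + (\mW_\ell-\bar\mW_\ell)\bar\va_{\ell-1} + (\vb_\ell-\bar\vb_\ell)
$$
and applying the triangle inequality together with the $1$-Lipschitzness of $\sigma$, the per-layer error $E_\ell := \sup_{\norm{\vx}_2 \leq R} \norm{\va_\ell(\vx) - \bar\va_\ell(\vx)}_2$ satisfies the linear recursion $E_\ell \leq MD \cdot E_{\ell-1} + D \cdot 2^{-B} \cdot A_{\ell-1} + \sqrt D \cdot 2^{-B}$. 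Unrolling from $E_0 = 0$ gives $E_L = O\bigopen{L R (MD)^L \cdot 2^{-B}}$, so choosing $B = c L \log(DMR/\nu)$ for a sufficiently large absolute constant $c$ forces $E_L \leq \nu$; each parameter then requires $O(\log M + B) = \tilde O(L)$ bits, matching the claimed bit complexity.

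The main obstacle is the exponential-in-$L$ blow-up of both the activation bound $A_\ell$ and the Lipschitz factor $MD$ in the error recursion, which is intrinsic to compositions of $L$ affine maps each of operator norm up to $MD$. This is exactly what forces the precision $B$ to scale linearly with $L$; the dependence on $D, M, R, 1/\nu$ is only polylogarithmic and is absorbed into the $\tilde O(\cdot)$ notation, so the resulting $\bar f$ inherits the same parameter count, depth, and width as $f$ while achieving the desired uniform approximation.
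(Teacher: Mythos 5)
Your proposal is correct and follows essentially the same route as the paper's proof: round each parameter to fixed precision, bound the activation norms by $O((MD)^\ell R)$-type growth, set up a per-layer linear error recursion (your decomposition pairs the rounding error matrix with the rounded activations, the paper pairs it with the original activations plus a cross term, which is an immaterial difference), unroll it, and choose the precision to scale as $L\log(DMLR/\nu)$ bits, giving $\tilde{O}(L)$ bit complexity.
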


\begin{proof}
Let $f:\sR^d \rightarrow \sR$ be the neural network defined as
\begin{align*}
    \va_0 & = \vx\\
    \va_\ell & = \sigma(\mW_\ell \va_{\ell-1} (\vx) + \vb_\ell) \textrm{ for } \ell = 1, 2, \cdots, L-1\\
    f(\vx) & = \mW_L(\va_{L-1}) + \vb_L,
\end{align*}
where $\mW_\ell \in \sR^{d_\ell \times d_{\ell-1}}, \vb_\ell \in \sR^{d_\ell}$ with $d_\ell \leq D$ for all $\ell \in [L]$. 
Although $\va_\ell$ depends on $\vx$ for all $\ell = 0, \cdots, L-1$, we omit $\vx$ in the notation.
Note that $d_0 = d$. 
Given that every elements of $\mW_\ell$ and $\vb_\ell$ are bounded by $M$, for any $0 < \zeta \leq M$, there exists $\Bar{\mW}_\ell$ and $\Bar{\vb}_\ell$ that can be implemented with $\ceil{\log_2(M/\zeta)}$ bit complexity in which
\begin{align*}
    \norm{\Bar{\mW}_\ell - \mW_\ell}_\infty & \leq \zeta\\
    \norm{\Bar{\vb}_\ell - \vb_\ell}_\infty & \leq \zeta.
\end{align*}

Using the approximated parameters $\Bar{\mW}_\ell$ and $\Bar{\vb}_\ell$, we recursively define $\Bar{f}:\sR^d \rightarrow \sR$, the finite-precision approximation of $f$.
\begin{align*}
    \Bar{\va}_0 & = \vx\\
    \Bar{\va}_\ell & = \sigma(\Bar{\mW}_\ell \Bar{\va}_{\ell-1} (\vx) + \Bar{\vb}_\ell) \textrm{ for } \ell = 1, 2, \cdots, L-1\\
    \Bar{f}(\vx) & = \Bar{\mW}_L(\bar{\va}_{L-1}) + \Bar{\vb}_L. 
\end{align*}
Similarly, although $\bar{\va}_\ell$ depends on $\vx$ for all $\ell = 0, \cdots, L-1$, we omit $\vx$ in the notation.

Let us denote the difference of parameters as $\Delta \mW_\ell:= \Bar{\mW}_\ell - \mW_\ell$, $\Delta \vb_\ell := \Bar{\vb}_\ell - \vb_\ell$ for $\ell \in [L]$ and the difference of layer outputs $\Delta \va_\ell := \Bar{\va}_\ell - \va_\ell$ for $\ell \in [L-1]$. 
It is straightforward to check
\begin{align*}
    \norm{\mW_\ell} & \leq \norm{\mW_\ell}_F \leq \sqrt{d_ld_{l-1}}M \leq DM\\
    \norm{\vb_\ell} & \leq \sqrt{d_l}M \leq \sqrt{D}M\\
    \norm{\Delta \mW_\ell} & \leq \norm{\Delta \mW_\ell}_F \leq \sqrt{d_ld_{l-1}}\zeta \leq D\zeta\\
    \norm{\Delta \vb_\ell} & \leq \sqrt{d_l}\zeta \leq \sqrt{D}\zeta,
\end{align*}
where the norm $\norm{\cdot}$ and $\norm{\cdot}_F$ for the matrix denote the spectral norm and the Frobenius norm, respectively. 

We first claim that there exists a degree $2L + 1$ polynomial $S$ on $D, M, L$ and $R$ such that $\norm{\va_\ell}_2 \leq S$ for all $\ell \in [L-1]$. 
\begin{align*}
    \norm{\va_\ell}_2 & = \norm{\sigma(\mW_\ell \va_{\ell-1} + \vb_\ell)}_2\\
    & \leq \norm{\mW_\ell \va_{\ell-1} + \vb_\ell}_2\\
    & \leq \norm{\mW_\ell} \norm{\va_{\ell-1}}_2 + \norm{\vb_\ell}_2\\
    & \leq DM \norm{\va_{\ell-1}}_2 + \sqrt{D}M.
\end{align*}
Thus for all $\ell \in [L-1]$,
\begin{align*}
    \norm{\va_\ell}_2 & \leq \bigopen{\sqrt{D}M \sum_{\ell'\in [\ell]}(DM)^{\ell'-1}} + (DM)^l \norm{\va_0}_2\\
    & \leq \sqrt{D}ML(DM)^{L-1} + (DM)^{L-1} R\\
    & \leq (DM)^{L-1}(DML + R) =: S
\end{align*}
This proves the first claim. 
Moreover, $S$ is composed of two monomials whose coefficients are all 1. 

We next claim that the error $\norm{\Delta \va_{L-1}} \leq Q\zeta$ for some degree $4L + 1$ polynomial $Q$ on $D, M, L$ and $R$. 
Consider the following recurrence
\begin{align*}
    \norm{\Delta \va_\ell}_2 & = \norm{\Bar{\va}_\ell - \va_\ell}_2\\
    & = \norm{\sigma(\Bar{\mW}_\ell \Bar{\va}_{\ell-1} + \Bar{\vb}_\ell) - \sigma(\mW_\ell \va_{\ell-1} + \vb_\ell)}_2\\
    & \leq \norm{(\Bar{\mW}_\ell \Bar{\va}_{\ell-1} + \Bar{\vb}_\ell) - (\mW_\ell \va_{\ell-1} + \vb_\ell)}_2\\
    & = \norm{((\mW_\ell + \Delta \mW_\ell) (\va_{\ell-1} + \Delta \va_\ell) + (\vb_\ell + \Delta \vb_\ell)) - (\mW_\ell \va_{\ell-1} + \vb_\ell)}_2\\
    & = \norm{\Delta \mW_\ell \va_{\ell-1} + \mW_\ell \Delta \va_{\ell-1} + \Delta \mW_\ell \Delta \va_{\ell-1} + \Delta \vb_\ell}_2\\
    & \leq \norm{\Delta \mW_\ell}\norm{\va_{\ell-1}}_2 + \norm{\mW_\ell}\norm{\Delta \va_{\ell-1}}_2 + \norm{\Delta \mW_\ell}\norm{\Delta \va_{\ell-1}}_2 + \norm{\Delta \vb_\ell}_2\\
    & \leq D\zeta \norm{\va_{\ell-1}}_2 + DM \norm{\Delta \va_{\ell-1}}_2 + D\zeta \norm{\Delta \va_{\ell-1}}_2 + \sqrt{D}\zeta\\
    & = (DM + D\zeta) \norm{\Delta \va_{\ell-1}}_2 + (D\norm{\va_{\ell-1}}_2 + \sqrt{D})\zeta\\
    & \leq (DM + D\zeta) \norm{\Delta \va_{\ell-1}}_2 + (DS + \sqrt{D})\zeta.
\end{align*}
Thus noting that $\Delta \va_0 = \vx - \vx = 0$ we have, 
\begin{align*}
    \norm{\Delta \va_{L-1}}_2 & \leq \bigopen{DS + \sqrt{D}}\zeta \sum_{\ell \in [L-1]} (DM + D\zeta)^{\ell-1}\\
    & \leq \bigopen{DS + \sqrt{D}}\zeta \times L(DM + D\zeta)^{L-1}\\
    & \leq DL(S+1)(DM + D\zeta)^{L-1} \zeta\\
    & \leq DL(S+1)(2DM)^{L-1} \zeta,
\end{align*}
where the last inequality follows from $\zeta \leq M$. 
Let $Q := DL(S+1)(2DM)^{L-1}$. 
Since $S$ is a degree $2L + 1$ polynomial on $D, M, L$ and  $R$, it follows that $Q$ is a degree $4L + 1$ polynomial on $D, M, L$ and $R$. 
This proves the second claim. 
Moreover, $Q$ is composed of three monomials whose coefficients are at most $2^L$. 

Thus, 
\begin{align*}
    \norm{\Bar{f}(\vx) - f(\vx)}_2 & = \norm{(\Bar{\mW}_L\Bar{\va}_{L-1} + \Bar{\vb}_L) - \mW_L \va_{L-1} + \vb_L}_2\\
    & \leq \norm{((\mW_L + \Delta \mW_L) (\va_{L-1} + \Delta \va_{L-1}) + (\vb_L + \Delta \vb_L)) - (\mW_L \va_{L-1} + \vb_L)}_2\\
    & = \norm{\Delta \mW_L \va_{L-1} + \mW_L \Delta \va_{L-1} + \Delta \mW_L \Delta \va_{L-1} + \Delta \vb_L}_2\\
    & \leq \norm{\Delta \mW_L}\norm{\va_{L-1}}_2 + \norm{\mW_L}\norm{\Delta \va_{L-1}}_2 + \norm{\Delta \mW_L}\norm{\Delta \va_{L-1}}_2 + \norm{\Delta \vb_L}_2\\
    & \leq D\zeta \norm{\va_{L-1}}_2 + DM \norm{\Delta \va_{L-1}}_2 + D\zeta \norm{\Delta \va_{L-1}}_2 + \sqrt{D}\zeta\\
    & \leq D\zeta S + DM Q\zeta + D\zeta Q\zeta + D\zeta\\
    & \leq (DS + DMQ + DMQ + D)\zeta,
\end{align*}
where we use $\zeta \leq M$ in the last inequality. 
Now, by letting $\zeta : = \frac{\nu}{DS + 2DMQ + D}$, it follows that
\begin{align*}
    \norm{\Bar{f}(\vx) - f(\vx)}_2 \leq \nu,
\end{align*}
for all $\vx$ with $\norm{\vx}_2 \leq R$. 
Thus, it suffices to have $\log_2(M/\zeta) = \log_2((DS + 2DMQ + D)M/\nu)$ bit complexity to attain an approximation of accuracy $\nu$ uniformly over the bounded domain with radius $R$. $(DS + 2DMQ + D)M$ is a degree $4L+4$ polynomial on $D, M, L$ and $R$. 
Moreover, it is composed of 2 + 3 + 1 = 6 monomials, whose coefficients are at most $2^{L+1}$. Hence, it follows that
\begin{align*}
    \log_2(M/\zeta) & = \log_2((DS + 2DMQ + D)M/\nu)\\
    & \leq \log_2(6 \times 2^{L+1} \times (DMLR)^{4L + 4}) + \log(1/\nu) \\
    & = O(L\log_2(2DMLR) + \log(1/\nu))\\
    & = \tilde{O}(L)
\end{align*}
bit complexity suffices. 
\end{proof}
\newpage
\section{Extensions to \texorpdfstring{$\ell_p$}{lp}-norm}
\label{appendix:lp-norm}

In this section, we extend the previous results on $\ell_2$-norm to arbitrary $p$-norm, where $p \in [1, \infty]$.

In the following, we use $\dist_p(\cdot, \cdot)$ to denote the $\ell_p$-norm distance between two points, a point and a set, or two sets. 
For the case $d = 1$, we omit the notation $p$ since every $\ell_p$-norm in 1-dimension denotes the absolute value. 

We denote $\gB_p(\vx,\mu)=\left\{\vx' \in \R^d \middle| \|\vx'-\vx\|_p < \mu \right\}$ an open $\ell_p$-ball centered at $\vx$ with a radius $\mu$.

\begin{definition}
\label{def:separation_app}
For $\gD \in \mD_{d, N, C}$, the separation constant $\epsilon_{\gD, p}$ under $\ell_p$-norm is defined as
\begin{align*}
    \epsilon_{\gD, p} &:= \frac{1}{2} \min \left\{ \|\vx_i - \vx_j\|_p \,\middle|\, (\vx_i, y_i), (\vx_j, y_j) \in \gD,\ y_i \ne y_j \right\}.
\end{align*}
\end{definition}

As we consider $\gD$ with $\vx_i \neq \vx_j$ for all $i \neq j$, we have $\epsilon_{\gD, p} > 0$. 
Next, we define robust memorization under $\ell_p$-norm. 

\begin{definition}
\label{def:robust memorization_app}
For $\gD \in \mD_{d, N, C}$, $p \in[1,\infty]$, and a given \textit{robustness ratio} $\rho \in (0, 1)$, define the robustness radius as $\mu = \rho \epsilon_{\gD,p}$. We say that a function $f:\sR^d \rightarrow \sR$ $\rho$-\textit{robustly memorizes} $\gD$ under the $\ell_p$-norm if 
\begin{align*}
    f(\vx') = y_i, \quad \text{
     for all } (\vx_i,y_i) \in \gD \text{
     and } \vx' \in \gB _p  (\vx_i, \mu),
\end{align*}
and $\gB_p(\vx_i, \mu)$ is referred as the \textit{robustness ball} of $\vx_i$.
\end{definition}
Similarly, we extend the notion of $\rho$-robust memorization error to $\ell_p$-norm. 
\begin{definition} 
\label{def:near-perfect robust memorization_app}
Let $\gD \in \mD_{d, N, C}$ be a class(or point)-separated dataset. The $\rho$-robust error of a network $f:\R^d \to \R$ on $\gD$ under the $\ell_p$-norm is defined as
\begin{align*}
    \mathcal{L}_{\rho,p} (f,\gD) =\max _{(\vx_i,y_i) \in \gD} \mathbb{P}_{\vx' \sim \text{Unif}(\gB _p  (\vx_i, \mu))} [f(\vx') \neq y_i], \textrm{ where } \mu = \rho\epsilon_{\gD, p} \textrm{ (or } \mu = \rho\epsilon'_{\gD, p}).
\end{align*}
\end{definition}

The following inclusion between $p$-norm balls with different $p$-values is well known. 
\begin{restatable}[Inclusion Between Balls]{lemma}{pqconversion}
\label{lem:pqconversion}
Let $0 < p < q \leq \infty$. Then, for any $\vx \in \sR^d$ and $\mu > 0$, 
\begin{align*}
    \gB_p(\vx, \mu) \subseteq \gB_q(\vx, \mu) \subseteq \gB_p(\vx, d^{\frac{1}{p} - \frac{1}{q}} \mu),
\end{align*}
or equivalently, 
\begin{align*}
    \gB_q(\vx, d^{\frac{1}{q} - \frac{1}{p}} \mu) \subseteq \gB_p(\vx, \mu) \subseteq \gB_q(\vx, \mu).
\end{align*}
\end{restatable}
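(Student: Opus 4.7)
The plan is to reduce the two ball inclusions to the well-known norm comparison
\begin{align*}
\|\vy\|_q \;\le\; \|\vy\|_p \;\le\; d^{\,1/p - 1/q}\|\vy\|_q,
\end{align*}
applied to $\vy := \vx' - \vx$. Indeed, if $\vx' \in \gB_p(\vx,\mu)$ then $\|\vy\|_p < \mu$ so $\|\vy\|_q \le \|\vy\|_p < \mu$ gives $\vx' \in \gB_q(\vx,\mu)$; and if $\vx' \in \gB_q(\vx,\mu)$ then $\|\vy\|_p \le d^{1/p-1/q}\|\vy\|_q < d^{1/p-1/q}\mu$ gives $\vx' \in \gB_p(\vx, d^{1/p-1/q}\mu)$. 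The equivalent form in the lemma follows by substituting $\mu \mapsto d^{1/q-1/p}\mu$ in the second inclusion.

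For the first norm inequality $\|\vy\|_q \le \|\vy\|_p$ with $p<q<\infty$, I would argue by homogeneity: assume $\|\vy\|_p = 1$, so every $|y_i|\le 1$, hence $|y_i|^q \le |y_i|^p$ (since $q>p$ and $|y_i|\in[0,1]$). Summing over $i$ yields $\|\vy\|_q^q \le \|\vy\|_p^p = 1$, so $\|\vy\|_q \le 1 = \|\vy\|_p$. For the boundary case $q=\infty$, the inequality $\|\vy\|_\infty = \max_i|y_i| \le \bigl(\sum_i |y_i|^p\bigr)^{1/p} = \|\vy\|_p$ is immediate.

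For the second inequality $\|\vy\|_p \le d^{1/p-1/q}\|\vy\|_q$, the cleanest route is H\"older's inequality applied to the pair $(|y_i|^p, 1)$ with conjugate exponents $r = q/p$ and $r' = q/(q-p)$:
\begin{align*}
\sum_{i=1}^{d} |y_i|^p \cdot 1 \;\le\; \Bigl(\sum_{i=1}^{d} |y_i|^{q}\Bigr)^{p/q} \Bigl(\sum_{i=1}^{d} 1\Bigr)^{(q-p)/q} \;=\; \|\vy\|_q^{\,p}\, d^{\,(q-p)/q}.
\end{align*}
Raising to the power $1/p$ gives exactly $\|\vy\|_p \le d^{\,1/p-1/q}\|\vy\|_q$. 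For $q=\infty$, the bound degenerates into $|y_i| \le \|\vy\|_\infty$, and summing $p$-th powers yields $\|\vy\|_p \le d^{1/p}\|\vy\|_\infty$, which matches the exponent $1/p - 1/q$ with $1/q=0$.

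There is no real obstacle here: the statement is a classical consequence of power-mean / H\"older-type inequalities, and the only care needed is the clean handling of the endpoint $q=\infty$ (where $1/q$ is interpreted as $0$) and the translation between the two equivalent forms of the inclusion. I would present the proof as a single short paragraph that cites H\"older for one direction and monotonicity of $t\mapsto t^{q/p}$ on $[0,1]$ for the other, then conclude the ball inclusions in one line each.
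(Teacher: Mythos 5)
Your proof is correct. The paper states this lemma without proof (calling it well known), and your argument is the standard one: the homogeneity/monotonicity argument for $\|\vy\|_q \le \|\vy\|_p$, H\"older for $\|\vy\|_p \le d^{1/p-1/q}\|\vy\|_q$, the immediate translation to ball inclusions, and the correct rescaling $\mu \mapsto d^{1/q-1/p}\mu$ to obtain the equivalent form; the endpoint $q=\infty$ and the range $0<p<1$ (where the triangle inequality is unavailable but unneeded) are both handled properly.
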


For any $p \in [1, \infty]$, let us denote $$\gamma_p(d):=d^{\left| \frac{1}{2} - \frac{1}{p}\right|}$$ throughout this section.
For $0 < p < q \leq \infty$, we have
\begin{align}
    \epsilon_{\gD, q} \leq \epsilon_{\gD, p} \leq d^{\frac{1}{p}-\frac{1}{q}} \epsilon_{\gD, q},
    \label{lem:epsilon pq}
\end{align}
since $\norm{\vx}_q \leq \norm{\vx}_p \leq d^{\frac{1}{p}-\frac{1}{q}}\norm{\vx}_q$.
In particular, we have
\begin{align}
    \epsilon_{\gD, p} & \leq \epsilon_{\gD, 2} & \textrm{ when } p \geq 2, \label{eq:p2-epsilon}\\
    \epsilon_{\gD, p} & \leq \gamma_p(d) \epsilon_{\gD, 2} & \textrm{ when } p < 2. \label{eq:p2-epsilon2}
\end{align}

\subsection{Extension of Necessity Condition to \texorpdfstring{$\ell_p$}{lp}-norm}
\label{app:necessity-pnorm}

\begin{theorem}
\label{thm:lb-p}
Let $\rho \in (0,1)$. Suppose for any $\gD \in \mD_{d, N, 2}$, there exists a neural network $f \in \gF_{d, P}$ that can $\rho$-robustly memorize $\gD$ under $\ell_p$-norm.
Then, the number of parameters $P$ must satisfy
\begin{itemize}
    \item $P = \Omega \bigopen{\bigopen{\rho^2 \min\{N,d\} + 1} d + \min\left\{\frac{1}{\sqrt{1 - \rho^p}}, \sqrt{d}\right\}\sqrt{N}}$ if $p \geq 2$.
    \item $P = \Omega \bigopen{\bigopen{\bigopen{\frac{\rho}{\gamma_p(d)}}^2 \min\{N,d\} + 1} d + \min\left\{\frac{1}{\sqrt{1 - \rho^p}}, \sqrt{d}\right\}\sqrt{N}}$ if $1 \leq p <2$.
\end{itemize}
\end{theorem}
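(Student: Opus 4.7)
The plan is to mimic the proof of \cref{thm:lb} by rederiving its two ingredients---the width lower bound (\cref{prop:lb_width_p=2}) and the VC-dimension lower bound (\cref{prop:lb_vc_p=2})---under a general $\ell_p$-norm. For the width part, I would reuse the dataset $\gD = \{(\ve_j, 1)\}_{j \in [d]} \cup \{(\vzero, 2)\}$, extended as in Case II of \cref{app:lb1} when $N-1 > d$. Since $\|\ve_j - \vzero\|_p = 1$ for every $p \in [1,\infty]$, the separation is again $\epsilon_{\gD,p} = 1/2$ and the robustness radius is $\mu = \rho/2$. Repeating the argument in \cref{subsec:pfsketchlb} verbatim---which only uses the fact that $\mW$ cannot identify points from distinct robustness balls, so $\gB_p(\ve_j, 2\mu) \cap \nullsp(\mW) = \emptyset$---one concludes that the $\ell_p$ distance from each standard basis vector $\ve_j$ to $\nullsp(\mW)$ must be at least $2\mu = \rho$.

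What changes is the upper bound on $\min_j \dist_p(\ve_j, Z)$ for a subspace $Z$ of dimension $d - m$. Take the $\ell_2$-optimal projection $\vz^* \in Z$ guaranteed by \cref{lem:max-min-dist-gen} and apply the coordinate-wise norm conversions to the residual $\ve_j - \vz^*$:
\begin{align*}
\min_{j \in [t]} \dist_p(\ve_j, Z) \le \begin{cases} \sqrt{m/t} & p \ge 2,\\ \gamma_p(d)\,\sqrt{m/t} & 1 \le p < 2,\end{cases}
\end{align*}
where $t = \min\{N-1, d\}$ and $\gamma_p(d) = d^{|1/2 - 1/p|}$. Combining with $\rho \le \min_j \dist_p(\ve_j, \nullsp(\mW))$ yields $m \ge \rho^2 \min\{N-1, d\}$ for $p \ge 2$ and $m \ge (\rho/\gamma_p(d))^2 \min\{N-1, d\}$ for $1 \le p < 2$. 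Multiplying by the input dimension $d + 1$ and taking the trivial $m \ge 1$ bound produces the first term of the theorem.

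For the VC-dimension part, I would repeat the shattering construction of \cref{app:lb2} but choose $k := \floor{1/(1-\rho^p)}$, which lies in $(1, d]$ precisely when $\rho \in (0, (1-1/d)^{1/p}]$. The within-group points $\vx_{2l-1}, \vx_{2l}$ satisfy $\|\vx_{2l-1} - \vx_{2l}\|_p = 2 k^{1/p}$ and $\|\vx_{l,j} - \vx_{2l\mp 1}\|_p = (k-1)^{1/p}$ for the appropriate sign, while the translations $\vc_l = 2 d^2 (l-1)\ve_1$ keep inter-group distances at least $2 d^2 - 2 k^{1/p} \ge 2 d^2 - 2 d \ge 2 k^{1/p}$. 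Hence $\epsilon_{\gD,p} \ge k^{1/p}$, and $\mu = \rho \epsilon_{\gD,p} \ge (k-1)^{1/p}$ reduces exactly to $\rho^p \ge (k-1)/k$, which holds by our choice of $k$. The continuity argument of \cref{app:lb2} then transfers any $\rho$-robust memorizer into a shatterer of the $k \floor{N/2}$ points under an arbitrary $\pm 1$ labeling, so $\vcdim(\gF_{d,P}) = \Omega(N/(1-\rho^p))$ and thus $P = \Omega(\sqrt{N/(1-\rho^p)})$. Combining with the endpoint bound $\sqrt{Nd}$ obtained at $\rho = (1-1/d)^{1/p}$ and the width term above yields the stated bound.

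The main obstacle is the norm-conversion step, since the $\ell_2$-optimal $\vz^*$ used in \cref{lem:max-min-dist-gen} is generally not $\ell_p$-optimal; however, because we only need an upper bound on $\min_j \dist_p(\ve_j, Z)$, substituting $\vz^*$ and bounding $\|\ve_j - \vz^*\|_p$ coordinate-wise suffices and introduces the factor $\gamma_p(d)$ only when $p < 2$. The remaining bookkeeping---verifying that inter-group separation in the shattering construction dominates the intra-group diameter for every $p \ge 1$, and that the continuity/shattering reduction goes through without modification---is routine and mirrors the $\ell_2$ case.
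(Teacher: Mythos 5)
Your proposal is correct and reaches the same bounds, with the VC-dimension half essentially identical to the paper's argument (same grouped construction with $k=\floor{1/(1-\rho^p)}$, same $\ell_p$ distance computations $\epsilon_{\gD,p}\ge k^{1/p}$, $\mu\ge(k-1)^{1/p}$, same continuity/shattering reduction, and the same extension past $\rho=(1-1/d)^{1/p}$ via the endpoint bound). The width half takes a genuinely different route: the paper never redoes the null-space argument in $\ell_p$; instead it observes that for the hard dataset $\epsilon_{\gD,p}=\epsilon_{\gD,2}=1/2$ and uses the ball inclusions $\gB_2(\vx,\mu)\subseteq\gB_p(\vx,\mu)$ for $p\ge 2$ and $\gB_2(\vx,\mu/\gamma_p(d))\subseteq\gB_p(\vx,\mu)$ for $p<2$ to convert an $\ell_p$-robust memorizer into an $\ell_2$-robust memorizer with ratio $\rho$ (resp.\ $\rho/\gamma_p(d)$), then invokes \cref{prop:lb_width_p=2} as a black box. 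You instead run the null-space/subspace-distance argument natively in $\ell_p$ and inject the norm equivalence at the step bounding $\min_j \dist_p(\ve_j,Z)$ by $\sqrt{m/t}$ (resp.\ $\gamma_p(d)\sqrt{m/t}$) via the $\ell_2$-optimal point from \cref{lem:max-min-dist-gen}; this is valid since you only need an upper bound, and it yields exactly the same width requirements. The paper's reduction is shorter and reuses the $\ell_2$ proposition verbatim; yours is self-contained and makes transparent why no $\gamma_p(d)$ loss appears for $p\ge 2$. The only bookkeeping you should make explicit is that the hypotheses of \cref{lem:max-min-dist-gen} ($t\le d$ and $k=d-m\ge d-t$, i.e.\ $m\le t$) hold under the contradiction assumption, exactly as in the $\ell_2$ proof.
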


\begin{proof}
    This follows by combining \cref{prop:lb_width_p} and \cref{thm:lbvcp}.
\end{proof}

\begin{restatable}{proposition}{lbwidthp}
    \label{prop:lb_width_p}
    There exists $\gD \in \mD_{d, N, 2}$ such that any neural network $f:\R^d \to \R$ that $\rho$-robustly memorizes $\gD$ under $\ell_p$-norm must have the first hidden layer width at least 
    \begin{itemize}
        \item $\rho^2 \min\{N - 1,d\}$ if $p \geq 2$.
        \item $\bigopen{\frac{\rho}{\gamma_p(d)}}^2 \min\{N - 1,d\}$ if $1 \leq p < 2$.
    \end{itemize}
\end{restatable}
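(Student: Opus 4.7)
The plan is to reduce both $p$-regimes to the already-proven $\ell_2$ statement in \cref{prop:lb_width_p=2} using the ball-inclusion relations in \cref{lem:pqconversion}. The same hard dataset works for every $p$: for $N-1 \le d$ take $\gD = \{(\ve_j, 1)\}_{j \in [N-1]} \cup \{(\vzero, 2)\}$, and for $N-1 > d$ append collinear copies $2\ve_1, 3\ve_1, \dots, (N-d)\ve_1$ all labeled $1$, exactly as in the proof of \cref{prop:lb_width_p=2}. The convenient feature of this instance is that $\|\ve_j - \vzero\|_p = 1$ for every $p \in [1,\infty]$, and the padded points are integer-spaced along $\ve_1$, so $\epsilon_{\gD, p} = \epsilon_{\gD, 2} = \tfrac{1}{2}$ uniformly in $p$.

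For $p \ge 2$, I will apply \cref{lem:pqconversion} with the pair $(2,p)$ to obtain $\gB_2(\vx, \mu) \subseteq \gB_p(\vx, \mu)$ for every $\mu > 0$. Hence any $f$ that $\rho$-robustly memorizes $\gD$ under $\ell_p$ is automatically constant on each $\ell_2$-ball $\gB_2(\vx_i, \rho/2)$, so $f$ also $\rho$-robustly memorizes $\gD$ under $\ell_2$ with the same $\rho$ (since $\epsilon_{\gD, 2} = \tfrac{1}{2}$). \cref{prop:lb_width_p=2} then supplies the first-hidden-layer width bound $\rho^2 \min\{N-1, d\}$ without any further work.

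For $1 \le p < 2$, I will use the other half of \cref{lem:pqconversion}, namely $\gB_2(\vx, \mu/\gamma_p(d)) \subseteq \gB_p(\vx, \mu)$, where $\gamma_p(d) = d^{1/p - 1/2} \ge 1$. A $\rho$-robust $\ell_p$-memorizer of $\gD$ with radius $\mu = \rho/2$ is therefore constant on each $\gB_2\!\bigopen{\vx_i, \rho/(2\gamma_p(d))}$, which is exactly the condition that $f$ be a $(\rho/\gamma_p(d))$-robust $\ell_2$-memorizer of $\gD$ (recall $\epsilon_{\gD, 2} = \tfrac{1}{2}$). Since $\gamma_p(d) \ge 1$ we have $\rho/\gamma_p(d) \in (0,1)$, so \cref{prop:lb_width_p=2} applies verbatim and yields first-hidden-layer width at least $(\rho/\gamma_p(d))^2 \min\{N-1, d\}$, as claimed.

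There is essentially no new geometric content beyond the norm-ball inclusions, so I do not expect a substantive obstacle; the only care points are (i) verifying that the padded dataset preserves $\epsilon_{\gD,p} = \tfrac{1}{2}$ in every $\ell_p$-norm, which follows since the added collinear points are at integer distance $\ge 1$ from each other and from the origin in any $\ell_p$-norm, and (ii) ensuring $\rho/\gamma_p(d)$ is still a valid robustness ratio when invoking \cref{prop:lb_width_p=2}, which is automatic because $\gamma_p(d) \ge 1$. The bound degrades at the $p=1$ endpoint to $(\rho^2/d)\min\{N-1,d\}$, reflecting the unavoidable geometric cost of converting an $\ell_1$-ball into an inscribed $\ell_2$-ball, and this matches the statement of the proposition exactly.
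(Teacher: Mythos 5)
Your proposal is correct and is essentially the paper's own argument: the same hard dataset (standard basis vs.\ origin, padded along $\ve_1$ when $N-1>d$, with $\epsilon_{\gD,p}=\tfrac12$ for all $p$), followed by the ball inclusions $\gB_2(\vx,\mu)\subseteq\gB_p(\vx,\mu)$ for $p\ge 2$ and $\gB_2(\vx,\mu/\gamma_p(d))\subseteq\gB_p(\vx,\mu)$ for $1\le p<2$, reducing to \cref{prop:lb_width_p=2} with ratio $\rho$ or $\rho/\gamma_p(d)$ respectively. No gaps; the label assignment is swapped relative to the paper but this is immaterial.
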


\begin{proof}
We take $\gD$ the same dataset as in \cref{prop:lb_width_p=2}. 
Recall that in the proof of \cref{prop:lb_width_p=2}, we take the dataset $\gD = \{\ve_j, 2\}_{j \in [N-1]} \cup \{\vzero, 1\}$ when $N \leq d + 1$, with additional data points $(2\ve_1, 2), (3\ve_1, 2), \cdots, ((N - d)\ve_1, 2)$ when $N > d + 1$. 
This has a separation $\epsilon_{\gD, p} = \frac{1}{2}$ under $\ell_p$-norm for all $p \geq 1$, on the both case $N \leq d + 1$ and $N > d + 1$.
Let $f$ be a neural network that robustly memorizes $\gD$ under $\ell_p$-norm. 
Since $\epsilon_{\gD, p} = \epsilon_{\gD, 2}$, the robustness radius $\mu$ under $\ell_2$-norm satisfies $\mu = \rho \epsilon_{\gD, p} = \rho \epsilon_{\gD, 2}$. 
With this in mind, we now prove the proposition. 
The statement of the proposition consists of two parts, $p \geq 2$ and $1 \leq p < 2$.

\paragraph{Part I: $p \geq 2$.}
First, we prove the result under $p \geq 2$
Robust memorization under $\ell_p$-norm implies
\begin{align*}
    f(\vx) = y_i \textrm{ for all } (\vx_i, y_i) \in \gD \textrm{ and } \vx \in \gB_p(\vx_i, \mu),
\end{align*}
where $\mu = \rho \epsilon_{\gD, p} = \rho \epsilon_{\gD, 2}$. 
For $p \geq 2$, we have $\gB_2(\vx_i, \mu) \subseteq \gB_p(\vx_i, \mu)$ by \cref{lem:pqconversion}.
Thus, 
\begin{align*}
    f(\vx) = y_i \textrm{ for all } (\vx_i, y_i) \in \gD \textrm{ and } \vx \in \gB_2(\vx_i, \mu).
\end{align*}
Since $\mu = \rho \epsilon_{\gD, 2}$ this implies that $f$ $\rho$-robustly memorize $\gD$ under $\ell_2$-norm. 
By \cref{prop:lb_width_p=2}, $f$ should have the first hidden layer width at least $\rho^2 \min\{N - 1, d\}$. 

\paragraph{Part II: $1 \leq p < 2$.}
Next, we prove the result under $1 \leq p < 2$. 
Robust memorization under $\ell_p$-norm implies
\begin{align*}
    f(\vx) = y_i \textrm{ for all } (\vx_i, y_i) \in \gD \textrm{ and } \vx \in \gB_p(\vx_i, \mu),
\end{align*}
where $\mu = \rho \epsilon_{\gD, p} = \rho \epsilon_{\gD, 2}$.
For $1 \leq p < 2$, we have $\gB_2(\vx, d^{\frac{1}{2} - \frac{1}{p}}\mu) \subseteq \gB_p(\vx_i, \mu)$ by applying $p = p$ and $q = 2$ to \cref{lem:pqconversion}. 
Since $\gamma_p(d) = d^{\frac{1}{p} - \frac{1}{2}}$, we have $\gB_2(\vx_i, \mu/\gamma_p(d)) \subseteq \gB_p(\vx_i, \mu)$. 
In particular, $f$ memorize every $\mu/\gamma_p(d)$ neighbor around the data point under $\ell_2$-norm. 
Let
\begin{align*}
    \rho' := \frac{\mu/\gamma_p(d)}{\epsilon_{\gD, 2}} = \frac{\rho \epsilon_{\gD, 2}/\gamma_p(d)}{\epsilon_{\gD, 2}} = \frac{\rho}{\gamma_p(d)}
\end{align*}
Then, $f$ memorize every $\mu/\gamma_p(d) = \rho'\epsilon_{\gD, 2}$ radius neighbor around each data point under $\ell_2$-norm. 
In other words, $f$ $\rho'$-robustly memorize $\gD$ under $\ell_2$-norm. 
By \cref{prop:lb_width_p=2}, $f$ should have the first hidden layer width at least $(\rho')^2 \min\{N - 1, d\}$. Putting back $\rho' = \frac{\rho}{\gamma_p(d)}$ concludes the desired statement. 

\end{proof}

\begin{restatable}{proposition}{lbwidthinf3}
\label{thm:lb-infty}
There exists a point separated $\gD \in \mD_{d, N, 2}$ such that any neural network that $\rho$-robustly memorizes $\gD$ under $\ell_\infty$-norm must have the first hidden layer width at least

\begin{itemize}
    \item $\rho^2 \min \{d, N-1\}$ if $\rho \in (0, \frac{1}{2}]$.
    \item $\min \{d, N-1\}$ if $\rho \in (\frac{1}{2}, 1)$.
\end{itemize}
\end{restatable}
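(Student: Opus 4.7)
The plan is to reuse the same two constructions from \cref{prop:lb_width_p=2}: take $\gD=\{(\ve_j,1)\}_{j\in[t]}\cup\{(\vzero,2)\}$ with $t=\min\{N-1,d\}$, padded along the $\ve_1$-axis when $N-1>d$. Since all nonzero pairwise differences between opposite labels are standard basis vectors (or positive multiples of $\ve_1$ in the padded case), the $\ell_\infty$-separation is $\epsilon_{\gD,\infty}=1/2$, identical to the $\ell_2$ case. Following exactly the reasoning in Case I of \cref{prop:lb_width_p=2} but with $\gB_\infty$ in place of $\gB_2$, the robust memorizer $f$ with first-layer weight matrix $\mW\in\R^{m\times d}$ must satisfy $\gB_\infty(\ve_j,2\mu)\cap\nullsp(\mW)=\emptyset$ for every $j\in[t]$, i.e.\
\[
\dist_\infty\bigl(\ve_j,\nullsp(\mW)\bigr)\ \geq\ 2\mu\ =\ \rho,\qquad\forall j\in[t].
\]

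For the small-$\rho$ regime $\rho\in(0,1/2]$, I would obtain the width bound $m\ge\rho^2 t$ by passing through $\ell_2$. Using $\|\vv\|_\infty\le\|\vv\|_2$, one has $\dist_\infty(\ve_j,Z)\le\dist_2(\ve_j,Z)$ for every subspace $Z$, so the above condition forces $\dist_2(\ve_j,\nullsp(\mW))\ge\rho$ for all $j\in[t]$. Now \cref{lem:max-min-dist-gen} with $k=d-m$ and $t=t$ (whose hypotheses are verified as in \cref{prop:lb_width_p=2}) yields $\min_{j\in[t]}\dist_2(\ve_j,\nullsp(\mW))\le\sqrt{m/t}$, and comparing with the lower bound $\rho$ gives $m\ge\rho^2 t=\rho^2\min\{N-1,d\}$.

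For the large-$\rho$ regime $\rho\in(1/2,1)$, the key geometric observation is that the $\ell_\infty$-geometry is much more rigid than the $\ell_2$-geometry. Specifically, I would prove the following one-line lemma: for any nonzero $\vz\in\R^d$, if $j^\star\in\arg\max_i|z_i|$, then the vector $\vw:=\vz/(2z_{j^\star})$ satisfies $w_{j^\star}=1/2$ and $|w_i|\le 1/2$ for every $i$, so $\|\ve_{j^\star}-\vw\|_\infty=1/2$ and consequently $\dist_\infty(\ve_{j^\star},\mathrm{span}(\vz))\le 1/2$. Hence \emph{any} nonzero vector in $\nullsp(\mW)$ whose largest-magnitude coordinate lies in $[t]$ immediately certifies $\dist_\infty(\ve_{j^\star},\nullsp(\mW))\le 1/2<\rho$, contradicting the memorization requirement.

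It remains to guarantee the existence of such a vector when $m<t$, which is the main obstacle (routine when $t=d$, since any nonzero null vector works, but more delicate when $t=N-1<d$). The plan is a short dimension-counting argument: let $\pi:\R^d\to\R^{d-t}$ project onto the last $d-t$ coordinates; then $\nullsp(\mW)\cap\ker(\pi)$ has dimension at least $(d-m)-(d-t)=t-m>0$, so it contains a nonzero vector $\vz$ supported entirely on $[t]$, and such $\vz$ automatically has $j^\star\in[t]$. This produces the desired contradiction, forcing $m\ge t=\min\{N-1,d\}$. Combining the two regimes delivers the claimed bound, and the $\ell_\infty$ counterpart of the $N-1>d$ padding case follows verbatim from the Case II reduction in \cref{prop:lb_width_p=2}.
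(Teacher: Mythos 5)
Your proposal is correct and follows essentially the same route as the paper: the same dataset with $\ell_\infty$-separation $1/2$, the same reduction to $\dist_\infty(\ve_j,\nullsp(\mW))\ge\rho$, your ``one-line lemma'' is exactly the paper's \cref{lem:max-min-inf-dist}, and your dimension count for a null vector supported on $[t]$ is exactly the intersection-dimension argument inside \cref{lem:max-min-inf-dist-gen}. The only cosmetic difference is that for the small-$\rho$ bullet you pass to $\ell_2$ via $\dist_\infty\le\dist_2$ within the same construction, whereas the paper cites \cref{prop:lb_width_p} (ball inclusion $\gB_2\subseteq\gB_\infty$), which is an equivalent reduction.
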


\begin{proof}
The first bullet is an immediate corollary of \cref{prop:lb_width_p}, so we focus on the second bullet for $\rho\in(1/2,1)$. 
To prove the second bullet, we consider two cases based on the relationship between $N-1$ and $d$. In the first case, where $N-1 \leq d$, establishing the proposition requires that the first hidden layer has width at least $N-1$. In the second case, where $N-1 > d$, the required width is at least $d$. For each case, we construct a dataset $\gD \in \mD_{d, N, 2}$ such that any network that $\rho$-robustly memorizes $\gD$ must have a first hidden layer of width no smaller than the corresponding bound. 

\paragraph{Case I : $N-1 \leq d$.}

Let $\gD = \{(\ve_j, 2)\}_{j \in [N-1]} \cup \{(\vzero, 1)\}$. 
Then, $\gD$ has a separation constant $\epsilon_{\gD, \infty} = 1/2$ under $\ell_\infty$-norm. 
Let $f$ be a $\rho$-robust memorizer of $\gD$ under $\ell_\infty$-norm whose first hidden layer width is $m$. 
Let $\mW \in \sR^{m \times d}$ denote the first hidden weight matrix. 
Suppose for a contradiction, $m < N-1$. 

Let $\mu = \rho \epsilon_{\gD, \infty}$ denote the robustness radius. 
Then, $f$ has to distinguish every point in each $B_\mu(\ve_j)$ from every point in $B_\mu(\vzero)$ for all $j \in [N-1]$.
Therefore, for $\vx \in B_\infty(\ve_j, \mu)$ and $\vx' \in B_\infty(\vzero, \mu)$, we have
\begin{align*}
    \mW\vx \neq \mW\vx',
\end{align*}
or equivalently, $\vx - \vx' \notin \nullsp (\mW)$. 
Moreover
\begin{align*}
    B_\infty(\ve_j, \mu) - B_\infty(\vzero, \mu) := \{\vx - \vx' : \vx \in B_\infty(\ve_j, \mu) \textrm{ and } \vx' \in B_\infty(\vzero, \mu)\} = B_\infty(\ve_j, 2\mu).
\end{align*}

Hence, it is necessary to have $B_\infty(\ve_j, 2\mu) \cap \nullsp(\mW) = \emptyset$ for all $j \in [N-1]$, or equivalently,
\begin{align}
    \label{eq:no-intersect-infty_3}
    \dist_\infty(\ve_j, \nullsp(\mW)) \geq 2\mu
\end{align}
for all $j \in [N-1]$. 

Since $\dim \col (W^\top) \leq \dim \sR^m = m$, we have $\dim \nullsp (W) \geq d - m$. 
Using \cref{lem:max-min-inf-dist-gen}, we can upper bounds the maximum possible distance between $\{\ve_j\}_{j \in [N-1]} \subseteq \sR^d$ and arbitrary subspace of a fixed dimension.

Take $Z \subseteq \nullsp(W)$ such that $\dim Z = d - m$ and substitute $d = d$, $t = N-1$, $k = d - m$ and $Z = Z$ into \cref{lem:max-min-inf-dist-gen}. 
The assumptions $t \leq d$ for the lemma are satisfied since $N-1 \leq d$. 
The additional assumption $k \geq d - t + 1$ is equivalent to $d - m \geq d - (N - 1) + 1$ and is satisfied since $m < N - 1$.  
Therefore, we have
\begin{align*}
    \min_{j \in [N-1]} \dist_\infty(\ve_j, Z) \leq \frac{1}{2}.
\end{align*}
By combining the above inequality with \Cref{eq:no-intersect-infty_3}, 
\begin{align}
\label{eq:no-intersect-infty4}
    2\mu \leq \min_{j \in [N-1]} \dist_\infty(\ve_j, \nullsp(W)) \overset{(a)}{\leq} \min_{j \in [N-1]} \dist_\infty(\ve_j, Z) \leq \frac{1}{2},
\end{align}
where (a) is due to $Z \subseteq \nullsp(W)$.
Since $\epsilon_{\gD, \infty} = 1/2$, we have $2\mu = 2 \rho \epsilon_{\gD, \infty} = \rho$ so that \cref{eq:no-intersect-infty4} becomes $\rho \leq 1/2$.
This contradicts our assumption $\rho \in (1/2, 1)$, and therefore the width requirement $m \geq N - 1$ is necessary. 
This concludes the proof for the case $N-1 \leq d$. 

\paragraph{Case II : $N-1 > d$.}
We construct the first $d + 1$ data points in the same manner as in Case I, using the construction for $N=d+1$. 
For the remaining $N - d - 1$ data points, we set them sufficiently distant from the first $d + 1$ data points to keep $\epsilon_{\gD, \infty} = 1/2$. 
In particular, we can set $\vx_{d+2} = 2\ve_1, \vx_{d+3} = 3\ve_1, \cdots, \vx_N = (N - d)\ve_1$ and $y_{d+2} = y_{d+3} = \cdots = y_N = 2$. 
Compared to the case $N = d + 1$, we have $\epsilon_{\gD, \infty}$ unchanged while having more data points to memorize. 
By the necessity for the case $N = d + 1$, this dataset also requires the first hidden layer width at least $(d + 1) - 1 = d$. 
This concludes the statement for the case $N - 1 > d$.

Combining the result of the two cases $N - 1 \leq d$ and $N - 1 > d$ concludes the proof of the theorem.

\end{proof}

\begin{restatable}{proposition}{lbvcp}
\label{thm:lbvcp}
For $p \in [1, \infty)$, let $\rho \in \left(0, \bigopen{1 - \frac{1}{d}}^{1 / p}\right]$. 
Suppose for any $\gD \in \mD_{d, N, 2}$ there exists $f \in \gF_{d, P}$ that $\rho$-robustly memorizes $\gD$ under $\ell_p$-norm.
Then, the number of parameters $P$ must satisfy $P = \Omega(\sqrt{\frac{N}{1 - \rho^p}})$. 
\end{restatable}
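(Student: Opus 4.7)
The plan is to follow the strategy of \cref{prop:lb_vc_p=2} verbatim, with $\ell_2$ distances replaced by $\ell_p$ distances throughout. The target is to lower bound $\vcdim(\gF_{d,P})$ and then invoke $\vcdim(\gF_{d,P}) = O(P^2)$ to conclude. Specifically, setting $k := \floor{1/(1-\rho^p)}$, I aim to show $\vcdim(\gF_{d,P}) \geq k \cdot \floor{N/2} = \Omega\bigopen{N/(1-\rho^p)}$; the hypothesis $\rho \leq (1-1/d)^{1/p}$ guarantees $k \in [1, d]$. The shattered set is the same as before: partition the points into $\floor{N/2}$ groups $\gX_l := \vc_l + \{\ve_j\}_{j \in [k]}$ with $\vc_l := 2d^2(l-1)\ve_1$, and for each target labeling $\gY \in \{\pm 1\}^{k\floor{N/2}}$ encode it into a binary dataset via anchors $\vx_{2l-1}$ (labeled $2$) and $\vx_{2l}$ (labeled $1$) constructed from $J_l^\pm := \{j \in [k] : y_{l,j} = \pm 1\}$ exactly as in the $\ell_2$ proof.

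The central observation under $\ell_p$ is that every $\pm 1$-valued vector in $\sR^d$ supported on exactly $k$ coordinates has $\ell_p$-norm $k^{1/p}$; hence $\|\vx_{2l-1} - \vx_{2l}\|_p = 2k^{1/p}$ and $\|\vx_{l,j} - \vx_{2l-1}\|_p = (k-1)^{1/p}$ when $y_{l,j} = +1$ (symmetrically for $-1$ with $\vx_{2l}$). Since $\vc_l - \vc_{l'}$ is a multiple of $\ve_1$, one has $\|\vx_{2l-1} - \vx_{2l'}\|_p \geq 2d^2 - 2k^{1/p} \geq 2k^{1/p}$ whenever $d \geq 2$ and $k \leq d$, for all $p \geq 1$, so $\epsilon_{\gD,p} \geq k^{1/p}$. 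The bridge to the robustness radius is the strictly increasing map $\phi(t) := ((t-1)/t)^{1/p}$ on $[1,\infty)$ with inverse $\phi^{-1}(\rho) = 1/(1-\rho^p)$; monotonicity together with $k \leq \phi^{-1}(\rho)$ yields $\rho \geq \phi(k) = ((k-1)/k)^{1/p}$, and therefore $\mu = \rho \epsilon_{\gD,p} \geq (k-1)^{1/p}$.

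Since each $\vx_{l,j}$ lies exactly on the boundary of the closed $\ell_p$-ball of radius $\mu$ around its anchor, I would use the sequence $\vz_n := \tfrac{n-1}{n}\vx_{l,j} + \tfrac{1}{n}\vx_{2l-1}$---whose $\ell_p$-distance to $\vx_{2l-1}$ equals $\tfrac{n-1}{n}(k-1)^{1/p} < \mu$ for every $p \in [1,\infty)$---and continuity of $f$ to conclude $f(\vx_{l,j}) = y_i$. The affine rescaling $f' := 2f - 3 \in \gF_{d,P}$ then shatters the point set according to $\gY$, yielding the desired VC-dimension lower bound. The main obstacle is essentially bookkeeping: verifying the clean identity $\|\pm\ve_{i_1} \pm \cdots \pm \ve_{i_k}\|_p = k^{1/p}$ together with $d^2 \geq 2k^{1/p}$ for $d \geq 2$, $k \leq d$, $p \geq 1$. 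Once these are in hand, the $\ell_2$ argument transfers without further structural changes.
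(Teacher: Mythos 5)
Your proposal is correct and follows essentially the same route as the paper's proof: the identical group construction with anchors built from $J_l^\pm$, the bound $\epsilon_{\gD,p}\geq k^{1/p}$ with $k=\floor{1/(1-\rho^p)}$, the monotone map $\phi(t)=((t-1)/t)^{1/p}$ giving $\mu\geq(k-1)^{1/p}$, the sequence-plus-continuity argument, the affine rescaling $2f-3$, and the conclusion via $\vcdim(\gF_{d,P})=O(P^2)$. The only cosmetic difference is your intermediate inequality $2d^2-2k^{1/p}\geq 2k^{1/p}$ versus the paper's $2d^2-2d^{1/p}\geq 2d^{1/p}\geq 2k^{1/p}$, both of which hold for $d\geq 2$ and $k\leq d$.
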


\begin{proof}
The main idea of the proof is the same as \cref{prop:lb_vc_p=2}. 
We construct $\floor{\frac{N}{2}} \times \floor{\frac{1}{1 - \rho^p}}$ number of data points that can be shattered by $\gF_{d, P}$. 
This proves $\vcdim(\gF_{d, P}) \geq \floor{\frac{N}{2}} \times \floor{\frac{1}{1 - \rho^p}} = \Omega(N/(1 - \rho^p))$. 
Since $\vcdim(\gF_{d, P}) = O(P^2)$, this proves $P = \Omega(\sqrt{N/(1 - \rho^p)})$.

For simplicity of the notation, let us denote $k:= \floor{\frac{1}{1 - \rho^p}}$.
To prove the lower bound on the VC-dimension, we construct $k \times \floor{\frac{N}{2}}$ points in $\sR^d$ that can be shattered by $\gF_{d, P}$. 
As in the proof of \cref{prop:lb_vc_p=2}, we define $\floor{\frac{N}{2}} \times k$ number of points as $\floor{\frac{N}{2}}$ groups, where each group consists of $k$ points. 

We start by constructing the first group.
Since $\rho \in (0, \bigopen{\frac{d - 1}{d}}^{1/p}]$, we have $k = \floor{\frac{1}{1 - \rho^p}} \in [1, d]$. 
The first group $\gX_1 := \{\ve_j\}_{j=1}^k \subseteq \sR^d$ is defined as the set of the first $k$ vectors in the standard basis of $\sR^d$.
The remaining $\floor{\frac{N}{2}} - 1$ groups are simply constructed as a translation of $\gX_1$. 
In particular, for $l \in [\floor{\frac{N}{2}}]$, we define
\begin{align*}
    \gX_l := \vc_l + \gX_1 = \bigset{\vc_l + \vx \, \mid \, \vx \in \gX_1}
\end{align*}
where $\vc_l := 2d^2 (l - 1)\times \ve_1$ ensures that each group is sufficiently far from one another. 
Note that $\vc_1 = \vzero$ ensures $\gX_1$ also satisfies the consistency of the notation. 
Now, define $\gX = \cup_{l \in [\floor{N/2}]} \gX_l$, the union of all $\floor{\frac{N}{2}}$ groups which consists of $k \times \floor{\frac{N}{2}}$ points. 

We claim that if for any $\gD \in \mD_{d, N, 2}$, there exists $f \in \gF_{d, P}$ that $\rho$-robustly memorizes $\gD$ under $\ell_p$-norm, then $\gX$ is shattered by $\gF_{d, P}$. 
To prove the claim, suppose we are given arbitrary label $\gY = \{y_{l, j}\}_{l \in [\floor{N/2}], j \in [d]}$ of $\gX$, where $y_{l, j} \in \{\pm 1\}$ denotes the label for $\vx_{l, j}:=\vc_l + \ve_j \in \gX$. 
Given the label $\gY$, we construct $\gD \in \mD_{d, N, 2}$ such that whenever $f \in \gF_{d, P}$ $\rho$-robustly memorize $\gD$ under $\ell_p$-norm, then its affine translation $f' = 2f - 3 \in \gF_{d, P}$ satisfies $f'(\vx_{l, j}) = y_{l, j}$ for all $\vx_{l, j} \in \gX$. 

For each $l \in [\floor{N/2}]$, let $J_l^+ = \bigset{j \in [k] \, \mid \, y_{l, j} = +1}$ and $J_l^- = \bigset{j \in [k] \, \mid \, y_{l, j} = -1}$. 
Define
\begin{align*}
    \vx_{2l - 1} & = \vc_l + \sum_{j \in J_l^+} \ve_j - \sum_{j \in J_l^-} \ve_j\\
    \vx_{2l} & = \vc_l + \sum_{j \in J_l^-} \ve_j - \sum_{j \in J_l^+} \ve_j
\end{align*}
Furthermore, define $y_{2l - 1} = 2, y_{2l} = 1$ and let $\gD = \{(\vx_i, y_i)\}_{i \in [N]} \in \mD_{d, N, 2}$.
To consider the separation $\epsilon_{\gD, 2}$, notice that
\begin{align*}
    \norm{\vx_{2l - 1} - \vx_{2l}}_p = \norm{2\bigopen{\sum_{j \in J_l^+} \ve_j - \sum_{j \in J_l^-} \ve_j}}_p \overset{(a)}{=} 2k^{1/p},
\end{align*}
where (a) is due to $J_l^+ \cap J_l^- = \emptyset$ and $J_l^+ \cup J_l^- = [k]$.
For $l \neq l'$,
\begin{align*}
    d_p(\vx_{2l - 1}, \vx_{2l'}) & \overset{(a)}{\geq} d_p(\vc_l, \vc_{l'}) - d_p(\vc_l, \vx_{2l - 1}) - d_p(\vc_{l'}, \vx_{2l'})\\
    & \overset{(b)}{\geq} 2d^2 - k^{1/p} - k^{1/p}\\
    & \overset{(c)}{\geq} 2d^2 - 2d^{1/p}\\
    & \overset{(d)}{\geq} 2d^{1/p}\\
    & \overset{(e)}{\geq} 2k^{1/p},
\end{align*}
where (a) is by the triangle inequality under $\ell_p$-norm (namely, the Minkowski inequality), (b) uses $d_p(\vc_l, \vx_{2l - 1}) = d_p(\vc_{l'}, \vx_{2l'}) = k^{1/p}$, (c),(e) is by $k \leq d$, and (d) holds for all $d \geq 2$ and $p \geq 1$.
Thus, we have $\epsilon_{\gD, p} \geq k^{1/p}$. 

Take $f \in \gF_{d, P}$ that $\rho$-robustly memorize $\gD$. 
We first lower bound the robustness radius $\mu$. 
Since $t \overset{\phi}{\mapsto} \sqrt[p]{\frac{t - 1}{t}}$ is an strictly increasing function from  $t \geq 1$ onto $[0, 1)$
\footnote{$\phi$ is a composition of two strictly increasing one-to-one corresponding functions $t \mapsto \frac{t - 1}{t}$ from $[1, \infty)$ onto $[0, 1)$ and $u \mapsto \sqrt[p]{u}$ from $[0, 1)$ onto $[0, 1)$}
, it has a well defined inverse mapping $\phi^{-1}:[0, 1) \rightarrow [1, \infty)$ defined as $\phi^{-1}(\rho) = \frac{1}{1 - \rho^p}$. 
Therefore,
\begin{align*}
    \rho = \phi(\phi^{-1}(\rho)) = \phi\bigopen{\frac{1}{1 - \rho^p}} \geq \phi\bigopen{\floor{\frac{1}{1 - \rho^p}}} = \phi(k) = \sqrt[p]{\frac{k - 1}{k}}.
\end{align*}

Since $\epsilon_{\gD, p} \geq k^{1/p}$ and $\rho \geq (\frac{k-1}{k})^{1/p}$, we have $\mu = \rho \epsilon_{\gD, p} \geq \rho k^{1/p} \geq (k - 1)^{1/p}$. 
Thus, every $f$ that $\rho$-robustly memorizes $\gD$ must also memorize $(k - 1)^{1/p}$ radius open $\ell_p$-ball around each point in $\gD$ as the same label as the data point. 

Moreover, for $\vx_{l, j} \in \gX$ with positive label $y_{l, j} = +1$, we have 
\begin{align*}
    \norm{\vx_{l, j} - \vx_{2l - 1}}_p & = \norm{(\vc_l + \ve_j) - (\vc_l + \sum_{j' \in J_l^+} \ve_{j'} - \sum_{j' \in J_l^-} \ve_{j'})}_p\\
    & = \norm{\sum_{\substack{j' \in J_l^+\\ j' \neq j}}\ve_{j'} - \sum_{j' \in J_l^-} \ve_{j'}}_p\\
    & = (k - 1)^{1/p}.
\end{align*}
Take a sequence of points $\{\vz_n\}_{n \in \sN}$ such that $\vz_n \rightarrow \vx_{l, j}$ as $n \rightarrow \infty$
\footnote{We consider the convergence of the sequence on the usual topology induced by $\ell_2$-norm. }
and
\begin{align*}
    \norm{\vz_n - \vx_{2l - 1}}_p < (k - 1)^{1 / p},
\end{align*}
for all $n \in \sN$. 
In particular,
\begin{align*}
    \vz_n := \frac{n - 1}{n}\vx_{l, j} + \frac{1}{n}\vx_{2l - 1}
\end{align*}
satisfies such properties. 
Then, we have $f(\vz_n) = f(\vx_{2l - 1}) = 2$ for all $n \in \sN$. 
Moreover, by the continuity of $f$ (under the usual topology),
\begin{align*}
    f(\vx_{l, j}) = f(\lim_{n \rightarrow \infty} \vz_n) = \lim_{n \rightarrow \infty} f(\vz_n) = \lim_{n \rightarrow \infty} 2 = 2.
\end{align*}
Similarly, for $\vx_{l, j}$with negative label $y_{l, j} = -1$, we have $\norm{\vx_{l, j} - \vx_{2l}}_p = (k - 1)^{1/p}$, so that $f(\vx_{l,j}) = 1$. 

Since we can adjust the weight and the bias of the last hidden layer, $\gF_{d, P}$ is closed under affine transformation; that is, $af + b \in \gF_{d, P}$ whenever $f \in \gF_{d, P}$. 
In particular, $f' := 2f - 3 \in \gF_{d, P}$. 
This $f'$ satisfies $f'(\vx_{l, j}) = 2f(\vx_{l, j}) - 3 = 2 \cdot 2 - 3 = +1$ whenever $y_{l, j} = +1$ and $f'(\vx_{l, j}) = 2f(\vx_{l, j}) - 3 = 2 \cdot 1 - 3 = -1$ whenever $y_{l, j} = -1$.
Thus, $\sign \circ f'$ perfectly classify $\gX$ with the label $\gY$. 
Since we can take such $f' \in \gF_{d, P}$ given an arbitrary label $\gY$ of $\gX$, it follows that $\gF_{d, P}$ shatters $\gX$, concluding the proof of the theorem.
\end{proof}

\subsubsection{Lemmas for \texorpdfstring{\cref{app:necessity-pnorm}}{Appendix~C.1}}

\begin{restatable}{lemma}{maxmininfdist}
\label{lem:max-min-inf-dist}
Let $\{\ve_j\}_{j \in [d]} \subseteq \sR^d$ denote the standard basis in $\sR^d$. 
Then, for any $k$-dimensional subspace $Z$ of $\sR^d$ with $k \geq 1$ we have, 
\begin{align*}
    \min_{j \in [d]} \dist_\infty(\ve_j, Z) \leq \frac{1}{2}.
\end{align*}
\end{restatable}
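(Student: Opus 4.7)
The plan is to exploit the scaling freedom within $Z$ to build a point of $Z$ that is close to some $\ve_j$ in $\ell_\infty$-norm, using a single nonzero vector in $Z$ together with a carefully chosen scalar multiple.

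First, I would fix any nonzero $\vv = (v_1, \dots, v_d) \in Z$, which exists because $k \geq 1$. Let $j^\star \in \argmax_{j \in [d]} |v_j|$, so $|v_{j^\star}| = \|\vv\|_\infty > 0$. By replacing $\vv$ with $-\vv$ if necessary (still in $Z$ since $Z$ is a subspace), I may assume $v_{j^\star} > 0$. Define $\vu := \vv / v_{j^\star} \in Z$. Then $u_{j^\star} = 1$ and $|u_j| \leq 1$ for every $j \in [d]$, because $j^\star$ attains the coordinate-wise maximum in absolute value.

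Next, I would consider the scaled vector $\tfrac{1}{2}\vu \in Z$ and compare it to $\ve_{j^\star}$. The $j^\star$-th coordinate of $\ve_{j^\star} - \tfrac{1}{2}\vu$ equals $1 - \tfrac{1}{2} = \tfrac{1}{2}$, while for $j \neq j^\star$ the $j$-th coordinate equals $-\tfrac{1}{2}u_j$, whose absolute value is at most $\tfrac{1}{2}$ by the bound $|u_j| \leq 1$. Therefore
\begin{align*}
\dist_\infty(\ve_{j^\star}, Z) \;\leq\; \bigl\|\ve_{j^\star} - \tfrac{1}{2}\vu\bigr\|_\infty \;=\; \max\Bigl\{\tfrac{1}{2},\ \max_{j \neq j^\star} \tfrac{1}{2}|u_j|\Bigr\} \;\leq\; \tfrac{1}{2}.
\end{align*}
Taking the minimum over $j \in [d]$ gives $\min_{j \in [d]} \dist_\infty(\ve_j, Z) \leq \tfrac{1}{2}$, as desired.

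There is no real obstacle here; the entire argument rests on the simple observation that, after normalizing so that the largest coordinate of some vector in $Z$ equals $1$, halving that vector balances the error between the ``$1$''-coordinate and all the other coordinates. The only subtlety is ensuring $v_{j^\star} \neq 0$, which is guaranteed since $\vv \neq \vzero$ forces $\|\vv\|_\infty > 0$, and the sign flip is permissible because $Z$ is a linear subspace.
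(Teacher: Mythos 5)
Your proof is correct and takes essentially the same approach as the paper: pick a nonzero vector of $Z$, normalize so the largest-magnitude coordinate becomes $1$, and halve it to obtain a point of $Z$ within $\ell_\infty$-distance $\tfrac{1}{2}$ of the corresponding standard basis vector. The paper first reduces to a one-dimensional subspace before doing the identical computation, but that step is inessential and your direct choice of a single nonzero vector handles $k \geq 1$ just as well.
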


\begin{proof}
For any subspace $Z'$ of $Z$, we have 
\begin{align*}
    \min_{j \in [d]} \dist_\infty(\ve_j, Z) \leq \min_{j \in [d]} \dist_\infty(\ve_j, Z').
\end{align*}
As every $k$-dimensional subspace of $\sR^d$ with $k \geq 1$ has a one-dimensional subspace, it suffices to prove the second statement for $k=1$. i.e., for any one-dimensional subspace $Z$ of $\sR^d$, 
\begin{align*}
    \min_{j \in [d]} \dist_\infty(\ve_j, Z) \leq \frac{1}{2}. 
\end{align*}

Let $Z = \textrm{Span}(\vz)$, where $\vz = (z_1, \cdots, z_d) \neq \vzero$. Without loss of generality, let $\norm{\vz}_\infty = 1$ and take $j \in [d]$ such that $\abs{z_j} = 1$. Let $\vz' = \frac{z_j}{2}\vz \in Z$. Then, 
\begin{align*}
    \norm{\vz' - \ve_j}_\infty & = \norm{(\frac{z_jz_1}{2}, \cdots, \frac{z_jz_{j-1}}{2}, \frac{z_jz_j}{2} - 1, \frac{z_jz_{j+1}}{2}, \cdots, \frac{z_jz_d}{2})}_\infty\\
    & \overset{(a)}{=} \norm{(\frac{z_jz_1}{2}, \cdots, \frac{z_jz_{j-1}}{2}, -\frac{1}{2}, \frac{z_jz_{j+1}}{2}, \cdots, \frac{z_jz_d}{2})}_\infty\\
    & \overset{(b)}{\leq} \frac{1}{2},
\end{align*}
where (a) is by $\abs{z_j}=1$, and (b) is by $\norm{\vz}_\infty = 1$. Therefore,
\begin{align*}
    \min_{j' \in [d]} \dist_\infty(\ve_{j'}, Z) \leq \dist_\infty(\ve_j, Z) \leq \norm{\vz' - \ve_j}_\infty \leq \frac{1}{2},
\end{align*}
concluding the statement. 
\end{proof}

The following lemma generalizes \Cref{lem:max-min-inf-dist} to the case where we consider only the distance to a subset of the standard basis, instead of the whole standard basis.

\begin{restatable}{lemma}{maxmininfdistgen}
\label{lem:max-min-inf-dist-gen}
For $1 \leq t \leq d$, let $\{\ve_j\}_{j \in [t]} \subseteq \sR^d$ denote the first $t$ vectors from the standard basis in $\sR^d$. 
Then, for any $k$-dimensional subspace $Z$ of $\sR^d$ with $k \geq d - t + 1$, 
\begin{align*}
    \min_{j \in [t]} \dist_\infty(\ve_j, Z) \leq \frac{1}{2}. 
\end{align*}
\end{restatable}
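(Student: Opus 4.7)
The plan is to mirror the $\ell_2$-norm generalization in \cref{lem:max-min-dist-gen}: reduce the problem to the full standard basis setting already handled in \cref{lem:max-min-inf-dist} by restricting the subspace $Z$ to the coordinate hyperplane spanned by the first $t$ standard basis vectors. The key geometric observation is that $\ell_\infty$-distance on $\mathrm{span}\{\ve_1,\dots,\ve_t\} \subseteq \sR^d$ is preserved under the natural coordinate isomorphism onto $\sR^t$, since the remaining coordinates of any vector in this span vanish.

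First I would set $\mQ := [\ve_1\ \ve_2\ \cdots\ \ve_t]^\top \in \sR^{t \times d}$ so that $\col(\mQ^\top) = \mathrm{span}\{\ve_1,\dots,\ve_t\}$, a $t$-dimensional subspace of $\sR^d$. By the standard dimension formula for subspaces,
$$\dim(Z \cap \col(\mQ^\top)) \geq \dim Z + \dim \col(\mQ^\top) - d = k + t - d \geq 1,$$
where the last inequality uses the hypothesis $k \geq d - t + 1$. Hence $Z \cap \col(\mQ^\top)$ contains at least a one-dimensional subspace.

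Next, I would introduce the natural linear isomorphism $\phi:\col(\mQ^\top) \to \sR^t$ defined by $\phi\bigl(\sum_{i=1}^t a_i \ve_i\bigr) = (a_1,\dots,a_t)^\top$. For any $\vv = \sum_{i=1}^t a_i \ve_i$ in $\col(\mQ^\top)$, the remaining coordinates are zero, so $\|\vv\|_\infty = \max_{i \in [t]} |a_i| = \|\phi(\vv)\|_\infty$; hence $\phi$ is an $\ell_\infty$-isometry. Moreover, $\phi$ maps each $\ve_j \in \sR^d$ (for $j \in [t]$) to the $j$-th standard basis vector of $\sR^t$, and $\phi(Z \cap \col(\mQ^\top))$ is a subspace of $\sR^t$ of dimension at least $1$.

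Finally, applying \cref{lem:max-min-inf-dist} to $\sR^t$ with the subspace $\phi(Z \cap \col(\mQ^\top))$ yields
$$\min_{j \in [t]} \dist_\infty\bigl(\phi(\ve_j),\, \phi(Z \cap \col(\mQ^\top))\bigr) \leq \tfrac{1}{2}.$$
Transferring back via the isometry $\phi$ gives $\min_{j \in [t]} \dist_\infty(\ve_j, Z \cap \col(\mQ^\top)) \leq \tfrac{1}{2}$, and since $Z \cap \col(\mQ^\top) \subseteq Z$ we conclude $\min_{j \in [t]} \dist_\infty(\ve_j, Z) \leq \tfrac{1}{2}$. No step looks genuinely hard; the only subtlety is being careful that the dimension bound $k + t - d \geq 1$ comes out right and that the reduction to $\sR^t$ truly preserves $\ell_\infty$-distances to the target subspace—both handled by noting that the $\ve_j$ themselves lie in $\col(\mQ^\top)$, so we may shrink the target from $Z$ to $Z \cap \col(\mQ^\top)$ without loss.
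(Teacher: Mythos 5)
Your proof is correct and follows essentially the same route as the paper: restrict attention to $Z\cap\col(\mQ^\top)$, show it has dimension at least $k+t-d\geq 1$, transport the problem to $\sR^t$ via the coordinate isometry $\phi$, and invoke \cref{lem:max-min-inf-dist}. The only cosmetic difference is that you derive the dimension bound directly from the standard subspace-dimension formula $\dim(U\cap V)\geq\dim U+\dim V-d$, which is in fact a cleaner justification than the orthogonal decomposition the paper writes down.
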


\begin{proof}
Similar to \Cref{lem:max-min-dist-gen}, we start by considering the dimension of the intersection between $Z$ and $\sR^t$, both as a subspace of $\sR^d$. 
Let $Q = [\ve_1 \ve_2 \cdots \ve_t]^\top\in \sR^{t \times d}$. Then, 
\begin{align*}
    \sR^d = \col(Q^\top) \oplus \nullsp(Q) = (Z \cap \col(Q^\top)) \oplus (Z^\perp \cap \col(Q^\top)) \oplus \nullsp(Q).
\end{align*}
By considering the dimension, 
\begin{align*}
    \dim (Z \cap \col(Q^\top)) & = \dim \sR^d - \dim (Z^\perp \cap \col(Q^\top)) - \dim \nullsp(Q)\\
    & \geq \dim \sR^d - \dim Z^\perp - \dim \nullsp(Q)\\
    & = d - (d - k) - (d - t)\\
    & = k - (d - t)
\end{align*}
Under the assumption $k \geq d - t + 1$, we have 
\begin{align*}
    \dim \phi(Z \cap \col(Q^\top) = \dim (Z \cap \col(Q^\top) \geq k - (d - t) \geq 1.
\end{align*}
Then,
\begin{align*}
    \min_{j \in [t]} \dist_\infty(\ve_j, Z) & \leq \min_{j \in [t]} \dist_\infty(\ve_j, Z\cap \col(Q^\top)) \\
    & = \min_{j \in [t]} \dist_\infty(\phi(\ve_j), \phi(Z\cap \col(Q^\top)) \\
    & \overset{(b)}{\leq} \frac{1}{2},
\end{align*}
where (b) is by \Cref{lem:max-min-inf-dist}.
\end{proof}

\subsection{Extension of Sufficiency Condition to \texorpdfstring{$\ell_p$}{lp}-norm}
\label{app:extension l_p suff}

\begin{theorem}
    \label{thm:ub_pnorm}
    Let $p \in [1, \infty]$. 
    For any dataset $\gD \in \mD_{d, N, C}$ and $\eta \in (0,1)$, the following statements hold:

    \begin{enumerate}[label=\upshape(\roman*),ref=\thetheorem (\roman*), leftmargin=30pt, itemsep=-1pt]
        \item \label{thm:ub1-pnorm} If $\rho \in \Big(0, \frac{1}{5 N \sqrt{d} \gamma_p(d)}\Big]$, there exists $f \in \mathcal{F}_{d,P}$ with $P=\tilde{O}(\sqrt{N})$ that $\rho$-robustly memorizes $\gD$ under $\ell_p$-norm.
        \item \label{thm:ub2-pnorm} If $\rho \in \Big(\frac{1}{5N \sqrt{d}\gamma_p(d)},\frac{1}{ 5\sqrt{d} \gamma_p(d)}\Big]$, there exists $f \in \mathcal{F}_{d,P}$ with $P=\tilde{O}(N d^{\frac{1}{4}} \rho^{\frac{1}{2}}\gamma_p(d)^{\frac{1}{2}})$ that $\rho$-robustly memorizes $\gD$ under $\ell_p$-norm with error at most $\eta$.
        \item \label{thm:ub3-pnorm} If $\rho \in \Big(\frac{1}{5\sqrt{d}\gamma_p(d)},\frac{1}{\gamma_p(d)}\Big)$, there exists $f \in \mathcal{F}_{d,P}$ with $P = \tilde{O}(N d^2 \rho^4 \gamma_p(d)^4)$ that $\rho$-robustly memorizes $\gD$ under $\ell_p$-norm.
    \end{enumerate}
    
\end{theorem}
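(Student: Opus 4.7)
The plan is to reduce the $\ell_p$-norm claim to the $\ell_2$-norm version (\cref{thm:ub}) via a single universal change of variables $\rho_2:=\gamma_p(d)\rho$. The hypothesis on $\rho$ in each part of the theorem ensures $\rho_2\in(0,1)$. The key geometric inputs are the ball inclusion of \cref{lem:pqconversion} together with the separation comparison in \cref{lem:epsilon pq}, which together yield, for both the $p\ge 2$ and $p<2$ regimes, the pointwise inclusion
\[
\gB_p(\vx_i,\,\rho\,\epsilon_{\gD,p}) \;\subseteq\; \gB_2(\vx_i,\,\rho_2\,\epsilon_{\gD,2}) \qquad \forall\, i\in[N].
\]
For $p\ge 2$, this follows from $\gB_p(\vx,r)\subseteq \gB_2(\vx,\gamma_p(d)r)$ combined with $\epsilon_{\gD,p}\le \epsilon_{\gD,2}$; for $p<2$, from $\gB_p(\vx,r)\subseteq \gB_2(\vx,r)$ combined with $\epsilon_{\gD,p}\le \gamma_p(d)\epsilon_{\gD,2}$.

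The three $\rho$-regimes in the statement map exactly to the three regimes of $\rho_2$ in \cref{thm:ub}: part (i) becomes $\rho_2\le 1/(5N\sqrt d)$, part (ii) becomes $\rho_2\le 1/(5\sqrt d)$, and part (iii) becomes $\rho_2<1$. For parts (i) and (iii), \cref{thm:ub} produces a network $f\in\mathcal{F}_{d,P}$ that $\rho_2$-robustly memorizes $\gD$ under $\ell_2$-norm with the stated parameter counts; by the displayed inclusion, $f$ is constant on every $\ell_p$-robustness ball and therefore $\rho$-robustly memorizes $\gD$ under $\ell_p$-norm. Substituting $\rho_2=\gamma_p(d)\rho$ into the $\ell_2$ parameter counts $\tilde O(\sqrt N)$ and $\tilde O(Nd^{2}\rho_2^{4})$ recovers the claimed $\tilde O(\sqrt N)$ and $\tilde O(Nd^{2}\gamma_p(d)^{4}\rho^{4})$, completing these two cases directly.

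The main obstacle is part (ii). Here \cref{thm:ub} only supplies near-perfect memorization with error $\eta'$ measured against $\operatorname{Unif}(\gB_2(\vx_i,\rho_2\epsilon_{\gD,2}))$, while the target definition requires error $\eta$ measured against $\operatorname{Unif}(\gB_p(\vx_i,\rho\epsilon_{\gD,p}))$. A black-box translation via the volume ratio $\operatorname{Vol}(\gB_2)/\operatorname{Vol}(\gB_p)$ can be exponential in $d$ (e.g.\ for $p=\infty$), inflating $\log(1/\eta')$ beyond polylog and ruining the parameter bound. To avoid this, I would reopen the construction behind \cref{thm:ub-4} and replace \cref{lem:projection prob}---which bounds the fraction of an $\ell_2$-ball intersected by a coordinate slab---by its $\ell_p$ analogue: using the explicit marginal density of the uniform distribution on the unit $\ell_p$-ball, a slab of width $\delta$ occupies at most $c_p(d)\cdot \delta/r$ of $\gB_p(\vzero,r)$ for a factor $c_p(d)$ at most polynomial in $d$. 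Plugging this estimate into the active-group construction of \cref{lem:N alpha points}, with grouping scale calibrated through $\rho_2=\gamma_p(d)\rho$ exactly as in the proof of \cref{thm:ub-4}, yields $\ell_p$-error at most $\eta$ while preserving the bound $\tilde O(Nd^{1/4}\gamma_p(d)^{1/2}\rho^{1/2})$ claimed in part (ii).
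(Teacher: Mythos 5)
Your treatment of parts (i) and (iii) is exactly the paper's argument: set $\rho'=\gamma_p(d)\rho$, use \cref{lem:pqconversion} together with \cref{eq:p2-epsilon}/\cref{eq:p2-epsilon2} to get $\gB_p(\vx_i,\rho\epsilon_{\gD,p})\subseteq\gB_2(\vx_i,\rho'\epsilon_{\gD,2})$ in both regimes of $p$, and invoke Theorems~\ref{thm:ub-3} and \ref{thm:ub-2}. Nothing to add there.

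For part (ii), however, your stated ``main obstacle'' is a misdiagnosis, and it leads you away from the paper's (valid) proof into an unfinished detour. The paper does precisely the black-box reduction you reject: it applies Theorem~\ref{thm:ub-4} with robustness ratio $\rho'$ and error level $\eta'=\eta\cdot\operatorname{Vol}(\gB_p(\vx_i,\rho\epsilon_{\gD,p}))/\operatorname{Vol}(\gB_2(\vx_i,\rho'\epsilon_{\gD,2}))$, and then converts the $\ell_2$-ball error into an $\ell_p$-ball error via the inclusion and the volume ratio. Your objection---that $\eta'$ can be exponentially small in $d$ (true, e.g.\ for $p=\infty$) and that this ``inflates $\log(1/\eta')$ beyond polylog and ruins the parameter bound''---does not apply, because in the construction behind Theorem~\ref{thm:ub-4} the error level never enters the \emph{parameter count}: shrinking $\eta$ is achieved by sharpening the slopes in \cref{lem:floor-function-approx} (i.e.\ enlarging weight magnitudes via the choice of $t$ in \cref{lem:N alpha points}), which changes only the bit complexity logarithmically in $1/\eta$, not the number of parameters, width, or depth. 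Since \cref{thm:ub_pnorm} makes no bit-complexity claim, an exponentially small $\eta'$ is harmless, and the simple volume-ratio argument already yields the stated $\tilde O(Nd^{1/4}\rho^{1/2}\gamma_p(d)^{1/2})$ bound. Your proposed alternative---reproving \cref{lem:projection prob} for uniform measure on $\ell_p$ balls and rerunning \cref{lem:N alpha points}---is plausible in spirit (and would be the right move if one wanted polylogarithmic bit complexity under $\ell_p$), but as written it is only a sketch: the slabs arising in the construction are oriented along rows of the dimension-reduction map, not along coordinate axes of the original space, so you would need a maximal-marginal-density bound for $\mathrm{Unif}(\gB_p)$ in \emph{arbitrary} unit directions with a $\mathrm{poly}(d)$ constant, and you would need to redo the error bookkeeping of the sequential (active/inactive group) construction under the $\ell_p$ geometry; none of this is carried out. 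So: parts (i) and (iii) are complete and identical to the paper; part (ii) should simply follow the volume-ratio route you dismissed, after which your extra machinery becomes unnecessary.
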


To prove \cref{thm:ub_pnorm}, we decompose it into three theorems (\cref{thm:ub-1-pnorm,thm:ub-2-pnorm,thm:ub-3-pnorm}), each corresponding to one of the cases in the statement. They are following. 

\begin{theorem}
    \label{thm:ub-1-pnorm}
    Let $\rho \in \Big(0, \frac{1}{5 N \sqrt{d} \gamma_p(d)}\Big]$ and $p \in [1, \infty]$.  
    For any dataset $\gD \in \mD_{d, N, C}$, there exists $f \in \mathcal{F}_{d,P}$ with $P=\tilde{O}(\sqrt{N})$ that $\rho$-robustly memorizes $\gD$ under $\ell_p$-norm.
\end{theorem}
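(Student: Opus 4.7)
The plan is to reduce \cref{thm:ub-1-pnorm} directly to the $\ell_2$ case of \cref{thm:ub2} via the ball inclusions of \cref{lem:pqconversion}, using an auxiliary robustness ratio $\rho' = \rho\,\gamma_p(d)$.

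The first step is to show that $\rho$-robust memorization under $\ell_p$-norm is implied by $\rho'$-robust memorization under $\ell_2$-norm. I would split on whether $p \le 2$ or $p \ge 2$. When $1 \le p \le 2$, \cref{lem:pqconversion} applied to the pair $(p,2)$ gives $\gB_p(\vx_i,\mu) \subseteq \gB_2(\vx_i,\mu)$, and the standard norm comparison $\|\cdot\|_p \le d^{1/p - 1/2}\|\cdot\|_2$ yields $\epsilon_{\gD,p} \le \gamma_p(d)\,\epsilon_{\gD,2}$. Combining,
\[
\gB_p(\vx_i,\, \rho\,\epsilon_{\gD,p}) \;\subseteq\; \gB_2(\vx_i,\, \rho\,\epsilon_{\gD,p}) \;\subseteq\; \gB_2(\vx_i,\, \rho\,\gamma_p(d)\,\epsilon_{\gD,2}).
\]
When $2 \le p \le \infty$, \cref{lem:pqconversion} applied to the pair $(2,p)$ instead gives $\gB_p(\vx_i,\mu) \subseteq \gB_2(\vx_i,\gamma_p(d)\,\mu)$, while $\epsilon_{\gD,p} \le \epsilon_{\gD,2}$, so
\[
\gB_p(\vx_i,\, \rho\,\epsilon_{\gD,p}) \;\subseteq\; \gB_2(\vx_i,\, \gamma_p(d)\,\rho\,\epsilon_{\gD,p}) \;\subseteq\; \gB_2(\vx_i,\, \rho\,\gamma_p(d)\,\epsilon_{\gD,2}).
\]
Setting $\rho' := \rho\,\gamma_p(d)$, both regimes yield $\gB_p(\vx_i,\rho\,\epsilon_{\gD,p}) \subseteq \gB_2(\vx_i,\rho'\,\epsilon_{\gD,2})$, so any function that $\rho'$-robustly memorizes $\gD$ under $\ell_2$-norm automatically $\rho$-robustly memorizes it under $\ell_p$-norm.

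Next I would verify that $\rho'$ falls inside the small-$\rho$ regime of \cref{thm:ub2}. Since $\gamma_p(d) \ge 1$, the hypothesis $\rho \le \tfrac{1}{5N\sqrt{d}\,\gamma_p(d)}$ gives $\rho' = \rho\,\gamma_p(d) \le \tfrac{1}{5N\sqrt{d}}$, which is exactly the range covered by \cref{thm:ub2}. Invoking that theorem for the dataset $\gD$ and the ratio $\rho'$ produces $f \in \gF_{d,P}$ with $P = \tilde O(\sqrt{N})$ that $\rho'$-robustly memorizes $\gD$ under the $\ell_2$-norm; by the inclusion of the previous step, the same $f$ is a $\rho$-robust memorizer under the $\ell_p$-norm, with identical parameter count.

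There is essentially no obstacle beyond the case split and careful bookkeeping of constants: the factor $\gamma_p(d)$ inserted into the hypothesis is precisely calibrated to absorb the geometric distortion between $\ell_p$- and $\ell_2$-balls, and no new architecture is required on top of the one built for \cref{thm:ub2}. The same reduction template—translate the $\ell_p$ problem to an $\ell_2$ problem with rescaled ratio $\rho' = \rho\,\gamma_p(d)$ and quote the corresponding $\ell_2$ theorem—will also prove \cref{thm:ub-2-pnorm} and \cref{thm:ub-3-pnorm}, with \cref{thm:ub3} and \cref{thm:ub4} respectively playing the role of \cref{thm:ub2}.
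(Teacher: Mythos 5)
Your proposal is correct and follows essentially the same route as the paper: set $\rho' = \gamma_p(d)\rho$, use \cref{lem:pqconversion} together with the comparison between $\epsilon_{\gD,p}$ and $\epsilon_{\gD,2}$ (split on $p\le 2$ versus $p\ge 2$) to get $\gB_p(\vx_i,\rho\epsilon_{\gD,p})\subseteq\gB_2(\vx_i,\rho'\epsilon_{\gD,2})$, and then invoke the $\ell_2$ small-$\rho$ construction. The only cosmetic difference is the order in which you apply the ball inclusion and the separation-constant bound in the $p<2$ case, which is immaterial.
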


\begin{proof}
    Let $\rho'=\gamma_p (d) \rho$. Then, we have $\rho' \in \Big(0, \frac{1}{5 N \sqrt{d}}\Big]$ from the condition of $\rho$.
    By Theorem~\ref{thm:ub-3}, there exists $f \in \gF_{d,P}$ with $P=\tilde{O}(\sqrt{N})$ that $\rho'$-robustly memorizes $\gD$ under $\ell_2$-norm. In other words, it holds $f(\vx')=y_i$, for all $(\vx_i, y_i) \in \gD$ and $\vx' \in \gB_2(\vx_i, \rho' \epsilon_{\gD,2})$.

    We consider two cases depending on whether $p \geq 2$ or $p<2$, which affect the direction of inclusion between $\ell_p$ and $\ell_2$ balls. 

    \paragraph{Case I : $p \geq 2$.} 

    In this case, we have
    $$
    \gB_p(\vx_i, \rho \epsilon_{\gD,p}) \overset{(a)}{\subseteq} \gB_p(\vx_i, \rho \epsilon_{\gD,2}) \overset{(b)}{\subseteq} \gB_2(\vx_i, \gamma_p (d) \rho \epsilon_{\gD,2})=\gB_2(\vx_i, \rho' \epsilon_{\gD,2}),
    $$
    where (a) holds by \cref{eq:p2-epsilon} and (b) holds by \cref{lem:pqconversion} applying $p=2$ and $q=p$.

    Thus, for all $(\vx_i, y_i) \in \gD$ and $\vx' \in \gB_p(\vx_i, \rho \epsilon_{\gD, p})$, it also holds $f(\vx')=y_i$. In other words, $f$ $\rho$-robustly memorizes $\gD$ under $\ell_p$-norm with $\tilde{O}(\sqrt{N})$ parameters.

    \paragraph{Case II : $p < 2$.} 

    In this case, we have
    $$
    \gB_p(\vx_i, \rho \epsilon_{\gD,p}) \overset{(a)}{\subseteq} \gB_p(\vx_i, \gamma_p (d)\rho \epsilon_{\gD,2}) \overset{(b)}{\subseteq} \gB_2(\vx_i, \gamma_p (d) \rho \epsilon_{\gD,2})=\gB_2(\vx_i, \rho' \epsilon_{\gD,2}),
    $$
    where (a) holds by \cref{eq:p2-epsilon2} and (b) holds by \cref{lem:pqconversion} applying $p=p$ and $q=2$.

    Thus, for all $(\vx_i, y_i) \in \gD$ and $\vx' \in \gB_p(\vx_i, \rho \epsilon_{\gD, p})$, it also holds $f(\vx')=y_i$. In other words, $f$ $\rho$-robustly memorizes $\gD$ under $\ell_p$-norm with $\tilde{O}(\sqrt{N})$ parameters.
    
\end{proof}

\begin{theorem}
    \label{thm:ub-2-pnorm}
    Let $\rho \in \Big(\frac{1}{5N \sqrt{d}\gamma_p(d)},\frac{1}{ 5\sqrt{d} \gamma_p(d)}\Big]$ and $p \in [1, \infty]$.  
    For any dataset $\gD \in \mD_{d, N, C}$, there exists $f \in \mathcal{F}_{d,P}$ with $P=\tilde{O}(N d^{\frac{1}{4}} \rho^{\frac{1}{2}}\gamma_p(d)^{\frac{1}{2}})$ that $\rho$-robustly memorizes $\gD$ under $\ell_p$-norm with error at most $\eta$.
\end{theorem}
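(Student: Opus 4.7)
The plan is to reduce \cref{thm:ub-2-pnorm} to its $\ell_2$-norm counterpart \cref{thm:ub-4} by rescaling the robustness ratio, mirroring the ball-inclusion strategy used in the proof of \cref{thm:ub-1-pnorm}. Set $\rho' := \gamma_p(d)\,\rho$. By the hypothesis on $\rho$, this choice satisfies $\rho' \in \bigopen{\tfrac{1}{5N\sqrt{d}},\,\tfrac{1}{5\sqrt{d}}}$, placing us in the regime where \cref{thm:ub-4} applies to $\gD$ with robustness ratio $\rho'$ under the $\ell_2$-norm.

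First, I would recover the inclusion $\gB_p(\vx_i,\rho\,\epsilon_{\gD,p})\subseteq \gB_2(\vx_i,\rho'\,\epsilon_{\gD,2})$ for every $i \in [N]$, exactly as in the proof of \cref{thm:ub-1-pnorm}: for $p\ge 2$ one combines $\epsilon_{\gD,p}\le\epsilon_{\gD,2}$ (\cref{eq:p2-epsilon}) with $\gB_p(\vx,\mu)\subseteq \gB_2(\vx,\gamma_p(d)\mu)$ from \cref{lem:pqconversion}; for $p<2$ one combines $\epsilon_{\gD,p}\le \gamma_p(d)\epsilon_{\gD,2}$ (\cref{eq:p2-epsilon2}) with $\gB_p(\vx,\mu)\subseteq\gB_2(\vx,\mu)$ from the same lemma.

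Since \cref{thm:ub-4} provides only bounded (not zero) error, I need to translate the $\ell_2$ error guarantee into an $\ell_p$ error guarantee. Define the volume ratio
$$
V := \frac{\operatorname{Vol}\bigopen{\gB_p(\vx_i,\rho\,\epsilon_{\gD,p})}}{\operatorname{Vol}\bigopen{\gB_2(\vx_i,\rho'\,\epsilon_{\gD,2})}}\in(0,1],
$$
which is translation-invariant and therefore independent of $i$. Because $\gB_p \subseteq \gB_2$, for any $f$,
$$
\mathbb{P}_{\vx'\sim\text{Unif}(\gB_p(\vx_i,\rho\,\epsilon_{\gD,p}))}[f(\vx')\ne y_i]\ \le\ \tfrac{1}{V}\,\mathbb{P}_{\vx'\sim\text{Unif}(\gB_2(\vx_i,\rho'\,\epsilon_{\gD,2}))}[f(\vx')\ne y_i].
$$
I would then invoke \cref{thm:ub-4} on $\gD$ with robustness ratio $\rho'$ and error tolerance $\eta V\in(0,1)$, obtaining $f\in\gF_{d,P}$ with $P=\tilde O\bigopen{Nd^{1/4}(\rho')^{1/2}}=\tilde O\bigopen{Nd^{1/4}\rho^{1/2}\gamma_p(d)^{1/2}}$ parameters whose $\ell_2$-robust error is at most $\eta V$. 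The displayed inequality then yields $\ell_p$-robust error at most $\eta$, as required.

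The main subtlety is verifying that the parameter count from \cref{thm:ub-4} does not inflate when we shrink the error tolerance to $\eta V$, since $V$ can be exponentially small in $d$ (for instance, when $p=\infty$ the ratio of the inscribed $\ell_2$-ball to the $\ell_\infty$-ball is of order $d^{-d/2}$). Inspecting the underlying construction---particularly \cref{lem:N alpha points} together with the ReLU floor approximator of \cref{lem:floor-function-approx}---the error tolerance enters only through the slope $\gamma$ of the approximator, which affects the \emph{bit complexity} of the weights but not the number of ReLU units, the width, or the depth of the network. Hence the parameter count in \cref{thm:ub-4} is truly independent of the error parameter, and substituting $\eta V$ for $\eta$ changes neither the stated parameter bound nor any hidden polylogarithmic factor in $N$, $d$, or $\rho$.
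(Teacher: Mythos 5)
Your proposal follows the paper's proof of \cref{thm:ub-2-pnorm} essentially step for step: the rescaling $\rho' = \gamma_p(d)\rho$, the ball inclusion via \cref{lem:pqconversion} and \cref{eq:p2-epsilon}/\cref{eq:p2-epsilon2}, the definition of the volume ratio (the paper's $\eta'$ equals your $\eta V$), and the application of \cref{thm:ub-4} followed by the volume-comparison inequality. Your final paragraph, verifying that shrinking the error tolerance to $\eta V$ does not inflate the parameter count of \cref{thm:ub-4}, is a sanity check the paper leaves implicit (since Theorem~\ref{thm:ub} states its parameter bound uniformly over $\eta\in(0,1)$), and your diagnosis---that the error tolerance only influences the ReLU slope and hence the bit complexity, not the width, depth, or parameter count---is accurate.
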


\begin{proof}
    Let $\rho'=\gamma_p (d) \rho$. Then, we have $\rho' \in \Big(\frac{1}{5N \sqrt{d}},\frac{1}{ 5\sqrt{d}}\Big)$ from the condition of $\rho$.

    We consider two cases depending on whether $p \geq 2$ or $p<2$, which affect the direction of inclusion between $\ell_p$ and $\ell_2$ balls. 

    \paragraph{Case I : $p \geq 2$.} 
    In this case, we have:
    $$
    \gB_p(\vx_i, \rho \epsilon_{\gD,p}) \overset{(a)}{\subseteq} \gB_p(\vx_i, \rho \epsilon_{\gD,2}) \overset{(b)}{\subseteq} \gB_2(\vx_i, \gamma_p (d) \rho \epsilon_{\gD,2})=\gB_2(\vx_i, \rho' \epsilon_{\gD,2}),
    $$
    where (a) holds by \cref{eq:p2-epsilon} and (b) holds by \cref{lem:pqconversion} applying $p=2$ and $q=p$.

    \paragraph{Case II : $p < 2$.} 

    In this case, we have:
    $$
    \gB_p(\vx_i, \rho \epsilon_{\gD,p}) \overset{(a)}{\subseteq} \gB_p(\vx_i, \gamma_p (d)\rho \epsilon_{\gD,2}) \overset{(b)}{\subseteq} \gB_2(\vx_i, \gamma_p (d) \rho \epsilon_{\gD,2})=\gB_2(\vx_i, \rho' \epsilon_{\gD,2}),
    $$
    where (a) holds by \cref{eq:p2-epsilon2} and (b) holds by \cref{lem:pqconversion} applying $p=p$ and $q=2$.

    Thus, in both cases, it holds:
    \begin{align}
        \gB_p(\vx_i, \rho \epsilon_{\gD,p}) \subseteq \gB_2(\vx_i, \rho' \epsilon_{\gD,2}).
        \label{inclusion:pball-2ball}
    \end{align}
    
    We define $\eta' = \eta \frac{\operatorname{Vol}(\gB_p(\vx_i, \rho \epsilon_{\gD,p}))}{\operatorname{Vol}(\gB_2(\vx_i, \rho' \epsilon_{\gD,2})}$. 
    We apply Theorem~\ref{thm:ub-4} with the robustness ratio $\rho'$ and the error rate $\eta'$, then we obtain $f \in \gF_{d,P}$ with $P=\tilde{O}(N d^{\frac{1}{4}} \rho'^{\frac{1}{2}})=\tilde{O}(N d^{\frac{1}{4}} \rho^{\frac{1}{2}}\gamma_p(d)^{\frac{1}{2}})$ that $\rho'$-robustly memorizes $\gD$ with error at most $\eta'$ under $\ell_2$-norm. In other words, for all $(\vx_i, y_i) \in \gD$, it holds that
    \begin{align}
        &\mathbb{P}_{\vx' \sim \text{Unif}(\gB_2  (\vx_i, \rho' \epsilon_{\gD,2}))} [f(\vx') \neq y_i] <\eta'.
        \label{ineq:error-ub2}
    \end{align}

    For simplicity, we denote $E=\{\vx \in \R^d \mid f(\vx') \neq y_i\}$.
    Then, we have
    \begin{align*}
        &\mathbb{P}_{\vx' \sim \text{Unif}(\gB_p  (\vx_i, \rho \epsilon_{\gD,p}))} [f(\vx') \neq y_i] \\
        =&\mathbb{P}_{\vx' \sim \text{Unif}(\gB_p  (\vx_i, \rho \epsilon_{\gD,p}))} [\vx \in E] \\
        =& \frac{\operatorname{Vol}(E \cap \gB_p  (\vx_i, \rho \epsilon_{\gD,p}))}{\operatorname{Vol}(\gB_p  (\vx_i, \rho \epsilon_{\gD,p}))} \\
        \overset{(a)}{\leq}& \frac{\operatorname{Vol}(E \cap \gB_2(\vx_i, \rho' \epsilon_{\gD,2}))}{\operatorname{Vol}(\gB_p  (\vx_i, \rho \epsilon_{\gD,p}))} \\
        =& \frac{\operatorname{Vol}(E \cap \gB_2(\vx_i, \rho' \epsilon_{\gD,2}))}{\operatorname{Vol}(\gB_2  (\vx_i, \rho' \epsilon_{\gD,2}))} 
        \frac{\operatorname{Vol}(\gB_2  (\vx_i, \rho' \epsilon_{\gD,2}))}{\operatorname{Vol}(\gB_p  (\vx_i, \rho \epsilon_{\gD,p}))}
        \\
        =& \mathbb{P}_{\vx' \sim \text{Unif}(\gB_2  (\vx_i, \rho' \epsilon_{\gD,2}))} [\vx' \in E] \cdot \frac{\operatorname{Vol}(\gB_2  (\vx_i, \rho' \epsilon_{\gD,2}))}{\operatorname{Vol}(\gB_p  (\vx_i, \rho \epsilon_{\gD,p}))}\\
        =&\mathbb{P}_{\vx' \sim \text{Unif}(\gB_2  (\vx_i, \rho' \epsilon_{\gD,2}))} [f(\vx') \neq y_i] \cdot  \frac{\operatorname{Vol}(\gB_2  (\vx_i, \rho' \epsilon_{\gD,2}))}{\operatorname{Vol}(\gB_p  (\vx_i, \rho \epsilon_{\gD,p}))} \\
        \overset{(b)}{<}& \eta' \frac{\operatorname{Vol}(\gB_2  (\vx_i, \rho' \epsilon_{\gD,2}))}{\operatorname{Vol}(\gB_p  (\vx_i, \rho \epsilon_{\gD,p}))}\\
        \overset{(c)}{=}& \eta,
    \end{align*}
    where (a) holds by \cref{inclusion:pball-2ball}, (b) holds by \cref{ineq:error-ub2}, and (c) holds by the definition of $\eta'$.
    
    Thus, for all $(\vx_i, y_i) \in \gD$, it holds:
    \begin{align*}
        &\mathbb{P}_{\vx' \sim \text{Unif}(\gB_p  (\vx_i, \rho \epsilon_{\gD,p}))} [f(\vx') \neq y_i] <\eta.
    \end{align*}
    In other words, $f$ $\rho$-robustly memorizes $\gD$ under $\ell_p$-norm with error at most $\eta$ and $\tilde{O}(N d^{\frac{1}{4}} \rho^{\frac{1}{2}}\gamma_p(d)^{\frac{1}{2}})$ parameters.
    
\end{proof}

\begin{theorem}
    \label{thm:ub-3-pnorm}
    Let $\rho \in \Big(\frac{1}{5\sqrt{d}\gamma_p(d)},\frac{1}{\gamma_p(d)}\Big)$ and $p \in [1, \infty]$.  
    For any dataset $\gD \in \mD_{d, N, C}$, there exists $f \in \mathcal{F}_{d,P}$ with $P = \tilde{O}(N d^2 \rho^4 \gamma_p(d)^4)$ that $\rho$-robustly memorizes $\gD$ under $\ell_p$-norm.
\end{theorem}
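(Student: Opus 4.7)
The plan is to reduce to the $\ell_2$-norm construction already established in Theorem~\ref{thm:ub-2}, exactly mirroring the reduction used in the proofs of Theorems~\ref{thm:ub-1-pnorm} and \ref{thm:ub-2-pnorm}. Specifically, I set $\rho' := \gamma_p(d)\,\rho$, so that the hypothesis $\rho \in \bigopen{\frac{1}{5\sqrt{d}\gamma_p(d)},\frac{1}{\gamma_p(d)}}$ translates into $\rho' \in \bigopen{\frac{1}{5\sqrt{d}},1}$. Invoking Theorem~\ref{thm:ub-2} with robustness ratio $\rho'$ on $\gD$, I obtain $f \in \gF_{d,P}$ with
\[
P = \tilde{O}\bigopen{N d^{2}(\rho')^{4}} = \tilde{O}\bigopen{N d^{2}\rho^{4}\gamma_p(d)^{4}}
\]
that $\rho'$-robustly memorizes $\gD$ under the $\ell_2$-norm, i.e.\ $f(\vx')=y_i$ for all $(\vx_i,y_i)\in\gD$ and all $\vx' \in \gB_2(\vx_i,\rho'\epsilon_{\gD,2})$.

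The remaining step is to verify the ball-inclusion $\gB_p(\vx_i,\rho\epsilon_{\gD,p}) \subseteq \gB_2(\vx_i,\rho'\epsilon_{\gD,2})$, which immediately propagates the $\ell_2$ guarantee to the $\ell_p$ setting. I split according to whether $p\geq 2$ or $p<2$, as was done in Theorems~\ref{thm:ub-1-pnorm} and~\ref{thm:ub-2-pnorm}. For $p\geq 2$, I use $\epsilon_{\gD,p}\leq \epsilon_{\gD,2}$ from \cref{eq:p2-epsilon} together with \cref{lem:pqconversion} (with the pair $p=2$, $q=p$) to chain
\[
\gB_p(\vx_i,\rho\epsilon_{\gD,p}) \subseteq \gB_p(\vx_i,\rho\epsilon_{\gD,2}) \subseteq \gB_2(\vx_i,\gamma_p(d)\rho\epsilon_{\gD,2}) = \gB_2(\vx_i,\rho'\epsilon_{\gD,2}).
\]
For $p<2$, I instead use $\epsilon_{\gD,p}\leq \gamma_p(d)\epsilon_{\gD,2}$ from \cref{eq:p2-epsilon2} together with \cref{lem:pqconversion} (with $p=p$, $q=2$) to chain
\[
\gB_p(\vx_i,\rho\epsilon_{\gD,p}) \subseteq \gB_p(\vx_i,\gamma_p(d)\rho\epsilon_{\gD,2}) \subseteq \gB_2(\vx_i,\gamma_p(d)\rho\epsilon_{\gD,2}) = \gB_2(\vx_i,\rho'\epsilon_{\gD,2}).
\]

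Since in either case $\gB_p(\vx_i,\rho\epsilon_{\gD,p}) \subseteq \gB_2(\vx_i,\rho'\epsilon_{\gD,2})$, the $\rho'$-robust $\ell_2$ memorization of $\gD$ by $f$ immediately yields $f(\vx')=y_i$ for all $\vx'\in\gB_p(\vx_i,\rho\epsilon_{\gD,p})$, i.e.\ $f$ is a $\rho$-robust memorizer of $\gD$ under the $\ell_p$-norm, with parameter count $\tilde{O}(Nd^{2}\rho^{4}\gamma_p(d)^{4})$. There is no real obstacle here: the proof is a direct transfer through norm comparisons, and the main task is simply to verify that the hypothesis on $\rho$ puts $\rho'$ in the admissible range of Theorem~\ref{thm:ub-2}, which it does by construction. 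The only minor care needed is keeping the direction of the inequalities in \cref{eq:p2-epsilon,eq:p2-epsilon2,lem:pqconversion} straight in the two cases $p\geq 2$ and $p<2$.
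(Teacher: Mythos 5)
Your proposal is correct and follows essentially the same route as the paper's own proof: define $\rho' = \gamma_p(d)\rho$, invoke Theorem~\ref{thm:ub-2} for the $\ell_2$ case, and transfer the guarantee via the two-case ball inclusion using \cref{eq:p2-epsilon,eq:p2-epsilon2,lem:pqconversion}. No gaps.
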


\begin{proof}
    Let $\rho'=\gamma_p (d) \rho$. Then, we have $\rho' \in \Big(\frac{1}{5\sqrt{d}},1\Big)$ from the condition of $\rho$.
    By Theorem~\ref{thm:ub-2}, there exists $f \in \gF_{d,P}$ with $P=\tilde{O}(Nd^2 \rho'^4)=\tilde{O}(Nd^2 \rho^4 {\gamma_p (d)}^4)$ that $\rho'$-robustly memorizes $\gD$ under $\ell_2$-norm. In other words, it holds $f(\vx')=y_i$, for all $(\vx_i, y_i) \in \gD$ and $\vx' \in \gB_2(\vx_i, \rho' \epsilon_{\gD,2})$.

    We consider two cases depending on whether $p \geq 2$ or $p<2$, which affect the direction of inclusion between $\ell_p$ and $\ell_2$ balls. 

    \paragraph{Case I : $p \geq 2$.} 

    In this case, we have:
    $$
    \gB_p(\vx_i, \rho \epsilon_{\gD,p}) \overset{(a)}{\subseteq} \gB_p(\vx_i, \rho \epsilon_{\gD,2}) \overset{(b)}{\subseteq} \gB_2(\vx_i, \gamma_p (d) \rho \epsilon_{\gD,2})=\gB_2(\vx_i, \rho' \epsilon_{\gD,2}),
    $$
    where (a) holds by \cref{eq:p2-epsilon} and (b) holds by \cref{lem:pqconversion} applying $p=2$ and $q=p$.

    Thus, for all $(\vx_i, y_i) \in \gD$ and $\vx' \in \gB_p(\vx_i, \rho \epsilon_{\gD, p})$, it also holds $f(\vx')=y_i$. In other words, $f$ $\rho$-robustly memorizes $\gD$ under $\ell_p$-norm with $\tilde{O}(Nd^2 \rho^4 {\gamma_p (d)}^4)$ parameters.

    \paragraph{Case II : $p < 2$.} 

    In this case, we have:
    $$
    \gB_p(\vx_i, \rho \epsilon_{\gD,p}) \overset{(a)}{\subseteq} \gB_p(\vx_i, \gamma_p (d)\rho \epsilon_{\gD,2}) \overset{(b)}{\subseteq} \gB_2(\vx_i, \gamma_p (d) \rho \epsilon_{\gD,2})=\gB_2(\vx_i, \rho' \epsilon_{\gD,2}),
    $$
    where (a) holds by \cref{eq:p2-epsilon2} and (b) holds by \cref{lem:pqconversion} applying $p=p$ and $q=2$.

    Thus, for all $(\vx_i, y_i) \in \gD$ and $\vx' \in \gB_p(\vx_i, \rho \epsilon_{\gD, p})$, it also holds $f(\vx')=y_i$. In other words, $f$ $\rho$-robustly memorizes $\gD$ under $\ell_p$-norm with $\tilde{O}(Nd^2 \rho^4 {\gamma_p (d)}^4)$ parameters.
    
\end{proof}

\newpage
\section{Comparison to Existing Bounds}

\subsection{Summary of Parameter Complexity across \texorpdfstring{$\ell_p$}{lp}-norms}
\begin{table*}[ht]
\vspace{-2pt}
\centering
    \renewcommand{\arraystretch}{1.5}
    \begin{threeparttable}[t]
    \centering
    \caption{Summary of our results and a comparison with prior works. We omit the constants for the range of $\rho$. $\gamma_p(d) = 1$ under $p = 2$ reduces to the results in \cref{sec:necessity,sec:sufficiency}. }
    \label{tab:theorem-comparison}
    \begin{tabular}{|>{\centering\arraybackslash}m{20pt}|>{\centering\arraybackslash}m{35pt}|>{\centering\arraybackslash}m{90pt}|>{\centering\arraybackslash}m{190pt}|}
    \hline
     & $\ell_p$-norm & Robustness Ratio $\rho$ & Bound on Parameters\\
    \hline
    \hline
    \multirow{6.5}{*}{LB} & $p > 2$ & $(0, 1)$ & $\Omega\bigopen{\min\{N, d\}d\rho^2}$, \cref{prop:lb_width_p} \\
    \cline{2-4}
    & $p \leq 2$ & $(0, 1)$ & $\Omega\bigopen{\min\{N, d\}d\bigopen{\rho/\gamma_p(d)}^2}$, \cref{prop:lb_width_p}\\
    \cline{2-4}
    & $p = \infty$ & (1/2, 1) & $\Omega\bigopen{\min\{N, d\}d}$, \cref{thm:lb-infty}\\
    \cline{2-4}
    & $p = \infty$ & 0.8 & $\Omega\bigopen{d^2}$, \citet{yu2024optimal} \tnote{1}\\
    \cline{2-4}
    & $p < \infty$ & $\left(0, (1 - \frac{1}{d})^{1/p}\right]$ & $\Omega\bigopen{\sqrt{\frac{N}{1 - \rho^p}}}$, \cref{thm:lbvcp}\\
    \cline{2-4}
    & $p = 2$ & $\rho \rightarrow 1$ & $\Omega\bigopen{\sqrt{Nd}}$, \citet{li2022robust}\tnote{2}\\
    \hline
    \multirow{6.5}{*}{UB} & \multirow{6.5}{*}{$p = p$}  & $(0, \frac{1}{\gamma_p(d)N\sqrt{d}})$ & $\Tilde{O}\bigopen{\sqrt{N}}$, \cref{thm:ub-1-pnorm}\\
    \cline{3-4}
    & & $\bigopen{\frac{1}{\gamma_p(d)N\sqrt{d}}, \frac{1}{\gamma_p(d)\sqrt{d}}}$ & $\Tilde{O}\bigopen{Nd^{1/4}(\rho\gamma_p(d))^{1/2}}$, \cref{thm:ub-2-pnorm}\\
    \cline{3-4}
    & & $\bigopen{0, \frac{1}{\gamma_p(d)\sqrt{d}}}$ & $\Tilde{O}\bigopen{N}$, \citet{egosi2025logarithmicwidthsufficesrobust}\\
    \cline{3-4}
    & & \multirow{2}{*}{$\bigopen{\frac{1}{\gamma_p(d)\sqrt{d}}, \frac{1}{\gamma_p(d)}}$} & $\Tilde{O}(Nd^2(\rho\gamma_p(d))^2)$, \cref{thm:ub-3-pnorm}\\
    \cline{4-4}
    & & & $\Tilde{O}\bigopen{Nd^3(\rho\gamma_p(d))^6}$, \citet{egosi2025logarithmicwidthsufficesrobust}\\
    \cline{3-4}
    & & $(0, 1)$ & $\Tilde{O}\bigopen{Nd^2p^2}$, \citet{yu2024optimal}\tnote{3}\\
    \hline
    \end{tabular}
    \begin{tablenotes}
    \scriptsize
        \item [1] Requires $N > d$.
        \item [2] The result only holds for $\rho$ sufficiently close to 1. 
        \item [3] Requires $p \in \sN$.
    \end{tablenotes} 
    \end{threeparttable}
\label{tab:main-table}
\end{table*}

\subsection{Parameter Complexity of the Construction by \texorpdfstring{\citet{yu2024optimal}}{Yu et al. [2024]}}
\label{app:yu-et-al-construction}

We now analyze the number of parameters of the network construction proposed by \citet{yu2024optimal}, which provides the upper bound not depending on $\rho$, but still applies to all $\rho \in (0,1)$.

\begin{restatable}[Theorem B.6, \citet{yu2024optimal}]{lemma}{ndtwonorm}
\label{thm:yu}
Let $p \in \sN$. 
For any dataset $\gD = \{(\vx_i, y_i)\}_{i \in [N]} \in \mD_{d, N, C}$, let $R > 1$ by any real value with $\norm{\vx_i}_\infty \leq R$ for all $i \in [N]$. 
For $\rho \in (0, 1)$, define $\gamma:= (1 - \rho)\epsilon_{\gD, p} > 0$.
Then, there exists a network with width $O(d)$, and depth $O(Np(\log(\frac{d}{\gamma^p}) + p \log R + \log p))$ that $\rho$-robustly memorize $\gD$ under $\ell_p$-norm. 
\end{restatable}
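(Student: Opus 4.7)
The plan is to build the network as a sequential pipeline of $N$ residual ``gadgets'' $G_1, \ldots, G_N$, each of width $O(d)$, together computing $f(\vx) = \sum_{i=1}^N y_i \cdot \chi_i(\vx)$, where $\chi_i$ is a ReLU-implemented indicator that equals $1$ on the robustness ball $\gB_p(\vx_i, \mu)$ (with $\mu = \rho\epsilon_{\gD,p}$) and equals $0$ outside the slightly enlarged ball $\gB_p(\vx_i, \mu+\gamma)$. Because the robustness balls of differently labeled points are pairwise disjoint---any two centers are at $\ell_p$-distance at least $2\epsilon_{\gD,p} = 2(\mu+\gamma)$---every input lies in at most one such ball, and identical labels do not conflict. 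Each gadget carries $\vx$ forward via a width-$d$ skip connection and adds its contribution $y_i\chi_i(\vx)$ to a scalar accumulator routed through another skip, so that the final scalar output robustly memorizes $\gD$.

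Inside $G_i$, the indicator $\chi_i$ is realized in three stages. First, in one layer of width $O(d)$, compute $|x_j - x_{ij}| = \relu(x_j - x_{ij}) + \relu(x_{ij} - x_j)$ for every $j \in [d]$. Second, apply a shared ReLU approximator $\tilde{\phi}_p:[0,2R]\to\R$ of the monomial $t \mapsto t^p$ coordinate-wise, with uniform error at most $\gamma^p/(4d)$; summing gives a scalar $S(\vx)$ satisfying $|S(\vx) - \norm{\vx-\vx_i}_p^p| \leq \gamma^p/4$. Third, a two-layer ReLU ``bump'' outputs $1$ if $S(\vx) \leq \mu^p + \gamma^p/4$ and $0$ if $S(\vx) \geq \mu^p + \tfrac{3}{4}\gamma^p$, interpolating linearly in between. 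The interpolation gap is non-degenerate: for integer $p \geq 1$, the binomial expansion gives $(\mu+\gamma)^p \geq \mu^p + \gamma^p$, so any $\vx \notin \gB_p(\vx_i, \mu+\gamma)$ yields $\norm{\vx-\vx_i}_p^p \geq \mu^p + \gamma^p$ and hence $S(\vx) \geq \mu^p + \tfrac{3}{4}\gamma^p$. Scaling the bump output by $y_i$ and adding to the accumulator finishes the gadget.

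The main technical obstacle is constructing $\tilde{\phi}_p$ with the claimed depth and uniform accuracy. The plan is to iterate $p-1$ approximate multiplications using the identity $ab = \tfrac{1}{4}((a+b)^2 - (a-b)^2)$ together with a Telgarsky-style ReLU approximation of $t \mapsto t^2$ on $[0,M]$, which achieves error $\delta$ with depth $O(\log(M/\delta))$ and constant width. Since intermediate products are bounded by $R^p$ and each multiplication must be accurate to $O(\gamma^p/(dpR^p))$ in order to keep the cumulative output error below $\gamma^p/(4d)$, each multiplication costs depth $O(p\log R + \log(d/\gamma^p) + \log p)$, and the $p{-}1$-fold chain yields total gadget depth $O(p(\log(d/\gamma^p) + p\log R + \log p))$, matching the claim. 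Width stays $O(d)$ because the $d$ coordinate-wise copies of $\tilde\phi_p$ each use $O(1)$ width, and the summation, thresholding, and accumulator skip each add only $O(1)$ units atop the width-$d$ input carry. The delicate point is the error-propagation bookkeeping: bounding the blow-up in each approximate multiplication by the supremum of the other operand is what forces the $p\log R$ term inside the logarithm, explaining why the depth must be super-linear in $p$.
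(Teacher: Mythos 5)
The paper does not give its own proof of this lemma; it is imported verbatim from Theorem B.6 of \citet{yu2024optimal} and used as a black box (see \cref{thm:yu-params} and \cref{thm:yu-params-nonzero}). Judged on its own terms, your reconstruction has the right overall shape: a sequential pipeline of $N$ gadgets carried by a width-$O(d)$ skip connection, each gadget computing a ReLU indicator of a robustness ball by extracting $|x_j - x_{ij}|$ with a pair of ReLUs, applying a Telgarsky-style depth-$O(\log(\cdot))$ approximation of $t\mapsto t^p$ coordinate-wise, summing, and thresholding with a ramp. Your error bookkeeping correctly places the $p\log R$, $\log(d/\gamma^p)$, and $\log p$ terms inside the per-multiplication logarithm, and the observation that the binomial bound $(\mu+\gamma)^p\ge\mu^p+\gamma^p$ keeps the interpolation gap non-degenerate is the right one.

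There is, however, a genuine gap in the accumulation rule. You set $f(\vx)=\sum_{i=1}^N y_i\,\chi_i(\vx)$ and assert that ``identical labels do not conflict,'' but that is false for an additive accumulator. The class $\mD_{d,N,C}$ only requires $\vx_i\neq\vx_j$ for $i\neq j$; same-labeled centers can be arbitrarily close, so $\gB_p(\vx_i,\mu)$ and $\gB_p(\vx_j,\mu)$ can overlap when $y_i=y_j$. For $\vx$ in such an overlap you get $\chi_i(\vx)=\chi_j(\vx)=1$ and the sum returns $2y_i\neq y_i$. The $2\epsilon_{\gD,p}$ separation you invoke holds only across classes, so it does not rescue the sum. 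The fix is cheap and standard: replace the running sum with a running maximum, $f(\vx)=\max_i y_i\chi_i(\vx)$, implemented on the scalar accumulator by updating $y\leftarrow y+\sigma\bigl(y_i\chi_i(\vx)-y\bigr)$ at the end of each gadget --- the same $O(1)$-width trick the present paper uses in Stage II of the proof of \cref{thm:ub3}. With this change the argument closes: on $\gB_p(\vx_i,\mu)$ you have $\chi_i=1$; every cross-class $\chi_j$ is $0$ because the enlarged ball $\gB_p(\vx_j,\mu+\gamma)$ is disjoint from $\gB_p(\vx_i,\mu)$; and every same-class contribution is $y_j\chi_j\le y_i$, so the maximum equals $y_i$.
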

We note that in the \citet{yu2024optimal} uses the notation $\lambda_\gD^p/2$ for $\epsilon_{\gD, p}$, and the radius $\lambda_\gD^p/2 - \gamma$ in the original statement corresponds to the value $\mu := \rho \epsilon_{\gD, p}$ in our notation.
We count parameters in their construction in the following lemma, specifically in the case $p=2$. 
Although the original statement of \citet{yu2024optimal} includes a parameter count, they consider a different parameter counting strategy--by counting only the number of nonzero parameters. 
We therefore count the number of all parameters following \cref{def:params} in the subsequent lemma. 
Note that results and comparison under nonzero parameter counts are provided in \cref{app:nonzero}. 

\begin{lemma}
    For any $\mathcal{D} \in \mD_{d,N,C}$ and $\rho \in(0,1)$, define $\gamma:= (1 - \rho)\epsilon_{\gD} > 0$ and $R>1$ with $\norm{\vx_i}_\infty \leq R$ for all $i \in [N]$.
    Then, there exists a neural network $f$ such that $\rho$-robustly memorizes $\mathcal{D}$ using at most $O(Nd^2(\log(\frac{d}{\gamma^2}) + \log R))$ parameters. Moreover, the network has width $O(d)$ and depth $\tilde{O}(N)$.
    \label{thm:yu-params}
\end{lemma}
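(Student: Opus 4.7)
The plan is to apply Theorem~\ref{thm:yu} with $p=2$ and then count parameters using the convention in \cref{def:params}. Specifically, Theorem~\ref{thm:yu} yields a $\rho$-robust memorizer under the $\ell_2$-norm with width $O(d)$ and depth
\begin{align*}
    L \;=\; O\!\bigopen{N \cdot 2 \cdot \bigopen{\log(d/\gamma^2) + 2\log R + \log 2}} \;=\; O\!\bigopen{N(\log(d/\gamma^2) + \log R)}.
\end{align*}
In particular, since $\log(d/\gamma^2)$ and $\log R$ are absorbed into $\tilde O(\cdot)$, the depth is $\tilde O(N)$.

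Next I would bound the parameter count. Since the input dimension is $d$, each hidden width $d_\ell$ satisfies $d_\ell = O(d)$ for every $\ell \in \{0,1,\dots,L-1\}$, and $d_L = 1$. By \cref{def:params},
\begin{align*}
    P \;=\; \sum_{\ell=1}^{L}(d_{\ell-1}+1)\cdot d_\ell \;\leq\; \sum_{\ell=1}^{L} O(d^2) \;=\; O(L \cdot d^2).
\end{align*}
Plugging in the bound on $L$ from the previous step gives
\begin{align*}
    P \;=\; O\!\bigopen{N d^2\bigopen{\log(d/\gamma^2) + \log R}},
\end{align*}
which matches the claimed parameter count. The width and depth claims are inherited directly from Theorem~\ref{thm:yu}.

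This is a bookkeeping argument rather than a new construction; the only subtle point is that \cref{def:params} counts \emph{all} entries of the weight matrices and biases (including zeros), so even though the Yu et al.\ construction is sparse, every hidden layer of width $O(d)$ contributes $\Theta(d^2)$ to $P$. I do not foresee any genuine obstacle; the ``hard part'' is merely making explicit that the depth formula from Theorem~\ref{thm:yu} with $p=2$ simplifies to $O(N(\log(d/\gamma^2)+\log R))$ and that widths between $d$ and $1$ are all bounded by $O(d)$, so the product $L\cdot d^2$ controls the total parameter count.
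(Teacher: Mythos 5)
Your proposal is correct and follows essentially the same route as the paper: instantiate \cref{thm:yu} with $p=2$ to get width $O(d)$ and depth $O(N(\log(d/\gamma^2)+\log R))$, then observe that every layer contributes $O(d^2)$ entries under the all-parameters counting convention of \cref{def:params}, giving $O(Nd^2(\log(d/\gamma^2)+\log R))$ in total. No gaps.
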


\begin{proof}
    By applying \cref{thm:yu} with $p=2$, we obtain a neural network $f$ that $\rho$-robustly memorizes $\mathcal{D}$ with width $O(d)$, and depth $L = O(N(\log(\frac{d}{\gamma^2}+\log R)))$. 
    In their construction, $d_l = \Theta(d)$ through all $l$, as the input $x$ propagates over the layers using a width $d$. 
    We count all parameters as defined in \cref{def:params}, so we can upper bound the number of parameters used for the construction of $f$ as follows:
    \begin{align*}
        \sum_{l=1}^{L} (d_{l-1}+1) \cdot d_l &= \sum_{l=1}^{L} \Theta(d)\cdot \Theta(d) \\ &= \Theta(N(\log(\frac{d}{\gamma^2})+\log R))\cdot \Theta(d^2) \\&=\Theta(Nd^2(\log(\frac{d}{\gamma^2})+\log R)).
    \end{align*}
\end{proof}

\subsection{Parameter Complexity of the Construction by \texorpdfstring{\citet{egosi2025logarithmicwidthsufficesrobust}}{Egosi et al. [2025]}}
\label{app:egosi}

We observe that although \citet{egosi2025logarithmicwidthsufficesrobust} do not explicitly quantify the total number of parameters in their construction, it implicitly yields a network with $O(Nd^3 \rho^6)$ parameters. Specifically, we can establish the following:
\begin{quote}
    For any $\mathcal{D} \in \mD_{d,N,C}$ and $\rho \in(\frac{1}{\sqrt{d}},1)$, there exists a neural network $f$ that $\rho$-robustly memorizes $\mathcal{D}$ using $\tilde{O}(Nd^3 \rho^6)$ parameters.
\end{quote}
    
This result follows from the network constructed in Theorem 4.4 of \citet{egosi2025logarithmicwidthsufficesrobust}.
The proof of Theorem 4.4 proceeds under the assumption that for $7 \leq k \leq d+5$, and $\rho \leq \frac{1}{4 \sqrt{e}}\sqrt{\frac{k-6}{d}}N^{-\frac{2}{k-6}}$. 
Given this range, Theorem 4.2 of \citet{egosi2025logarithmicwidthsufficesrobust} is applied to construct a robust memorizer of the projected data from $\R^d$ to $\R^{k}$. 
Figures 4 and 5 in their paper illustrate this construction. 
In this construction, the projected point propagates through the network $\Theta(Nk)$ times. 
The width of the network scales with $k$, while the other component, that is not propagating the point remains constant in width. 
Thus, the number of parameters used for the construction is given by:
$$
\sum_{l=1}^{L} (d_{l-1}+1) \cdot d_l = \sum_{l=1}^{\Theta(Nk)} \Theta(k^2) = \Theta(Nk \cdot k^2)=\Theta(Nk^3).
$$

To translate this to a bound in terms of $\rho$, we analyze the relationship between $\rho$ and $k$. 
For $k \geq 4\log N + 6$, we verify the following inequality:
\begin{align*}
\frac{1}{4 \sqrt{e}}\sqrt{\frac{k-6}{d}}N^{-\frac{2}{k-6}} \geq \frac{1}{4 \sqrt{e}}\sqrt{\frac{k-6}{d}}N^{-\frac{1}{2\log N}}\overset{(a)}{=}\frac{1}{4 e}\sqrt{\frac{k-6}{d}}
\end{align*}
where (a) holds by $N=e^{\log N}$.
Therefore, for $\rho = \frac{1}{4 e}\sqrt{\frac{k-6}{d}}$, the network $\rho$-robustly memorizes $\gD$ with $\Theta(Nk^3)$ parameters. From the relationship between $\rho$ and $k$, solving for $k$ in terms of $\rho$ yields $k=\Theta(d \rho^2)$. 
Since the minimum value of $k$ under the assumption is $7$, the minimum achievable $\rho$ is $\frac{1}{4 e}\frac{1}{\sqrt{d}}$.

Thus, for $\rho>\frac{1}{\sqrt{d}}$, the construction yields a network that $\rho$-robustly memorizes $\gD$ with $\Theta(Nk^3)=\Theta(Nd^3 \rho^6)$ parameters, as desired.

\newpage
\section{Nonzero Parameter Counts}
\label{app:nonzero}
While our main parameter counting method follows the approach of counting all parameters, including zeros, as defined in \cref{def:params}, some prior works on memorization and robust memorization adopt a different parameter counting strategy---counting only the nonzero parameters. 
We emphasize that counting all parameters, including zeros, better aligns with how the matrices are stored in practice.
Nevertheless, we also present how our results extend to the case of counting only nonzero parameters, offering an alternative perspective for interpreting our findings and comparing them with prior work.

In contrast to \cref{def:function-class}, let us define the set of neural networks with input dimension $d$ and at most $P$ nonzero parameters by
\begin{align}
\label{def:function-class-nonzero}
    \bar{\gF}_{d, P} = \bigset{f:\sR^d \rightarrow \sR \mid f \textrm{ is a neural network with at most } P \textrm{ nonzero parameters}}.
\end{align}

\subsection{Nonzero Parameter Counts: An illustration.}
We provide the corresponding illustration of \cref{fig:alpha-comparison} under only nonzero parameter counting in \cref{fig:alpha-comparison-nonzero-params}, combining \cref{thm:lb-nonzero} and \cref{thm:ub-nonzero}.

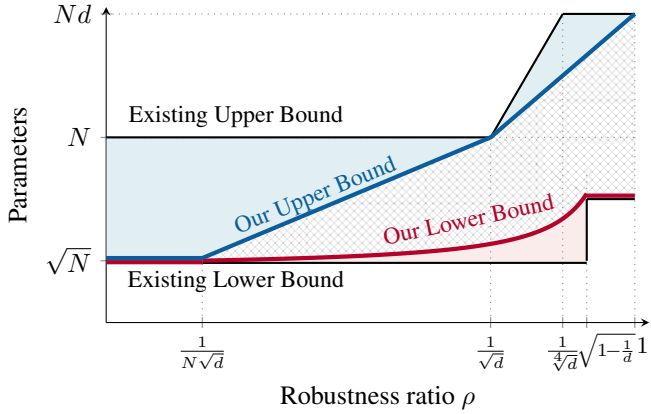
\begin{figure}[H]
    \centering
    \begin{subfigure}[t]{0.63\textwidth}
        \centering
        \raisebox{0.5pt}{
        \begin{tikzpicture}[yscale=0.95]
\begin{axis}[
    xlabel={Robustness ratio $\rho$},
    ylabel={Parameters},
    xmin=-0.1, xmax=1.03,
    ymin=0.8, ymax=5.4,
    xtick={0.1, 0.7, 0.85, 0.9, 1},
    xticklabels={$ \frac{1}{N\sqrt{d}} $, $\frac{1}{\sqrt{d}}$, $\frac{1}{\sqrt[4]{d}}$, \hspace{6pt}, \hspace{6pt}$1$},
    xticklabel style={font=\footnotesize},
    ytick={ 1.7, 3.5, 5.3},
    yticklabels={ $\sqrt{N}$, $N$, $Nd$},
    axis lines=left,
    width=\linewidth,
    height=6cm,
    no markers,
    domain=0:1,
    samples=100,
    clip=false
]

\addplot[dotted, gray] coordinates {(-0.1,5.3) (1,5.3)};
\addplot[dotted, gray] coordinates {(-0.1,3.5) (1,3.5)};

\addplot[dotted, gray] coordinates {(0.1,0.9) (0.1,1.7)};
\addplot[dotted, gray] coordinates {(0.7,0.9) (0.7,3.5)};
\addplot[dotted, gray] coordinates {(0.85,0.9) (0.85,5.3)};
\addplot[dotted, gray] coordinates {(0.9,0.9) (0.9,3)};
\addplot[dotted, gray] coordinates {(1,0.9) (1,5.3)};

\addplot[name path=blackthick, thick, black, domain=-0.1:0.7] {3.5};
\addplot[name path=blacktoNd, thick, black, domain=0.7:0.85] {3.5 + (5.3 - 3.5)/(0.85 - 0.7)*(x - 0.7)};
\addplot[thick, black, domain=0.85:1] {5.3};

\addplot[name path = graythick, thick, black, domain=0.1:0.9] {1.67};
\addplot[thick, black] coordinates {(0.9,1.7) (0.9,2.6)};
\addplot[name path = graythick2, thick, black, domain=0.9:1] {2.6};

\addplot[ ultra thick, crimson, domain=-0.1:0.1] {1.68};
\addplot[name path = redcurve, ultra thick, crimson, domain=0.1:0.9] {0.11 / (1 - x) + 1.58};
\addplot[name path = redline, ultra thick, crimson, domain=0.9:1] {2.65};

\addplot[name path=bluethick, ultra thick, navyblue, domain=-0.1:0.1] {1.74};
\addplot[name path=bluetoN, ultra thick, navyblue, domain=0.098:0.702] {1.74 + (3.5 - 1.74)/(0.7 - 0.1)*(x - 0.1)};
\addplot[name path=bluetoNd, ultra thick, navyblue, domain=0.7:1] {3.5 + (5.3 - 3.5)/(1 - 0.7)*(x - 0.7)};

\addplot[tealgreen!30, opacity=0.5] fill between[
    of=blackthick and bluethick,
    soft clip={domain=-0.1:1}
];
\addplot[tealgreen!30, opacity=0.5] fill between[
    of=blacktoNd and bluetoNd,
    soft clip={domain=-0.1:1}
];

\addplot[coral!30, opacity=0.5] fill between[
    of=redcurve and graythick,
    soft clip={domain=0:0.9}
];

\addplot[
  draw=none,
  pattern=crosshatch,        
  pattern color=gray,
  fill opacity=0.2       
] fill between[
  of=redcurve and bluetoN,
  soft clip={domain=0:1}
];

\addplot[
  draw=none,
  pattern=crosshatch,            
  pattern color=gray,
  fill opacity=0.2  
] fill between[
  of=redline and bluetoNd,
  soft clip={domain=0:1}
];

\node at (axis cs:0.17, 3.8) {\small Existing Upper Bound};
\node at (axis cs:0.17, 1.4) {\small Existing Lower Bound};
\node[navyblue, rotate=22, anchor=east] at (axis cs:0.52, 3.16) {\small Our Upper Bound};
\node[crimson, rotate=12, anchor=east] at (axis cs:0.85, 2.6) {\small Our Lower Bound};

\node at (axis cs:0.94, 0.4) {\scriptsize $\sqrt{1\!\!-\!\!\frac{1}{{d}}}$};

\end{axis}

\end{tikzpicture}
        }
        \label{fig:alpha0.5_nonzero}
    \end{subfigure}
    \caption{Summary of parameter bounds, counting only nonzero parameters on a log-log scale when $d = \Theta(\sqrt{N})$. We omit constant factors in both axes. 
    Solid blue and red curves show the sufficient (\cref{thm:ub-nonzero}) and necessary (\cref{thm:lb-nonzero}) numbers of parameters, respectively; the solid black curve is the best prior bound.  
    Light‐blue shading highlights our improvement in the upper bound, and light‐red shading highlights our improvement in the lower bound. The cross‐hatched area marks the remaining gap.
    }
    \label{fig:alpha-comparison-nonzero-params}
\end{figure}

\subsection{Nonzero Parameter Counts: Lower Bounds}
The lower bound in \cref{thm:lb} that counts all parameters consists of two terms: one based on the network width and another based on the VC-dimension.
Although the lower bound by VC-dimension remains valid even when counting only nonzero parameters, the lower bound on the first hidden layer width can be translated into a lower bound on parameters only if we also include zero-valued parameters in the parameter counting convention. As a result, we obtain the following lower bound consisting of only the lower bound from the VC-dimension. 

\begin{restatable}{theorem}{lbthmnonzero}
\label{thm:lb-nonzero}
Let $\rho \in (0,1)$. Suppose for any $\gD \in \mD_{d, N, 2}$, there exists a neural network $f \in \bar{\gF}_{d, P}$ that can $\rho$-robustly memorize $\gD$.
Then, the number of parameters $P$ must satisfy
\begin{align*}
    P = \Omega \left(\min\left\{\frac{1}{\sqrt{1 - \rho^2}}, \sqrt{d}\right\}\sqrt{N} \right).
\end{align*}
\end{restatable}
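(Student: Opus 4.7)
The plan is to reuse the VC-dimension machinery already developed for \cref{thm:lb}, but without the width-based term, since the argument in \cref{prop:lb_width_p=2} only lower bounds the dimension of the first weight matrix and hence only translates into a lower bound under the all-parameter counting convention. In contrast, the argument behind \cref{prop:lb_vc_p=2} transfers cleanly to the nonzero-parameter setting, and this is what will drive the proof.

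First, I would observe that the shattering construction in the proof of \cref{prop:lb_vc_p=2} is stated purely in terms of the existence of a $\rho$-robust memorizer of arbitrary datasets in $\mD_{d,N,2}$. Nothing in that reduction refers to how parameters are counted; it only uses closure of the class under affine rescaling of the last layer and the continuity of the constructed $f$. Substituting $\bar{\gF}_{d,P}$ for $\gF_{d,P}$ therefore gives, for every $\rho \in (0, \sqrt{1-1/d}\,]$,
\begin{equation*}
\vcdim(\bar{\gF}_{d,P}) \;=\; \Omega\!\left(\frac{N}{1-\rho^{2}}\right).
\end{equation*}

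Next, I would couple this lower bound with the classical VC-dimension upper bound for piecewise-linear networks \citep{goldberg1995bounding,bartlett2019nearly}, which scales as $\tilde O(P^{2})$ in the number of \emph{nonzero} parameters: any network with $P$ nonzero weights can be pruned to an architecturally equivalent network whose total parameter count equals $P$, and the polynomial-method argument is then applied to the pruned network. This yields $P = \Omega\!\left(\sqrt{N/(1-\rho^{2})}\right)$ in the range $\rho \in (0, \sqrt{1-1/d}\,]$. For the complementary range $\rho \in (\sqrt{1-1/d}, 1)$, I would apply the endpoint case $\rho = \sqrt{1-1/d}$ of the same bound to obtain $P = \Omega(\sqrt{Nd})$; since enlarging $\rho$ only strengthens the robustness requirement, this lower bound persists for all larger $\rho$. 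Writing the two regimes as a single expression, the result takes the form
\begin{equation*}
P \;=\; \Omega\!\left(\min\!\left\{\tfrac{1}{\sqrt{1-\rho^{2}}},\; \sqrt{d}\right\}\sqrt{N}\right),
\end{equation*}
matching the VC-dimension term of \cref{thm:lb}.

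The main subtlety is justifying that the upper bound $\vcdim(\bar{\gF}_{d,P}) = \tilde O(P^{2})$ genuinely applies when $P$ counts only nonzero parameters, since \emph{a priori} a function in $\bar{\gF}_{d,P}$ could live inside an arbitrarily wide or deep architecture with almost all weights zeroed out. The resolution, to be spelled out carefully, is that such a network computes the same function as the pruned network obtained by removing neurons that have no incoming or outgoing nonzero weight; the pruned network has at most $P$ total parameters and $O(P)$ layers, so the \citet{bartlett2019nearly} bound applies directly. With that justification in place, the two displayed inequalities combine to give the claim, and no appeal to \cref{prop:lb_width_p=2} is needed — which is precisely why the first-term contribution from \cref{thm:lb} is absent here.
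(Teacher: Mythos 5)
Your proposal is correct and follows essentially the same route as the paper: drop the width-based term, keep the VC-dimension term, and combine the $\Omega(N/(1-\rho^2))$ lower bound on $\vcdim(\bar{\gF}_{d,P})$ with the $O(P^2)$ upper bound of \citet{goldberg1995bounding}, then extend to $\rho > \sqrt{1-1/d}$ by monotonicity in $\rho$. The only cosmetic difference is that the paper obtains the VC-dimension lower bound on $\bar{\gF}_{d,P}$ by observing the inclusion $\gF_{d,P}\subseteq\bar{\gF}_{d,P}$ and citing the already-established bound on $\vcdim(\gF_{d,P})$, rather than re-running the shattering argument; your extra remarks on why the $O(P^2)$ bound is valid under nonzero-parameter counting (via pruning of zero-weight edges and dangling neurons) are a reasonable elaboration of what the paper asserts by citation.
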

The main reason why the VC-dimension lower bound remains valid even for the nonzero parameter count is because the key relation $\vcdim(\bar{\gF}_{d, P}) = O(P^2)$ \citep{goldberg1995bounding} holds even for the $\bar{\gF}_{d, P}$ instead of $\gF_{d, P}$. 
Below, we provide an explicit proof of the \cref{thm:lb-nonzero}. 

\begin{proof}
Since $\gF_{d, P} \subseteq \bar{\gF}_{d, P}$, we have $\vcdim(\gF_{d, P}) \leq \vcdim(\bar{\gF}_{d, P})$. In particular, by \cref{eq:vc-goal}, we have for $\rho \in \left(0, \sqrt{1 - \frac{1}{d}}\right]$ that
\begin{align*}
    \vcdim(\bar{\gF}_{d, P}) \geq \vcdim(\gF_{d, P}) = \Omega\bigopen{\frac{N}{1 - \rho^2}}. 
\end{align*}
By \citep{goldberg1995bounding}, we have $\vcdim(\bar{\gF}_{d, P}) = O(P^2)$. 
Combining the two relations proves that for $\rho \in \left(0, \sqrt{1 - \frac{1}{d}}\right]$, 
\begin{align*}
    P = \Omega \bigopen{\sqrt{\frac{N}{1 - \rho^2}}}. 
\end{align*}
Since $\frac{1}{\sqrt{1-\rho^2}} \leq \sqrt{d}$ for $\rho \in \left(0, \sqrt{1-\frac{1}{d}} \right]$, the following relation holds: 
\begin{align*}
    \min\{\frac{1}{\sqrt{1-\rho^2}}, \sqrt{d}\}\sqrt{N}=\sqrt{\frac{N}{1-\rho^2}}.
\end{align*}

For $\rho \in \left(\sqrt{1-\frac{1}{d}},1 \right)$, the lower bound $P = \Omega(\sqrt{Nd})$ obtained by the case $\rho = \sqrt{1 - \frac{1}{d}}$ also can be applied. 
Since $\frac{1}{\sqrt{1-\rho^2}}>\sqrt{d}$ for $\rho \in \left(\sqrt{1-\frac{1}{d}},1 \right)$, the following relation holds:
\begin{align*}
    \min\{\frac{1}{\sqrt{1-\rho^2}}, \sqrt{d}\}\sqrt{N}=\sqrt{Nd}.
\end{align*}
As a result, applying $P = \Omega\bigopen{\sqrt{\frac{N}{1 - \rho^2}}}$ for $\rho \in \left(0, \sqrt{1 - \frac{1}{d}}\right]$ and $\Omega(\sqrt{Nd})$ for $\rho \in \bigopen{\sqrt{1 - \frac{1}{d}}, 1}$ results in
\begin{align*}
    P = \Omega\bigopen{\min\bigset{\frac{1}{\sqrt{1 - \rho^2}}, \sqrt{d}}\sqrt{N}}
\end{align*}
\end{proof}

\subsection{Nonzero Parameter Counts: Upper Bounds}
While upper bounds on parameter counts of all parameters in \cref{thm:ub} are naturally an upper bound for parameter counts of nonzero parameters, we provide a tighter upper bound regarding the nonzero parameters. 
\begin{restatable}{theorem}{ubthmnonzero}
    \label{thm:ub-nonzero}
    For any dataset $\gD \in \mD_{d, N, C}$ and $\eta \in (0, 1)$, the following statements hold:

    \vspace{-5pt}
    \begin{enumerate}[label=\upshape(\roman*),ref=\thetheorem (\roman*), leftmargin=30pt, itemsep=-3pt]
        \item \label{thm:ub-nonzero-1} If $\rho \in \Big(0, \frac{1}{{5} N \sqrt{d} }\Big]$, there exists $f \in \bar{\gF}_{d,P}$ with $P=\tilde{O}(\sqrt{N} + d)$ that $\rho$-robustly memorizes $\gD$.
        \item \label{thm:ub-nonzero-2} If $\rho \in \Big(\frac{1}{{5}N \sqrt{d}},\frac{1}{ 5\sqrt{d} }\Big]$, there exists $f \in \bar{\gF}_{d,P}$ with $P=\tilde{O}(N d^{\frac{1}{4}} \rho^{\frac{1}{2}} + d)$ that $\rho$-robustly memorizes $\gD$ with error at most $\eta$.
        \item \label{thm:ub-nonzero-3} If $\rho \in \Big(\frac{1}{5\sqrt{d}},1\Big)$, there exists $f \in \bar{\gF}_{d,P}$ with $P = \tilde{O}(N d \rho^2 + d)$ that $\rho$-robustly memorizes $\gD$.
    \end{enumerate}
\end{restatable}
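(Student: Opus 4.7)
The plan is to revisit each of the three constructions in Theorem~\ref{thm:ub-detailed} and recount only the nonzero parameters, exploiting that each of those constructions is sparse in a way that the all-parameters bound obscures. Cases (i) and (ii) are essentially read off from the previously constructed networks with careful bookkeeping of the first hidden layer, while Case (iii) is where the real improvement---from $\tilde{O}(Nd^2\rho^4)$ in Theorem~\ref{thm:ub-detailed-3} to $\tilde{O}(Nd\rho^2+d)$---arises.

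For Cases (i) and (ii), I will reuse the same networks constructed in the proofs of Theorems~\ref{thm:ub2} and \ref{thm:ub3}. In both constructions, after the initial projection layer of width $m=\tilde{O}(\log N)$, every subsequent layer has width $\tilde{O}(1)$, so the post-projection subnetworks contain at most $\tilde{O}(\sqrt{N})$ and $\tilde{O}(Nd^{1/4}\rho^{1/2})$ nonzero parameters, respectively. The only $d$-dependent contribution comes from the first hidden weight matrix $\mW \in \R^{m \times d}$, which has at most $dm = \tilde{O}(d)$ nonzero entries. Summing yields the two claimed bounds $\tilde{O}(\sqrt{N}+d)$ and $\tilde{O}(Nd^{1/4}\rho^{1/2}+d)$. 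The robust-memorization error guarantees transfer verbatim from Theorem~\ref{thm:ub3}, since the function implemented is identical.

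For Case (iii), I will reuse the construction from Theorem~\ref{thm:ub4} but apply two refinements in the parameter accounting. First, when $d>N$, Case~V of that proof factors the first hidden layer as the composition of a natural projection $\R^d \to \R^N$ followed by a JL-style projection $\R^N \to \R^m$ with $m=\tilde{O}(\rho^2 N)$; since both are linear, I collapse them into a single linear map $\R^d \to \R^m$ whose matrix has at most $dm = \tilde{O}(Nd\rho^2)$ nonzero entries. When $d \le N$, the direct projection with $m=\tilde{O}(\rho^2 d)$ already gives $dm = \tilde{O}(\rho^2 d^2) \le \tilde{O}(Nd\rho^2)$. Second, the post-projection subnetwork applies the construction of \citet{yu2024optimal} to the projected data in $\R^m$; its width is $O(m)$ and depth $\tilde{O}(N)$, but each layer's weight matrix is sparse---every neuron in a propagation layer has only a constant number of nonzero incoming connections---so the nonzero count per layer is $O(m)$ rather than $O(m^2)$. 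Over $\tilde{O}(N)$ layers this yields $\tilde{O}(Nm) \le \tilde{O}(Nd\rho^2)$ in both regimes $d\le N$ and $d>N$. Adding the two contributions, together with the trivial $d$ term for input-side biases, produces the claim $\tilde{O}(Nd\rho^2 + d)$.

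The main obstacle is verifying the per-layer sparsity of the \citet{yu2024optimal} construction---namely, that each layer has $O(m)$ nonzero parameters rather than the $O(m^2)$ implied by simply multiplying the widths. This requires an explicit audit of their layer templates to confirm that propagation columns carry data via identity-like patterns with a constant number of nonzero weights per neuron, and that the memorization-computation columns also introduce $O(1)$ nonzero weights per neuron. Once this sparsity claim is established, the remaining bookkeeping follows mechanically from the proofs of \cref{thm:ub2,thm:ub3,thm:ub4}, and the three stated bounds in Theorem~\ref{thm:ub-nonzero} follow.
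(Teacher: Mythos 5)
Your proposal is correct and follows essentially the same route as the paper: cases (i) and (ii) carry over directly from the constructions of Theorems~\ref{thm:ub2} and~\ref{thm:ub3}, and case (iii) is obtained by rerunning the case analysis of Theorem~\ref{thm:ub4} with the first-layer projection counted as a single $m\times d$ matrix and the downstream memorizer counted at $\tilde{O}(Nm)$ nonzero parameters. The one obstacle you flag---auditing the sparsity of the \citet{yu2024optimal} layers---is not needed, since their Theorem~B.6 (restated as \cref{thm:yu-params-nonzero}) already asserts the $O(Nd(\log(d/\gamma^2)+\log R))$ \emph{nonzero} parameter count directly, which is exactly what the paper invokes.
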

In comparison to the total parameter count as in \cref{thm:ub}, only Theorem~\ref{thm:ub-nonzero-3} have a modified rate from $P=\tilde{O}(Nd^2\rho^4)$ to $P=\tilde{O}(Nd\rho^2)$. 
Below, we provide an explicit proof of \cref{thm:ub-nonzero}. 
The $d$ term in the parameter bounds of all three cases comes from the upper bound on the parameters of the first hidden layer. 

\begin{proof}
Upper bounds on all parameter counts are natural upper bounds on the nonzero parameter counts. 
Since Theorem~\ref{thm:ub-nonzero-1} and Theorem~\ref{thm:ub-nonzero-2} claims the same rate as Theorem~\ref{thm:ub-3} and Theorem~\ref{thm:ub-4} respectively, they trivially follows from \cref{thm:ub}.
Another way of speaking, $\gF_{d, P} \subseteq \bar{\gF}_{d, P}$ and the first two cases directly follow from \cref{thm:ub}.

Now let us prove Theorem~\ref{thm:ub-nonzero-3}. 
Here, we mainly follow the proof of \cref{thm:ub4}, where instead of counting every parameter using \cref{thm:yu-params}, we count only the nonzero parameters using \cref{thm:yu-params-nonzero}. 
We divide the cases into five, following \cref{thm:ub4} as in \cref{fig:ub3cases}. 

Let $\gD = \{(\vx_i, y_i)\}_{i \in [N]} \in \mD_{d, N, C}$ be given. 
We divide the proof into five cases, the first case under $\rho \in [1/3, 1)$, the second case under $\rho \in (1/5\sqrt{d}, 1/3)$ and $d < 600 \log N$, the third case under $\rho \in (1/5\sqrt{d}, 1/3)$ and $N < 600 \log N \leq d$, the fourth case under $\rho \in (1/5\sqrt{d},1/3)$, $N \geq d \geq 600 \log N$, and finally the fifth case under $\rho \in (1/5\sqrt{d},1/3)$ and $d > N \geq 600 \log N$. 
To check that these cases cover all the cases, refer to \cref{fig:ub3cases}. 

\paragraph{Case I: $\rho \in [1/3, 1)$.}
Let us denote $R: = \max_{i \in [N]} \norm{\vx_i}_2$ and $\gamma:= (1 - \rho)\epsilon_\gD$.
Note that $R \geq \norm{\vx_i}_\infty$ for all $i \in [N]$ as $\norm{\vx}_2 \geq \norm{\vx}_\infty$ for all $\vx \in \sR^d$. 
By applying \cref{thm:yu-params-nonzero}, there exists $f \in \bar{\gF}_{d, P}$ with $P = O(Nd(\log(\frac{d}{\gamma^2}) + \log R))$ \textit{nonzero} parameters that $\rho$-robustly memorize $\gD$. 
The number of \textit{nonzero} parameters can be further bounded as follows:
\begin{align*}
    O(Nd(\log(\frac{d}{\gamma^2}) + \log R)) \overset{(a)}{=} O(Nd\rho^2 \cdot (\log(\frac{d}{\gamma^2}) + \log R)) \overset{(b)}{=} \tilde{O}(Nd\rho^2),
\end{align*}
where (a) is due to $\rho = \Omega(1)$, (b) hides the logarithmic factors. 

\paragraph{Case II: $\rho \in (1/5\sqrt{d}, 1/3)$ and $d < 600 \log N$.}
Let us denote $R: = \max_{i \in [N]} \norm{\vx_i}_2$ and $\gamma:= (1 - \rho)\epsilon_\gD$.
Note that $R \geq \norm{\vx_i}_\infty$ for all $i \in [N]$ as $\norm{\vx}_2 \geq \norm{\vx}_\infty$ for all $\vx \in \sR^d$. 
By \cref{thm:yu-params-nonzero}, there exists $f \in \bar{\gF}_{d, P}$ with $P = O(Nd(\log(\frac{d}{\gamma^2}) + \log R))$ \textit{nonzero} parameters that $\rho$-robustly memorize $\gD$. 
The number of \textit{nonzero} parameters can be further bounded as follows:
\begin{align*}
    O(Nd(\log(\frac{d}{\gamma^2}) + \log R)) \overset{(a)}{=} O(N(\log N) \cdot (\log(\frac{d}{\gamma^2}) + \log R)) \overset{(b)}{=} \tilde{O}(N) \overset{(c)}{=} \tilde{O}(Nd\rho^2),
\end{align*}
where (a) is due to $d \leq 600 \log N$, (b) hides the logarithmic factors, and (c) is because $N \leq~25 Nd\rho^2$ for all $\rho \in \bigopen{\frac{1}{5 \sqrt{d}}, \frac{1}{3}}$. 

\paragraph{Case III: $\rho \in (1/5\sqrt{d}, 1/3)$ and $N < 600 \log N \leq d$.}
We first apply \cref{prop:natproj} to $\gD$ to obtain 1-Lipschitz linear $\varphi:\sR^d \rightarrow \sR^N$ such that $\gD' := \{(\varphi(\vx_i), y_i)\}_{i \in [N]}$ has $\epsilon_{\gD'} = \epsilon_{\gD}$. 
This is possible as $d \geq N$. 

Take $\vb \in \sR^N$ such that $\varphi(\vx) - \vb \geq \vzero$ for all $\vx \in \gB_2(\varphi(\vx_i), \rho\epsilon_{\gD'})$, ensuring that $\relu$ does not affect the output of the first hidden layer. 
Let $\gD'' = \{(\varphi(\vx_i) - \vb, y_i)\}_{i \in [N]}$.
Then, $\epsilon_\gD = \epsilon_{\gD'} = \epsilon_{\gD''}$. 
For simplicity of the notation, let us denote $\vz_i := \varphi(\vx_i) - \vb$.
Moreover, the first hidden layer is defined as $f_1(\vx) = \varphi(\vx) - \vb$.

We apply \cref{thm:yu-params-nonzero} to $\gD''$.
Let us denote $R: = \max_{i \in [N]} \norm{\varphi(\vz_i)}_2$ and $\gamma:= (1 - \rho)\epsilon_{\gD''}$.
Note that $R \geq \norm{\vz_i}_\infty$ for all $i \in [N]$ as $\norm{\vz}_2 \geq \norm{\vz}_\infty$ for all $\vz \in \sR^N$. 
By \cref{thm:yu-params-nonzero}, there exists $f_2 \in \bar{\gF}_{N, P}$ with $P = O(N\cdot N(\log(\frac{N}{\gamma^2}) + \log R))$ \textit{nonzero} parameters that $\rho$-robustly memorize $\gD''$. 

Let $f = f_2 \circ \relu \circ f_1$. 
Since $f_1$ is 1-Lipschitz and $\epsilon_{\gD''} = \epsilon_\gD$, every robustness ball of $\gD$ is mapped to the robustness ball of $\gD''$ via $f_1$. 
Since the $\relu$ does not affect the first hidden layer output of the robustness ball, and $f_2$ $\rho$-robustly memorizes $\gD''$, the composed $f$ satisfies the desired property

The number of \textit{nonzero} parameters can be further bounded as follows:
\begin{align*}
    O(Nd + N\cdot N(\log(\frac{d}{\gamma^2}) + \log R)) \overset{(a)}{=} O(d\log N + (\log N)^2 \cdot (\log(\frac{d}{\gamma^2}) + \log R)) \overset{(b)}{=} \tilde{O}(d),
\end{align*}
where (a) is due to $N \leq 600 \log N$, and (b) hides the logarithmic factors.

\paragraph{Case IV: $\rho \in (1/5\sqrt{d}, 1/3)$, and $N \geq d \geq 600 \log N$.}
We utilize the dimension reduction technique by \cref{prop:lipschitzprojwithsep}.
We apply \cref{prop:lipschitzprojwithsep} to $\gD$ with $m = \max\{\ceil{9d\rho^2}, \ceil{600 \log N}, \ceil{10 \log d}\}$ and $\alpha = 1/5$.
Let us first check that the specified $m$ satisfies the condition $24\alpha^{-2}\log N \leq m \leq d$ for the proposition to be applied. 
$\alpha = 1/5$ and $m \geq 600 \log N$ ensure the first inequality $24\alpha^{-2}\log N \leq m$.
The second inequality $m \leq d$ is decomposed into three parts.
Since $\rho \leq \frac{1}{3}$, we have $9d\rho^2 \leq d$ so that
\begin{align}
\label{eq:mlessd1-nonzero}
    \ceil{9d\rho^2} \leq d.
\end{align}
Moreover, $600 \log N \leq d$ implies
\begin{align}
\label{eq:mlessd2-nonzero}
    \ceil{600 \log N} \leq d.
\end{align}
Additionally, as $N \geq 2$, we have $d \geq 600 \log N \geq 600 \log 2 \geq 400$. 
By \cref{lem:dlogd}, this implies $10 \log d \leq d$ and therefore
\begin{align}
\label{eq:mlessd3-nonzero}
    \ceil{10 \log d} \leq d.
\end{align}
Gathering \cref{eq:mlessd1-nonzero,eq:mlessd2-nonzero,eq:mlessd3-nonzero} proves $m \leq d$.

By the \cref{prop:lipschitzprojwithsep}, there exists 1-Lipchitz linear mapping $\phi:\sR^d \rightarrow \sR^m$ and $\beta > 0$ such that $\gD' := \{(\phi(\vx_i), y_i)\}_{i \in [N]} \in \mD_{m, N, C}$ satisfies
\begin{align}
\label{eq:epsilon12-nonzero}
    \epsilon_{\gD'} \geq \frac{4}{5}\beta\epsilon_\gD.
\end{align}
As $m \geq 10 \log d$, the inequality $\beta \geq \frac{1}{2}\sqrt{\frac{m}{d}}$ is also satisfied by \cref{prop:lipschitzprojwithsep}. 
Therefore, we have
\begin{align}
\label{eq:betarho-nonzero}
    \beta \geq \frac{1}{2}\sqrt{\frac{m}{d}} \overset{(a)}{\geq} \frac{1}{2}\sqrt{\frac{\ceil{9d\rho^2}}{d}} \geq \frac{1}{2}\sqrt{\frac{9d\rho^2}{d}} = \frac{3}{2}\rho,
\end{align}
where (a) is by the definition of $m$.
Moreover, since $\phi$ is 1-Lipchitz linear,
\begin{align}
\label{eq:phicontraction-nonzero}
    \norm{\phi(\vx_i)}_2 = \norm{\phi(\vx_i - \vzero)}_2 = \norm{\phi(\vx_i) - \phi(\vzero)}_2 \leq \norm{\vx_i - \vzero}_2 = \norm{\vx_i}_2,
\end{align}
for all $i \in [N]$. 
Hence, by letting $R:=\max_{i \in [N]}\{\norm{\vx_i}_2\}$, we have $\norm{\phi(\vx_i)}_2 \leq R$ for all $i \in [N]$. 

Now, we set the first layer hidden matrix as the matrix $\mW \in \sR^{m \times d}$ corresponding to $\phi$ under the standard basis of $\sR^d$ and $\sR^m$. 
Moreover, set the first hidden layer bias as $\vb := 2R\vone = 2R(1, 1, \cdots, 1) \in \sR^m$.
Then, we have 
\begin{align}
\label{eq:no-relu-nonzero}
    \mW\vx + \vb \geq \vzero,
\end{align}
for all $\vx \in \gB_2(\vx_i, \epsilon_{\gD})$ for all $i \in [N]$, where the comparison between two vectors are element-wise. 
This is because for all $i \in [N], j \in [m]$ and $\vx \in \gB_2(\vx, \epsilon_{\gD})$, we have 
\begin{align*}
    (\mW \vx + \vb)_j = (\mW \vx)_j + 2R \geq 2R - \norm{\mW\vx}_2 \overset{(a)}{\geq} 2R - \norm{\vx}_2 \overset{(b)}{\geq} 2R - (R + \epsilon_{\gD}) \overset{(c)}{\geq} 0,
\end{align*}
where (a) is by \cref{eq:phicontraction-nonzero}, (b) is by the triangle inequality, and (c) is due to $R > \epsilon_{\gD}$. 

We construct the first layer of the neural network as $f_1(\vx):= \sigma(\mW\vx + \vb)$ which includes the activation $\sigma$. 
Then, by above properties, $\gD'' := \{(f_1(\vx_i), y_i)\}_{i \in [N]}$ satisfies
\begin{align}
\label{eq:epsilon13-nonzero}
    \epsilon_{\gD''} \geq \frac{6}{5}\rho \epsilon_\gD.
\end{align}
This is because for $i \neq j$ with $y_i \neq y_j$ we have
\begin{align*}
    \norm{f_1(\vx_i) - f_1(\vx_j)}_2 & = \norm{\sigma(\mW \vx_i + \vb) - \sigma(\mW \vx_j + \vb)}_2\\
    & \overset{(a)}{=} \norm{(\mW \vx_i + \vb) - (\mW \vx_j + \vb)}_2\\
    & = \norm{\phi(\vx_i) - \phi(\vx_j)}_2\\
    & \overset{(b)}{\geq} 2\epsilon_{\gD'}\\
    & \overset{(c)}{\geq} 2 \times \frac{4}{5}\beta \epsilon_\gD\\
    & \overset{(d)}{\geq} 2 \times \frac{4}{5} \times \frac{3}{2}\rho \epsilon_\gD\\
    & = \frac{12}{5}\rho \epsilon_\gD,
\end{align*}
where (a) is by \cref{eq:no-relu-nonzero}, (b) is by the definition of the $\epsilon_{\gD'}$, (c) is by \cref{eq:epsilon12-nonzero}, and (d) is by \cref{eq:betarho-nonzero}.
By \cref{thm:yu-params-nonzero} applied to $\gD'' \in \mD_{m, N, C}$, there exists $f_2 \in \gF_{m, P}$ with $P = O(Nm(\log(\frac{m}{(\gamma'')^2}) + \log R''))$ \textit{nonzero} number of parameters that $\frac{5}{6}$-robustly memorize $\gD''$, where
\begin{align*}
    \gamma'' & := (1 - \frac{5}{6})\epsilon_{\gD''} \overset{(a)}{\geq} \frac{1}{6} \times \frac{12}{5}\rho \epsilon_\gD = \frac{2}{5} \rho \epsilon_\gD,\\
    R''&:=\max_{i \in [N]} \norm{f_1(\vx_i)}_2 = \max_{i \in [N]} \norm{\sigma(\mW\vx_i + \vb)}_2 = \max_{i \in [N]} \norm{\mW\vx_i + \vb}_2\\
    & \leq \max_{i \in [N]} \norm{\mW\vx_i}_2 + \norm{\vb}_2 \leq 3R.
\end{align*}
Here (a) is by \cref{eq:epsilon13-nonzero}.

Now, we claim that $f:= f_2\circ f_1$ $\rho$-robustly memorize $\gD$. 
For any $i \in [N]$, take $\vx \in \gB_2(\vx_i, \rho \epsilon_{\gD})$. 
Then, by \cref{eq:no-relu-nonzero}, we have $f_1(\vx) = \mW\vx + \vb$ and $f_1(\vx_i) = \mW\vx_i + \vb$ so that
\begin{align}
\label{eq:f1dist-nonzero}
    \norm{f_1(\vx) - f_1(\vx_i)}_2 = \norm{\mW\vx - \mW\vx_i}_2 \leq \norm{\vx - \vx_i}_2 \leq \rho \epsilon_\gD.
\end{align}
Moreover, combining \cref{eq:epsilon13-nonzero,eq:f1dist-nonzero} results $\norm{f_1(\vx) - f_1(\vx_i)}_2 \leq \frac{5}{6} \epsilon_{\gD''}$. 
Since $f_2$ $\frac{5}{6}$-robustly memorize $\gD''$, we have
\begin{align*}
    f(\vx) = f_2(f_1(\vx)) = f_2(f_1(\vx_i)) = y_i. 
\end{align*}
In particular, $f(\vx) = y_i$ for any $\vx \in \gB_2(\vx_i, \rho \epsilon_{\gD})$, concluding that $f$ is a $\rho$-robust memorizer $\gD$. 
Regarding the number of parameters to construct $f$, notice that $f_1$ consists of $(d+1)m = \tilde{O}(d^2\rho^2)$ parameters (and thus $\tilde{O}(d^2\rho^2)$ \textit{nonzero} parameters) as $m = \tilde{O}(d\rho^2)$. $f_2$ consists of $\tilde{O}(Nm) = \tilde{O}(Nd\rho^2)$ \textit{nonzero} parameters. 
Since the case IV assumes $N \geq d$, we have 
\begin{align*}
    d^2\rho^2 \leq Nd\rho^2
\end{align*}

Therefore, $f$ in total consists of $\tilde{O}(d^2\rho^2 + Nd\rho^2) = \tilde{O}(Nd\rho^2)$ number of \textit{nonzero} parameters. 
This proves the theorem for the fourth case. 

\paragraph{Case V: $\rho \in (1/5\sqrt{d}, 1/3)$, and $d > N \geq 600 \log N$.}
The last case combines the two techniques used in Cases III and IV. 
We first apply \cref{prop:natproj} to $\gD$ to obtain 1-Lipschitz linear $\varphi:\sR^d \rightarrow \sR^N$ such that $\gD' := \{(\varphi(\vx_i), y_i)\}_{i \in [N]} \in \mD_{N, N, C}$ has $\epsilon_{\gD'} = \epsilon_{\gD}$. 
Note that we can apply the proposition since $d \geq N$. 

Next, we apply \cref{prop:lipschitzprojwithsep} to $\gD' \in \mD_{N, N, C}$ with $m = \max\{\ceil{9N\rho^2}, \ceil{600 \log N}\}$ and $\alpha = 1/5$.
Let us first check that the specified $m$ satisfies the condition $24\alpha^{-2}\log N \leq m \leq N$ for the proposition to be applied. 
$\alpha = 1/5$ and $m \geq 600 \log N$ ensure the first inequality $24\alpha^{-2}\log N \leq m$.
The second inequality $m \leq N$ is decomposed into two parts.
Since $\rho \leq \frac{1}{3}$, we have $9N\rho^2 \leq N$ so that
\begin{align}
\label{eq:mlessn1-nonzero}
    \ceil{9N\rho^2} \leq N.
\end{align}
Moreover, $600 \log N \leq N$ implies
\begin{align}
\label{eq:mlessn2-nonzero}
    \ceil{600 \log N} \leq N.
\end{align}
Gathering \cref{eq:mlessd1-nonzero,eq:mlessd2-nonzero} proves $m \leq N$.
Additionally, as $N \geq 2$, we have $N \geq 600 \log N \geq 600 \log 2 \geq 400$. 
By \cref{lem:dlogd}, this implies $10 \log N \leq N$.

By the \cref{prop:lipschitzprojwithsep}, there exists 1-Lipchitz linear mapping $\phi:\sR^N \rightarrow \sR^m$ and $\beta > 0$ such that $\gD'' := \{(\phi(\varphi(\vx_i)), y_i)\}_{i \in [N]} \in \mD_{m, N, C}$ satisfies
\begin{align}
\label{eq:epsilon12n-nonzero}
    \epsilon_{\gD''} \geq \frac{4}{5}\beta\epsilon_\gD'.
\end{align}
As $m \geq 600 \log N \geq 10 \log N$, the inequality $\beta \geq \frac{1}{2}\sqrt{\frac{m}{N}}$ is also satisfied by \cref{prop:lipschitzprojwithsep}. 
Therefore, we have
\begin{align}
\label{eq:betarhon-nonzero}
    \beta \geq \frac{1}{2}\sqrt{\frac{m}{N}} \overset{(a)}{\geq} \frac{1}{2}\sqrt{\frac{\ceil{9N\rho^2}}{N}} \geq \frac{1}{2}\sqrt{\frac{9N\rho^2}{N}} = \frac{3}{2}\rho,
\end{align}
where (a) is by the definition of $m$.
Moreover, since $\varphi$ and $\phi$ are both 1-Lipchitz linear, $\phi \circ \varphi: \sR^d \rightarrow \sR^m$ is also 1-Lipschitz linear.
Therefore,
\begin{align}
\label{eq:phicontractionn-nonzero}
    \norm{\phi(\varphi(\vx_i))}_2 = \norm{\phi(\varphi(\vx_i - \vzero))}_2 = \norm{\phi(\varphi(\vx_i)) - \phi(\varphi(\vzero))}_2 \leq \norm{\vx_i - \vzero}_2 = \norm{\vx_i}_2,
\end{align}
for all $i \in [N]$. 
Hence, by letting $R:=\max_{i \in [N]}\{\norm{\vx_i}_2\}$, we have $\norm{\phi(\varphi(\vx_i))}_2 \leq R$ for all $i \in [N]$. 

Now, we set the first layer hidden matrix as the matrix $\mW \in \sR^{m \times d}$ corresponding to $\phi \circ \varphi$ under the standard basis of $\sR^d$ and $\sR^m$. 
Moreover, set the first hidden layer bias as $\vb := 2R\vone = 2R(1, 1, \cdots, 1) \in \sR^m$.
Then, we have 
\begin{align}
\label{eq:no-relun-nonzero}
    \mW\vx + \vb \geq \vzero,
\end{align}
for all $\vx \in \gB_2(\vx_i, \epsilon_{\gD})$ for all $i \in [N]$, where the comparison between two vectors are element-wise. 
This is because for all $i \in [N], j \in [m]$ and $\vx \in \gB_2(\vx, \epsilon_{\gD})$, we have 
\begin{align*}
    (\mW \vx + \vb)_j = (\mW \vx)_j + 2R \geq 2R - \norm{\mW\vx}_2 \overset{(a)}{\geq} 2R - \norm{\vx}_2 \overset{(b)}{\geq} 2R - (R + \epsilon_\gD) \overset{(c)}{\geq} 0,
\end{align*}
where (a) is by \cref{eq:phicontractionn-nonzero}, (b) is by the triangle inequality, and (c) is due to $R \geq \epsilon_{\gD}$. 

We construct the first layer of the neural network as $f_1(\vx):= \sigma(\mW\vx + \vb)$ which includes the activation $\sigma$. 
Next, we show that, $\gD'' := \{(f_1(\vx_i), y_i)\}_{i \in [N]}$ satisfies
\begin{align}
\label{eq:epsilon13n-nonzero}
    \epsilon_{\gD''} \geq \frac{6}{5}\rho \epsilon_\gD,
\end{align}
by the above properties. 
This is because for $i \neq j$ with $y_i \neq y_j$ we have
\begin{align*}
    \norm{f_1(\vx_i) - f_1(\vx_j)}_2 & = \norm{\sigma(\mW \vx_i + \vb) - \sigma(\mW \vx_j + \vb)}_2\\
    & \overset{(a)}{=} \norm{(\mW \vx_i + \vb) - (\mW \vx_j + \vb)}_2\\
    & = \norm{\phi(\varphi(\vx_i)) - \phi(\varphi(\vx_j))}_2\\
    & \overset{(b)}{\geq} 2\epsilon_{\gD''}\\
    & \overset{(c)}{\geq} 2 \times \frac{4}{5}\beta \epsilon_\gD'\\
    & \overset{(d)}{\geq} 2 \times \frac{4}{5} \times \frac{3}{2}\rho \epsilon_\gD'\\
    & = \frac{12}{5}\rho \epsilon_\gD'\\
    & \overset{(e)}{=} \frac{12}{5}\rho \epsilon_\gD,
\end{align*}
where (a) is by \cref{eq:no-relun-nonzero}, (b) is by the definition of the $\epsilon_{\gD''}$, (c) is by \cref{eq:epsilon12n-nonzero}, (d) is by \cref{eq:betarhon-nonzero}, and (e) is because $\epsilon_{\gD'} = \epsilon_\gD$. 

By \cref{thm:yu-params-nonzero} applied to $\gD'' \in \mD_{m, N, C}$, there exists $f_2 \in \gF_{m, P}$ with $P = O(Nm(\log(\frac{m}{(\gamma'')^2}) + \log R''))$ \textit{nonzero} number of parameters that $\frac{5}{6}$-robustly memorize $\gD''$, where
\begin{align*}
    \gamma'' & := (1 - \frac{5}{6})\epsilon_{\gD''} \overset{(a)}{\geq} \frac{1}{6} \times \frac{12}{5}\rho \epsilon_\gD = \frac{2}{5} \rho \epsilon_\gD,\\
    R''&:=\max_{i \in [N]} \norm{f_1(\vx_i)}_2 = \max_{i \in [N]} \norm{\sigma(\mW\vx_i + \vb)}_2 = \max_{i \in [N]} \norm{\mW\vx_i + \vb}_2\\
    & \leq \max_{i \in [N]} \norm{\mW\vx_i}_2 + \norm{\vb}_2 \leq 3R.
\end{align*}
Here, (a) is by \cref{eq:epsilon13n-nonzero}.

Now, we claim that $f:= f_2\circ f_1$ $\rho$-robustly memorize $\gD$. 
For any $i \in [N]$, take $\vx \in \gB_2(\vx_i, \rho \epsilon_{\gD})$. 
Then, by \cref{eq:no-relun-nonzero}, we have $f_1(\vx) = \mW\vx + \vb$ and $f_1(\vx_i) = \mW\vx_i + \vb$ so that
\begin{align}
\label{eq:f1distn-nonzero}
    \norm{f_1(\vx) - f_1(\vx_i)}_2 = \norm{\mW\vx - \mW\vx_i}_2 \leq \norm{\vx - \vx_i}_2 \leq \rho \epsilon_\gD.
\end{align}
Moreover, putting \cref{eq:epsilon13n-nonzero} to \cref{eq:f1distn-nonzero} results $\norm{f_1(\vx) - f_1(\vx_i)}_2 \leq \frac{5}{6} \epsilon_{\gD''}$. 
Since $f_2$ $\frac{5}{6}$-robustly memorize $\gD''$, we have
\begin{align*}
    f(\vx) = f_2(f_1(\vx)) = f_2(f_1(\vx_i)) = y_i. 
\end{align*}
In particular, $f(\vx) = y_i$ for any $\vx \in \gB_2(\vx_i, \rho \epsilon_{\gD})$, concluding that $f$ is a $\rho$-robust memorizer $\gD$.

Regarding the number of \textit{nonzero} parameters to construct $f$, notice that $f_1$ consists of $(d+1)m = \tilde{O}(Nd\rho^2)$ \textit{nonzero} parameters as $m = \tilde{O}(N\rho^2)$. $f_2$ consists of $\tilde{O}(Nm) = \tilde{O}(N^2\rho^2)$ \textit{nonzero} parameters. 
Since the case V assumes $N < d$, we have 
\begin{align*}
    N^2\rho^2 & \leq Nd\rho^2.
\end{align*}

Therefore, $f$ in total consists of $\tilde{O}(Nd\rho^2 + N^2\rho^2) = \tilde{O}(Nd\rho^2)$ number of \textit{nonzero} parameters. 
This proves the theorem for the last case. 

\end{proof}

\paragraph{Nonzero Parameter Counts: Existing Upper Bounds.}
In \cref{subsec:whatisknown}, the existing upper bound is stated by counting all parameters. 
When counting only the nonzero parameters, the corresponding existing upper bound takes a different form. 
Specifically, for any dataset $\gD$ with input dimension $d$ and size $N$, there exist a neural network that achieves robust memorization on $\gD$ with the robustness ratio $\rho$ under $\ell_2$-norm, with the number of parameters $P$ bounded as follows:
\vspace{-2pt}
\begin{align}
\label{eq:existing-ub-nonzero}
P=\begin{cases}
    \tilde{O}(N+d) & \text{if } \rho \in (0, 1/{\sqrt{d} }]. \\
    \tilde{O}(N d^2 \rho^4 + d) & \text{if } \rho \in (1/{\sqrt{d} },1/{ \sqrt[4]{d} }]. \\
    \tilde{O}(N d) & \text{if } \rho \in (1/{ \sqrt[4]{d}},1). \\
\end{cases}
\vspace{-2pt}
\end{align}
This is the counterpart to \cref{eq:existing-ub} that considers all parameter counts. 
As in the case of full parameter count, the first and the third case in \cref{eq:existing-ub-nonzero} directly follow from \citet{yu2024optimal} and \citet{egosi2025logarithmicwidthsufficesrobust} respectively.
The work by \citet{egosi2025logarithmicwidthsufficesrobust} can be implicitly improved to the second case under the moderate $\rho$ condition, using the same translation technique provided in \cref{app:egosi}.

\subsection{Lemmas for Nonzero Parameter Count}
Here, we state \cref{thm:yu,thm:yu-params}---that corresponds to Theorem B.6 of \citet{yu2024optimal}---to its original version that contains the \textbf{nonzero} parameter count with $\ell_2$-norm into the consideration. 

\begin{lemma}[Theorem B.6, \citet{yu2024optimal}]
    For any $\mathcal{D} \in \mD_{d,N,C}$ and $\rho \in(0,1)$, define $\gamma:= (1 - \rho)\epsilon_{\gD} > 0$ and $R>1$ with $\norm{\vx_i}_\infty \leq R$ for all $i \in [N]$.
    Then, there exists a neural network $f$ with width $O(d)$, depth $O(N(\log(\frac{d}{\gamma^2}) + \log R))$ that $\rho$-robustly memorizes $\mathcal{D}$ using at most $O(Nd(\log(\frac{d}{\gamma^2}) + \log R))$ \textbf{nonzero} parameters.
    \label{thm:yu-params-nonzero}
\end{lemma}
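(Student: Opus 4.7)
My plan is to treat the architectural claims---existence, width $O(d)$, depth $L = O(N(\log(d/\gamma^2) + \log R))$, and $\rho$-robust memorization---as already established by \citet{yu2024optimal} (this is exactly their Theorem B.6 with $p=2$), and to focus the new argument entirely on the refined \emph{nonzero} parameter count. The reason a finer bound is available at all is that the Yu et al.\ network is structurally sparse: each of its $O(d)$-wide layers is built from (i) an identity-like propagation channel that carries the original input $\vx \in \R^d$ (together with a small amount of accumulated state) forward across the depth, and (ii) an $O(1)$-wide side channel that performs scalar arithmetic such as linear-functional comparisons against each $\vx_i$, thresholding against $\gamma^2$, bit extraction, and label assembly.

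The concrete steps I would carry out are: (1) open up the Yu et al.\ construction layer by layer and verify that each weight matrix $\mW_\ell$ and bias $\vb_\ell$ decomposes into a near-diagonal block contributing $O(d)$ nonzero entries for identity forwarding, a handful of dense rows coupling $\vx$ into the side channel contributing another $O(d)$ nonzero entries, and $O(1)$ internal side-channel wires; (2) verify separately that the input embedding layer (which encodes $\vx$ into the propagation channel) and the final readout layer (which projects the side-channel's accumulated label to the scalar output) are similarly sparse, needing only $O(d)$ nonzero entries each; and (3) sum these per-layer $O(d)$ bounds over all $L$ layers to obtain the claimed $O(Nd(\log(d/\gamma^2) + \log R))$ total count. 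Steps (1)--(2) are where the savings relative to the naive $O(Nd^2L)$ bound come from, since the dense count would charge all $d^2$ entries of every $O(d) \times O(d)$ matrix.

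The main obstacle is that this sparsity audit has to be performed inside another paper's construction, so one must take care that no block silently requires a genuinely dense $d \times d$ interaction between coordinates of $\vx$. The two places that warrant the closest inspection are the transitions between comparison rounds for successive data points $\vx_i$, and the final label-assembly step that maps the winning index to its class. One must confirm that both of these can be realized by a constant number of sparse linear functionals of the current propagation-channel state---i.e., by $O(d)$ nonzero parameters per occurrence---rather than by a dense mixing. Once this is checked at every interface, the layer-wise $O(d)$ bound holds uniformly and the total $O(NdL)$ count follows by direct summation over depth.
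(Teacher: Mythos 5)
Your proposal is sound in outcome, but it takes a different (and strictly speaking unnecessary) route from the paper: the paper offers no proof of this lemma at all, because the statement \emph{is} Theorem~B.6 of \citet{yu2024optimal} specialized to $p=2$, quoted in its original form --- and that original form already reports the parameter count in terms of \emph{nonzero} parameters. In other words, the nonzero bound $O(Nd(\log(\nicefrac{d}{\gamma^2})+\log R))$ is part of the cited result, not something the present paper needs to (or does) re-establish; the paper only does extra work in the opposite direction, in \cref{thm:yu-params}, where it converts Yu et al.'s width/depth into an \emph{all}-parameter count of $O(Nd^2(\log(\nicefrac{d}{\gamma^2})+\log R))$ by charging $\Theta(d)\cdot\Theta(d)$ per layer. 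Your sparsity audit --- identity-like forwarding of $\vx$ plus an $O(1)$-wide side channel, giving $O(d)$ nonzeros per layer, summed over depth $L=O(N(\log(\nicefrac{d}{\gamma^2})+\log R))$ --- is exactly the right mental model for why Yu et al.'s own nonzero count comes out a factor $d$ below the dense count, and the arithmetic $O(d\cdot L)$ matches the claim. What your route buys is an independent verification rather than a citation, at the cost of having to inspect another paper's construction layer by layer; as you note yourself, the correctness of the audit rests on structural claims (sparse data-point transitions and label assembly) that cannot be checked from the present paper. One small slip: the "naive" dense count you compare against should be $O(d^2L)=O(Nd^2(\log(\nicefrac{d}{\gamma^2})+\log R))$, not $O(Nd^2L)$ --- writing $Nd^2L$ double-counts the factor $N$ already inside $L$.
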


\end{document}